\numberwithin{equation}{section}
\newtheorem{theorem}{Theorem}
\newtheorem{lemma}[theorem]{Lemma}
\newtheorem{corollary}[theorem]{Corollary}
\newtheorem{definition}{Definition}
\newcommand{\E}{\mathbb{E}}
\newcommand{\Var}{\mathrm{Var}}
\newcommand{\SV}{\mathbb{S}}  
\DeclareMathOperator*{\argmax}{argmax}
\newcommand\innerp[2]{\left\langle #1, #2 \right\rangle}
\newcommand{\innerpc}[2]{\bigl\langle #1, #2 \bigr\rangle}
\newcommand{\innerpd}[2]{\biggl\langle #1, #2 \biggr\rangle}
\newcommand{\floor}[1]{\lfloor #1 \rfloor}
\newcommand{\normlone}{L_1}
\newcommand{\normltwo}{L_2}
\newcommand{\normmax}{L_\infty}
\newcommand{\nlsum}{\sum\nolimits}
\newcommand{\nlprod}{\prod\nolimits}
\def\bigo{{\mathcal{O}}}
\def\bigotilde{\tilde{\mathcal{O}}}
\def\mdp{{\mathcal{M}}}
\def\Regret{{\textnormal{Regret}}}
\def\Trans{{P}}
\def\eTrans{{\hat{P}}}  
\def\eR{{\hat{R}}}  
\def\er{{\hat{r}}}
\def\idxTrans#1{{I_{#1}}}
\def\idxR#1{{J_{#1}}}
\def\sumR#1{M_{#1}'}
\def\cntR#1{M_{#1}}
\def\cntTrans#1{N_{#1}}
\def\sumTrans#1{N_{#1}'}
\def\bR{{\beta}}
\def\bTrans{{b}}
\def\unif{{\textnormal{unif}}}
\def\diag{{\textnormal{diag}}}
\def\uV{{\overline{V}}}
\def\lV{{\underline{V}}}
\newcommand\oV[1]{{V_{#1}^{*}}}
\def\uQ{{\overline{Q}}}
\def\textforall{{\text{~for all~}}}
\def\textand{{\text{~and~}}}
\newcommand{\gA}{\mathcal{A}}
\newcommand{\gB}{\mathcal{B}}
\newcommand{\gF}{\mathcal{F}}
\newcommand{\gG}{\mathcal{G}}
\newcommand{\gH}{\mathcal{H}}
\newcommand{\gI}{\mathcal{I}}
\newcommand{\gR}{\mathcal{R}}
\newcommand{\gS}{\mathcal{S}}
\newcommand{\gQ}{\mathcal{Q}}
\newcommand{\gX}{\mathcal{X}}
\newcommand{\gY}{\mathcal{Y}}
\newcommand{\sC}{\mathbb{C}}
\newcommand{\sI}{\mathbb{I}}
\newcommand{\sP}{\mathbb{P}}
\newcommand{\sR}{\mathbb{R}}
\DeclareMathAlphabet{\mathsfit}{T1}{\sfdefault}{\mddefault}{\sldefault}
\SetMathAlphabet{\mathsfit}{bold}{T1}{\sfdefault}{\bfdefault}{\sldefault}
\newcommand{\rx}{\mathsfit{x}}
\newcommand{\ry}{\mathsfit{y}}
\newcommand{\rvy}{\boldsymbol{\mathsfit{y}}}
\newcommand{\vast}{\bBigg@{3}}
\newcommand{\Vast}{\bBigg@{3.5}}
\begin{document}

\title{Towards Minimax Optimal Reinforcement Learning in Factored Markov Decision Processes}

\author{%
  \name Yi Tian\footnotemark[1] \email{yitian@mit.edu}  \\
  \name Jian Qian\footnotemark[1] \email{jianqian@mit.edu} \\
  \name Suvrit Sra \email{suvrit@mit.edu} \\
  \addr{Massachusetts Institute of Technology, Cambridge, MA, 02139, USA}
}

\renewcommand*{\thefootnote}{\fnsymbol{footnote}}
\footnotetext[1]{These authors contributed equally to this work.}
\renewcommand*{\thefootnote}{\arabic{footnote}}

\maketitle

\begin{abstract}
    We study minimax optimal reinforcement learning in episodic factored Markov decision processes (FMDPs), which are MDPs with conditionally independent transition components. Assuming the factorization is known, we propose two model-based algorithms. The first one achieves minimax optimal regret guarantees for a rich class of factored structures, while the second one enjoys better computational complexity with a slightly worse regret. A key new ingredient of our algorithms is the design of a bonus term to guide exploration. We complement our algorithms by presenting several structure-dependent lower bounds on regret for FMDPs that reveal the difficulty hiding in the intricacy of the structures.
\end{abstract}

\section{Introduction}
In reinforcement learning (RL) an agent interacts with an unknown environment seeking to maximize its cumulative reward. The dynamics of the environment and the agent's interaction with it are typically modeled as a Markov decision process (MDP). We consider the specific setting of \emph{episodic MDPs} with a fixed interaction horizon. Here, at each step the agent observes the current state of the environment, takes an action, and receives a reward. Given the  agent's action, the environment then transits to the next state. The interaction horizon is the number of steps in an episode, while both the transitions and rewards may be unknown to the agent.

The agent's performance is quantified using regret: the gap between the cumulative rewards the agent receives and those obtainable by following an optimal policy. An optimal RL algorithm then becomes one that incurs the minimum regret. 
For episodic MDPs, the minimax regret bound is known to be  $\bigotilde(\sqrt{HSAT})$~\citep{azar2017minimax} (the notation $\bigotilde(\cdot)$ hides log factors), where $H$, $S$, $A$, $T$ denote the horizon, the size of the state space, the size of the action space, and the total number of steps, respectively.

\textbf{Background and our focus.} In problems with large state and action spaces even the  $\bigotilde(\sqrt{SA})$ dependence is impractical. 
We focus on problems where one can circumvent this dependence. Specifically, we focus on problems with conditional independence structure that can be modeled via \emph{factored MDPs} (FMDPs)~\citep{boutilier1995exploiting, boutilier1999decision}.
The state for an FMDP is represented as a tuple, where each component is determined by a small portion of the state-action space, termed its ``scope''. For example, for a home robot, whether the table is clean has nothing to do with whether the robot vacuums the floor. FMDPs also arise naturally from cooperative RL~\citep{guestrin2002multiagent, radoszycki2015solving}, and furthermore, they are useful to model subtask dependencies in hierarchical RL~\citep{sohn2020meta}.

A key benefit of FMDPs is their more compact representation that requires exponentially smaller space to store state transitions than corresponding nonfactored MDPs. Early works~\citep{kearns1999efficient, guestrin2002algorithm, strehl2007model,szita2009optimistic} show that FMDPs can also reduce the sample complexity exponentially, 
albeit with some higher order terms. 
\citeauthor{osband2014near}~\citep{osband2014near} and \citeauthor{xu2020near}~\citep{xu2020near} develop near-optimal regret bound of FMDPs in the episodic and nonepisodic settings, respectively. For an FMDP with $m$ conditionally independent components, both works bound the regret by $\bigotilde(\sum_{i=1}^{m} \sqrt{D^2S_iX[\idxTrans{i}] T})$,~\footnote{The dependence on reward factorization is omitted here and below, unless otherwise stated. The omission also corresponds to the case of known rewards.} where $D$ is the diameter of the MDP~\citep{jaksch2010near} and $X[\idxTrans{i}]$ is the size of the scope of the $i$th state component. 

\textbf{Main difficulty.}
For (nonfactored) MDPs, the above regret bounds from~\citep{osband2014near,xu2020near} reduce to $\bigotilde(\sqrt{D^2S^2 A T})$, or essentially $\bigotilde(\sqrt{H^2S^2 A T})$ in the episodic setting. Either bound has a gap to the corresponding lower bound, leading one to wonder whether we can adapt the techniques from MDPs to obtain tight bounds for FMDPs. It turns out to be nontrivial to answer this question. 

Indeed, the key technique that yields a minimax regret bound for MDPs is a careful design of the bonus term that ensures optimism, by applying scalar concentration instead of $\normlone$-norm concentration (Lemma~\ref{lem:concen-l1}) on a vector~\citep{azar2017minimax}.
To adapt the same idea to FMDPs, perhaps the most natural choice is to estimate the transition of each component separately, and then to sum up the component-wise bonuses to guide overall exploration. With the bonuses derived from $\normlone$-norm concentration, this method ensures optimism, leading to the regret bound in~\citep{osband2014near, xu2020near}. But with the bonuses derived from scalar concentration (which was key to~\citep{azar2017minimax}), it is hard, if possible, to show that this naive sum of component-wise bonuses ensures optimism (see~\eqref{eqn:nat-extension}), and thus is the major obstacle towards removing the suboptimal $\sqrt{S_i}$ factor from the regret bound.

On the other hand, there is no uniform nondegenerate lower bound that holds for any structure.
As a result, the dependence on structure has to be specified, making the lower bound statements subtle due to the intricacy of the structure.

\textbf{Our contributions.}
Our main contribution is to present two provably efficient algorithms for episodic FMDPs, including one with tight regret guarantees for a rich class of structures. We overcome the above noted difficulty by introducing an additional bonus term via an inverse telescoping technique.
Our first algorithm F-UCBVI, uses Hoeffding-style bonuses and achieves the $\bigotilde(\sum_{i=1}^{m} \sqrt{H^2 X[\idxTrans{i}] T})$ regret (Theorem~\ref{thm:f-ucbvi}).
Our second algorithm F-EULER, uses Bernstein-style bonuses and achieves a problem-dependent regret (Theorem~\ref{thm:f-euler}) that reduces to $\bigotilde(\sum_{i=1}^{m} \sqrt{H X[\idxTrans{i}] T})$ in the worst case (Corollary~\ref{cor:f-euler}), and thus closes the gap to the $\Omega(\sqrt{HSAT})$ lower bound (see, e.g.,~\citep{jin2020lecture}) when reduced to the nonfactored case. Despite an extra $\sqrt{H}$ dependence, F-UCBVI has a much simpler form of bonus and enjoys a lower computational complexity.

Furthermore, we identify some reasonably general factored structures for which we prove lower bounds (Theorems~\ref{thm:fmdp-lb-degen-1},~\ref{thm:fmdp-lb-degen-2},~\ref{thm:fmdp-lb-nondegen} and~\ref{thm:fmdp-lb-loop}). Our construction builds upon the lower bounds for multi-armed bandits (MABs)~\citep{lattimore2018bandit}, in a similar manner to~\citep{dann2015sample, jin2020lecture}. As a side remark, the JAO MDP, used to establish the $\Omega(\sqrt{DSAT})$ lower bound for nonepisodic MDPs~\citep{jaksch2010near}, only establishes a ${\Omega}(\sqrt{HSAT/\log T})$ lower bound through a direct extension (Appendix~\ref{sec:jao}). This result misses a log factor resulting from the episodic resetting, and it is not clear whether it can be tightened with the same construction.

\subsection{Related work}

\textbf{Regret bound for episodic MDPs.}
For episodic MDPs, built upon the principle of optimism in the face of uncertainty, recent works establish the $\bigotilde(\sqrt{HSAT})$ worst-case regret bound~\citep{azar2017minimax, kakade2018variance, zanette2019tighter, simchowitz2019non, efroni2019tight}, 
matching the $\Omega(\sqrt{HSAT})$ lower bound (see, e.g.,~\citep{jin2020lecture}) up to log factors.
For model-free algorithms, the state-of-the-art worst-case regret is $\bigotilde(\sqrt{H^2SAT})$~\citep{zhang2020almost}.
Sharper problem-dependent regret bounds~\citep{zanette2019tighter,simchowitz2019non} also exist in this setting.
Posterior sampling for RL (PSRL)~\citep{osband2013more} offers an alternative to optimism-based methods and can perform well empirically~\citep{osband2017posterior}.
The best existing Bayesian regret bound of PSRL for episodic MDPs is $\bigotilde(\sqrt{H^2SAT})$~\citep{osband2017posterior}.
Among these works, our work is mostly motivated by~\citep{azar2017minimax, zanette2019tighter}.
Specifically, for nonfactored MDPs, F-UCBVI and F-EULER reduce to the UCBVI-CH~\citep{azar2017minimax} and EULER~\citep{zanette2019tighter} algorithms, respectively, after which we name our algorithms.

\textbf{Planning in FMDPs.}
For nonepisodic FMDPs, various works exploit the factored structure to develop efficient (approximate) dynamic programming methods~\citep{boutilier2000stochastic, koller2000policy, guestrin2001max, schuurmans2002direct, guestrin2003efficient}.
See~\citep{degris2013factored} for an overview.
For episodic FMDPs, however, efficient approximate planning is yet to be understood.
Planning is a subroutine of model-based RL algorithms.
The focus of this work is the sample efficiency, and we adopt a value iteration procedure that has the same computational complexity as that for a nonfactored MDP.

\textbf{Reinforcement learning in FMDPs.}
To efficiently find the optimal policy assuming known factorization yet unknown state transitions or rewards, early works aim to develop algorithms with PAC guarantees~\citep{kearns1999efficient, guestrin2002algorithm, strehl2007model, szita2009optimistic}.
More recent works place a focus on the regret analysis and establish $\bigotilde(\sum_{i=1}^{m} \sqrt{D^2 S_i X[\idxTrans{i}]T})$ regret in either the episodic or nonepisodic setting~\citep{osband2014near, xu2020near}, also translating to $\bigotilde(\sum_{i=1}^{m} \sqrt{H^2 S_i X[\idxTrans{i}]T})$ in terms of horizon $H$ for episodic problems.
Our work considers minimax optimal regret guarantees in the episodic setting.
In this setting, we provide the first $\bigotilde(\sum_{i=1}^{m} \sqrt{HX[\idxTrans{i}]T})$ regret bound and the first formal treatment of the lower bounds, by which we show that our algorithm is minimax optimal for a rich class of factored structures.

\textbf{Structure learning in FMDPs.}
Assuming unknown structure except for known state factorization, various structure learning algorithms exist in the literature, e.g., SPITI~\citep{degris2006learning}, SLF-Rmax~\citep{strehl2007efficient}, Met-Rmax~\citep{diuk2009adaptive} and LSE-Rmax~\citep{chakraborty2011structure}.
See~\citep[Chapter 7]{chakraborty2014sample} for an overview of these methods.
More recent works include~\citep{hallak2015off, guo2018sample}.
To the best of our knowledge, there is no regret analysis in setting, which is an interesting future direction.

\section{Preliminaries and problem formulation}

\subsection{Learning in episodic MDPs}

An episodic MDP is described by the quintuple $(\gS, \gA, \Trans, r, H)$, where $\gS$ is the state space, $\gA$ is the action space, 
$\Trans: \gS \times \gA \times \gS \mapsto [0, 1]$ is the transition function such that $\Trans(\cdot \vert s, a)$ is in the probability simplex on $\gS$, denoted by $\Delta(\gS)$, 
$r: \gS \times \gA \mapsto [0, 1]$ is the reward function, and $H$ is the horizon or the length of an episode. The focus in this paper is the stationary case, where both the transition function $\Trans$ and the reward function $r$ remain unchanged across steps or episodes.

An RL agent interacts episodically with the environment starting from arbitrary initial states. The role of the agent is abstracted into a sequence of policies. For any natural number $n$, we use $[n]$ to denote the set $\{1, 2, \cdots, n\}$. Then for finite-horizon MDPs, the policy $\pi: \gS \times [H] \mapsto \gA$ maps a state and a time step to an action. 
Let $\pi_k$ denote the policy in the $k$th episode. An RL algorithm, therefore, specifies the updates of the sequence $\{\pi_k\}$.
Our algorithms use deterministic policies, while the lower bounds hold for the more general class of stochastic policies. 

The Q-value function $Q_{h}^{\pi}$ and state-value function $V_{h}^{\pi}$ measure the goodness of a state-action pair and of a state following the policy $\pi$ starting from the $h$th step in an episode, respectively. For state $s\in \gS$ and action $a\in \gA$, these functions are defined by
\begin{align*}
    Q_h^{\pi}(s, a) = \E\Bigl[\nlsum_{t=h}^{H} r(s_t, a_t)\, \bigl\lvert\, s_h = s, a_h = a\Bigr],
    \quad V_h^{\pi}(s) = Q_h^{\pi}(s, \pi(s, h)).
\end{align*}
There exists an optimal policy $\pi^{*}$ such that $V_h^{\pi^{*}}(s) \ge V_h^{\pi}(s)$ for all $s\in \gS, h\in [H]$~\citep{puterman2014markov}. Let $V_h^{*}$ denote $V_h^{\pi^{*}}$, and let $s_{k, 1}$ be the initial state at the $k$th episode; we then define the regret over $K$ episodes ($T = KH$ steps) as
\begin{align*}
    \Regret(K) := \nlsum_{k=1}^{K} \left( V_1^{*}(s_{k, 1}) - V_1^{\pi_k}(s_{k, 1}) \right).
\end{align*}

\subsection{Factored MDPs}

Episodic FMDPs inherit the above definitions, but we also need additional notation that we adopt from~\citep{osband2014near} to describe the factored structure. Let the state-action space $\gX := \gS \times \gA$, so that $x = (s, a) \in \gX$ denotes a state-action pair. 
Let $m, n, l$ be natural numbers. The state space is factorized as 
$\gS = \bigotimes_{i=1}^{m} \gS_i \equiv \gS_1 \times \cdots \times \gS_m$, 
using which we write a state $s \in \gS$ as the tuple $(s[1], \cdots, s[m])$. Similarly, the state-action space is factorized as 
$\gX = \bigotimes_{i=1}^{n} \gX_i \equiv \gX_1 \times \cdots \times \gX_n$. 
\begin{definition}[Scope operation]  \label{def:scope}
    For any natural number $n$, any index set $I \subset [n]$ and any factored set $\gX = \bigotimes_{i=1}^{n} \gX_{i}$, define the scope operation $\gX[I] := \bigotimes_{i\in I} \gX_{i}$, where the indices in the Cartesian product are in ascending order by default.
    Correspondingly, define the scope variable $x[I] \in \gX[\idxTrans{i}]$ as the tuple of $x_i$ where $i\in I$. 
\end{definition}
The transition factored structure refers to the conditional independence of the transitions of the $m$ state components.
Mathematically, $\Trans(s'\vert s, a) = \Trans(s'\vert x) = \prod_{i=1}^{m} \Trans_i(s'[i] \vert x)$, 
where $\Trans_i(\cdot \vert x) \in \Delta(\gS_i)$.
Moreover, the transition of the $i$th state component is determined only by its scope, a subset of the state-action components.
Let $\idxTrans{i} \subset [n]$ be the scope index set for $i \in [m]$.
Then with a slight abuse of notation, $\Trans_i(s'[i] \vert x) \equiv \Trans_i(s'[i] \vert x[\idxTrans{i}])$ where ``$\equiv$'' denotes identity, and the factored transition is given by $P(s'\vert x) = \prod_{i=1}^{m} \Trans_i(s'[i] \vert x[\idxTrans{i}])$.

The reward function $r$ is stochastic and unknown in general, which can also be factored. 
Let $\idxR{i} \subset [n]$ be the reward scope index set for $i\in [l]$.
Then the factored reward is the sum of $l$ independent components $r_i(x) \equiv r_i(x[\idxR{i}])$, given by $r(x) = \sum_{i=1}^{l} r_i(x[\idxR{i}])$. 
Let $R := \E[r]$ and $R_i := \E[r_i]$ be their expectations.
Then $R = \sum_{i=1}^{l} R_i: \gX \to [0, 1]$ is deterministic.  
With the above notation, an FMDP model is described in general by
\begin{align}  \label{eqn:fmdp}
    \mdp = \bigl(
        \left\{\gX_{i}\right\}_{i=1}^{n}, 
        \left\{\gS_{i}, \idxTrans{i}, \Trans_{i}\right\}_{i=1}^{m}, 
        \left\{\idxR{i}, r_{i}\right\}_{i=1}^{l}, 
        H
    \bigr). 
\end{align}

In this paper, we assume unknown transition with known factorization and consider two cases about the knowledge of reward: (1) the reward $r$ is unknown with known factorization, and the agent receives the factored rewards $\{r_{i}\}_{i=1}^{l}$~\citep{osband2014near, xu2020near}; (2) the reward $r$ is known and not necessarily factored, in which case the expected reward $R$ is also known~\citep{azar2017minimax}.

\textbf{Notation.}
Throughout the paper, we use $S, A, X, S_i, X[I]$ to denote the finite cardinalities of the corresponding $\gS, \gA, \gX, \gS_i, \gX[I]$, respectively.
Then $S = \prod_{i=1}^{m} S_i$ and $X[I] \le X = SA$ for any $I\subset [n]$.
Since the abstraction of a state component $s[i]$ is meaningful only if $\gS_i \ge 2$, the number of state components $m$ is at most $\log_{2} S$
, which is treated as a negligible log factor whenever necessary.
For $x = (s, a) \in \gX$, we use $Q(s, a)$ and $Q(x)$ interchangeably to denote a function $Q$ defined on $\gX:= \gS \times \gA$.
A function $V: \gS \to \sR$ is also seen as a real vector indexed by $\gS$, denoted by $V \in \sR^{\gS}$. 
For vectors in $\sR^{\gS}$, let $|\cdot|, (\cdot)^2$ denote entrywise absolute values and squares, respectively. 

Define $\Trans(x) := \Trans(\cdot \vert x) \in \Delta(\gS)$. 
Moreover, for $\Trans_{i}(x) \equiv \Trans_{i}(x[\idxTrans{i}]) := \Trans_{i}(\cdot \vert x[\idxTrans{i}]) \in \Delta(\gS_i)$ and natural numbers $a \le b$, define $\Trans_{a:b}(x) := \prod_{i=a}^{b} \Trans_{i}(x) \in \Delta(\bigotimes_{i=a}^{b} \gS_i)$. In particular, $\prod_{i=1}^{m} \Trans_{i}(x) = \Trans(x) \in \Delta(\gS)$.
For $\Trans \in \Delta(\gS)$, $V \in \sR^{\gS}$, define their inner product $\innerp{\Trans}{V} := \sum_{s\in \gS} \Trans(s) V(s) = \E_{\Trans}[V(s)] \equiv \E_{\Trans}[V]$. The inner product of two elements in $\Delta(\gS_i)$ and $\sR^{\gS_i}$ respectively takes sum over $\gS_i$. For $\Trans_i \in \Delta(\gS_i)$ and $V \in \sR^{\gS}$, their inner product takes sum over $\gS$. In this case, note that $\innerp{\Trans_i}{V} \neq \E_{\Trans_i}[V]$, which is given by $\E_{\Trans_i}[V] \equiv \E_{\Trans_i}[V(s)] = \sum_{s[i]\in \gS} \Trans_i(s[i]) V(s) \in \sR^{\gS[-i]}$, where $\gS[-i] := \bigotimes_{j=1, j\neq i}^{m} \gS_{j}$.

\section{Factored UCBVI and regret bounds}

We are now ready to present the two algorithms for solving FMDPs that we propose. In this section, We introduce the general procedures (Algorithms~\ref{alg:f-ovi} and~\ref{alg:vi-optimism}) and discuss an instantiation called factored UCBVI (F-UCBVI) using Hoeffding-style optimism. In the next section, we use a Bernstein-style instantiation called factored EULER (F-EULER) that obtains better regret guarantees.

\begin{algorithm}[htbp]
    \caption{\texttt{F-OVI} (Factored Optimistic Value Iteration)}  \label{alg:f-ovi}
    \begin{algorithmic}[1]
        \Require Factored structure $\left\{\gX_{i}\right\}_{i=1}^{n}$, $\left\{\gS_{i}, \idxTrans{i}\right\}_{i=1}^{m}$, $\left\{\idxR{i}\right\}_{i=1}^{l}$, and horizon $H$
        \State Initialize counters $\cntTrans{i}(x[\idxTrans{i}]) = \sumTrans{i}(x[\idxTrans{i}], s[i]) = 0$, $\forall (i\in [m], x[\idxTrans{i}] \in \gX[\idxTrans{i}], s[i] \in \gS_i)$
        \vspace{2pt}
        \State Initialize counters $\cntR{i}(x[\idxR{i}]) = \sumR{i}(x[\idxR{i}]) = 0$, $\forall (i\in [l], x[\idxR{i}] \in \gX[\idxR{i}])$ and history $\gH_{r} = \emptyset$
        \vspace{2pt}
        \For{episode $k=1, 2, \cdots, K$}
            \vspace{2pt}
            \State Estimate transitions $\eTrans_{i}(s[i]\vert x[\idxTrans{i}]) = \tfrac{\sumTrans{i}(x[\idxTrans{i}], s[i])}{\max\{1, \cntTrans{i}(x[\idxTrans{i}])\}}$, $\forall (i\in [m], x[\idxTrans{i}] \in \gX[\idxTrans{i}], s[i] \in \gS_i)$
            \vspace{2pt}
            \State Estimate expected rewards $\eR_i(x[\idxR{i}]) = \frac{\sumR{i}(x[\idxR{i}])}{\max \{1,\cntR{i}(x[\idxR{i}]) \}}$, $\forall (i\in [l], x[\idxR{i}] \in \gX[\idxR{i}])$
            \vspace{2pt}
            \State Estimate expected overall reward $\eR(x) = \sum_{i=1}^l \eR_i(x[\idxR{i}])$, $\forall x\in \gX$ 
            \vspace{2pt}
            \State Obtain policy $\pi = \mathtt{VI\_Optimism}(\{\eTrans_{i}\}_{i=1}^{m}, \eR, \{\cntTrans{i}\}_{i=1}^{m},\{\cntR{i}\}_{i=1}^l, \gH_{r})$ 
            \vspace{2pt}
            \State Observe initial state $s_{1}$ from environment
            \vspace{2pt}
            \For{step $h=1, 2, \cdots, H$} 
                \vspace{2pt}
                \State Take action $a_{h} = \pi(s_{h}, h)$ and let $x_{h} = (s_{h}, a_{h})$
                \vspace{1pt}
                \State Receive reward $\{r_{i, h}\}_{i=1}^l$ and observe next state $s_{h+1}$
                \vspace{2pt}
                \State Update transition counters $\cntTrans{i}(x_{h}[\idxTrans{i}]) \mathrel{+}= 1$ and $\sumTrans{i}(x_{h}[\idxTrans{i}], s_{h+1}[i]) \mathrel{+}= 1$, $\forall i\in [m]$
                \vspace{2pt}
                \State Update reward counters $\cntR{i}(x_h[\idxR{i}]) \mathrel{+}= 1$ and $\sumR{i}(x_h[\idxR{i}]) \mathrel{+}= r_{i, h}$, $\forall i\in [l]$
                \vspace{1pt}
                \State Update reward history $\gH_{r} = \gH_{r} \cup \{(x_{h}, \{r_{i, h}\}_{i=1}^{l})\}$
            \EndFor
        \EndFor
      \end{algorithmic}
\end{algorithm}

The general optimistic model-based RL framework for FMDPs is described in \texttt{F-OVI} (Algorithm~\ref{alg:f-ovi}), which maintains an empirical estimate of the transition function, typically the maximum likelihood (ML) estimate.
For FMDPs, although a direct ML estimate of the overall transition is still possible, the product of the ML estimates of component-wise transition functions is preferable to exploit the factored structure.
The same argument applies to the empirical expected reward function.

As a subroutine of \texttt{F-OVI} (Algorithm~\ref{alg:f-ovi}), \texttt{VI\_Optimism} (Algorithm~\ref{alg:vi-optimism}) ensures an upper confidence bound (UCB) on the optimal value function, achieved by introducing bonuses derived from concentration inequalities.
The $\normlone$-norm concentration (Lemma~\ref{lem:concen-l1}) on the transition estimation leads to the UCRL-Factored algorithm, achieving the $\bigotilde(\sum_{i=1}\sqrt{D^2S_i X[\idxTrans{i}] T})$ regret~\citep{osband2014near}.
The scalar concentration on the transition estimation leads to the two new algorithms proposed in this work.

In the case of known rewards, there is no need to estimate the reward $\eR$ or to collect the reward history $\gH_{r}$ in \texttt{F-OVI} (Algorithm~\ref{alg:f-ovi}). Neither is there the need to introduce the reward bonus $\bR$ in \texttt{VI\_Optimism} (Algorithm~\ref{alg:vi-optimism}).
In both procedures, the empirical $\eR$ is replaced by the true $R$.

\textbf{Hoeffding-style bonus.}
Let $L = \log (16mlSXT / \delta)$ be a log factor.
With a slight abuse of notation, let $\cntTrans{i}(x) \equiv \cntTrans{i}(x[\idxTrans{i}])$.
The transition bonus of F-UCBVI is given by
\begin{align} \label{eqn:trans-bonus-f-ucbvi}
    \bTrans(x) := \nlsum_{i=1}^{m} H\sqrt{\tfrac{L}{2\cntTrans{i}(x)}} + \nlsum_{i=1}^{m} \nlsum_{j=i+1}^{m} 2HL \sqrt{\tfrac{S_i S_j}{\cntTrans{i}(x) \cntTrans{j}(x)}},
\end{align}
which consists of both a component-wise bonus term and a cross-component bonus term.
In the case of unknown rewards, the reward bonus of F-UCBVI is given by 
\begin{align}  \label{eqn:r-bonus-f-ucbvi}
    \bR(x) := \nlsum_{i=1}^{l}\sqrt{\tfrac{L}{2\cntR{i}(x)}}.
\end{align}
For the transition and reward bonuses here and below, zero-valued $\cntTrans{i}(x)$ and $\cntR{i}(x)$ in the denominators are replaced by $1$ to prevent zero division, as in \texttt{F-OVI} (Algorithm~\ref{alg:f-ovi}).
For F-UCBVI, the reward history $\gH_{r}$ and the lower confidence bound (LCB) $\lV_{h}$ are redundant and the related updates are removable.
F-UCBVI has the following regret guarantees.

\begin{algorithm}[tbp]
    \caption{\texttt{VI\_Optimism} (Value Iteration with Optimism)}  \label{alg:vi-optimism}
    \begin{algorithmic}[1]
        \Require Empirical transitions $\{\eTrans_{i}\}_{i=1}^{m}$, empirical expected reward $\eR$, transition counters $\{\cntTrans{i}\}_{i=1}^{m}$, reward counters $\{\cntR{i}\}_{i=1}^l$, reward history $\gH_{r}$
        \vspace{1pt}
        \State Initialize UCB and LCB $\uV_{H+1}(s) = \lV_{H+1}(s) = 0, \forall s\in \gS$; let $\eTrans(x) = \prod_{i=1}^{m}\eTrans_{i}(x), \forall x\in \gX$
        \vspace{1pt}
        \For{step $h = H, H-1, \cdots, 1$}
            \For{state $s \in \gS$}
                \For{action $a \in \gA$, with $x = (s, a)$}
                    \State $\uQ_{h}(x) 
                    = \min\left\{ H - h + 1, \eR(x) + \innerp{\eTrans(x)}{\uV_{h+1}} + \bTrans(x) + \bR(x)\right\}$
                \EndFor
                \vspace{1pt}
                \State Let $\pi(s, h) = \argmax_{a\in \gA} \uQ_{h}(s, a)$ and $x = (s, \pi(s, h))$
                \vspace{1pt}
                \State Update UCB $\uV_{h}(s) =  \uQ_{h}(x)$
                \State Update LCB $\lV_{h}(s) = \max\left\{0, \eR(x) + \innerp{\eTrans(x)}{\lV_{h+1}} - \bTrans(x) -\bR(x) \right\}$ 
            \EndFor
        \EndFor
        \State \Return policy $\pi$
    \end{algorithmic}
\end{algorithm}

\begin{theorem}[Worst-case regret bounds of F-UCBVI]  \label{thm:f-ucbvi}
    For any FMDP specified by~\eqref{eqn:fmdp}, in the case of unknown rewards, with probability $1 - \delta$, the regret of F-UCBVI in $K$ episodes is upper bounded by
    \begin{align}  \label{eqn:regret-f-ucbvi}
        \bigotilde\Bigl( \nlsum_{i=1}^{m} \sqrt{H^2X[\idxTrans{i}] T} + \nlsum_{i=1}^{l} \sqrt{X[\idxR{i}] T}\Bigr).
    \end{align}
    In the case of known rewards, the term $\nlsum_{i=1}^{l} \sqrt{X[\idxR{i}]T}$ is dropped.
\end{theorem}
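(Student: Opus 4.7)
The plan is to adapt the UCBVI analysis of~\citep{azar2017minimax} to the factored setting, proceeding in three stages: concentration and optimism, on-policy regret decomposition, and pigeonhole summation. First I define a high-probability good event on which all component-wise estimators are accurate. Hoeffding's inequality applied to each $\eR_i$ yields $|\eR_i(x) - R_i(x)| \le \sqrt{L/(2\cntR{i}(x))}$, so $\bR$ in~\eqref{eqn:r-bonus-f-ucbvi} dominates $|\eR(x) - R(x)|$. For transitions I would control $\innerp{\eTrans(x) - \Trans(x)}{V}$ for a fixed $V \in [0,H]^{\gS}$ via the telescoping identity
\begin{align*}
\eTrans(x) - \Trans(x) = \nlsum_{i=1}^{m} \Trans_{1:i-1}(x) \otimes (\eTrans_i - \Trans_i)(x) \otimes \eTrans_{i+1:m}(x),
\end{align*}
followed by an inverse telescoping step that replaces each rightmost $\eTrans_{i+1:m}$ with $\Trans_{i+1:m}$ and keeps a remainder.

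The clean piece $\innerp{\Trans_{1:i-1} \otimes (\eTrans_i - \Trans_i) \otimes \Trans_{i+1:m}}{V}$ collapses to a one-dimensional inner product $\innerp{\eTrans_i(x) - \Trans_i(x)}{W_i}$, with $W_i \in [0,H]^{\gS_i}$ a \emph{fixed} function of $\Trans$ and $V$; scalar Hoeffding bounds it by $H\sqrt{L/(2\cntTrans{i}(x))}$, producing the first sum in~\eqref{eqn:trans-bonus-f-ucbvi}. The remainder involves a product of factor errors $\|\eTrans_i - \Trans_i\|_1 \cdot \|\eTrans_{i+1:m} - \Trans_{i+1:m}\|_1$; invoking the Weissman $L_1$ bound (Lemma~\ref{lem:concen-l1}) together with the further telescoping $\|\eTrans_{i+1:m} - \Trans_{i+1:m}\|_1 \le \sum_{j>i} \|\eTrans_j - \Trans_j\|_1$ yields a bound of $2HL \sum_{j>i} \sqrt{S_i S_j / (\cntTrans{i}(x) \cntTrans{j}(x))}$, which is exactly the cross-component piece of~\eqref{eqn:trans-bonus-f-ucbvi}. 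Combined with the reward bonus this gives $|\innerp{\eTrans(x)}{V} + \eR(x) - \innerp{\Trans(x)}{V} - R(x)| \le \bTrans(x) + \bR(x)$; specializing to $V = V_{h+1}^{*}$ and doing backward induction on $h$ then establishes optimism $\uV_h(s) \ge V_h^*(s)$ on the good event.

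Given optimism, the per-episode decomposition $\Regret(K) \le \sum_k (\uV_1(s_{k,1}) - V_1^{\pi_k}(s_{k,1}))$ unrolled along the observed trajectory produces
\begin{align*}
\uV_1 - V_1^{\pi_k} \le \nlsum_{h=1}^{H} 2\bigl(\bTrans(x_{k,h}) + \bR(x_{k,h})\bigr) + \nlsum_{h=1}^{H} \innerp{\Trans(x_{k,h}) - \mathbf{1}_{s_{k,h+1}}}{\uV_{h+1} - V_{h+1}^{\pi_k}},
\end{align*}
with the martingale term handled by Azuma--Hoeffding. The pigeonhole estimate $\sum_{k,h} 1/\sqrt{\cntTrans{i}(x_{k,h})} = \bigotilde(\sqrt{X[\idxTrans{i}] T})$ and its reward analogue convert the first-order bonus sum into the leading $\bigotilde(\sum_i \sqrt{H^2 X[\idxTrans{i}] T} + \sum_i \sqrt{X[\idxR{i}] T})$. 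The cross-component bonuses are at most $\bigo(HL\sqrt{S_i S_j/(\cntTrans{i}(x)\cntTrans{j}(x))})$ per step; using the AM--GM inequality $1/\sqrt{ab} \le (1/a + 1/b)/2$ together with $\sum_{k,h} 1/\cntTrans{i}(x_{k,h}) = \bigotilde(X[\idxTrans{i}])$ shows they sum to lower-order terms not affecting the leading rate.

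The main obstacle is establishing optimism using \emph{scalar} rather than $L_1$ concentration: as noted in the introduction, a naive sum of component-wise Hoeffding bonuses cannot absorb the error that propagates through the product of kernel estimates. The inverse telescoping is exactly the device that fixes this, by separating the portion of the error that concentrates at a fixed scalar rate from the portion requiring $L_1$ control, and charging the latter to the cross-component term in~\eqref{eqn:trans-bonus-f-ucbvi}. Once optimism and bonus dominance are in place, combining the telescoping regret decomposition, the pigeonhole summation, and the martingale tail bounds yields the claimed regret~\eqref{eqn:regret-f-ucbvi}; when rewards are known, the reward history and bonus are dropped and the $\sum_{i=1}^{l} \sqrt{X[\idxR{i}] T}$ term disappears.
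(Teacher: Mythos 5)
Your first stage (concentration and optimism) matches the paper's argument essentially verbatim: the inverse telescoping identity, scalar Hoeffding on the collapsed one-dimensional inner products $\innerpc{\eTrans_i - \Trans_i}{\E_{\Trans_{-i}}[\oV{h+1}]}$ giving the component-wise bonus, and the Weissman $\normlone$ bound applied twice to the remainder giving the cross-component bonus, are exactly Lemmas~\ref{lem:est-err-h} and~\ref{lem:holder}, and your backward-induction optimism is Lemma~\ref{lem:ucb-h}. The trajectory-based unrolling with Azuma in place of the paper's expected-visit-probability decomposition is a legitimate alternative and not itself a problem.

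The genuine gap is in your regret decomposition, where you write the per-step error as $2(\bTrans(x_{k,h})+\bR(x_{k,h}))$ plus a martingale term. The factor $2$ on $\bTrans$ implicitly charges $\innerpc{\eTrans_k(x) - \Trans(x)}{\uV_{h+1}}$ to the bonus, but $\uV_{h+1}$ is a \emph{random} vector built from the same data as $\eTrans_k$, so the scalar Hoeffding concentration that justifies the bonus only applies to the fixed vector $\oV{h+1}$. The paper therefore splits this term as $\innerpc{\eTrans_k - \Trans}{\oV{h+1}} + \innerpc{\eTrans_k - \Trans}{\uV_{k,h+1} - \oV{h+1}}$ and must handle the second (correction) term separately in Lemma~\ref{lem:cul-corr-h}. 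This is not a formality: the only uniform-over-$V$ bound available is the Hölder bound $\|\eTrans_{i,k}-\Trans_i\|_1\cdot H \le H\sqrt{2S_iL/\cntTrans{i,k}}$, and summing that over time reintroduces the $\sqrt{S_i}$ factor and only recovers the Osband-style $\bigotilde(\sum_i\sqrt{H^2 S_i X[\idxTrans{i}]T})$ rate — exactly the suboptimality the theorem is meant to remove. The paper's fix combines a per-entry Bernstein concentration on $\eTrans_{i,k}(s'[i]\vert x)$ (failure event $\gF_6$) with a bound on the cumulative \emph{squared} confidence radius (Lemma~\ref{lem:cul-conf-r-h}, which in turn needs the good-set machinery and the pointwise confidence-radius bound of Lemma~\ref{lem:conf-r-h}) and Cauchy--Schwarz, to show the correction term is polylogarithmic in $T$. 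Your proposal contains none of this, so as written it either silently assumes concentration against a data-dependent direction or falls back to the weaker rate; you need to add the correction-term analysis to close the argument.
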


In Theorem~\ref{thm:f-ucbvi} (and Theorem~\ref{thm:f-euler}, Corollary~\ref{cor:f-euler} below), 
the $\sqrt{T}$ terms dominate the upper bounds under the assumption that $T \ge \mathrm{poly}(m, l, \max_{i}S_{i}, \max_{i}X[\idxTrans{i}], H)$, which can be exponentially smaller than $\mathrm{poly}(S, A, H)$, required in general by nonfactored algorithms~\citep{azar2017minimax, kakade2018variance, zanette2019tighter, simchowitz2019non, efroni2019tight, jin2018q,zhang2020almost}.
This weaker requirement on $T$ is also true for~\citep{osband2014near} and can be viewed as another benefit from the factored structure.
The lower-order terms in Section~\ref{sec:proof-sketch} are also under this assumption.

In~\eqref{eqn:regret-f-ucbvi}, the regret bound consists of a transition-induced term and a reward-induced term, which correspond to the sum over time of the component-wise transition bonus term in~\eqref{eqn:trans-bonus-f-ucbvi} and the reward bonus~\eqref{eqn:r-bonus-f-ucbvi}, respectively. 
The cross-component transition bonus term in~\eqref{eqn:trans-bonus-f-ucbvi} diminishes fast and its sum over time turns out to be negligible. 
Nevertheless, this bonus term plays a significant role in guiding exploration at the early stage of the algorithm, when it is large due to the dependence on $S_i$.
Note that the cross-component form is not absolutely necessary, e.g., we can use the AM–GM inequality to decouple it as 
\begin{align*}
    \nlsum_{i=1}^{m} \nlsum_{j=i+1}^{m} 2HL \sqrt{\tfrac{S_i S_j}{\cntTrans{i, k}(x) \cntTrans{j, k}(x)}} 
    \le (m-1) HL \nlsum_{i=1}^{m} \tfrac{S_i}{\cntTrans{i, k}(x)}.
\end{align*}
In this work, we use the cross-component form, as it is tighter and implies an intertwined behavior among the state components.

The component-wise transition bonus term in~\eqref{eqn:trans-bonus-f-ucbvi} is exactly the bonus in the UCBVI-CH algorithm~\citep{azar2017minimax}.
It may also be feasible to derive a factored version of UCBVI-BF~\citep{azar2017minimax}, 
but we switch to an upper-lower bound exploration strategy like EULER~\citep{zanette2019tighter} for the Bernstein-style optimism, due in part to its problem-dependent regret.

\section{Factored EULER and regret bounds}

To describe the Bernstein-style bonus, we first add some convenient notation. We omit the dependence of $\Trans_{i}(x)$ and $\Trans(x)$ on $x$; for $\Trans_{i}\in \Delta(\gS_{i}), \Trans = \prod_{i=1}^{m} \Trans_{i} \in \Delta(\gS)$ and $V \in \sR^{\gS}$, define 
\begin{align}  \label{eqn:gi}
  g_{i}(\Trans, V) := 2\sqrt{L} \sqrt{\Var_{\Trans_{i}} \E_{\Trans_{-i}} [V]},
\end{align} 
where $\E_{\Trans_{-i}} [V] := \E_{\Trans_{1:i-1} \Trans_{i+1:m}} [V] \in \sR^{\gS_i}$ takes expectation over the $m - 1$ components.
With the notation from~\citep{zanette2019tighter}, let $\|X\|_{2, P} = \sqrt{\E_{P}[X^2]}$ be the $\normltwo$-norm on 
the equivalence classes of almost surely the same 
random variables.

\textbf{Bernstein-style bonus.}
The transition bonus of F-EULER is defined by
\begin{equation}  \label{eqn:trans-bonus-f-euler}
    \begin{aligned}
        \bTrans(x)
        &:= \nlsum_{i=1}^{m} \tfrac{g_{i}(\eTrans(x), \uV_{h+1})}{\sqrt{\cntTrans{i}(x)}}
        + \nlsum_{i=1}^{m} \tfrac{\sqrt{2L} \left\| \uV_{h+1} - \lV_{h+1} \right\|_{2, \eTrans(x)}}{\sqrt{\cntTrans{i}(x)}} \\
        &\quad + \nlsum_{i=1}^{m} \nlsum_{j=i+1}^{m} 11HL \sqrt{\tfrac{S_i S_j}{\cntTrans{i}(x) \cntTrans{j}(x)}} + \nlsum_{i=1}^{m} \tfrac{5HL}{\cntTrans{i}(x)}.
    \end{aligned}
\end{equation}
In the case of unknown rewards, the reward bonus of F-EULER is defined by 
\begin{align}  \label{eqn:r-bonus-f-euler}
    \bR(x) := \nlsum_{i=1}^{l} \sqrt{\tfrac{4 \SV[\er_i(x)] L}{\cntR{i}(x)}} + \nlsum_{i=1}^{l} \tfrac{14L}{3\cntR{i}(x)},
\end{align}
where $\SV[\er_i(x)]$ is sample variance of the reward component, calculated using the history $\gH_r$.
Online algorithms like Welford's method~\citep{welford1962note} can efficiently compute $\eR_{i}$ and $\SV[\er_i(x)]$ jointly. F-EULER has the following problem-dependent regret guarantees.

\begin{theorem}[Problem-dependent regret bounds of F-EULER]  \label{thm:f-euler}
    For any FMDP specified by~\eqref{eqn:fmdp}, let 
    \begin{align*}
        \gQ_{i} := \max_{x\in \gX, h\in H} \left\{ \Var_{\Trans_{i}(x)} \E_{\Trans_{-i}(x)} [\oV{h+1}] \right\}, \quad 
        \gR_{i} := \max_{x\in \gX} \left\{ \Var [r_{i}(x)] \right\}.
    \end{align*}
    Let $\gG$ be the upper bound of $\sum_{h=1}^{H} R(x_h)$ for any initial state and any $policy$.
    In the case of known rewards, with probability $1 - \delta$, the regret of F-EULER in $K$ episodes is upper bounded by
    \begin{align}
        & \min\bigl\{
            \bigotilde\bigl(\nlsum_{i=1}^{m} \sqrt{\gQ_i X[\idxTrans{i}]T}\bigr), 
            \bigotilde\bigl(\nlsum_{i=1}^{m} \sqrt{\gG^2X[\idxTrans{i}]K}\bigr) 
        \bigr\} \\
        &\quad + \min\bigl\{
            \bigotilde\bigl(\nlsum_{i=1}^{l} \sqrt{\gR_{i} X[\idxR{i}]T} \bigr), 
            \bigotilde\bigl(\nlsum_{i=1}^{l} \sqrt{\gG^2X[\idxR{i}]K} \bigr) 
        \bigr\}.  \label{eqn:f-euler-pd-rwd}
    \end{align}
    In the case of known rewards, the term~\eqref{eqn:f-euler-pd-rwd} is dropped.
\end{theorem}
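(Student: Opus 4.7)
My proof plan follows the standard optimism-based template adapted to the factored product structure. First, I would establish by backward induction in $h$ the high-probability sandwich $\lV_h(s) \leq V_h^{\pi_k}(s) \leq \oV{h}(s) \leq \uV_h(s)$ pointwise, from which optimism yields
\begin{align*}
\Regret(K) \leq \nlsum_{k=1}^{K}\bigl(\uV_1(s_{k,1}) - \lV_1(s_{k,1})\bigr).
\end{align*}
Unrolling this gap along the sampled trajectories, it is dominated by the accumulated bonuses $\sum_{k,h}(\bTrans(x_{k,h}) + \bR(x_{k,h}))$ plus a martingale correction handled by Azuma--Hoeffding. The inductive step reduces to showing that $\bTrans(x)$ dominates $|\innerp{\eTrans(x) - \Trans(x)}{V}|$ uniformly for $V \in \{\oV{h+1}, V^{\pi_k}_{h+1}\}$, and $\bR(x)$ dominates $|\eR(x) - R(x)|$.

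The main technical ingredient is the inverse telescoping identity
\begin{align*}
\innerp{\eTrans(x) - \Trans(x)}{V} = \nlsum_{i=1}^{m}\innerp{\eTrans_i(x) - \Trans_i(x)}{\E_{\Trans_{-i}(x)}[V]} + (\text{mixed remainders}),
\end{align*}
in which the main terms are controlled by a scalar empirical Bernstein bound applied to the $\gS_i$-indexed vector $\E_{\Trans_{-i}(x)}[V]$, yielding exactly $g_i(\Trans(x), V)/\sqrt{\cntTrans{i}(x)}$. Switching from $\Trans$ to $\eTrans$ inside $g_i$ and from $V^*$ (or $V^{\pi_k}$) to the sandwich interval is absorbed by the slack term $\sqrt{2L}\|\uV_{h+1} - \lV_{h+1}\|_{2,\eTrans(x)}/\sqrt{\cntTrans{i}(x)}$ together with the $5HL/\cntTrans{i}(x)$ constant. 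Each mixed remainder carries two simultaneous factor-level errors $(\eTrans_i - \Trans_i)(\eTrans_j - \Trans_j)$; $\normlone$-concentration on $\Delta(\gS_i), \Delta(\gS_j)$ (Lemma~\ref{lem:concen-l1}) combined with $\|V\|_\infty \leq H$ collapses them into the cross-component bonus $11HL\sqrt{S_iS_j/(\cntTrans{i}(x)\cntTrans{j}(x))}$, closing the optimism argument.

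For the problem-dependent regret, I would convert $g_i(\eTrans, \uV_{h+1})$ to $g_i(\Trans, \oV{h+1})$ in two stages ($\uV \to V^*$ using the slack, $\eTrans \to \Trans$ via a second-order Bernstein), each contributing only lower-order terms in the style of~\citep{zanette2019tighter}. Cauchy--Schwarz in $(k,h)$ together with the pigeonhole count $\sum_{k,h} 1/\cntTrans{i}(x_{k,h}[\idxTrans{i}]) = \bigotilde(X[\idxTrans{i}])$ yields
\begin{align*}
\nlsum_{k,h}\tfrac{g_i(\eTrans(x_{k,h}), \uV_{h+1})}{\sqrt{\cntTrans{i}(x_{k,h})}}
= \bigotilde\Bigl(\sqrt{X[\idxTrans{i}] \nlsum_{k,h}\Var_{\Trans_i}\E_{\Trans_{-i}}[\oV{h+1}]}\Bigr).
\end{align*}
Two independent bounds on the inner sum deliver the minimum in the theorem: uniformly in $(x,h)$ the variance is at most $\gQ_i$, giving $\gQ_i HK = \gQ_i T$ and thus $\bigotilde(\sqrt{\gQ_i X[\idxTrans{i}] T})$; alternatively, the Bellman variance decomposition along each episode gives $\sum_h \Var_{\Trans(x_h)}[\oV{h+1}] \leq \gG^2$ (since $\sum_h R(x_h) \leq \gG$), and Jensen's inequality passes this bound to $\Var_{\Trans_i}\E_{\Trans_{-i}}$, yielding $\bigotilde(\sqrt{\gG^2 X[\idxTrans{i}] K})$. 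The reward-induced term is entirely analogous, with the empirical-Bernstein reward bonus $\bR$ producing the same min structure with either $\gR_i$ or $\gG^2$.

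The hardest step is verifying that $\bTrans$ in~\eqref{eqn:trans-bonus-f-euler} is \emph{exactly} sufficient for optimism despite the coupling among the $m$ factors: that the $m(m-1)/2$ mixed remainders collapse into the cross-component bonus $11HL\sqrt{S_iS_j/(\cntTrans{i}\cntTrans{j})}$ without smuggling any $\sqrt{S}$ factor into the leading variance term. This is exactly the obstacle flagged in the introduction---the naive sum of per-factor scalar-concentration bonuses cannot imply optimism---and is what the inverse-telescoping identity above is designed to resolve.
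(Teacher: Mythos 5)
Your overall architecture matches the paper's: optimism via backward induction, the inverse telescoping identity with scalar Bernstein concentration on $\innerp{\eTrans_i - \Trans_i}{\E_{\Trans_{-i}}[V]}$ and $\normlone$-concentration for the mixed remainders, the ``Lipschitz'' conversion of $g_i(\Trans,\oV{h+1})$ to $g_i(\eTrans,\uV_{h+1})$ absorbed into the $\|\uV-\lV\|_{2,\eTrans}$ and $HL/\cntTrans{i}$ slack, and the pigeonhole-plus-Cauchy--Schwarz accounting over time. Minor differences (unrolling along sampled trajectories with Azuma versus the paper's occupancy-measure sums $w_{k,h}$; bounding $\uV_1 - \lV_1$ versus $\uV_1 - V_1^{\pi_k}$) are cosmetic.

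There is, however, one genuine gap in the $\bigotilde\bigl(\sum_i \sqrt{\gG^2 X[\idxTrans{i}]K}\bigr)$ branch for the transition-induced term. You claim that ``the Bellman variance decomposition along each episode gives $\sum_h \Var_{\Trans(x_h)}[\oV{h+1}] \le \gG^2$.'' The law-of-total-variance identity $\sum_h \E_{\pi}\bigl[\Var_{\Trans(x_h)}[V_{h+1}]\bigr] \le \E_\pi\bigl[\bigl(\sum_h R(x_h) - V_1(s_1)\bigr)^2\bigr]$ requires the value function to be that of the \emph{same} policy $\pi$ generating the trajectory, so that $\sum_{t<h} R(x_t) + V_h(s_h)$ is a martingale; with $V = \oV{}$ and trajectories drawn from $\pi_k \neq \pi^*$ the martingale-difference structure breaks and the bound fails. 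The paper resolves this by introducing two cumulative variance quantities, $\sC_i^{*}$ (built from $\oV{h+1}$, bounded by $\gQ_i$) and $\sC_i^{\pi}$ (built from $V^{\pi_k}_{h+1}$, bounded by $\gG^2/H$ via the law of total variance), and bridging between them (Lemma~\ref{lem:bound-bridge}) at the cost of an extra term $\sum_i H\sqrt{X[\idxTrans{i}]L}\sqrt{\Regret(K)}$; this produces a self-bounding inequality $\Regret(K) \lesssim A\sqrt{\Regret(K)} + B$ that must then be solved. Your proposal omits this bridge and the resulting quadratic inequality entirely, so as written the $\gG^2 K$ branch of the transition term does not go through. (Your reward-side $\gG^2 K$ bound is fine, since there the variance of $\sum_h r_i(x_h)$ under $\pi_k$ is controlled directly.)
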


F-EULER does not assume knowledge of the actual $\gQ_i, \gR_i$ and $\gG$ and its regret bounds (Theorem~\ref{thm:f-euler}) enjoy the same problem-dependent advantages as discussed in the nonfactored case~\citep{zanette2019tighter}.
$r(\cdot) \in [0, 1]$ yields $\gG \le H$ and $\gR_{i} \le 1$ for all $i\in [l]$. 
Hence, Theorem~\ref{thm:f-euler} implies the following worst-case regret.

\begin{corollary}[Worst-case regret bounds of F-EULER]  \label{cor:f-euler}
    For any FMDP specified by~\eqref{eqn:fmdp}, in the case of unknown rewards, with probability $1 - \delta$, the regret of F-EULER in $K$ episodes is upper bounded by 
    \begin{align}  \label{eqn:f-euler-wc}
        \bigotilde\left(\nlsum_{i=1}^{m} \sqrt{HX[\idxTrans{i}]T} + \nlsum_{i=1}^{l} \sqrt{X[\idxR{i}]T}\right).
    \end{align}
    In the case of known rewards, the term $\nlsum_{i=1}^{l} \sqrt{X[\idxR{i}]T}$ is dropped.
\end{corollary}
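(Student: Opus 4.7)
\textbf{Proof proposal for Corollary~\ref{cor:f-euler}.} My plan is to obtain the worst-case bound by directly substituting worst-case estimates of the problem-dependent quantities $\gQ_i$, $\gR_i$, and $\gG$ into the guarantee of Theorem~\ref{thm:f-euler}, and then choosing, for each of the two minima appearing there, whichever of the two candidate expressions gives the smaller worst-case bound.

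First I would establish the trivial range bounds. Since $r(x) \in [0,1]$ and an episode has $H$ steps, the cumulative reward along any trajectory satisfies $\sum_{h=1}^{H} R(x_h) \le H$, hence $\gG \le H$. For each reward component, since $r_i$ is a non-negative summand of a $[0,1]$-valued reward it lies in $[0,1]$, so its variance is at most a constant and $\gR_i \le 1$ (up to a constant that is absorbed into $\bigotilde(\cdot)$).

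Next I would resolve each minimum in Theorem~\ref{thm:f-euler}. For the transition-induced term, the key observation is that plugging $\gG \le H$ into the second candidate gives
\begin{align*}
    \nlsum_{i=1}^{m} \sqrt{\gG^{2} X[\idxTrans{i}] K} \;\le\; \nlsum_{i=1}^{m} \sqrt{H^{2} X[\idxTrans{i}] K} \;=\; \nlsum_{i=1}^{m} \sqrt{H X[\idxTrans{i}] T},
\end{align*}
using $T = KH$. This is strictly better than bounding $\gQ_i \le H^2$ in the first candidate, which would only yield a redundant $H\sqrt{X[\idxTrans{i}] T}$. For the reward-induced term, the first candidate is the tighter one after substituting $\gR_i \le 1$:
\begin{align*}
    \nlsum_{i=1}^{l} \sqrt{\gR_{i} X[\idxR{i}] T} \;\le\; \nlsum_{i=1}^{l} \sqrt{X[\idxR{i}] T},
\end{align*}
whereas using $\gG \le H$ in the second candidate would produce a looser $\sqrt{H X[\idxR{i}] T}$. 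Summing these two bounds yields~\eqref{eqn:f-euler-wc}. The known-rewards case is immediate from the corresponding clause in Theorem~\ref{thm:f-euler}, which drops the reward-induced term entirely.

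This is essentially a bookkeeping step with no real obstacle; the only point that requires attention is choosing the correct candidate in each minimum so that the $\sqrt{H}$ dependence (rather than $H$) is preserved in the transition term via the $K=T/H$ conversion. All constants and logarithmic factors are absorbed into $\bigotilde(\cdot)$ under the standing assumption $T \ge \mathrm{poly}(m, l, \max_i S_i, \max_i X[\idxTrans{i}], H)$ already invoked after Theorem~\ref{thm:f-ucbvi}.
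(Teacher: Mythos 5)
Your proposal is correct and follows essentially the same route as the paper: bound $\gG \le H$ and $\gR_i \le 1$ from $r(\cdot)\in[0,1]$, select the $\sqrt{\gG^2 X[\idxTrans{i}]K}$ candidate for the transition term (converting via $T=KH$ to $\sqrt{HX[\idxTrans{i}]T}$) and the $\sqrt{\gR_i X[\idxR{i}]T}$ candidate for the reward term, then absorb lower-order terms under the standing assumption on $T$. No gaps.
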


In~\eqref{eqn:f-euler-wc}, the regret bound again consists of a transition-induced term and a reward-induced term, resulting from the sum over time of the first term in~\eqref{eqn:trans-bonus-f-euler} and the first term in~\eqref{eqn:r-bonus-f-euler}, respectively. 
For nonfactored MDPs, the reward-induced term $\bigotilde(\sqrt{SAT})$ is absorbed into the transition-induced term $\bigotilde(\sqrt{HSAT})$, while for FMDPs, these two terms depend on their factorization structures, and do not absorb each other in general.
Corollary~\ref{cor:f-euler} indicates that F-EULER improves the transition-induced term of F-UCBVI by a $\sqrt{H}$ factor. On the other hand, F-EULER has a worse computational complexity than F-UCBVI. Even if neglecting the computation on the reward bonus (since efficient methods exist), in each run of the \texttt{VI\_Optimism} subroutine, F-EULER has $\bigo(mS^2AH)$ computational complexity, $m$ times more than F-UCBVI.

\section{Proof sketch}
\label{sec:proof-sketch}

We present here the proof ideas of F-UCBVI in the case of known rewards,
which sheds light on our main new techniques. The full proof is deferred to Appendix~\ref{sec:appx-f-ucbvi}. We add a subcript $k$ to $\eTrans_{i}, \eTrans, \cntTrans{i}, \cntR{i}, \uV_{h}, \bTrans$ in Algorithms~\ref{alg:f-ovi} and~\ref{alg:vi-optimism} to denote the corresponding quantities in the $k$th episode.
For inductive purpose, let $V_{H+1}^{\pi}(s) = 0$ for all $\pi\in \Pi, s \in \gS$, so that $V_{h}^{\pi}(s) = R(x) + \innerp{\Trans(x)}{V_{h+1}^{\pi}}$ with $x = (s, \pi(s, h))$ for all $\pi\in \Pi, s\in \gS, h \in [H]$, where $\Pi$ is the set of all deterministic policies.

\textbf{The mechanism of optimism.} The purpose of the bonus is to ensure optimism, which means obtaining an entrywise UCB $\uV_{k, h}$ on $\oV{h}$ for all $k\in [K], h\in [H]$.

For $h = H + 1$, this is certainly the case, since they are both entrywise $0$.
For $h\in [H]$, $\forall s\in \gS$, with $x_{k, h} = (s, \pi_k(s, h))$ and $x_{h}^{*} = (s, \pi^{*}(s, h))$, 
\begin{align*}
    \uV_{k, h}(s) -\oV{h}(s) 
    &\ge \underbrace{\innerpc{\eTrans_{k}(x_{h}^{*})}{V_{k, h+1} - \oV{h+1}}}_{\text{$\ge 0$ by inductive assumption}} 
    + \underbrace{\innerpc{\eTrans_{k}(x_{h}^{*}) - \Trans(x_{h}^{*})}{\oV{h+1}}}_{\text{transition estimation error}}
    + \bTrans_{k}(x_{h}^{*}).
\end{align*}
To form a valid backward induction argument, we expect the bonus to be a UCB of the absolute value of the transition estimation error.
For nonfactored MDPs, with $\cntTrans{k}(x)$ being the number of visits to $x$ before the $k$th episode, a direct application of Hoeffding's inequality (Lemma~\ref{lem:concen-h}) yields that for any $x\in \gX$, 
\begin{align*}
    \bigl| \innerpc{\eTrans_{k}(x) - \Trans(x)}{\oV{h+1}} \bigr| \le H\sqrt{\tfrac{L}{2\cntTrans{k}(x)}},
\end{align*}
which determines the transition bonus.
For FMDPs, a natural extension is to expect 
\begin{align}  \label{eqn:nat-extension}
    \bigl| \bigl\langle \eTrans_{k}(x) - \Trans(x), \oV{h+1}\bigr\rangle \bigr| 
    \stackrel{?}{\le} \nlsum_{i=1}^{m} \bigl| \bigl\langle{\eTrans_{i, k}(x) - \Trans_{i}(x)},{\oV{h+1}}\bigr\rangle \bigr|
    \le \nlsum_{i=1}^{m} H\sqrt{\tfrac{L}{2\cntTrans{i, k}(x)}},
\end{align}
which can then be used as the transition bonus.
Although it holds that 
\begin{align*}
  \bigl| \bigl\langle{\eTrans_{k}(x) - \Trans(x)},{\oV{h+1}}\bigr\rangle \bigr| 
    \le \nlsum_{i=1}^{m} \bigl\langle{\bigl|\eTrans_{i, k}(x) - \Trans_{i}(x)\bigr|},{\oV{h+1}}\bigr\rangle, 
\end{align*}
the inequality in question in~\eqref{eqn:nat-extension} fails to hold (e.g., using $S_i = m = 2$ one obtains a quick contradiction).
To address this difficulty, we adopt an inverse telescoping technique.
Omitting the dependence of the transitions $\eTrans_{k}(x), \Trans(x), \eTrans_i(x), \Trans_i(x)$ on $x$, we rewrite $\langle{\eTrans_{k} - \Trans},{\oV{h+1}}\rangle$ as
\begin{align}
    &\innerpc{\nlprod_{i=1}^{m} \eTrans_{i, k} - \nlprod_{i=1}^{m} \Trans_{i}}{\oV{h+1}} 
    = \innerpc{\nlsum_{i=1}^{m} (\eTrans_{i, k} - \Trans_{i}) \Trans_{1:i-1} \eTrans_{i+1:m, k}}{\oV{h+1}} \nonumber \\
    = &\nlsum_{i=1}^{m} \innerpc{\eTrans_{i, k} - \Trans_{i}}{\E_{\Trans_{1:i-1}} \E_{\Trans_{i+1:m}} [\oV{h+1}]} \label{eqn:est-err-decomp-1} \\
    &\quad + \nlsum_{i=1}^{m} \innerpc{\eTrans_{i, k} - \Trans_{i}}{\E_{\Trans_{1:i-1}} (\E_{\eTrans_{i+1:m, k}} - \E_{\Trans_{i+1:m}}) [\oV{h+1}]},  \label{eqn:est-err-decomp-2}
\end{align}
where the first equality is referred to as the ``inverse telescoping'' technique, and the second equality adds then subtracts the same term \eqref{eqn:est-err-decomp-1}, which can be bounded by scalar concentration, leading to the component-wise bonus term in~\eqref{eqn:trans-bonus-f-ucbvi}.
\eqref{eqn:est-err-decomp-2} is bounded by $\normlone$-norm concentration (Lemma~\ref{lem:concen-l1}) with another use of the inverse telescoping technique, leading to the cross-component bonus term in~\eqref{eqn:trans-bonus-f-ucbvi}.

\textbf{Regret bound.} In the sequel, we use ``$\lesssim, \approx$'' to represent ``$\le, =$'' hiding constants and lower-order terms.
Let $w_{k, h}(x)$ denote the visit probability of $x \in \gX$ at step $h\in [H]$ following policy $\pi_k$ during the interaction.
Then the optimism ensures that the regret in $K$ episodes is upper bounded by 
\begin{align*}
    &\nlsum_{k = 1}^{K} \uV_{k, 1}(s_{k, 1}) - V_1^{\pi_k}(s_{k, 1}) \\
    \approx & \sum_{k = 1}^{K} \sum_{h = 1}^{H} \sum_{x\in L_{k}} w_{k, h}(x) \bigl\{
        \bTrans_{k}(x) + \underbrace{\innerpc{\eTrans_{k}(x) - \Trans(x)}{\oV{h+1}}}_{\text{transition estimation error}}
        + \underbrace{\innerpc{\eTrans_{k}(x) - \Trans(x)}{\uV_{k, h+1} - \oV{h+1}}}_{\text{correction, sum is lower-order (Lemma~\ref{lem:cul-corr-h})}}
    \bigr\},
\end{align*}
where the ``good'' set $L_{k}$ is a notion of sufficient visits before the $k$th episode and the sum out of $L_{k}$ is a lower-order term (Lemma~\ref{lem:sum-out-of-good-set}).
The cumulative transition estimation error is upper bounded by
\begin{align*}
    \sum_{k = 1}^{K} \sum_{h = 1}^{H} \sum_{x\in L_{k}} w_{k, h}(x)
    \Bigl\{
        \sum_{i=1}^{m} H\sqrt{\tfrac{L}{2\cntTrans{i, k}(x)}} + \underbrace{\sum_{i=1}^{m} \sum_{j=i+1}^{m} 2HL \sqrt{\tfrac{S_i S_j}{\cntTrans{i, k}(x) \cntTrans{j, k}(x)}}}_{\text{sum is lower-order (Lemma~\ref{lem:sum-mixed-vr})}}
    \Bigr\} 
    \lesssim \sum_{i=1}^{m} H\sqrt{X[\idxTrans{i}] T L}.
\end{align*}
The upper bound of the cumulative transition bonus takes exactly the same form.
Thus, we obtain 
\begin{align*}
    \Regret(K) 
    \lesssim \nlsum_{i=1}^{m} H\sqrt{X[\idxTrans{i}] T L} 
    = \bigotilde(\nlsum_{i=1}^{m} \sqrt{H^2X[\idxTrans{i}] T}).
\end{align*}
\vskip1pt
\textbf{Extension to other conditions.}
The treatment of unknown rewards is standard~\citep{zanette2019tighter}.
The same mechanism of optimism applies to F-EULER, except that the scalar concentration uses Bernstein's inequality (Lemma~\ref{lem:concen-b}).
Please see Appendix~\ref{sec:appx-f-euler} for a complete proof.

\section{Lower bounds}

In this section, we present some information theoretic lower bounds for different factored structures of FMDP, which illustrate the structure-dependent nature of the lower bounds; a full characterization is still an open problem.

Recall the formal statement of the lower bound for MDPs that for any given natural numbers $S, A, H$, there is an MDP that has $S$ states, $A$ actions and horizon $H$ with unknown transition $\Trans$ and possibly known rewards $R$, such that the expected regret after $K$ episodes is $\Omega(\sqrt{HSAT})$.
Ideally, the lower bound of FMDPs should be stated in the same way,
specified for any natural numbers $m, n, l, H$ and any factored structure $\{\gS_i\}_{i=1}^{m}, \{\gX_{i}\}_{i=1}^{n}, \{\idxTrans{i}\}_{i=1}^{m}, \{\idxR{i}\}_{i=1}^{l}$.
A simple $\Omega(\sum_{i=1}^{m} \sqrt{HX[\idxTrans{i}]T} + \sum_{i=1}^{l} \sqrt{X[\idxR{i}]T})$ lower bound is tempting, 
yet far from the truth.
To avoid excessive subtlety, our lower bound discussion is restricted to the following normal factored structure, where the state-action factorization is an extension to the state factorization.

\begin{definition}[Normal factored structure]
    For natural numbers $m < n$, let the state space $\gS = \bigotimes_{i=1}^{m} \gS_i$ and the action space $\gA = \bigotimes_{i=1}^{n-m} \gA_i$. 
    The factored structure of an FMDP is normal if and only if the state-action space $\gX = \bigotimes_{i=1}^{n} \gX_i$ where $\gX_i = \gS_i$ for $i\in [m]$ and $\gX_{m+i} = \gA_{i}$ for $i\in [n-m]$.
\end{definition}

We now state the lower bounds for two degenerate structures.

\begin{theorem}[Degenerate case 1]  \label{thm:fmdp-lb-degen-1}
    For any algorithm, under the assumption of unknown rewards, for the normal factored structure that satisfies $\idxTrans{i} \subset [m]$ for all $i\in [m]$ and any $\idxR{i}$ for $i\in [l]$, 
    there is an FMDP with the specified structure such that for some initial states, the expected regret in $K$ episodes is at least $\Omega(\max_{i} \sqrt{X[\idxR{i}'] T})$ where $\idxR{i}' = \idxR{i} \cap \{m+1, \cdots, n\}$.
\end{theorem}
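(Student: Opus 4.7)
The idea is to exploit the hypothesis $\idxTrans{i}\subset[m]$ to make the state dynamics oblivious to actions, which then lets me reduce the problem to a standard multi-armed bandit (MAB) on the action scope of the reward component with the largest action-scope size. Set $i^{\star} \in \argmax_{i\in[l]} X[\idxR{i}']$ and $N := X[\idxR{i^{\star}}']$. Since every transition scope contains only state indices, actions cannot affect the state evolution, so any regret must come purely from picking suboptimal actions for immediate reward; this will give me exactly an $N$-armed bandit instance.

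I would first specify an FMDP in which each $\Trans_i(\cdot\mid x[\idxTrans{i}])$ is the constant map that deterministically returns a fixed component $s^{\star}[i]$ (which is permitted because $\Trans_i$ is only required to lie in $\Delta(\gS_i)$ and to depend on $x[\idxTrans{i}]$). Taking $s^{\star}$ as the initial state then forces $s_h = s^{\star}$ for every $h\in[H]$ regardless of the policy. I then set $r_i \equiv 0$ for $i\neq i^{\star}$ and let $r_{i^{\star}}(x[\idxR{i^{\star}}])$ depend only on the action-scope coordinates $a[\idxR{i^{\star}}']$, i.e., constant along its state-scope directions (again permitted by the factored-reward definition). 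This construction honors the prescribed factored structure $\{\idxTrans{i}\}_{i=1}^{m}$, $\{\idxR{i}\}_{i=1}^{l}$, and keeps $R = r_{i^{\star}} \in [0,1]$ since the other components vanish.

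Next I would build a family of instances indexed by $j^{\star}\in[N]\cup\{0\}$: the null instance assigns a $\mathrm{Bernoulli}(1/2)$ reward to each of the $N$ arms, while instance $j^{\star}$ perturbs arm $j^{\star}$ to $\mathrm{Bernoulli}(1/2+\epsilon)$. Since the state trajectory is identical and deterministic across the family, the only information the agent receives is the sequence of $T=KH$ i.i.d.\ Bernoulli rewards produced by the arms it pulls. The learning problem is therefore exactly a Bernoulli $N$-armed bandit of horizon $T$, and the classical information-theoretic bandit lower bound~\citep{lattimore2018bandit} applied with $\epsilon \asymp \sqrt{N/T}$ yields an expected regret of $\Omega(\sqrt{NT}) = \Omega(\max_i \sqrt{X[\idxR{i}']T})$ against at least one member of the family.

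The main obstacle is ensuring that the constructed FMDP conforms to the prescribed normal factored structure for \emph{any} admissible $\{\idxR{i}\}$, particularly when $\idxR{i^{\star}}$ intersects $[m]$ nontrivially, so that one cannot lazily collapse the state space. The two ingredients above neutralize this: constant transitions make the state trajectory deterministic and identical across instances, while making $r_{i^{\star}}$ independent of its state-scope argument keeps the number of distinguishable arms exactly $N = X[\idxR{i^{\star}}']$. Both choices are legal under the definitions of $\Trans_i$ and $r_i$, so the bandit lower bound transfers at the correct scale.
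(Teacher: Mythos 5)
Your proposal is correct and follows essentially the same route as the paper's proof: fix transitions that ignore actions (legal since $\idxTrans{i}\subset[m]$), make the reward depend only on the action-scope $\gX[\idxR{i^\star}']$ of the largest component, and reduce to an $X[\idxR{i^\star}']$-armed bandit over $T$ pulls with the standard $\Omega(\sqrt{NT})$ lower bound from~\citep{lattimore2018bandit}. Your version merely spells out details the paper leaves implicit (the deterministic transition, zeroing the other reward components, and the Bernoulli instance family).
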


\begin{theorem}[Degenerate case 2]  \label{thm:fmdp-lb-degen-2}
    For any algorithm, under the assumption of unknown rewards, for the normal factored structure that satisfies $\idxR{i} \subset \{m+1, \cdots, n\}$ for all $i\in [l]$, there is an FMDP with the specified structure such that for some initial states, the expected regret in $K$ episodes is at least $\Omega(\max_{i} \sqrt{X[\idxR{i}] T})$.
\end{theorem}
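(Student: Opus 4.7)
I would reduce to the classical multi-armed bandit (MAB) lower bound. Because every reward scope $\idxR{i}$ lies in $\{m+1,\dots,n\}$, no reward component depends on the state; by making all but one reward component trivial and the transitions uninformative, the FMDP collapses into a bandit whose arms are the joint configurations of the action coordinates in the ``widest'' reward scope.

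\textbf{Construction.} Let $i^{\star} \in \argmax_{i\in[l]} X[\idxR{i}]$ and $N := X[\idxR{i^{\star}}]$. Take $\Trans_{i}(\cdot \mid x[\idxTrans{i}]) = \unif(\gS_{i})$ for every $i\in[m]$ and every $x[\idxTrans{i}]$: this respects the factorization and makes the state process i.i.d.\ uniform, independent of the policy and of the rewards. Set $r_{j} \equiv 0$ for all $j \neq i^{\star}$, and for each target $a^{\dagger} \in \gX[\idxR{i^{\star}}]$ define an instance $\mdp^{a^{\dagger}}$ in which $r_{i^{\star}}(x)$ is Bernoulli with mean $\tfrac{1}{2} + \Delta \cdot \mathbbm{1}\{x[\idxR{i^{\star}}] = a^{\dagger}\}$, with a gap $\Delta$ of order $\sqrt{N/T}$ to be tuned. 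In $\mdp^{a^{\dagger}}$ the optimal action-scope is $a^{\dagger}$, each non-optimal step incurs an instantaneous gap of $\Delta$, and the overall reward stays in $[0,1]$.

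\textbf{Reduction and MAB lower bound.} For any algorithm interacting with $\mdp^{a^{\dagger}}$, the entire observation process---states together with the $l$ component-wise reward signals---can be simulated from (i) i.i.d.\ uniform states independent of $a^{\dagger}$, (ii) deterministic zero rewards on the components $j\neq i^{\star}$, and (iii) the $T = KH$ Bernoulli samples produced at the ``pulls'' $x_{h}[\idxR{i^{\star}}]$. Hence, by the data-processing inequality, the KL divergence between the observation laws under two instances $\mdp^{a^{\dagger}}, \mdp^{a^{\ddagger}}$ is upper bounded by that of the corresponding $N$-armed Bernoulli bandit trajectories of length $T$. The standard change-of-measure argument (as in~\citep{lattimore2018bandit, jin2020lecture}), applied to the family $\{\mdp^{a^{\dagger}}\}_{a^{\dagger}\in \gX[\idxR{i^{\star}}]}$ with $\Delta \asymp \sqrt{N/T}$, then produces an instance on which the expected regret is $\Omega(\sqrt{NT}) = \Omega(\max_{i}\sqrt{X[\idxR{i}]T})$; since transitions are uniform, the bound is independent of the initial state.

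\textbf{Main obstacle.} The one delicate step is to verify that the richer observability allowed by the FMDP---states, separately observed component rewards, and the action coordinates outside $\idxR{i^{\star}}$---does not leak any information about $a^{\dagger}$. The construction neutralizes each of these channels (uniform independent transitions, constant null reward components, unused action coordinates absent from $r_{i^{\star}}$), but the equivalence with a vanilla $N$-armed bandit must be formalized through the data-processing step above; once that is in place, the result follows by a routine instantiation of the classical MAB lower bound.
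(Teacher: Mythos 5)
Your proposal is correct and takes essentially the same route as the paper: the paper's proof also zeroes out all reward components except the one with the largest scope $X[\idxR{i}]$ and observes that, since that scope contains only action coordinates, the problem reduces to an $X[\idxR{i^\star}]$-armed bandit over $T$ pulls, to which the standard $\Omega(\sqrt{NT})$ MAB lower bound applies. Your write-up simply makes explicit the details the paper leaves implicit (uniform transitions, the Bernoulli instance family, and the data-processing step showing states and null reward components leak no information).
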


Theorem~\ref{thm:fmdp-lb-degen-1} considers the degenerate case where the transitions do not depend on the action space at all.
Furthermore, the regret is always $0$ if the rewards do not depend on the action space either ($\idxR{i}' = \emptyset$ for all $i\in [l]$). 
Theorem~\ref{thm:fmdp-lb-degen-2} considers the degenerate case where the rewards have no dependence on the state space, and $X[\idxTrans{i}]$ vanishes in the lower bound for this MAB problem.
The following theorem identifies some nondegenerate constraints on the factored structures.

\begin{theorem}[Lower bound, a nondegenerate case]  \label{thm:fmdp-lb-nondegen}
    For any algorithm, under the assumption of known rewards, for the normal factored structure that satisfies $S_i\ge 3$, $i\in \idxTrans{i}$, $\exists j\in \{m+1, \cdots, n\}$ such that $j\in \idxTrans{i}$ for all $i\in [m]$ and $H\ge 2$, 
    there is an FMDP with the specified structure such that for some initial states, the expected regret in $K$ episodes is at least $\Omega(\max_{i}\sqrt{HX[\idxTrans{i}]T})$.
\end{theorem}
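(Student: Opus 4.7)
The plan is to prove Theorem~\ref{thm:fmdp-lb-nondegen} by reducing, for each fixed $i^* \in [m]$, the FMDP learning problem to a hard nonfactored episodic MDP with effective state space of size $S_{i^*}$ and effective action space of size $X[\idxTrans{i^*}]/S_{i^*}$, and then invoking the $\Omega(\sqrt{HSAT})$ lower bound for episodic MDPs~\citep{jin2020lecture}. Choosing $i^*$ to maximize the resulting bound yields $\Omega(\max_i \sqrt{H X[\idxTrans{i}] T})$. With $i^*$ fixed, let $I^a := \idxTrans{i^*} \cap \{m+1, \ldots, n\}$ and $I^s := \idxTrans{i^*} \cap ([m] \setminus \{i^*\})$, so that $I^a \neq \emptyset$ by assumption and $X[\idxTrans{i^*}] = S_{i^*} \cdot \prod_{j\in I^s} S_j \cdot X[I^a]$; the effective ``action'' in the embedding will be the tuple $(s[I^s], a[I^a])$, of cardinality $X[\idxTrans{i^*}]/S_{i^*}$.

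I would then construct the hard FMDP as follows. For components $i \notin \idxTrans{i^*}$, set $P_i$ to keep $s[i]$ at a fixed value absorbing from the chosen initial state; these components do not influence the $i^*$th transition. For each $i \in I^s$, use the assumption that $\idxTrans{i}$ contains an action component to design $P_i$ deterministically so that the agent can freely write $s_{h+1}[i]$ via a suitable choice of the corresponding action component at step $h$. For the $i^*$th component, implement the standard hard episodic MDP construction (as in~\citep{jin2020lecture}) on the effective state-action space $\gS_{i^*} \times \gX[\idxTrans{i^*} \setminus \{i^*\}]$, identifying the effective state with $s[i^*]$ and the effective action with $(s[I^s], a[I^a])$. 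The (known) reward is chosen to depend only on $s[i^*]$ and matches the reward of the embedded hard MDP. Because the remaining components are either trivial or fully agent-controllable, any FMDP algorithm induces a learning algorithm on the embedded MDP without loss of regret, and substituting $S = S_{i^*}$, $A = X[\idxTrans{i^*}]/S_{i^*}$ in the $\Omega(\sqrt{HSAT})$ bound gives the desired $\Omega(\sqrt{H X[\idxTrans{i^*}] T})$ expected regret from some initial state.

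The main obstacle is the coupling that appears when an action component used to write some $s_{h+1}[i]$ with $i \in I^s$ simultaneously lies in $I^a$, so that a single action at step $h$ both contributes to the $i^*$th transition at step $h$ and determines a state-scope component for the $i^*$th transition at step $h+1$. I would resolve this via a two-phase embedding: odd steps are dedicated to writing arbitrary values of $s[I^s]$ (with $P_{i^*}$ on those steps set to a step-independent distribution that leaks no information, e.g., via an additional ``phase'' component with deterministic toggling dynamics), and even steps play the hard MDP; since $H \geq 2$, the effective sample size is still $\Theta(T)$ and only a constant factor is lost. The assumption $S_i \geq 3$ is used inside the embedded hard MDP sub-construction, which requires enough states for a standard good/bad/reset gadget. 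The closing information-theoretic step follows the same KL/Le Cam template used to prove Theorems~\ref{thm:fmdp-lb-degen-1} and~\ref{thm:fmdp-lb-degen-2} and underlying the MAB lower bound~\citep{lattimore2018bandit}, and verifies that the agent's freedom to set $s[I^s]$ provides no shortcut for identifying the planted optimal configuration of $P_{i^*}$ beyond direct sampling.
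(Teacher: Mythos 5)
Your reduction target (a bandit-like MDP whose effective action space has cardinality $X[\idxTrans{i^*}]/S_{i^*}$) is the right order of magnitude, but the way you recover the factor $\prod_{j\in I^s} S_j$ diverges from the paper and contains genuine gaps. The paper never makes $s[I^s]$ agent-controllable: it leaves those components inert and lets the \emph{initial state} select among $\prod_{j\in I^s} S_j$ copies of the MAB-like FMDP, with the environment restarting uniformly at random over copies --- this is exactly what the clause ``for some initial states'' in the theorem licenses, and it sidesteps every mechanism you are forced to build. Your route instead requires the agent to write $s_{h+1}[j]$ to an arbitrary value of $\gS_j$ in a single step, but the hypothesis only guarantees that $\idxTrans{j}$ contains \emph{some} action component, whose cardinality may be $2 < S_j$; ``freely write via a suitable choice of the corresponding action component'' is therefore not implementable in one step in general. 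Your two-phase repair then needs the transition of the $i^*$th component to behave differently on odd and even steps. In the paper's setting the transition is stationary, so the only way to do this is to encode the phase in the state --- but the factored structure is fixed by the theorem statement, so you cannot add a ``phase'' component. And even granting such a gadget, deferring the bandit pull to step $2$ leaves a reward window of $H-2$ steps, which is empty at $H=2$, a case the theorem explicitly covers (the paper's pull happens at step $1$, giving $H-1\ge 1$ reward steps).

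For what it is worth, the concern you flag at the end --- that letting the agent choose $s[I^s]$ might create a shortcut --- is the one part that would be fine if the construction got that far: a single MAB with $N\cdot A$ arms planted uniformly has the same $\Omega(\sqrt{NAK})$ rate as $N$ environment-forced copies of an $A$-armed MAB. The fix is simply to abandon agent control altogether: make the reward and the $i^*$th transition depend only on $s[i^*]$, $s[I^s]$ and $a[I^a]$, freeze (or ignore) the dynamics of the $I^s$ components, and index $(S_{i^*}/3)\cdot\prod_{j\in I^s}S_j$ independent $X[I^a]$-armed bandit copies by the environment-randomized initial values of $(s[i^*], s[I^s])$; summing $\Omega\bigl((H-1)\sqrt{X[I^a]\,K/\#\text{copies}}\bigr)$ over copies gives $\Omega(\sqrt{H X[\idxTrans{i^*}] T})$, which is the paper's argument.
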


Theorem~\ref{thm:fmdp-lb-nondegen} indicates that the regret achieved by F-EULER is minimax optimal up to log factors for a rich class of structures, 
since $\bigotilde(\sum_{i=1}^{m} \sqrt{HX[\idxTrans{i}]T}) = \bigotilde(\max_{i}\sqrt{HX[\idxTrans{i}]T})$.
The main assumption here is that the transition of each state component depends on itself and at least one action component. 
Considering that it is satisfied in the nonfactored case, this assumption is not too restrictive.
A major obstacle to show the lower bound for a general factored structure is that some structures lack the self-loop property, i.e., the transition of a state component depends on itself so that we can say it \emph{stays} in the same state with a certain probability, which is key to the lower bound constructions for nonfactored MDPs~\citep{jaksch2010near, dann2015sample, jin2020lecture}.
However, by generalizing the self-loop property to the loop property, we can show a more general version of the nondegenerate lower bound.

For an FMDP with the normal factored structure, for $i, j \in [m]$, if $i \in \idxTrans{j}$, then we say the $j$th state component \emph{depends} on the $i$th state component, or the $i$th state component \emph{influences} the $j$th state component, denoted by $\gS_i \to \gS_j$.
The $i$th state component has the \emph{loop} property, if and only if the $i$th state component influences itself either directly ($\gS_i \to \gS_i$, self-loop) or through some intermediate state components ($\gS_i \to \cdots \to \gS_j \to \cdots \to \gS_i$ for some $j\text{'s} \in [m]$).
The loop is referred to as an \emph{influence loop}.
Let $\gI$ be the set of the indices of the state components that have the loop property and some action dependence, i.e.,
\begin{align*}
    \gI := \left\{ i \in [m]: \text{$i$ has the loop property, and there exists } j\in \{m+1, \cdots, n\} \text{ such that } j \in \idxTrans{i} \right\}.
\end{align*}
Then we have the following theorem that subsumes Theorem~\ref{thm:fmdp-lb-nondegen}.

\begin{theorem}  \label{thm:fmdp-lb-loop}
    For any algorithm, under the assumption of known rewards, for the normal factored structure that satisfies $S_i\ge 3, \forall i\in [m]$ and $H\ge 2$, there is an FMDP with the specified structure such that for some initial states, the expected regret in $K$ episodes is at least $\Omega(\max_{i\in \gI}\sqrt{HX[\idxTrans{i}]T})$.
\end{theorem}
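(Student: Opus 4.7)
The plan is to construct, for each $i^{*}\in\gI$, an explicit hard FMDP instance that embeds a multi-armed bandit (MAB) problem with $X[\idxTrans{i^{*}}]$ arms, and then apply a standard KL-divergence/Le Cam argument in the same spirit as Theorems~\ref{thm:fmdp-lb-nondegen} and~\ref{thm:fmdp-lb-degen-2}. Fix $i^{*}\in\gI$ attaining $\max_{i\in\gI}X[\idxTrans{i}]$; by definition of $\gI$ there is an action-component index $j^{*}\in\{m+1,\ldots,n\}\cap\idxTrans{i^{*}}$ and an influence loop $i^{*}=i_{0}\to i_{1}\to\cdots\to i_{\ell-1}\to i_{0}$ with distinct $i_{p}$'s. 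The case $\ell=1$ is already covered by Theorem~\ref{thm:fmdp-lb-nondegen}, so I focus on $\ell\ge 2$ and reduce to it by having the loop components simulate a self-loop on $i^{*}$ with an $\ell$-step propagation delay.

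The hard instance is designed so that only the loop components carry information. In each $\gS_{i_{p}}$ I designate a ``good'' state $g_{p}$ and a ``bad'' state $b_{p}$; the assumption $S_{i_{p}}\ge 3$ leaves a third state that serves as a starting value when needed. Every state component outside the loop is frozen by a constant transition. For $p=1,\ldots,\ell-1$, the transition $\Trans_{i_{p}}$ deterministically copies the good/bad label of $s[i_{p-1}]$ onto $s'[i_{p}]$, so that $s[i_{\ell-1}]$ faithfully records $s[i_{0}]$ from $\ell-1$ steps ago. Finally, $\Trans_{i_{0}}$ is the JAO-style transition indexed by the scope $x[\idxTrans{i^{*}}]$: when $s[i_{\ell-1}]=g_{\ell-1}$ a single distinguished scope value returns to $g_{0}$ with probability $1-\epsilon+\Delta$ while every other value uses probability $1-\epsilon$ (yielding $X[\idxTrans{i^{*}}]$ competing ``arms''), and when $s[i_{\ell-1}]=b_{\ell-1}$ the dynamics are strongly absorbing toward $b_{0}$. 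Rewards are known and equal to $\mathbf{1}\{s[i^{*}]=g_{0}\}$, and the initial state places every loop component in its good state.

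Given this family, the standard MAB machinery applies. Since the intermediate transitions $\Trans_{i_{1}},\ldots,\Trans_{i_{\ell-1}}$ are deterministic functions of previously observed quantities, they carry no KL-information beyond what is already present in the $\Trans_{i_{0}}$-observations. Conditioned on having kept the loop in good states, the learner faces a $X[\idxTrans{i^{*}}]$-armed bandit with per-step reward gap $\Theta(\Delta)$. A Pinsker/two-point argument (summing KL contributions across arms and episodes, exactly as in the preceding lower bounds) shows that identifying the optimal arm with enough confidence requires $\Omega\bigl(X[\idxTrans{i^{*}}]/\Delta^{2}\bigr)$ total visits to the $\Trans_{i_{0}}$-transition. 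Tuning $\Delta\asymp\sqrt{X[\idxTrans{i^{*}}]/(HT)}$ balances sample complexity against per-episode regret and yields expected regret $\Omega\bigl(\sqrt{H\,X[\idxTrans{i^{*}}]\,T}\bigr)$.

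The main obstacle I expect is justifying that the $\ell$-step propagation delay does not shave the per-episode regret by more than a constant factor. The key claim to establish is that, for any policy and any $h\ge\ell+1$, the marginal $\Pr[s_{h}[i^{*}]=g_{0}]$ differs by $\Omega(\Delta)$ between the optimal and a sub-optimal instance, so that the per-episode expected-reward gap remains $\Omega\bigl((H-\ell)\Delta\bigr)=\Omega(H\Delta)$ once $\ell\le m=O(\log S)$ is absorbed into the standing large-$T$ assumption. This reduces to the familiar coupling/induction argument used in the classical JAO construction, but performed $\ell$ steps downstream of where the action is executed; because a ``bad'' label, once it enters the loop, keeps circulating and suppressing reward at every subsequent step, the per-step gap in the transition probability compounds into a per-step gap in expected reward, preventing the algorithm from ever ``forgetting'' a bad choice. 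A minor additional check is that the frozen non-loop components and the third auxiliary state in each $\gS_{i_{p}}$ genuinely cost the learner nothing, which follows immediately from the determinism of those transitions.
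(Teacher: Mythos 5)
Your high-level plan---propagate a good/bad label around the influence loop and reduce to an MAB---is the same idea the paper uses, but your concrete construction has a gap that destroys the $\sqrt{H}$ factor. You initialize every loop component at its \emph{good} value, so $\Trans_{i_0}$ performs a fresh, action-dependent Bernoulli draw at essentially every step of the episode (the upstream component stays good with high probability). Each episode therefore supplies $\Theta(H)$ informative observations of the chosen arm, not one, so the accounting ``$K$ rounds, per-round gap $\Omega(H\Delta)$, hence $\Omega(H\Delta\sqrt{K\cdot X[\idxTrans{i^*}]}/\ldots)$'' does not follow: the KL divergence between instances accumulated per episode is $\Theta(H\Delta^2/\epsilon)$ rather than $\Theta(\Delta^2)$, and a correct treatment of such a repeated-pull construction is precisely the episodic JAO analysis, which the paper's Appendix~\ref{sec:jao} shows yields only $\Omega(\sqrt{HSAT/\log T})$ and is not known to be tightenable. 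The paper avoids this by using $S_u\ge 3$ in an essential way: the upstream loop component is initialized at a third, \emph{neutral} value, so the stochastic action-dependent transition of $\gS_1$ fires exactly once (at step $1$), after which $s[1]$ deterministically copies $s[u]$ and the outcome of that single pull is locked in for the remaining $H-1$ steps. Your proposal uses the third state only ``as a starting value when needed'' and never enforces one-shot influence. A related problem is your reward $\sI(s[i^*]=g_0)$: the injected label is read out only when it returns to position $i_0$, i.e., once every $\ell$ steps, so even a one-shot version of your instance would pay only $(H-1)/\ell$ per success. The paper instead sets $R(x)=\max_{i\in[u]}\sI(s[i]=s_{+}^{i})$, so a single circulating positive label yields reward at \emph{every} step and the per-episode gap is genuinely $\Theta((H-1)\Delta)$.

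A second gap is your treatment of the whole scope value $x[\idxTrans{i^*}]$ as the arm index, ``yielding $X[\idxTrans{i^*}]$ competing arms.'' The scope contains state components (including $i^*$ itself and possibly others) that the agent cannot choose, so the direct reduction is only to an MAB with $X[\idxTrans{1, a}]$ arms, where $\idxTrans{1, a}$ is the action part of the scope. The paper recovers the full $X[\idxTrans{1}]=S_u\,X[\idxTrans{1, s}]\,X[\idxTrans{1, a}]$ by making $(S_u-2)X[\idxTrans{1, s}]$ copies of the MAB-like FMDP indexed by the neutral values of $\gS_u$ and by $\gX[\idxTrans{1, s}]$, and restarting uniformly at random over copies; your sketch has no analogue of this step. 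Both issues are fixable, but they are the actual content of the proof rather than routine details to be checked at the end.
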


Theorem~\ref{thm:fmdp-lb-loop} shows that the factored structures that satisfy $\max_{i \in \gI} X[\idxTrans{i}] = \max_{i \in [m]} X[\idxTrans{i}]$, F-EULER is minimax optimal up to log factors. We defer to Appendix~\ref{sec:appx-lb} the proofs of Theorems~\ref{thm:fmdp-lb-degen-1},~\ref{thm:fmdp-lb-degen-2},~\ref{thm:fmdp-lb-nondegen} and~\ref{thm:fmdp-lb-loop}, relying on the lower bounds for MABs~\citep{lattimore2018bandit} and the construction of MAB-like FMDPs (FMDPs whose cumulative rewards depend only on the action in the first step), similar to the MAB-like MDPs in~\citep{dann2015sample, jin2020lecture}.

\section{Conclusion}

In this paper, we study reinforcement learning within the tabular episodic setting of FMDPs, that is, MDPs which enjoy a factored structure in their dynamics. Such a factorization is typically derived via conditional independence structures in the transition, and by using it effectively one can greatly reduce the complexity of learning. However, a straightforward adaptation of the usual minimax optimal regret analysis for MDPs turns out not to be possible.  We uncover the difficulties posed by such an adaptation, and subsequently develop two new algorithms (called F-UCBVI and F-EULER, both motivated by their known nonfactored analogues) that carefully exploit the factored structure to obtain minimax optimal regret bounds for a rich class of factored structures. The key algorithmic technique that we develop is a careful design of a ``cross-component'' bonus term to ensure optimism and guide exploration. 

We present the lower bounds for FMDPs under certain structure constraints. The problem of characterizing lower bounds for FMDPs with arbitrary structures turns out to be more subtle, and remains an open problem. In addition, our methods are model-based; developing model-free algorithms for FMDPs is worth exploring. 
Lastly, an important practical direction is to develop structure-agnostic algorithms that can still ensure regret guarantees.

\subsubsection*{Acknowledgments}
    YT acknowledges partial support from an MIT Presidential Fellowship and a graduate research assistantship from the NSF BIGDATA grant (number 1741341). SS acknowledges partial support from NSF-BIGDATA (1741341) and NSF-TRIPODS+X (1839258).

\bibliographystyle{plainnat}
\bibliography{fmdp}

\newpage
\appendix

\section{Regret analysis of F-UCBVI}
\label{sec:appx-f-ucbvi}

\subsection{Failure event}
\label{subsec:fail-evnt}

Before we define the failure event that causes our regret guarantee to fail, we introduce some additional notation.
As a shorthand, for any natural number $n$, any factored set $\gX = \bigotimes_{i=1}^{n} \gX_i$ and any given index set $I \subset [n]$, let $\gX[-I] := \bigotimes_{i=1, i\notin I}^{n} \gX_{i}$ and $x[-I] \in \gX[-I]$ be a tuple of $x[j]$ for $j\in [n]$ and $j\notin I$.
For a singleton $\{i\}$, let $\gS_{-i} \equiv \gS[-\{i\}] := (\bigotimes_{j=1}^{i-1} \gS_{j}) \times (\bigotimes_{j=i+1}^{m} \gS_{j})$. Let $s[-i] \in \gS_{-i}$ be a tuple of $s[j]$ for $j \in [m]$ and $j\neq i$.
For vector $V \in \sR^{\gS}$, 
\begin{align*}
    V(s[-i]) &:= V((s[1], \cdots, s[i-1], \cdot, s[i+1], \cdots, s[m])) \in \sR^{\gS_i}, \\
    V(s[i]) &:= V((\cdot, \cdots, \cdot, s[i], \cdot, \cdots, \cdot)) \in \sR^{\gS_{-i}}.
\end{align*}
Recall that $w_{k, h}(x)$ is the visit probability to $x$ at step $h$ of episode $k$.
Overloading the notation, we make the following definitions.
\begin{definition}[Visit probabilities]  \label{def:visit-prob}
    Define 
    \begin{align*}
        w_{i, k, h}(x) &:= w_{i, k, h}(x[\idxTrans{i}]) = \nlsum_{x[-\idxTrans{i}] \in \gX[-\idxTrans{i}]} w_{k, h}(x), \\
        v_{i, k, h}(x) &:= v_{i, k, h}(x[\idxR{i}]) = \nlsum_{x[-\idxR{i}] \in \gX[-\idxR{i}]} w_{k, h}(x).
    \end{align*}
    Then let $w_{k}(x) := \sum_{h=1}^{H} w_{k, h}(x)$, $w_{i, k}(x) := \sum_{h=1}^{H} w_{i, k, h}(x)$ and $v_{i, k}(x) := \sum_{h=1}^{H} v_{i, k, h}(x)$.
\end{definition}
Recall that $L = \log (16mlSXT / \delta)$. Then we define the failure event below. 
\begin{definition}[Failure event]  \label{def:fail-evnt-h}
    Define the events 
    \begin{align*}
        \gF_1 &:= \Biggl\{\exists (i\in [m], k \in [K], h\in [H], x\in \gX), \quad \left| \innerp{\eTrans_{i, k}(x) - \Trans_{i}(x)}{\E_{\Trans_{-i}(x)} [\oV{h+1}]} \right| > H\sqrt{\frac{L}{2\cntTrans{i, k}(x)}}  \Biggr\}, \\
        \gF_2 &:= \left\{ \exists (i\in [m], k \in [K], x\in \gX), \quad \left\| \eTrans_{i, k}(x) - \Trans_{i}(x) \right\|_{1} > \sqrt{\frac{2S_i L}{\cntTrans{i, k}(x)}} \right\}, \\
        \gF_3 &:= \left\{ \exists (i\in [l], k \in [K], x\in \gX), \quad \left|  \eR_{i, k}(x) - R_i(x) \right| > \sqrt{\frac{ L}{2\cntR{i, k}(x)}} \right\}, \\
        \gF_4 &:= \left\{ \exists (i\in [m], k \in [K], x\in \gX), \quad \cntTrans{i, k}(x) < \frac{1}{2} \sum_{\kappa < k} w_{i, \kappa}(x) - H L  \right\}, \\
        \gF_5 &:= \left\{ \exists (i\in [l], k \in [K], x\in \gX), \quad \cntR{i, k}(x) < \frac{1}{2} \sum_{\kappa < k} v_{i, \kappa}(x) - H L  \right\}, \\
        \gF_6 &:= \Biggl\{ \exists (i\in [m], k\in [K], x\in \gX, s'\in \gS), \quad \left|\eTrans_{i, k}(s'[i]\vert x) - \Trans_{i}(s'[i]\vert x)\right| > \frac{2L}{3\cntTrans{i, k}(x)} + \sqrt{\frac{2\Trans_{i}(s'[i]\vert x)L}{\cntTrans{i, k}(x)}} \Biggr\}.
    \end{align*}
    Then the failure event for F-UCBVI is defined by $\gF := \bigcup_{i=1}^{6} \gF_{i}$.
\end{definition}

The following lemma shows that the failure event $\gF$ happens with low probability.

\begin{lemma}[Failure probability]  \label{lem:fail-prob-h}
    For any FMDP specified by~\eqref{eqn:fmdp}, during the running of F-UCBVI for $K$ episodes, the failure event $\gF$ happens with probability at most $\delta$.
\end{lemma}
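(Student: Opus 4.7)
The natural approach is a union bound: $\sP(\gF) \le \sum_{j=1}^{6} \sP(\gF_j)$, and I will aim to show that each $\sP(\gF_j) \le \delta/6$, which (absorbed into the constant $16$ inside $L$) will yield the claim. Events $\gF_1,\gF_2,\gF_3,\gF_6$ all concern empirical estimation errors of $\Trans_i$ or $R_i$, while $\gF_4,\gF_5$ concern the deviation of the random visit counts from their (cumulative) expectations. These two groups use different tools, so I would handle them separately.

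First, for the estimation-error group I would use a standard ``fix the number of samples $n$, union-bound over $n$'' argument. Fix $(i,x)$ and imagine an infinite i.i.d.\ sequence of samples $s^{(1)}[i],s^{(2)}[i],\ldots$ drawn from $\Trans_i(\cdot\mid x)$; the actual $\eTrans_{i,k}(x)$ equals the empirical distribution of the first $\cntTrans{i,k}(x)$ of these samples. For $\gF_1$, with the fixed vector $u := \E_{\Trans_{-i}(x)}[\oV{h+1}] \in [0,H]^{\gS_i}$, the quantity $\langle \eTrans_{i,k}(x)-\Trans_i(x), u\rangle$ is a difference between an empirical mean and its expectation of bounded $[0,H]$-valued random variables, so applying Hoeffding's inequality (Lemma~\ref{lem:concen-h}) at each fixed $n\in[T]$ and union-bounding over $(i,k,h,x)$ and $n$ gives the claimed $H\sqrt{L/(2n)}$ deviation. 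Event $\gF_2$ is handled by the $\normlone$ concentration (Lemma~\ref{lem:concen-l1}) applied to each sample size $n$, with the same union bound. Event $\gF_3$ follows from Hoeffding applied to the bounded rewards $r_i\in[0,1]$. Event $\gF_6$ is the per-coordinate empirical Bernstein inequality, applied for every $s'[i]\in\gS_i$ and every sample count, with the extra $S$ in the union bound absorbed into $L$.

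Second, for $\gF_4$ and $\gF_5$ I would argue via martingale concentration on the counts. Fix $(i,x)$ and consider the filtration generated by episodes. The conditional probability that $x[\idxTrans{i}]$ is visited at step $h$ of episode $\kappa$ (given the history) equals $w_{i,\kappa,h}(x)$, so $\cntTrans{i,k}(x) = \sum_{\kappa<k}\sum_{h=1}^{H} \mathbbm{1}\{x_{\kappa,h}[\idxTrans{i}]=x[\idxTrans{i}]\}$ is a sum of Bernoulli indicators whose conditional means sum to $\sum_{\kappa<k} w_{i,\kappa}(x)$. A multiplicative Chernoff/Bernstein bound for Bernoulli martingales (of the form used in, e.g.,~\cite{dann2017unifying,zanette2019tighter}) gives $\cntTrans{i,k}(x) \ge \tfrac{1}{2}\sum_{\kappa<k} w_{i,\kappa}(x) - HL$ with probability at least $1-\delta/(6mXK)$; union-bounding over $(i,k,x)$ yields $\sP(\gF_4)\le\delta/6$. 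The same argument, applied to reward visits, handles $\gF_5$.

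The main obstacle is the count concentration for $\gF_4,\gF_5$: one must verify that the Bernoulli-martingale form of Bernstein applies cleanly despite the $H$ dependent indicators \emph{within} an episode (they are not independent across $h$, but the conditional-on-history argument still works since only the cumulative episodic contribution $\sum_h \mathbbm{1}\{\cdot\}$ matters, with per-episode mean $w_{i,\kappa}(x)\in[0,H]$). The remaining care is purely bookkeeping: the prefactor $16$ and the factor $mlSXT$ inside $L$ are chosen precisely so that the six union bounds together contribute at most $\delta$, and the $H$ in the definition of $L$ (via $T=KH$) provides enough slack to absorb the union over $h\in[H]$.
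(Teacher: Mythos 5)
Your proposal is correct and follows essentially the same route as the paper: scalar and $\normlone$ concentration with a union bound over indices and sample counts for $\gF_1,\gF_2,\gF_3,\gF_6$, and Bernoulli-martingale concentration of the visit counts (the paper simply cites Lemma 6 of Dann et al.\ for this) for $\gF_4,\gF_5$. One small correction: the event $\gF_6$ has the \emph{true} probability $\Trans_i(s'[i]\mid x)$ under the square root, so it follows from the ordinary Bernstein inequality (Lemma~\ref{lem:concen-b}) with the Bernoulli variance bounded by $\Trans_i(s'[i]\mid x)$, not from the empirical (Maurer--Pontil) Bernstein inequality as you labeled it.
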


\begin{proof}
    By Hoeffding's inequality (Lemma~\ref{lem:concen-h}) and the union bound, $\gF_1$ happens with probability at most $\delta/8$.
    The same argument applies to $\gF_3$.
    By $\normlone$-norm concentration (Lemma~\ref{lem:concen-l1}) and the union bound, $\gF_{2}$ happens with probability at most $\delta / 8$.
    By the same argument regarding failure event $\gF_N$ in~\citep[Lemma 6, Section B.1]{dann2018policy}, $\gF_4$ and $\gF_5$ happen with probability at most $\delta/16$ respectively.
    By the same Bernstein's inequality argument in~\citep[Lemma 1, Section B.4]{azar2017minimax}, $\gF_6$ happens with probability at most $\delta / 8$.
    Finally, applying the union bound on $\gF_i$ for $i \in [6]$ yields that the failure event $\gF$ happens with probability at most $5\delta/8 \le \delta$.
\end{proof}

The deduction in the rest of this section and hence the regret bound hold outside the failure event $\gF$, with probability at least $1-\delta$.
From the above derivation, note that we can actually use a smaller 
\begin{align*}
    L_0 = \log (10ml\max\{ \max_{i} (S_i X[\idxTrans{i}]), \max_{i} X[\idxR{i}] \} T / \delta)
\end{align*}
to replace $L$ for F-UCBVI. We use $L$ for simplicity.

\subsection{Upper confidence bound}

The transition estimation error refers to a term incurred by the difference between the estimated transition and the true one. 
To apply scalar concentration, we use the standard technique that bounds the inner product of their difference and the optimal value function. 
Specifically, we have the following lemma. 

\begin{lemma}[Transition estimation error, Hoeffding-style] \label{lem:est-err-h}
    Outside the failure event $\gF$, for any episode $k\in [K]$, step $h\in [H]$ and state-action pair $x\in \gX$, the transition estimation error satisfies that
    \begin{align}  \label{eqn:est-err-h}
        \left| \innerp{\eTrans_{k}(x) - \Trans(x)}{\oV{h+1}} \right|
        \le \sum_{i=1}^{m} H\sqrt{\frac{L}{2\cntTrans{i, k}(x)}} + \sum_{i=1}^{m} \sum_{j=i+1}^{m} 2HL \sqrt{\frac{S_i S_j}{\cntTrans{i, k}(x) \cntTrans{j, k}(x)}}.
    \end{align}
\end{lemma}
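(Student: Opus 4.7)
The plan is to mirror the ``inverse telescoping'' strategy sketched in Section~\ref{sec:proof-sketch} and apply the identity twice: once to isolate a main component-wise term controlled, outside $\gF_{1}$, by Hoeffding's inequality, and once more to control the resulting cross-component remainder, outside $\gF_{2}$, by $\normlone$-concentration.

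First I would expand
\begin{align*}
    \eTrans_{k}(x) - \Trans(x) = \nlsum_{i=1}^{m} \Trans_{1:i-1}(x)\bigl(\eTrans_{i, k}(x) - \Trans_{i}(x)\bigr)\eTrans_{i+1:m, k}(x),
\end{align*}
so that pairing with $\oV{h+1}$ rewrites the $i$th summand as an inner product on $\sR^{\gS_{i}}$, namely $\innerp{\eTrans_{i, k}(x) - \Trans_{i}(x)}{\E_{\Trans_{1:i-1}(x)} \E_{\eTrans_{i+1:m, k}(x)}[\oV{h+1}]}$. Adding and subtracting $\E_{\Trans_{-i}(x)}[\oV{h+1}]$ splits the result into $A_{1} + A_{2}$, where
\begin{align*}
    A_{1} &:= \nlsum_{i=1}^{m} \innerp{\eTrans_{i, k}(x) - \Trans_{i}(x)}{\E_{\Trans_{-i}(x)}[\oV{h+1}]}, \\
    A_{2} &:= \nlsum_{i=1}^{m} \innerp{\eTrans_{i, k}(x) - \Trans_{i}(x)}{\E_{\Trans_{1:i-1}(x)}\bigl(\E_{\eTrans_{i+1:m, k}(x)} - \E_{\Trans_{i+1:m}(x)}\bigr)[\oV{h+1}]}.
\end{align*}
Outside $\gF_{1}$, each summand of $A_{1}$ is bounded in absolute value by $H\sqrt{L/(2\cntTrans{i, k}(x))}$, which yields the first sum in~\eqref{eqn:est-err-h}.

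For $A_{2}$, I would apply H\"{o}lder's inequality in $\sR^{\gS_{i}}$ and use that $\Trans_{1:i-1}(x)$ is a probability distribution (so conditional expectation preserves $\normmax$ bounds) to get
\begin{align*}
    |A_{2}| \le \nlsum_{i=1}^{m} \|\eTrans_{i, k}(x) - \Trans_{i}(x)\|_{1} \cdot \bigl\|\bigl(\E_{\eTrans_{i+1:m, k}(x)} - \E_{\Trans_{i+1:m}(x)}\bigr)[\oV{h+1}]\bigr\|_{\infty}.
\end{align*}
A second inverse telescoping on $\prod_{j>i} \eTrans_{j, k}(x) - \prod_{j>i} \Trans_{j}(x)$, together with $\oV{h+1} \in [0, H]$, bounds the $\normmax$ factor by $H \sum_{j > i} \|\eTrans_{j, k}(x) - \Trans_{j}(x)\|_{1}$. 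Substituting the concentration bound $\|\eTrans_{i, k}(x) - \Trans_{i}(x)\|_{1} \le \sqrt{2 S_{i} L / \cntTrans{i, k}(x)}$ valid outside $\gF_{2}$ then produces the desired cross-component sum $2HL \sum_{i<j} \sqrt{S_{i} S_{j} / (\cntTrans{i, k}(x) \cntTrans{j, k}(x))}$.

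The main technical obstacle is the bookkeeping around the repeated inverse telescoping: one must carefully track which factors remain averaged against $\Trans_{1:i-1}$ versus $\eTrans_{i+1:m, k}$ at each stage, and verify that each expectation-preserving step reduces to an $\normlone$ distance on $\Delta(\gS_{j})$ rather than on $\Delta(\gS)$. This is precisely what yields the per-component $\sqrt{S_{i} S_{j}}$ factors rather than a naive $\sqrt{S}$, and is what allows the factored bonus to meaningfully improve on a direct $\normlone$-norm concentration applied to the full transition vector.
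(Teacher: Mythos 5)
Your proposal is correct and follows essentially the same route as the paper: the same inverse telescoping identity, the same add-and-subtract split into a component-wise term bounded outside $\gF_1$ by scalar Hoeffding concentration and a cross-component remainder, and the same treatment of the remainder via H\"{o}lder's inequality, a second inverse telescoping, and the $\normlone$-concentration bound valid outside $\gF_2$ (this last part is exactly the paper's Lemma~\ref{lem:holder}). No gaps.
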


\begin{proof}
    Omitting the dependence of $\eTrans_{k}(x), \Trans(x), \eTrans_{i, k}(x), \Trans_{i}(x)$ on $x$, 
    \begin{align}
        \innerp{\eTrans_{k} - \Trans}{\oV{h+1}}
        &= \innerp{\prod_{i=1}^{m} \eTrans_{i, k} - \prod_{i=1}^{m} \Trans_{i}}{\oV{h+1}} \nonumber \\
        &= \innerp{\sum_{i=1}^{m} (\eTrans_{i, k} - \Trans_{i}) \Trans_{1:i-1} \eTrans_{i+1:m, k}}{\oV{h+1}} \nonumber \\
        &= \sum_{i=1}^{m} \innerp{\eTrans_{i, k} - \Trans_{i}}{\E_{\Trans_{1:i-1} \eTrans_{i+1:m, k}} [\oV{h+1}]} \nonumber \\
        &= \sum_{i=1}^{m} \innerp{\eTrans_{i, k} - \Trans_{i}}{\E_{\Trans_{1:i-1}} \E_{\eTrans_{i+1:m, k}} [\oV{h+1}]} \nonumber \\
        &= \sum_{i=1}^{m} \innerp{\eTrans_{i, k} - \Trans_{i}}{\E_{\Trans_{1:i-1}} \E_{\Trans_{i+1:m}} [\oV{h+1}]}  \label{eqn:est-err-h-decomp-1} \\
        &\quad + \sum_{i=1}^{m} \innerp{\eTrans_{i, k} - \Trans_{i}}{\E_{\Trans_{1:i-1}} (\E_{\eTrans_{i+1:m, k}} - \E_{\Trans_{i+1:m}}) [\oV{h+1}]},  \label{eqn:est-err-h-decomp-2}
    \end{align}
    where the second equality adopts an inverse telescoping technique (add and subtract a sequence of terms), essential to our analysis.
    Outside the failure event $\gF$ (specifically, $\gF_1$),~\eqref{eqn:est-err-h-decomp-1} is upper bounded by 
    \begin{align}  \label{eqn:est-err-h-c}  
        \left| \innerp{\eTrans_{i, k}(x) - \Trans_{i}(x)}{\E_{\Trans_{1:i-1}(x)} \E_{\Trans_{i+1:m}(x)} [\oV{h+1}]} \right| 
        \le H\sqrt{\frac{L}{2\cntTrans{i, k}(x)}}.
    \end{align}
    By Lemma~\ref{lem:holder}, outside the failure event $\gF$,~\eqref{eqn:est-err-h-decomp-2} is upper bounded by
    \begin{equation}  \label{eqn:est-err-h-xc}  
        \begin{aligned}  
            \left| \innerp{\eTrans_{i, k}(x) - \Trans_{i}(x)}{\E_{\Trans_{1:i-1}(x)} (\E_{\eTrans_{i+1:m, k}(x)} - \E_{\Trans_{i+1:m}(x)}) [\oV{h+1}]} \right|
            \le \sum_{j=i+1}^{m} 2HL \sqrt{\frac{S_i S_j}{\cntTrans{i, k}(x) \cntTrans{j, k}(x)}}.
        \end{aligned}    
    \end{equation}
    Combining~\eqref{eqn:est-err-h-c} and~\eqref{eqn:est-err-h-xc} yields the transition estimation error bound~\eqref{eqn:est-err-h}.
\end{proof}

The following Lemma~\ref{lem:holder} brings in the cross-component term, also as part of the transition bonus later, which results from applying inverse telescoping once and $\normlone$-norm concentration (lemma~\ref{lem:concen-l1}) twice.

\begin{lemma}[Holder's argument]  \label{lem:holder}
    Outside the failure event $\gF$, for any index $i\in [m]$, episode $k\in [K]$, step $h\in [H]$ and state-action pair $x\in \gX$, 
    \begin{align*}
        \left| \innerp{\eTrans_{i, k}(x) - \Trans_{i}(x)}{\E_{\Trans_{1:i-1}(x)} (\E_{\eTrans_{i+1:m, k}(x)} - \E_{\Trans_{i+1:m}(x)}) [V_{h+1}^{*}]} \right|
        \le \sum_{j=i+1}^{m} 2HL \sqrt{\frac{S_i S_j}{\cntTrans{i, k}(x) \cntTrans{j, k}(x)}}.
    \end{align*}
\end{lemma}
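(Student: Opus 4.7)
The plan is to apply H\"older's inequality to peel off the outer inner product over $\gS_i$, then to invoke the inverse telescoping identity a second time on the inner quantity so that each resulting summand is controlled by the $\normlone$-norm concentration event $\gF_2^c$. Setting $u := \eTrans_{i,k}(x) - \Trans_i(x) \in \sR^{\gS_i}$ and letting $v \in \sR^{\gS_i}$ denote the vector whose $s[i]$-coordinate is $\E_{\Trans_{1:i-1}(x)}(\E_{\eTrans_{i+1:m,k}(x)} - \E_{\Trans_{i+1:m}(x)})[\oV{h+1}]$, the left-hand side equals $|\innerp{u}{v}|$, which H\"older bounds by $\|u\|_1 \|v\|_\infty$. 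Outside $\gF_2$ we have immediately $\|u\|_1 \le \sqrt{2 S_i L / \cntTrans{i,k}(x)}$, and the remaining task is to control $\|v\|_\infty$.

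For this, I would apply the telescoping identity
\begin{align*}
    \prod_{j=i+1}^m \eTrans_{j,k}(x) - \prod_{j=i+1}^m \Trans_j(x) = \sum_{j=i+1}^m \Trans_{i+1:j-1}(x)\bigl(\eTrans_{j,k}(x) - \Trans_j(x)\bigr)\eTrans_{j+1:m,k}(x),
\end{align*}
which rewrites the $s[i]$-coordinate of $v$ as $\sum_{j=i+1}^m \innerp{\eTrans_{j,k}(x) - \Trans_j(x)}{f_{i,j}(s[i])}$, where $f_{i,j}(s[i]) \in \sR^{\gS_j}$ is the partial expectation of $\oV{h+1}$ against the product distribution $\Trans_{1:i-1}(x)\,\Trans_{i+1:j-1}(x)\,\eTrans_{j+1:m,k}(x)$ over the remaining $m-2$ coordinates. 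Since $\oV{h+1} \in [0, H]$ entrywise and the averaging kernel is a genuine probability distribution, $\|f_{i,j}(s[i])\|_\infty \le H$ uniformly in $s[i]$, and a second application of H\"older plus $\gF_2^c$ yields $|\innerp{\eTrans_{j,k}(x) - \Trans_j(x)}{f_{i,j}(s[i])}| \le H\sqrt{2 S_j L / \cntTrans{j,k}(x)}$.

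Summing over $j$ gives $\|v\|_\infty \le \sum_{j=i+1}^m H\sqrt{2 S_j L / \cntTrans{j,k}(x)}$, and combining with the earlier bound on $\|u\|_1$ produces
\begin{align*}
    |\innerp{u}{v}| \le \sqrt{\frac{2 S_i L}{\cntTrans{i,k}(x)}}\sum_{j=i+1}^m H\sqrt{\frac{2 S_j L}{\cntTrans{j,k}(x)}} = \sum_{j=i+1}^m 2HL\sqrt{\frac{S_i S_j}{\cntTrans{i,k}(x)\cntTrans{j,k}(x)}},
\end{align*}
matching the claimed bound. The main obstacle I expect is choosing the correct ordering in the inner telescoping step: the leading factors should be the true transitions $\Trans_{i+1:j-1}(x)$ so that, together with the outer $\Trans_{1:i-1}(x)$, they assemble into a genuine probability distribution keeping $f_{i,j}$ uniformly bounded by $H$; the trailing empirical factors $\eTrans_{j+1:m,k}(x)$ are themselves probability distributions and cause no issue. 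Beyond this bookkeeping, the argument is a routine two-fold use of H\"older combined with $\gF_2^c$.
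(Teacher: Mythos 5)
Your proposal is correct and follows essentially the same route as the paper's proof: one application of H\"older's inequality with the $\gF_2$ bound on $\|\eTrans_{i,k}(x)-\Trans_i(x)\|_1$, followed by a second inverse-telescoping of $\prod_{j>i}\eTrans_{j,k}-\prod_{j>i}\Trans_j$ and a second H\"older/$\gF_2$ step, using that the partial expectations of $\oV{h+1}$ are bounded by $H$. The only cosmetic difference is the ordering of factors in the inner telescoping, which is immaterial since all the averaging kernels involved are (sub)probability distributions over disjoint coordinates.
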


\begin{proof}
    By Holder's inequality, 
    \begin{align}
        &\left| \innerp{\eTrans_{i, k}(x) - \Trans_{i}(x)}{\E_{\Trans_{1:i-1}(x)} (\E_{\eTrans_{i+1:m, k}(x)} - \E_{\Trans_{i+1:m}(x)}) [V_{h+1}^{*}]} \right| \nonumber \\
        \le & \left\| \eTrans_{i, k}(x) - \Trans_{i}(x) \right\|_{1} \cdot \left\| \E_{\Trans_{1:i-1}(x)} (\E_{\eTrans_{i+1:m, k}(x)} - \E_{\Trans_{i+1:m}(x)}) [V_{h+1}^{*}] \right\|_{\infty} \nonumber \\
        \le & \sqrt{\frac{2S_i L}{\cntTrans{i, k}(x)}} \cdot \left\| \E_{\Trans_{1:i-1}(x)} (\E_{\eTrans_{i+1:m, k}(x)} - \E_{\Trans_{i+1:m, h}(x)}) [V_{h+1}^{*}] \right\|_{\infty},  \label{eqn:holder-infnorm}
    \end{align}
    where the second inequality holds outside the failure event $\gF$ (specifically, $\gF_2$). 
    We proceed to bound the $\normmax$-norm term by applying the inverse telescoping technique. 
    Omitting the dependence of $\eTrans_{k}(x), \Trans(x), \eTrans_{i, k}(x), \Trans_{i}(x)$ on $x$, for any $i\in [m]$,  
    \begin{align*}
        \E_{\Trans_{1:i-1}} (\E_{\eTrans_{i+1:m, k}} - \E_{\Trans_{i+1:m}}) [V_{h+1}^{*}]
        & = \innerpd{\prod_{j=i+1}^{m} \eTrans_{j, k} - \prod_{j=i+1}^{m} \Trans_{j} }{\E_{\Trans_{1:i-1}}[V_{h+1}^{*}]} \\
        & = \innerpd{\sum_{j=i+1}^{m} (\eTrans_{j, k} - \Trans_{j}) \Trans_{i+1:j-1} \eTrans_{j+1:m, k}}{\E_{\Trans_{1:i-1}}[V_{h+1}^{*}]} \\
        & = \sum_{j=i+1}^{m} \innerp{\eTrans_{j, k} - \Trans_{j}}{\E_{\Trans_{1:i-1}} \E_{\Trans_{i+1:j-1}} \E_{\eTrans_{j+1:m, k}}[V_{h+1}^{*}]} \in \sR^{\gS_i}.
    \end{align*}
    Therefore, for any $i\in [m]$ and $s'[i] \in \gS_i$, 
    \begin{align*}
        \left| \E_{\Trans_{1:i-1}} (\E_{\eTrans_{i+1:m, k}} - \E_{\Trans_{i+1:m}}) [V_{h+1}^{*}] (s'[i]) \right|
        & = \left| \E_{\Trans_{1:i-1}} (\E_{\eTrans_{i+1:m, k}} - \E_{\Trans_{i+1:m}}) [V_{h+1}^{*}(s'[i])] \right| \\
        & \le \sum_{j=i+1}^{m} \left| \innerp{\eTrans_{j, k} - \Trans_{j}}{\E_{\Trans_{1:i-1}} \E_{\Trans_{i+1:j-1}} \E_{\eTrans_{j+1:m, k}}[V_{h+1}^{*}(s'[i])]} \right| \\
        & \le \sum_{j=i+1}^{m} \left\| \eTrans_{j, k} - \Trans_{j} \right\|_{1} \cdot \left\| \E_{\Trans_{i+1:j-1}} \E_{\eTrans_{j+1:m, k}}[V_{h+1}^{*}(s'[i])] \right\|_{\infty} \\
        & \le \sum_{j=i+1}^{m} \sqrt{\frac{2S_j L}{\cntTrans{j, k}}} \cdot H,
    \end{align*}
    where the last inequality holds outside the failure event $\gF$ (specifically, $\gF_2$).
    Substituting the above into~\eqref{eqn:holder-infnorm} yields 
    \begin{align*}
        \left| \innerp{\eTrans_{i, k}(x) - \Trans_{i}(x)}{\E_{\Trans_{1:i-1}(x)} (\E_{\eTrans_{i+1:m, k}(x)} - \E_{\Trans_{i+1:m}(x)}) [V_{h+1}^{*}]} \right|
        & \le \sum_{j=i+1}^{m} \sqrt{\frac{2S_i L}{\cntTrans{i, k}(x)}} \cdot H \sqrt{\frac{2 S_j L}{\cntTrans{j, k}(x)}} \\
        & = \sum_{j=i+1}^{m} 2HL \sqrt{\frac{S_i S_j}{\cntTrans{i, k}(x) \cntTrans{j, k}(x)}}.
    \end{align*}
\end{proof}

Recall that our Hoeffding-style transition bonus~\eqref{eqn:trans-bonus-f-ucbvi} is exactly the transition estimation error bound in~\eqref{eqn:est-err-h}.
Add a subscript $k$ to $\eR, \eR_{i}$ to denote the corresponding quantities in the $k$th episode.
Recall that our choice of the reward bonus upper bounds the reward estimation error outside the failure event $\gF$ (specifically, $\gF_{3}$), i.e., 
\begin{align*}
    \left| \eR_{k}(x) - R(x) \right| \le \sum_{i=1}^{l} \left| \eR_{i, k}(x) - R_i(x) \right| \le \sum_{i=1}^{l} \sqrt{\frac{ L}{2\cntR{i, k}(x)}} := \bR_{k}(x).
\end{align*}
The following lemma shows that these choices ensure the optimism. 
Specifically, $\uV_{k, h}$ is an entrywise UCB of $\oV{h}$ for all $k\in [K], h\in [H]$.

\begin{lemma}[Upper confidence bound]  \label{lem:ucb-h}
    Outside the failure event $\gF$, for the choices of bonuses in~\eqref{eqn:trans-bonus-f-ucbvi} and~\eqref{eqn:r-bonus-f-ucbvi}, $\oV{h}(s) \le \uV_{k, h}(s)$ for any episode $k\in [K]$, step $h\in [H]$ and state $s\in \gS$.
\end{lemma}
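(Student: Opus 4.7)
The plan is to prove this by backward induction on $h$, going from $h = H+1$ down to $h = 1$, working outside the failure event $\gF$ throughout. The base case $h = H+1$ is immediate since $\uV_{k, H+1} \equiv \oV{H+1} \equiv 0$ by initialization of Algorithm~\ref{alg:vi-optimism} and the boundary convention for $\oV{\cdot}$.

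For the inductive step, fix $k \in [K]$, $h \in [H]$ and $s \in \gS$, and assume $\oV{h+1}(s') \le \uV_{k, h+1}(s')$ for every $s' \in \gS$. Let $x_h^{*} = (s, \pi^{*}(s, h))$ be the state-action pair chosen by the optimal policy at this state and step. Since $\oV{h}(s) \le H - h + 1$ trivially, I can dispense with the clipping: if the $\min$ in the definition of $\uQ_{k,h}$ is attained by $H - h + 1$, then $\uV_{k,h}(s) \ge \uQ_{k,h}(x_h^{*}) = H - h + 1 \ge \oV{h}(s)$ and we are done. Otherwise, because $\pi_k(s, h) = \argmax_{a} \uQ_{k,h}(s, a)$,
\begin{align*}
    \uV_{k, h}(s) - \oV{h}(s)
    &\ge \uQ_{k, h}(x_h^{*}) - \oV{h}(s) \\
    &= \bigl(\eR_{k}(x_h^{*}) - R(x_h^{*})\bigr) + \innerp{\eTrans_{k}(x_h^{*})}{\uV_{k, h+1} - \oV{h+1}} \\
    &\quad + \innerp{\eTrans_{k}(x_h^{*}) - \Trans(x_h^{*})}{\oV{h+1}} + \bTrans_{k}(x_h^{*}) + \bR_{k}(x_h^{*}).
\end{align*}

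The three error terms on the right-hand side are handled separately and are each dominated by the corresponding bonus. The reward estimation error $|\eR_{k}(x_h^{*}) - R(x_h^{*})|$ is at most $\bR_{k}(x_h^{*})$ outside $\gF_3$, by the triangle inequality applied to the component-wise estimates. The middle inner product is non-negative by the inductive hypothesis together with the fact that $\eTrans_{k}(x_h^{*})$ lies in $\Delta(\gS)$ (so all its entries are non-negative). The third, transition-estimation-error, inner product is controlled in absolute value by $\bTrans_{k}(x_h^{*})$ by Lemma~\ref{lem:est-err-h}; this is where the carefully designed cross-component bonus and the inverse telescoping decomposition come into play. Summing the three contributions, the right-hand side is non-negative, completing the induction.

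The main subtlety, and the only step that is not routine, is justifying the use of Lemma~\ref{lem:est-err-h} in this context. The lemma bounds the transition estimation error against the \emph{true} optimal value function $\oV{h+1}$, not against the algorithm's estimate $\uV_{k, h+1}$; matching this exactly with the transition bonus $\bTrans_{k}$ is what makes the decomposition work and motivates splitting off the $\innerp{\eTrans_{k}}{\uV_{k,h+1} - \oV{h+1}}$ term so that it can be discharged by the inductive hypothesis alone. Once this split is in place, everything else is bookkeeping, and the full statement follows by running the induction down to $h = 1$ and then taking a union over $k \in [K]$, $h \in [H]$, $s \in \gS$, which is already absorbed into the definition of $\gF$.
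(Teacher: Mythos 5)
Your proposal is correct and follows essentially the same route as the paper: backward induction on $h$, comparison at the optimal action $x_h^{*}$ via the argmax property, splitting the gap into the reward estimation error (absorbed by $\bR_k$ outside $\gF_3$), the nonnegative term $\innerp{\eTrans_k(x_h^{*})}{\uV_{k,h+1}-\oV{h+1}}$ discharged by the inductive hypothesis, and the transition estimation error absorbed by $\bTrans_k$ via Lemma~\ref{lem:est-err-h}. Your explicit treatment of the clipping case is if anything slightly more careful than the paper's one-line remark.
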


\begin{proof}
    For $h = H + 1$, $\oV{H+1}(s) = V_{k, H+1}(s) = 0$ for all $k\in [K]$ and $s\in \gS$.
    We proceed by backward induction.
    For all $k \in [K]$, for a given $h\in [H]$, for all $s\in \gS$, with $x_{k, h} = (s, \pi_k(s, h))$ and $x_{h}^{*} = (s, \pi^{*}(s, h))$, 
    \begin{align*}
        &\uV_{k, h}(s) -\oV{h}(s) \\
        = & \eR(x_{k, h}) + \bR_{k}(x_{k,h})  + \innerp{\eTrans_{k}(x_{k, h})}{\uV_{k, h+1}} + \bTrans_{k}(x_{k, h}) - R(x_{h}^{*}) - \innerp{\Trans(x_{h}^{*})}{\oV{h+1}} \\
        \ge & \eR(x_{h}^{*})+ \bR_{k}(x^*_{h}) + \innerp{\eTrans_{k}(x_{h}^{*})}{\uV_{k, h+1}} + \bTrans_{k}(x_{h}^{*}) - R(x_{h}^{*}) - \innerp{\Trans(x_{h}^{*})}{\oV{h+1}} \\
        \ge & \innerp{\eTrans_{k}(x_{h}^{*})}{\uV_{k, h+1} - \oV{h+1}} + \eR(x_{h}^{*}) - R(x_{h}^{*}) + \bR_{k}(x^*_{h}) 
        + \innerp{\eTrans_{k}(x_{h}^{*}) - \Trans(x_{h}^{*})}{\oV{h+1}} 
        + \bTrans_{k}(x_{h}^{*}), 
    \end{align*}
    where the first equality corresponds to the nontrivial case where $\uV_{k, h}(s) < H - h + 1$.
    Since 
    \begin{align*}
        \innerp{\eTrans_{k}(x_{h}^{*})}{\uV_{k, h+1} - \oV{h+1}} \ge 0   
    \end{align*}
    by the inductive assumption, we have $\uV_{k, h}(s) -\oV{h}(s) \ge 0$ outside the failure event $\gF$.
    Therefore, $\oV{h}(s) \le \uV_{k, h}(s)$ for all $k\in [K], h\in [H], s\in \gS$.
\end{proof}

Refer to the difference between the optimistic value function than the optimal value function as the \emph{confidence radius}.
We now bound the confidence radius in the following lemma.
After the introduction of ``good'' sets, we then bound the sum over time of the squared confidence radius, which is useful to prove that the cumulative correction term is lower-order (polylog in $T$, Lemma~\ref{lem:cul-corr-h}).

\begin{lemma}[Confidence radius, Hoeffding-style]  \label{lem:conf-r-h}
    Let $F_0 := 5 mH\max_{i} S_i L$ be a lower-order term.
    Let $s_{k, t} \in \gS$ denote the state at step $t$ of episode $k$ and $x_{k, t} = (s_{k, t}, \pi_{k}(s_{k, t}, t))$.
    Outside the failure event $\gF$, for any episode $k\in [K]$, step $h\in [H]$ and state $s\in \gS$, the confidence radius of F-UCBVI satisfies that 
    \begin{align*}
        \uV_{k, h}(s) - \oV{h}(s)  \le \min\left\{  \sum_{t=h}^{H} \E_{\pi_k}\left[ \sum_{i=1}^{m} \frac{F_0}{\sqrt{\cntTrans{i, k}(x_{k, t})}} + \sum_{i=1}^{l} \sqrt{\frac{2L}{\cntR{i, k}(x_{k, t})}}  \middle| s_{k, h} = s \right], H \right\}.
    \end{align*}
\end{lemma}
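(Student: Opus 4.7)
The plan is to prove the two bounds inside the $\min$ separately. The $H$ bound is immediate: by construction the UCB is capped at $H - h + 1$ in \texttt{VI\_Optimism}, while $\oV{h} \ge 0$, so $\uV_{k,h}(s) - \oV{h}(s) \le H$ trivially. The work goes into the summation bound, which I would obtain by backward induction on $h$, with the base case $h = H + 1$ holding since both value functions are identically zero.

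For the inductive step, I start from
$$\uV_{k,h}(s) - \oV{h}(s) \le \uQ_{k,h}(s,\pi_k(s,h)) - Q_h^*(s,\pi_k(s,h)),$$
using that $\oV{h}(s) = \max_a Q_h^*(s,a) \ge Q_h^*(s,\pi_k(s,h))$. In the non-saturated case $\uV_{k,h}(s) < H - h + 1$, writing $x = x_{k,h} = (s,\pi_k(s,h))$ and substituting the Bellman updates gives
$$\uV_{k,h}(s) - \oV{h}(s) \le \bigl[\eR_k(x) - R(x) + \bR_k(x)\bigr] + \bTrans_k(x) + \innerpc{\eTrans_k(x) - \Trans(x)}{\oV{h+1}} + \innerpc{\eTrans_k(x)}{\uV_{k,h+1} - \oV{h+1}}.$$
Outside $\gF$, the reward bracket is bounded by $2\bR_k(x)$ (failure event $\gF_3$), and the third term is bounded by $\bTrans_k(x)$ thanks to Lemma~\ref{lem:est-err-h}. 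For the final term I split $\innerp{\eTrans_k(x)}{\cdot} = \innerp{\Trans(x)}{\cdot} + \innerp{\eTrans_k(x) - \Trans(x)}{\cdot}$ and use Hölder's inequality together with $\|\uV_{k,h+1} - \oV{h+1}\|_\infty \le H$. The key lemma here is the product $\ell_1$-bound
$$\|\eTrans_k(x) - \Trans(x)\|_1 = \Bigl\|\nlprod_{i=1}^m \eTrans_{i,k}(x) - \nlprod_{i=1}^m \Trans_i(x)\Bigr\|_1 \le \nlsum_{i=1}^m \|\eTrans_{i,k}(x) - \Trans_i(x)\|_1,$$
which follows from the same inverse-telescoping identity used in Lemma~\ref{lem:est-err-h} combined with the fact that the residual factors are probability distributions whose $\ell_1$ mass sums to $1$; then $\gF_2$ gives $\|\eTrans_k(x) - \Trans(x)\|_1 \le \sum_i \sqrt{2S_i L/\cntTrans{i,k}(x)}$.

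Combining everything yields a one-step recursion
$$\uV_{k,h}(s) - \oV{h}(s) \le 2\bR_k(x) + 2\bTrans_k(x) + H\nlsum_{i=1}^m\sqrt{\tfrac{2S_i L}{\cntTrans{i,k}(x)}} + \innerpc{\Trans(x)}{\uV_{k,h+1} - \oV{h+1}}.$$
The reward piece is exactly $\sum_i \sqrt{2L/\cntR{i,k}(x)}$. For the transition bracket I would bound the cross-component term in $\bTrans_k(x)$ by AM--GM,
$$\sqrt{\tfrac{S_i S_j}{\cntTrans{i,k}(x)\cntTrans{j,k}(x)}} \le \tfrac{1}{2}\bigl(\tfrac{S_i}{\cntTrans{i,k}(x)} + \tfrac{S_j}{\cntTrans{j,k}(x)}\bigr),$$
and then use $1/\cntTrans{i,k}(x) \le 1/\sqrt{\cntTrans{i,k}(x)}$ (since counts are clipped to at least $1$) to convert every term into the common form $c_i/\sqrt{\cntTrans{i,k}(x)}$. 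Collecting constants (the $H\sqrt{L}$ contribution from the component bonus, the $H\sqrt{\max_i S_i L}$ contribution from the Hölder term, and the $mH\max_i S_i L$ contribution from the cross-component bonus) and bounding each by $F_0 = 5mH\max_i S_i L$ gives
$$\uV_{k,h}(s) - \oV{h}(s) \le \nlsum_{i=1}^m \tfrac{F_0}{\sqrt{\cntTrans{i,k}(x_{k,h})}} + \nlsum_{i=1}^l \sqrt{\tfrac{2L}{\cntR{i,k}(x_{k,h})}} + \innerpc{\Trans(x_{k,h})}{\uV_{k,h+1} - \oV{h+1}}.$$
Applying the inductive hypothesis at step $h+1$ inside the expectation and invoking the tower property under $\pi_k$ closes the induction. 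The main obstacle is the bookkeeping in the last step: correctly bounding $\|\eTrans_k(x) - \Trans(x)\|_1$ via the product identity and then matching all of the resulting numerical constants against the single uniform coefficient $F_0$; the rest is a standard optimism-plus-recursion argument.
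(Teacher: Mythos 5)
Your proposal follows essentially the same route as the paper: backward induction on the one-step recursion, with the reward bracket absorbed into $2\bR_k$ via $\gF_3$, the transition correction controlled by H\"older plus the telescoping $\ell_1$ bound $\|\prod_i\eTrans_{i,k}-\prod_i\Trans_i\|_1\le\sum_i\|\eTrans_{i,k}-\Trans_i\|_1$ together with $\gF_2$, and the cross-component bonus flattened by $\sqrt{S_iS_j/(\cntTrans{i,k}\cntTrans{j,k})}\le \max_iS_i/\sqrt{\cntTrans{i,k}}$ before collecting everything into $F_0\sum_i 1/\sqrt{\cntTrans{i,k}}$.

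The one place your bookkeeping does not close is the constant. You split $\innerp{\eTrans_k}{\uV_{k,h+1}}-\innerp{\Trans}{\oV{h+1}}$ as $\innerp{\eTrans_k-\Trans}{\oV{h+1}}+\innerp{\eTrans_k}{\uV_{k,h+1}-\oV{h+1}}$ and bound the first piece by $\bTrans_k$ via Lemma~\ref{lem:est-err-h}; since the bonus $\bTrans_k$ itself is already present in $\uQ_{k,h}$, you end up with $2\bTrans_k$ plus the H\"older term, which collects to roughly $(6m+2)H\max_iS_iL\sum_i1/\sqrt{\cntTrans{i,k}}$ and exceeds the stated $F_0=5mH\max_iS_iL$. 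The paper instead pairs $\eTrans_k-\Trans$ with $\uV_{k,h+1}$ and bounds that single inner product by H\"older (using $\|\uV_{k,h+1}\|_\infty\le H$ and $\gF_2$), so only one copy of $\bTrans_k$ survives and the constants sum to $3m+2\le 5m$. Your argument is correct as a proof of the lemma with $F_0$ replaced by, say, $8mH\max_iS_iL$, and since $F_0$ only feeds into lower-order terms this changes nothing downstream; but as written it does not deliver the specific constant in the statement, so either adopt the paper's pairing or enlarge $F_0$.
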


\begin{proof}
    By definition, for any $k\in [K], h\in [H], s\in \gS$, 
    \begin{align}
        \uV_{k, h}(s) - \oV{h}(s)
        & \le \eR(x_{k, h}) + \bR_{k}(x_{k,h})+ \innerp{\eTrans_{k}(x_{k, h})}{\uV_{k, h+1}} + \bTrans_{k}(x_{k, h}) - R(x_{h}^{*}) - \innerp{\Trans(x_{h}^{*})}{\oV{h+1}} \nonumber \\
        & \le \eR(x_{k, h}) + \bR_{k}(x_{k,h})+ \innerp{\eTrans_{k}(x_{k, h})}{\uV_{k, h+1}} + \bTrans_{k}(x_{k, h}) - R(x_{k, h}) - \innerp{\Trans(x_{k, h})}{\oV{h+1}} \nonumber \\
        & \le 2\bR_{k}(x_{k,h}) +\innerp{\eTrans_{k}(x_{k, h}) - \Trans(x_{k, h})}{\uV_{k, h+1}} + \innerp{\Trans(x_{k, h})}{\uV_{k, h+1} - \oV{h+1}} + \bTrans_{k}(x_{k, h}) \nonumber \\
        & \le \innerp{\Trans(x_{k, h})}{\uV_{k, h+1} - \oV{h+1}} 
        + \sum_{i=1}^{m} H \sqrt{\frac{2S_i L}{\cntTrans{i, k}(x)}} 
        + \bTrans_{k}(x_{k, h}) + 2 \bR_{k}(x_{k,h}).  \label{eqn:conf-r-decomp}
    \end{align}
    For $b_{k}(x_{k, h})$, we have 
    \begin{align*}
        b_{k}(x_{k, h}) 
        &= \sum_{i=1}^{m} H\sqrt{\frac{L}{2\cntTrans{i, k}(x_{k, h})}} + \sum_{i=1}^{m} \sum_{j=i+1}^{m} 2HL \sqrt{\frac{S_i S_j}{\cntTrans{i, k}(x_{k, h}) \cntTrans{j, k}(x_{k, h})}} \\
        &\le \sum_{i=1}^{m} H\sqrt{\frac{L}{2\cntTrans{i, k}(x_{k, h})}} + \sum_{i=1}^{m} 2m\max_{i}S_i HL \frac{1}{\sqrt{\cntTrans{i, k}(x_{k, h})}} \\
        &\le 3m\max_{i}S_i HL \sum_{i=1}^{m} \frac{1}{\sqrt{\cntTrans{i, k}(x_{k, h})}}.
    \end{align*}
    Substituting the above into~\eqref{eqn:conf-r-decomp} yields
    \begin{align*}
        \uV_{k, h}(s) - \oV{h}(s)
        & \le \innerp{\Trans(x_{k, h})}{\uV_{k, h+1} - \oV{h+1}}  
        + 5 m H \max_{i} S_i L \sum_{i=1}^{m} \frac{1}{\sqrt{\cntTrans{i, k}(x)}} +\sum_{i=1}^{l} \sqrt{\frac{2L}{\cntR{i, k}(x_{k, h})}} \\
        & = \innerp{\Trans(x_{k, h})}{\uV_{k, h+1} - \oV{h+1}}  
        + \sum_{i=1}^{m} \frac{F_0}{\sqrt{\cntTrans{i, k}(x)}} +\sum_{i=1}^{l} \sqrt{\frac{2L}{\cntR{i, k}(x_{k, h})}}.
    \end{align*}
    By backward induction over the subscript $h$, we have 
    \begin{align*}
        \uV_{k, h}(s) - \oV{h}(s) \le \min\left\{  \sum_{t=h}^{H} \E_{\pi_k}\left[ \sum_{i=1}^{m} \frac{F_0}{\sqrt{\cntTrans{i, k}(x_{k, t})}} + \sum_{i=1}^{l}\sqrt{\frac{2L}{\cntR{i, k}(x_{k, t})}}   \middle| s_{k, h} = s \right], H \right\}.
    \end{align*}
\end{proof}

\subsection{Good sets}
\label{subsec:good-sets}

The following ``good sets''~\citep{zanette2019tighter} are a notion of sufficient visits so that the estimations are meaningful, the introduction of which is seminal to the sum-over-time analysis.

\begin{definition}[Good sets]  \label{def:good-sets}
    Define the good sets of state-action components for transition and reward estimations as 
    \begin{align*}
        L_{i, k} &:= \left\{ x[\idxTrans{i}]\in \gX[\idxTrans{i}]: \frac{1}{4} \sum_{\kappa < k} w_{i, \kappa}(x[\idxTrans{i}]) \ge HL + H \right\}, \\
        \Lambda_{i, k} &:= \left\{ x[\idxR{i}]\in \gX[\idxR{i}]: \frac{1}{4} \sum_{\kappa < k} v_{i, \kappa}(x[\idxR{i}]) \ge HL + H \right\}.
    \end{align*}
    Then the corresponding good sets of state-action pairs are defined by 
    \begin{align*}
        L_k &:= \left\{ x\in \gX: x[\idxTrans{i}] \in L_{i, k} \textforall i \in [m] \right\}, \\
        \Lambda_k &:= \left\{ x\in \gX: x[\idxR{i}] \in \Lambda_{i, k} \textforall i \in [l] \right\}.
    \end{align*}
\end{definition}

We shall restrict our attention to the state-action pairs in the good sets (with sufficient visits).
To this end, we show the sums of visit probabilities to the state-action pairs outside the good sets are lower-order terms in the following lemma.

\begin{lemma}[Sum out of good sets]  \label{lem:sum-out-of-good-set}
    The sums of the visit probabilities of the state-action pairs out of the good sets and over time satisfy that 
    \begin{align*}
        \sum_{k=1}^{K} \sum_{h=1}^{H} \sum_{x\notin L_k} w_{k, h}(x) 
        \le 8 \sum_{i=1}^{m} X[\idxTrans{i}] H L.\\
        \sum_{k=1}^{K} \sum_{h=1}^{H} \sum_{x\notin \Lambda_k} v_{k, h}(x) 
        \le 8 \sum_{i=1}^{l} X[\idxR{i}] H L.\\
    \end{align*} 
\end{lemma}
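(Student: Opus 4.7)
The plan is to decouple the ``out of good set'' condition across components by a union bound, then exploit the monotonicity of cumulative visit mass in $k$ to bound the contribution from each scope value by a quantity that does not grow with $K$.

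First I would reduce to per-component sums. Since $x\notin L_k$ means there exists some $i\in[m]$ with $x[\idxTrans{i}]\notin L_{i,k}$, indicator-union gives
\begin{align*}
    \sum_{x\notin L_k} w_{k,h}(x)
    \le \sum_{i=1}^{m} \sum_{x[\idxTrans{i}]\notin L_{i,k}} \sum_{x[-\idxTrans{i}]\in \gX[-\idxTrans{i}]} w_{k,h}(x)
    = \sum_{i=1}^{m} \sum_{x[\idxTrans{i}]\notin L_{i,k}} w_{i,k,h}(x[\idxTrans{i}]),
\end{align*}
using Definition~\ref{def:visit-prob}. Summing over $h\in[H]$ replaces $w_{i,k,h}$ by $w_{i,k}$.

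Next, I would fix $i\in[m]$ and a scope value $y\in \gX[\idxTrans{i}]$ and bound $\sum_{k=1}^{K}\mathbbm{1}\{y\notin L_{i,k}\}\, w_{i,k}(y)$. The cumulative mass $\sum_{\kappa<k} w_{i,\kappa}(y)$ is nondecreasing in $k$, so by Definition~\ref{def:good-sets} there is a smallest episode $\tau=\tau(y)$ at which it reaches $4(HL+H)$; then $y\notin L_{i,k}$ precisely for $k<\tau$. Hence the total bad mass equals $\sum_{\kappa<\tau-1} w_{i,\kappa}(y) + w_{i,\tau-1}(y) < 4(HL+H) + H \le 8HL$, where the last step uses $w_{i,\kappa}(y)\le H$ (it is a sum of $H$ probabilities each $\le 1$) and $L=\log(16mlSXT/\delta)\ge\log 16$, so $5H\le 4HL$. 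Summing over $y\in\gX[\idxTrans{i}]$ yields $\le 8X[\idxTrans{i}]HL$, and summing over $i$ gives the first claim.

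The second claim follows by the identical argument with $(w_{i,k,h},\idxTrans{i},L_{i,k},L_k)$ replaced by $(v_{i,k,h},\idxR{i},\Lambda_{i,k},\Lambda_k)$ and $m$ replaced by $l$; the only non-routine step is again the observation that once a scope value has crossed its threshold it remains in the good set for all subsequent episodes, making the out-of-good-set mass a $K$-independent lower-order term. There is no real obstacle here — the argument is a standard ``bounded bad mass before entering the good set'' calculation, analogous to Section B.1 of~\citep{dann2018policy}, and the factor $8$ in the final bound is precisely the slack absorbing the $5H$ leftover through $L\ge 5/4$.
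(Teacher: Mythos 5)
Your proof is correct and follows essentially the same route as the paper's: a union bound over the $m$ (resp.\ $l$) components to reduce to per-scope-value sums, then using the monotonicity of the cumulative visit mass to bound the total ``bad'' mass per scope value by the threshold $4(HL+H)$ plus one episode's worth ($\le H$), absorbed into $8HL$ via $L\ge 2$. The only cosmetic difference is that the paper folds the extra episode into the threshold first (obtaining $4H(L+2)$ per scope value) rather than splitting off the last pre-threshold episode as you do; both yield the same final constant.
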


\begin{proof}
    If $x[\idxTrans{i}] \notin L_{i, k}$, then by definition, 
    \begin{align*}
        \frac{1}{4} \sum_{\kappa \le k} w_{i, \kappa} (x[\idxTrans{i}]) < HL + H + H = H(L + 2).
    \end{align*}
    Therefore, 
    \begin{align*}
        \sum_{k=1}^{K} \sum_{h=1}^{H} \sum_{x\notin L_k} w_{k, h}(x) 
        &\le \sum_{k=1}^{K} \sum_{h=1}^{H} \sum_{x\in \gX} w_{k, h}(x) \sI(x \notin L_k) \\
        &\le \sum_{k=1}^{K} \sum_{h=1}^{H} \sum_{x\in \gX} w_{k, h}(x) \sum_{i=1}^{m} \sI(x[\idxTrans{i}] \notin L_{i, k}) \\
        &\le \sum_{i=1}^{m} \sum_{x[\idxTrans{i}]\in \gX[\idxTrans{i}]} \sum_{k=1}^{K} \sum_{h=1}^{H} w_{i, k, h}(x[\idxTrans{i}]) \sI(x[\idxTrans{i}] \notin L_{i, k}) \\
        &= \sum_{i=1}^{m} \sum_{x[\idxTrans{i}]\in \gX[\idxTrans{i}]} \sum_{k=1}^{K} w_{i, k}(x[\idxTrans{i}]) \sI(x[\idxTrans{i}] \notin L_{i, k}) \\
        &\le 4\sum_{i=1}^{m} X[\idxTrans{i}] H (L+2),
    \end{align*}
    where in the third inequality we write $\sum_{x\in \gX}$ as $\sum_{x[\idxTrans{i}] \in \gX[\idxTrans{i}]} \sum_{x[-\idxTrans{i}] \in \gX[-\idxTrans{i}]}$ and use the definition of $w_{i, k, h}$ (Definition~\ref{def:visit-prob}).
    Since $L = \log (16mlSXT / \delta) \ge 2$, we have 
    \begin{align*}
        \sum_{k=1}^{K} \sum_{h=1}^{H} \sum_{x\notin L_k} w_{k, h}(x) \le 8\sum_{i=1}^{m} X[\idxTrans{i}] H L.    
    \end{align*}
    The same argument applies to $v_{k, h}(x)$.
\end{proof}

The following lemma bridges the visit probabilities $w_{i, k}$ and $v_{i, k}$ to the actual numbers of visits $\cntTrans{i, k}$ and $\cntR{i, k}$ for the state-action pairs in the good sets.

\begin{lemma}[Visit number and visit probability]  \label{lem:vn-vp}
    Outside the failure event $\gF$, the numbers of visits $\cntTrans{i, k}$ and $\cntR{i, k}$ to the state-action pairs in the good sets satisfy that 
    \begin{align*}
        \cntTrans{i, k}(x) &\ge \frac{1}{4} \sum_{\kappa\le k} w_{i, \kappa}(x)\quad \textforall i \in [m] \textand x \in L_{k}, \\
        \cntR{i, k}(x) &\ge \frac{1}{4} \sum_{\kappa\le k} w^r_{i, \kappa}(x)\quad \textforall i \in [l] \textand x\in \Lambda_k.
    \end{align*}
\end{lemma}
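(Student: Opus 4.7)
The plan is to combine the high-probability concentration encoded in the complement of $\gF_4$ (resp.\ $\gF_5$) with the definition of the good sets $L_k$ (resp.\ $\Lambda_k$), and then absorb the extra $\kappa = k$ term that distinguishes $\sum_{\kappa \le k}$ from $\sum_{\kappa < k}$ using the trivial bound $w_{i,k}(x) \le H$.

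First, I would fix $i \in [m]$, $k \in [K]$, and $x \in L_k$. By definition of $L_k$ we have $x[\idxTrans{i}] \in L_{i,k}$, so that $HL + H \le \tfrac{1}{4}\sum_{\kappa < k} w_{i,\kappa}(x[\idxTrans{i}])$, which rearranges to
\begin{align*}
  HL \;\le\; \tfrac{1}{4} \sum_{\kappa < k} w_{i,\kappa}(x) - H.
\end{align*}
Since we are outside the failure event $\gF$, in particular outside $\gF_4$, we also have
\begin{align*}
  \cntTrans{i,k}(x) \;\ge\; \tfrac{1}{2} \sum_{\kappa < k} w_{i,\kappa}(x) - HL.
\end{align*}
Chaining these two inequalities gives $\cntTrans{i,k}(x) \ge \tfrac{1}{4} \sum_{\kappa<k} w_{i,\kappa}(x) + H$.

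Next, I would observe that $w_{i,k}(x) = \sum_{h=1}^{H} w_{i,k,h}(x) \le H$, because each $w_{i,k,h}$ is a probability in $[0,1]$. Therefore $\tfrac{1}{4} w_{i,k}(x) \le H/4 \le H$, and combining with the previous display yields
\begin{align*}
  \cntTrans{i,k}(x) \;\ge\; \tfrac{1}{4} \sum_{\kappa<k} w_{i,\kappa}(x) + \tfrac{1}{4} w_{i,k}(x) \;=\; \tfrac{1}{4} \sum_{\kappa \le k} w_{i,\kappa}(x),
\end{align*}
which is the first claimed bound. The reward case is entirely analogous: for $x \in \Lambda_k$, one uses $x[\idxR{i}] \in \Lambda_{i,k}$ and the complement of $\gF_5$ in place of $\gF_4$, together with $v_{i,k}(x) \le H$, to conclude $\cntR{i,k}(x) \ge \tfrac{1}{4}\sum_{\kappa \le k} v_{i,\kappa}(x)$.

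There is no genuinely hard step here; the lemma is essentially a bookkeeping reconciliation. The only subtlety worth flagging is why the good-set definition uses the threshold $HL + H$ rather than the cleaner $HL$: the extra $H$ buys exactly the slack needed to pass from the strict partial sum $\sum_{\kappa<k}$ (which is the form produced by the concentration inequality inside the complement of $\gF_4$/$\gF_5$) to the inclusive sum $\sum_{\kappa \le k}$ (the form we need downstream when bounding cumulative bonuses).
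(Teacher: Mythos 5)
Your proof is correct and follows essentially the same route as the paper's: outside $\gF_4$ (resp.\ $\gF_5$) you lower-bound $\cntTrans{i,k}(x)$ by $\tfrac12\sum_{\kappa<k}w_{i,\kappa}(x)-HL$, use the good-set threshold $HL+H$ to convert half of that partial sum into a $+H$ slack, and then absorb the $\kappa=k$ term via $w_{i,k}(x)\le H$. The only difference is cosmetic — you make explicit the final step ($\tfrac14 w_{i,k}(x)\le H$) that the paper leaves implicit.
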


\begin{proof}
    Outside the failure event $\gF$ (specifically, $\gF_4$), for all $i\in [m]$, 
    \begin{align*}
        \cntTrans{i, k}(x) 
        &\ge \frac{1}{2} \sum_{\kappa < k} w_{i, \kappa}(x) - HL \\
        &= \frac{1}{4} \sum_{\kappa < k} w_{i, \kappa}(x) + \frac{1}{4} \sum_{\kappa < k} w_{i, \kappa}(x) - HL \\
        &\ge \frac{1}{4} \sum_{\kappa < k} w_{i, \kappa}(x) + H \\
        &\ge \frac{1}{4} \sum_{\kappa \le k} w_{i, \kappa}(x),
    \end{align*}
    where the second inequality results from the definition of good sets (Definition~\ref{def:good-sets}).
    Outside the failure event $\gF$ (specifically, $\gF_5$), the same argument applies to $\cntR{i, k}(x)$ for all $i\in [l]$. 
\end{proof}

By Lemma~\ref{lem:vn-vp} and the definition of the good sets (Definition~\ref{def:good-sets}), for all $k\in [K]$, $\cntTrans{i, k}(x) \ge HL + H \ge 2$ for all $x\in L_k$, and $\cntR{i, k}(x) \ge HL + H \ge 2$ for all $x \in \Lambda_k$.
Therefore, the regret analysis out of the good sets automatically precludes the cases of zero denominators in Algorithms~\ref{alg:f-ovi} and~\ref{alg:vi-optimism}, where we replace the zeros by ones for algorithmic completeness.

Refer to the ratio of visit probability $w_{k, h}$ to visit number $\cntTrans{i, k}$ or $\cntR{i, k}$ as the \emph{visit ratio}.
Then the accumulation of the visit ratios turns out to be a lower-order term, as shown in the following lemma.

\begin{lemma}[Sum of visit ratio in good sets]  \label{lem:sum-vr}
    Outside the failure event $\gF$, the sums of the visit ratios of the state-action pairs within the good sets and over time satisfy that 
    \begin{align*}
        \sum_{k=1}^{K} \sum_{h=1}^{H} \sum_{x\in L_k} \frac{w_{k, h}(x)}{\cntTrans{i, k}(x)} 
        = \sum_{k=1}^{K} \sum_{x\in L_k} \frac{w_{k}(x)}{\cntTrans{i, k}(x[\idxTrans{i}])} 
        \le 4 X[\idxTrans{i}]L \quad \textforall i \in [m],  \\
        \sum_{k=1}^{K} \sum_{h=1}^{H} \sum_{x\in \Lambda_k} \frac{w_{k, h}(x)}{\cntR{i, k}(x)} 
        = \sum_{k=1}^{K} \sum_{x\in \Lambda_k} \frac{w_{k}(x)}{\cntR{i, k}(x[\idxR{i}])}
        \le 4 X[\idxR{i}]L \quad \textforall i \in [l].
    \end{align*}
\end{lemma}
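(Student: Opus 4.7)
The plan is to reduce both inequalities to a standard one-variable ``log potential'' estimate applied once per scope value. The transition and reward cases are structurally identical, so I focus on the transition bound and then sketch how the reward bound follows.

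First I would exploit the fact that $\cntTrans{i,k}(x)$ depends on $x$ only through $x[\idxTrans{i}]$, which is exactly what the first equality in the statement encodes once combined with $\sum_{h} w_{k,h}(x) = w_k(x)$. Pulling the $x[\idxTrans{i}]$-dependent denominator out of the inner sum and marginalizing the numerator over $x[-\idxTrans{i}]$, and noting that $x \in L_k$ implies $x[\idxTrans{i}] \in L_{i,k}$ by Definition~\ref{def:good-sets}, I would enlarge the inner index set from $\{x[-\idxTrans{i}] : x \in L_k\}$ to all of $\gX[-\idxTrans{i}]$. This enlargement is harmless by non-negativity and turns the numerator into the scope-marginal $w_{i,k}(x[\idxTrans{i}])$ from Definition~\ref{def:visit-prob}, giving
\begin{align*}
\sum_{k=1}^{K}\sum_{x \in L_k} \frac{w_k(x)}{\cntTrans{i,k}(x[\idxTrans{i}])}
\;\le\; \sum_{y \in \gX[\idxTrans{i}]} \sum_{k \,:\, y \in L_{i,k}} \frac{w_{i,k}(y)}{\cntTrans{i,k}(y)}.
\end{align*}

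Next, outside the failure event $\gF$, Lemma~\ref{lem:vn-vp} supplies $\cntTrans{i,k}(y) \ge \tfrac{1}{4}\sum_{\kappa \le k} w_{i,\kappa}(y)$ for $y \in L_{i,k}$. Fixing $y$ and writing $a_k = w_{i,k}(y)$, $A_k = \sum_{\kappa \le k} a_\kappa$, each inner sum is at most $4\sum_{k \,:\, y \in L_{i,k}} a_k/A_k$. The standard potential-style bound $a_k/A_k \le \log(A_k/A_{k-1})$, which follows from $-\log(1-u) \ge u$ for $u \in [0,1)$ applied with $u = a_k/A_k$, telescopes to $\log(A_K/A_{k_0-1})$, where $k_0$ is the first episode at which $y$ enters $L_{i,k}$. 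By Definition~\ref{def:good-sets}, $A_{k_0-1} \ge 4(HL+H) \ge 1$, while $A_K \le HK \le T$, so each inner sum is at most $\log T$, which is absorbed into $L$. Summing over $y \in \gX[\idxTrans{i}]$ then yields the stated $4 X[\idxTrans{i}]L$ bound.

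The reward case proceeds identically after replacing $(w_{k,h}, w_{i,k}, \cntTrans{i,k}, L_{i,k}, \idxTrans{i})$ with $(v_{k,h}, v_{i,k}, \cntR{i,k}, \Lambda_{i,k}, \idxR{i})$ and invoking the corresponding half of Lemma~\ref{lem:vn-vp}. I expect the only mildly delicate step to be the marginalization: one has to confirm that restricting $x \in L_k$ can be relaxed to restricting $x[\idxTrans{i}] \in L_{i,k}$ at the cost only of an inequality, which is precisely why the statement is an upper bound rather than an equality at the scope level. Everything else is a routine application of the log-potential estimate, and the multiplicative $4$ matches the constant coming from Lemma~\ref{lem:vn-vp}.
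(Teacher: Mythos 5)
Your proposal is correct and follows essentially the same route as the paper: marginalize over $x[-\idxTrans{i}]$ to reduce to the scope-level sum over $L_{i,k}$, invoke Lemma~\ref{lem:vn-vp} to replace $\cntTrans{i,k}$ by $\tfrac{1}{4}\sum_{\kappa\le k} w_{i,\kappa}$, and bound the resulting ratio sum by a logarithm. The only difference is that you prove the final log-potential/telescoping step explicitly, whereas the paper delegates it to the proof of Lemma 13 in~\citep{zanette2019tighter}; your inline argument (including the observation that $L_{i,k}$ is monotone in $k$ so the telescoping is over a terminal interval of episodes, with $A_{k_0-1}\ge 1$ and $A_K\le T$) is valid.
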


\begin{proof}
    Outside the failure event $\gF$, for any $i \in [m]$, by Lemma~\ref{lem:vn-vp}, 
    \begin{align*}
        \sum_{k=1}^{K} \sum_{x\in L_k} \frac{w_{k}(x)}{\cntTrans{i, k}(x[\idxTrans{i}])} 
        &\le \sum_{k=1}^{K} \sum_{x\in \gX} \frac{w_{k}(x)}{\cntTrans{i, k}(x[\idxTrans{i}])} \sI(x[\idxTrans{i}] \in L_{i, k}) \\
        &\le \sum_{k=1}^{K} \sum_{x[\idxTrans{i}]\in \gX[\idxTrans{i}]} \frac{w_{i, k}(x[\idxTrans{i}])}{\cntTrans{i, k}(x[\idxTrans{i}])} \sI(x[\idxTrans{i}] \in L_{i, k}) \\
        &\le 4 \sum_{k=1}^{K} \sum_{x[\idxTrans{i}]\in \gX[\idxTrans{i}]} \frac{w_{i, k}(x[\idxTrans{i}])}{\sum_{\kappa \le k} w_{i, \kappa}(x[\idxTrans{i}])} \sI(x[\idxTrans{i}] \in L_{i, k}) \\
        &\le 4 X[\idxTrans{i}] L,
    \end{align*}
    where the last inequality is shown by the proof of Lemma 13 in~\citep{zanette2019tighter}.
    The same argument applies to the visit ratio $w_{k, h}(x) / \cntR{i, k}(x)$ for any $i\in [l]$.
\end{proof}

As to be shown below, the cross-component transition bonus term brings in the mixed visit ratio $w_{k, h}(x) / \sqrt{\cntTrans{i, k}(x) \cntTrans{j, k}(x)}$ in the analysis.
By the Cauchy-Schwarz inequality, we immediately have the following control on the accumulation of the mixed visit ratios. 

\begin{lemma}[Sum of mixed visit ratio in good sets]  \label{lem:sum-mixed-vr}
    Outside the failure event $\gF$, the sum of the mixed visit ratios of the state-action pairs within the good set $L_{k}$ and over time satisfies that 
    \begin{align*}
        \sum_{k=1}^{K} \sum_{h=1}^{H} \sum_{x\in L_k} \frac{w_{k, h}(x)}{\sqrt{\cntTrans{i, k}(x) \cntTrans{j, k}(x)}} \le 4 \sqrt{X[\idxTrans{i}] X[\idxTrans{j}]} L \quad \textforall i, j \in [m].
    \end{align*}
\end{lemma}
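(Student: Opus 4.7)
The plan is to reduce the mixed-denominator sum to a product of two single-denominator sums via the Cauchy–Schwarz inequality, and then invoke Lemma~\ref{lem:sum-vr} on each factor. Concretely, I would write the summand as a product
\begin{align*}
    \frac{w_{k,h}(x)}{\sqrt{\cntTrans{i,k}(x)\cntTrans{j,k}(x)}}
    = \frac{\sqrt{w_{k,h}(x)}}{\sqrt{\cntTrans{i,k}(x)}} \cdot \frac{\sqrt{w_{k,h}(x)}}{\sqrt{\cntTrans{j,k}(x)}},
\end{align*}
treating the triple sum over $k,h,x$ as an inner product of two nonnegative sequences indexed by $(k,h,x)$.

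Applying Cauchy–Schwarz then gives
\begin{align*}
    \sum_{k=1}^{K}\sum_{h=1}^{H}\sum_{x\in L_k} \frac{w_{k,h}(x)}{\sqrt{\cntTrans{i,k}(x)\cntTrans{j,k}(x)}}
    \le \sqrt{\sum_{k,h,x\in L_k} \frac{w_{k,h}(x)}{\cntTrans{i,k}(x)}} \cdot \sqrt{\sum_{k,h,x\in L_k} \frac{w_{k,h}(x)}{\cntTrans{j,k}(x)}}.
\end{align*}
Outside the failure event $\gF$, each of the two factors inside the square roots is exactly of the form handled by Lemma~\ref{lem:sum-vr}, so each is bounded by $4X[\idxTrans{i}]L$ and $4X[\idxTrans{j}]L$ respectively, yielding the desired bound $4\sqrt{X[\idxTrans{i}]X[\idxTrans{j}]}\,L$.

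There is essentially no obstacle here: both the Cauchy–Schwarz step and the invocation of Lemma~\ref{lem:sum-vr} are purely mechanical, and the restriction $x\in L_k$ is what guarantees that $\cntTrans{i,k}(x),\cntTrans{j,k}(x)\ge 2$ so that the ratios are well-defined and Lemma~\ref{lem:sum-vr} applies in its stated form. The only minor point worth noting in a formal write-up is that the events in Lemma~\ref{lem:sum-vr} used for the two indices $i$ and $j$ are both contained in the failure event $\gF$, so a single union-bounded conditioning suffices.
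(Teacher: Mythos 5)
Your proposal is correct and matches the paper's own proof: the paper also writes the mixed ratio as a product, applies the Cauchy--Schwarz inequality over the index set $(k,h,x)$ with $x\in L_k$, and bounds each resulting factor by Lemma~\ref{lem:sum-vr} to obtain $4\sqrt{X[\idxTrans{i}]X[\idxTrans{j}]}\,L$. No gaps.
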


\begin{proof}
    Outside the failure event $\gF$, by the Cauchy-Schwarz inequality and Lemma~\ref{lem:sum-vr}, for any $i, j\in [m]$, 
    \begin{align*}
        \sum_{k=1}^{K} \sum_{h=1}^{H} \sum_{x\in L_k} \frac{w_{k}(x)}{\sqrt{\cntTrans{i, k}(x) \cntTrans{j, k}(x)}} 
        &= \sum_{k=1}^{K} \sum_{x\in L_k} \frac{w_{k}(x)}{\sqrt{\cntTrans{i, k}(x[\idxTrans{i}]) \cntTrans{j, k}(x[\idxTrans{j}])}} \\
        &\le \sqrt{\sum_{k=1}^{K} \sum_{x\in L_{k}} \frac{w_{k}(x)}{\cntTrans{i, k}(x[\idxTrans{i}])}} \cdot \sqrt{\sum_{k=1}^{K} \sum_{x\in L_{k}} \frac{w_{k}(x)}{\cntTrans{j, k}(x[\idxTrans{j}])}} \\
        &\le 4 \sqrt{X[\idxTrans{i}] X[\idxTrans{j}]} L.
    \end{align*}
\end{proof}

The following lemma bounds the sum over time of the expected squared confidence radius, through the proof of which we can see an initial application of the above lemmas obtained with the notion of good sets. For clarification, by ``sum over time'', we mean the $w_{k, h}(x)$-weighted sum over $k\in [K], h\in [H]$ and $x \in L_{k}$ or $x\in \Lambda_{k}$ henceforth.

\begin{lemma}[Cumulative confidence radius, Hoeffding-style]  \label{lem:cul-conf-r-h}
    Define the lower-order term
    \begin{align*}
        G_0 := 208 m^4 H^4 (\max_{i} S_i)^2 \max_{i} X[\idxTrans{i}] L^3 + 24 l^2 H^3 \max_{i} X[\idxR{i}] L^2.  
    \end{align*}
    Then outside the failure event $\gF$, for all $i\in [m]$, the sum over time of the following expected squared confidence radius of F-UCBVI satisfies that 
    \begin{align*}
        \sum_{k=1}^{K} \sum_{h=1}^{H} \sum_{x\in \gX} w_{k, h}(x) \left( \E_{\Trans_i} \left(\E_{\Trans_{-i}}[\uV_{k, h+1} - \oV{h+1}]\right)^2 \right)
        \le \sum_{k=1}^{K} \sum_{h=1}^{H} \sum_{x\in \gX} w_{k, h}(x) \left( \E_{\Trans}[\left(\uV_{k, h+1} - \oV{h+1}\right)^2] \right)
        \le G_0.
    \end{align*}
\end{lemma}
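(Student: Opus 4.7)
The first inequality is pointwise: for each $x$, Jensen's inequality gives $(\E_{\Trans_{-i}(x)}[X])^2 \le \E_{\Trans_{-i}(x)}[X^2]$ (treating $X := \uV_{k, h+1} - \oV{h+1}$ as a function of $s[-i]$ with $s[i]$ fixed), and taking $\E_{\Trans_i(x)}$ of both sides yields $\E_{\Trans(x)}[X^2]$ since $\Trans(x) = \Trans_i(x) \prod_{j \neq i} \Trans_j(x)$. The real work is to bound $\sum_{k, h, x} w_{k, h}(x) \E_{\Trans(x)}[(\uV_{k, h+1} - \oV{h+1})^2]$ by the polylogarithmic quantity $G_0$. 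The plan is to start from Lemma~\ref{lem:conf-r-h}, which gives $\uV_{k, h+1}(s) - \oV{h+1}(s) \le \E_{\pi_k}[\sum_{t = h+1}^{H} b_{k, t}(x_{k, t}) \mid s_{k, h+1} = s]$ with $b_{k, t}(x) := \sum_{i=1}^{m} F_0 / \sqrt{\cntTrans{i, k}(x)} + \sum_{j=1}^{l} \sqrt{2L/\cntR{j, k}(x)}$, and then to \emph{square} this bound using Jensen's inequality $(\E X)^2 \le \E X^2$ together with Cauchy-Schwarz $(\sum_{t=h+1}^{H} a_t)^2 \le H \sum_{t=h+1}^{H} a_t^2$. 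This produces $(\uV_{k, h+1} - \oV{h+1})^2(s) \le H \, \E_{\pi_k}[\sum_{t=h+1}^{H} b_{k, t}(x_{k, t})^2 \mid s_{k, h+1} = s]$, replacing $1/\sqrt{\cntTrans{i, k}}$ factors with $1/\cntTrans{i, k}$ factors, which is the key step for a $T$-independent final bound.

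Converting $\sum_x w_{k, h}(x) \E_{\Trans(x)}[f] = \E_{\pi_k}[f(s_{k, h+1})]$, summing over $k$ and $h$, and swapping the $h$-$t$ order then yields $\sum_{k, h, x} w_{k, h}(x) \E_{\Trans(x)}[(\uV_{k, h+1} - \oV{h+1})^2] \le H^2 \sum_{k, t, x} w_{k, t}(x) b_{k, t}(x)^2$. Expanding $b_{k, t}(x)^2 \le 2 m F_0^2 \sum_{i} 1/\cntTrans{i, k}(x) + 4 l L \sum_{j} 1/\cntR{j, k}(x)$ via $(\sum_i a_i)^2 \le m \sum_i a_i^2$, the remaining task is to control $\sum_{k, t, x} w_{k, t}(x)/\cntTrans{i, k}(x)$ and its reward analogue. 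I would split over the good set: inside $L_k$, Lemma~\ref{lem:sum-vr} gives $4 X[\idxTrans{i}] L$, while outside $L_k$ the $\cntTrans{i, k} \ge 1$ convention combined with Lemma~\ref{lem:sum-out-of-good-set} gives $8 \sum_j X[\idxTrans{j}] H L$; the reward sums are bounded analogously via $\Lambda_k$. Substituting $F_0 = 5 m H (\max_i S_i) L$ and collecting constants then yields the claimed $G_0$ bound.

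The main obstacle is exactly the squaring step above: the naive route $(\uV - \oV)^2 \le H (\uV - \oV)$ followed by the linear form of Lemma~\ref{lem:conf-r-h} would produce terms $\sum_{k, t, x} w_{k, t}(x)/\sqrt{\cntTrans{i, k}(x)}$, which by Cauchy-Schwarz carry a $\sqrt{T}$ dependence and therefore do \emph{not} give a lower-order bound. The trajectory-wise Jensen-plus-Cauchy-Schwarz path described above is what replaces that square-root dependence by the harmless $1/\cntTrans{i, k}$ summand; after this, the remaining work is a straightforward application of the good-set accounting lemmas, with only some tedious bookkeeping needed to recover the explicit constants in $G_0$.
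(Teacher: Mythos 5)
Your proposal follows the paper's proof essentially step for step: Jensen for the first inequality, squaring the bound of Lemma~\ref{lem:conf-r-h} via $(\sum_{t} a_t)^2 \le H \sum_t a_t^2$ and $(\sum_i a_i)^2 \le n\sum_i a_i^2$ to turn $1/\sqrt{\cntTrans{i,k}}$ into $1/\cntTrans{i,k}$, converting to $w_{k,t}$-weighted visit-ratio sums, and closing with the good-set lemmas. The strategy is sound and does produce a $T$-independent bound.

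The one place you deviate from the paper is the out-of-good-set contribution, and there your route does not recover the stated $G_0$. The paper keeps the $\min\{\cdot, H\}$ cap from Lemma~\ref{lem:conf-r-h}, so trajectories charged outside $L_k$ (resp.\ $\Lambda_k$) cost only $H^2 \sum_{k,h}\sum_{x\notin L_k} w_{k,h}(x) \le 8H^3\sum_i X[\idxTrans{i}]L$. You instead push the detailed $b_{k,t}(x)^2$ bound through everywhere and handle $x\notin L_k$ by $\cntTrans{i,k}(x)\ge 1$; that portion then costs $2mH^2F_0^2\sum_i\sum_{k,t}\sum_{x\notin L_k}w_{k,t}(x) = \Theta\bigl(m^5H^5(\max_iS_i)^2\max_iX[\idxTrans{i}]L^3\bigr)$ after substituting $F_0 = 5mH\max_iS_iL$ and Lemma~\ref{lem:sum-out-of-good-set} --- an extra factor of order $mH$ over the first term of $G_0 = 208m^4H^4(\max_iS_i)^2\max_iX[\idxTrans{i}]L^3 + 24l^2H^3\max_iX[\idxR{i}]L^2$ (and similarly an extra factor of $l$ on the reward side). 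So the final sentence of your plan, that ``collecting constants then yields the claimed $G_0$,'' is where the argument fails as stated: to get the lemma with this $G_0$ you must apply the crude $H^2$ bound, not the detailed one, on the out-of-good-set portion. This is purely a lower-order bookkeeping discrepancy and would not change any $\bigotilde(\cdot)$ statement downstream, but it does mean your proof establishes a weaker numerical bound than the one claimed.
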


\begin{proof}
    Let $s_{k, h} \in \gS$ denote the state at step $h$ of episode $k$. 
    Since $(\E[X])^2 \le \E[X^2]$ for any random variable $X$, we have that for all $i\in [m]$, 
    \begin{align}
        & \sum_{k=1}^{K} \sum_{h=1}^{H} \sum_{x\in \gX} w_{k, h}(x) \left( \E_{\Trans_i} \left(\E_{\Trans_{-i}}[\uV_{k, h+1} - \oV{h+1}]\right)^2 \right) \nonumber \\
        \le & \sum_{k=1}^{K} \sum_{h=1}^{H} \sum_{x\in \gX} w_{k, h}(x) \left( \E_{\Trans_i} \E_{\Trans_{-i}}[\left(\uV_{k, h+1} - \oV{h+1}\right)^2] \right) \nonumber \\
        = & \sum_{k=1}^{K} \sum_{h=1}^{H} \sum_{x\in \gX} w_{k, h}(x) \left( \sum_{s'\in \gS} \Trans(s'\vert x) \left(\uV_{k, h+1}(s') - \oV{h+1}(s')\right)^2 \right) \nonumber \\
        = & \sum_{k=1}^{K} \sum_{h=1}^{H} \E_{\pi_k} \left[\left(\uV_{k, h+1}(s_{k, h+1}) - \oV{h+1}(s_{k, h+1})\right)^2 \middle\vert s_{k, 1} \right] \nonumber \\
        \le & \sum_{k=1}^{K} \sum_{h=1}^{H} \E_{\pi_k} \left[ \left(\uV_{k, h}(s_{k, h}) - \oV{h}(s_{k, h})\right)^2 \middle\vert s_{k, 1} \right]. \label{eqn:cul-conf-r-simpl}
    \end{align}
    By the confidence radius lemma (Lemma~\ref{lem:conf-r-h}), for any $k\in [K], h\in[H]$, 
    \begin{align*}
        &\E_{\pi_k} \left[\left(\uV_{k, h}(s_{k, h}) - \oV{h}(s_{k, h})\right)^2 \middle\vert s_{k, 1} \right] \\
        \le & \E_{\pi_k} \left[ 
            \left( \sum_{t=h}^{H} \E_{\pi_k} \left[ \sum_{i=1}^{m} \frac{F_0}{\sqrt{\cntTrans{i, k}(x_{k, t})}} +  \sum_{i=1}^{l}  \sqrt{\frac{2L}{\cntR{i, k}(x_{k, t})}} \middle| s_{k, h} \right] \right)^2 \middle\vert s_{k, 1}
        \right] \\
        \le & 2 m H F_0^2 \sum_{i=1}^{m} \sum_{t=h}^{H} \E_{\pi_k} \left[ \frac{1}{\cntTrans{i, k}(x_{k, t})} \middle| s_{k, 1} \right] 
        + 4 l H L\sum_{i=1}^{l} \sum_{t=h}^{H} \E_{\pi_k} \left[ \frac{1}{\cntR{i, k}(x_{k, t})} \middle| s_{k, 1} \right] \\
        \le & 2m H^2 F_0^2 \sum_{i=1}^{m} \E_{\pi_k}\left[ \frac{1}{\cntTrans{i, k}(x_{k, h})} \middle\vert s_{k, 1} \right] 
        + 4l H^2 L\sum_{i=1}^{l} \E_{\pi_k} \left[ \frac{1}{\cntR{i, k}(x_{k, t})} \middle\vert s_{k, 1} \right],
    \end{align*}
    where in the second inequality we use the inequality 
    $\left(\sum_{i=1}^{n} a_i\right)^2  \le n \sum_{i=1}^{n} a_i^2$
    for multiple times ($n = 2, m, l, H$).
    The confidence radius lemma (Lemma~\ref{lem:conf-r-h}) also guarantees that
    \begin{align*}
        \E_{\pi_k} \left[\left(\uV_{k, h}(s_{k, h}) - \oV{h}(s_{k, h})\right)^2 \middle\vert s_{k, 1}\right] \le H^2.
    \end{align*}
    Therefore, substituting the above two bounds into~\eqref{eqn:cul-conf-r-simpl} and by Lemmas~\ref{lem:sum-out-of-good-set} and~\ref{lem:sum-vr}, we have 
    \begin{align*}
        &\sum_{k=1}^{K} \sum_{h=1}^{H} \E_{\pi_k} \left[\left(\uV_{k, h}(s_{k, h}) - \oV{h}(s_{k, h})\right)^2 \middle\vert s_{k, 1} \right] \\
        \le & 2m H^2 F_0^2 \sum_{i=1}^{m} \sum_{k=1}^{K} \sum_{h=1}^{H} \sum_{x \in L_k} \frac{w_{k, h}(x)}{\cntTrans{i, k}(x)} + 4l H^2L \sum_{i=1}^{l} \sum_{k=1}^{K} \sum_{h=1}^{H} \sum_{x \in \Lambda_k} \frac{w_{k, h}(x)}{\cntR{i, k}(x)} \\
        &\quad + \sum_{k=1}^{K} \sum_{h=1}^{H} \sum_{x \notin L_k} w_{k, h}(x) H^2 +  \sum_{k=1}^{K} \sum_{h=1}^{H} \sum_{x \notin \Lambda_k} w_{k, h}(x) H^2 \\
        \le & 2m H^2 F_0^2 \sum_{i=1}^{m}4 X[\idxTrans{i}] L + 4l H^2 L \sum_{i=1}^{l} 4X[\idxR{i}]L + 8 H^3 \sum_{i=1}^{m} X[\idxTrans{i}] L  + 8 H^3 \sum_{i=1}^{l} X[\idxR{i}] L \\
        \le &8 m H^2 F_0^2 \sum_{i=1}^{m}  X[\idxTrans{i}] L +16l H^2 \sum_{i=1}^{l} X[\idxR{i}]L^2 + 8 H^3 \sum_{i=1}^{m} X[\idxTrans{i}] L + 8 H^3 \sum_{i=1}^{l} X[\idxR{i}] L \\
        = & 208 m^4 H^4 (\max_{i} S_i)^2 \max_{i} X[\idxTrans{i}] L^3 + 24 l^2 H^3 \max_{i} X[\idxR{i}] L^2,
    \end{align*}
    where in the last equality we use the definition of $F_0$ (Lemma~\ref{lem:conf-r-h}).
\end{proof}

\subsection{Regret decomposition}

We decompose the regret in the following standard way~\citep{zanette2019tighter}, and then bound the sum over time of the individual terms in the next few subsections.
Here we assume the general transition bonus $\bTrans$ to be a function of step $h$, as in F-EULER.

\begin{lemma}[Regret decomposition]  \label{lem:regret-decomp}
    Let $L_{k}, \Lambda_{k}$ be the good sets defined in Definition~\ref{def:good-sets}. Then for any given FMDP specified in~\eqref{eqn:fmdp}, outside the failure event $\gF$, the regret of F-UCBVI in $K$ episodes satisfies that
    \begin{align*}
        \Regret(K) 
        & \le \sum_{k=1}^{K} \sum_{x\in \gX} w_{k, h}(x) \min\Bigl\{ \Bigl(\innerp{\eTrans_{k}(x) - \Trans(x)}{\uV_{k, h+1}} + \bTrans_{k, h}(x) + \eR_{k}(x) - R(x) + \bR_{k}(x) \Bigr), H \Bigr\} \\
        & \le \sum_{k = 1}^{K} \sum_{h = 1}^{H} \sum_{x\in L_{k}} w_{k, h}(x) \Bigl(
            \underbrace{\innerp{\eTrans_{k}(x) - \Trans(x)}{\oV{h+1}}}_{\text{transition estimation error}} + \bTrans_{k, h}(x) + \underbrace{\innerp{\eTrans_{k}(x) - \Trans(x)}{\uV_{k, h+1} - \oV{h+1}}}_{\text{correction term}}
        \Bigr) \\ 
        &\quad + \sum_{k = 1}^{K} \sum_{h = 1}^{H} \sum_{x\in \Lambda_{k}} w_{k, h}(x) 2\bR_{k}(x) + 8 H^2 \sum_{i=1}^{m} X[\idxTrans{i}] L + 8 H^2 \sum_{i=1}^{l} X[\idxR{i}] L,
    \end{align*}
    where the $\bTrans_{k, h}(x)$ term is referred to as ``transition optimism'' and $2\bR_{k}(x)$ term is referred to as ``reward estimation error and optimism''.
\end{lemma}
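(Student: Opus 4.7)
The argument proceeds in two stages, one for each inequality. For the first inequality, optimism (Lemma~\ref{lem:ucb-h}) gives $V_1^{*}(s_{k,1}) \le \uV_{k,1}(s_{k,1})$, so I would bound $\Regret(K)$ by $\sum_{k}\bigl(\uV_{k,1}(s_{k,1}) - V_1^{\pi_k}(s_{k,1})\bigr)$. Writing $\Delta_{k,h}(s) := \uV_{k,h}(s) - V_h^{\pi_k}(s)$ and $x = (s, \pi_k(s,h))$, the $\min\{H-h+1,\cdot\}$ clipping in Algorithm~\ref{alg:vi-optimism} together with the Bellman equation $V_h^{\pi_k}(s) = R(x) + \innerp{\Trans(x)}{V_{h+1}^{\pi_k}}$ yields
\begin{align*}
  \Delta_{k,h}(s) \le \min\bigl\{ H,\; A_{k,h}(x) + \innerp{\Trans(x)}{\Delta_{k,h+1}}\bigr\},
\end{align*}
where $A_{k,h}(x) := \innerp{\eTrans_k(x) - \Trans(x)}{\uV_{k,h+1}} + \bTrans_{k,h}(x) + \eR_k(x) - R(x) + \bR_k(x)$. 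Optimism also gives $\Delta_{k,h+1} \ge 0$, so the elementary inequality $\min\{H, u+v\} \le \min\{H, u\} + v$ (valid for $v \ge 0$) lets me peel off $\Delta_{k,h}(s) \le \min\{H, A_{k,h}(x)\} + \innerp{\Trans(x)}{\Delta_{k,h+1}}$. Iterating this inequality in $h$ and taking the on-trajectory expectation under $\pi_k$ produces $\Delta_{k,1}(s_{k,1}) \le \sum_{h=1}^{H} \sum_{x \in \gX} w_{k,h}(x) \min\{H, A_{k,h}(x)\}$, which after summing over $k$ is the first inequality in the statement.

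For the second inequality, I would split $\gX$ into $L_k \cup (\gX \setminus L_k)$ for the transition-related contributions and into $\Lambda_k \cup (\gX \setminus \Lambda_k)$ for the reward-related ones. Outside the good sets, I would upper-bound the per-step term $\min\{H, A_{k,h}(x)\}$ by $H$; Lemma~\ref{lem:sum-out-of-good-set} then converts the cumulative out-of-good-set probability mass into the two lower-order terms $8H^2\sum_i X[\idxTrans{i}]L$ and $8H^2\sum_i X[\idxR{i}]L$. Inside the good sets, I would add and subtract $\oV{h+1}$ inside the inner product to obtain $\innerp{\eTrans_k(x) - \Trans(x)}{\uV_{k,h+1}} = \innerp{\eTrans_k(x) - \Trans(x)}{\oV{h+1}} + \innerp{\eTrans_k(x) - \Trans(x)}{\uV_{k,h+1} - \oV{h+1}}$, exposing the transition estimation error and the correction term. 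Finally, outside the failure event $\gF_3 \subset \gF$ one has $|\eR_k(x) - R(x)| \le \bR_k(x)$, so on $\Lambda_k$ the reward-related residual collapses to $2\bR_k(x)$. Collecting these pieces gives the advertised second inequality.

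The main bookkeeping subtlety is ensuring the $\min\{\cdot,H\}$ cap survives the unrolling; this is handled by the $\min\{H, u+v\} \le \min\{H,u\} + v$ trick, which in turn relies on the optimism-based nonnegativity of the recursed surplus $\Delta_{k,h+1}$. Beyond that, the argument is purely algebraic: it uses only Lemma~\ref{lem:ucb-h}, the exclusion of $\gF_3$, and Lemma~\ref{lem:sum-out-of-good-set}. No concentration or variance bounds enter at this stage, those being deferred to the later subsections that control the transition estimation error, the bonus, and the correction term individually.
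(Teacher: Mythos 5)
Your proposal is correct and follows essentially the same route as the paper: optimism reduces the regret to $\sum_k(\uV_{k,1}-V_1^{\pi_k})$, the $\min\{\cdot,H\}$ cap is preserved through the unrolling by peeling off the nonnegative recursed surplus (the paper does this with $Q$-differences and the visit-probability flow identity $w_{k,h+1}(x')=\sum_x w_{k,h}(x)\Trans(s'\vert x)\sP(\pi_k(s',h)=a')$, which is equivalent to your on-trajectory expectation of $\Delta_{k,h}$), and the second inequality is obtained exactly as you describe by splitting over $L_k,\Lambda_k$, invoking Lemma~\ref{lem:sum-out-of-good-set}, adding and subtracting $\oV{h+1}$, and using $\gF_3$ to collapse the reward residual to $2\bR_k(x)$. (Your inclusion of the $\sum_{h=1}^H$ in the first displayed bound is in fact the intended reading; its absence in the lemma statement is a typo.)
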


\begin{proof}
    Add a subscript $k$ to $\uQ_{h}$ in the \texttt{VI\_Optimism} procedure (Algorithm~\ref{alg:vi-optimism}) to denote the corresponding optimistic Q-value function in episode $k$.
    Since $\uV_{k, h}$ is an entrywise UCB of $\oV{h}$, we upper bound the regret by 
    \begin{align}
        \Regret(K) 
        & \le \sum_{k = 1}^{K} \uV_{k, 1}(s_{k, 1}) - V_1^{\pi_k}(s_{k, 1}) \\
        & = \sum_{k=1}^{K}\sum_{x\in \gX} w_{k,1}(x)  (Q_{k, 1} (x) - Q_1^{\pi_k}(x)) \nonumber \\
        & = \sum_{k=1}^{K}\sum_{x\in \gX} w_{k,1}(x) \Bigl( \min \left\{ \eR_k(x) + \bR_{k}(x) + \innerp{\eTrans_{k}(x)}{\uV_{k, 2}} + \bTrans_{k, 1}(x), H \right\} \nonumber \\
        &\quad - R(x) - \innerp{\Trans(x)}{V_2^{\pi_k}} \Bigr) \\
        & \le \sum_{k=1}^{K} \sum_{x\in \gX} w_{k,1}(x) \biggl( \min\Bigl\{ \Bigl( \eR_{k}(x) - R(x) + \bR_{k}(x) + \innerp{\eTrans_{k}(x) - \Trans(x)}{\uV_{k, 2}} \nonumber \\
        &\quad + \bTrans_{k, 1}(x) \Bigr), H \Bigr\} + \innerp{\Trans(x)}{\uV_{k, 2} - V_{2}^{\pi_{k}}} \biggr) \nonumber \\
        &= \sum_{k=1}^{K} \biggl( \sum_{x\in \gX} w_{k, 1}(x) \min\Bigl\{ \Bigl( \eR_{k}(x) - R(x) + \bR_{k}(x) + \innerp{\eTrans_{k}(x) - \Trans(x)}{\uV_{k, 2}} \nonumber \\
        &\quad + \bTrans_{k, 1}(x) \Bigr), H \Bigr\} + \sum_{x\in \gX} w_{k, 1}(x) \sum_{s'} \Trans(s'\vert x) \left(\uV_{k, 2}(s') - V_{2}^{\pi_{k}}(s')\right) \biggr).  \label{eqn:regret-decomp-iter}
    \end{align}
    Let $x' = (s', a')$. By definition, the visit probability $w_{k, h}(x)$ has the property that 
    \begin{align*}
        w_{k, h + 1}(x') = \sum_{x\in \gX} w_{k, h}(x) \Trans(s'\vert x) \sP(\pi_{k}(s', h) = a'),
    \end{align*}
    where $\sP(\cdot)$ denotes an appropriate probability measure. Hence, 
    \begin{align*}
        & \sum_{x\in \gX} w_{k,1}(x) \sum_{s'} \Trans(s'\vert x) \left(\uV_{k, 2}(s') - V_{2}^{\pi_{k}}(s') \right) \\
        = & \sum_{x\in \gX} w_{k, 1}(x) \sum_{x' \in \gX} P(s'\vert x) \sP(\pi_{k}(s', 1) = a') \left(Q_{k, 2}(x') - Q_{2}^{\pi_{k}}(x') \right) \\
        = & \sum_{x' \in \gX} w_{k, 2}(x') \left(Q_{k, 2}(x') - Q_{2}^{\pi_{k}}(x') \right).
    \end{align*}
    Substituting the above into~\eqref{eqn:regret-decomp-iter} yields 
    \begin{align*}
        \Regret(K) 
        & \le \sum_{k=1}^{K}\sum_{x\in \gX} w_{k,1}(x)  (Q_{k, 1} (x) - Q_1^{\pi_k}(x)) \\
        & \le \sum_{k=1}^{K} \biggl( \sum_{x\in \gX} w_{k, 1}(x) \min\Bigl\{ \Bigl( \eR_{k}(x) - R(x) + \bR_{k}(x) + \innerp{\eTrans_{k}(x) - \Trans(x)}{\uV_{k, 2}} \\
        &\quad + \bTrans_{k, 1}(x) \Bigr), H \Bigr\} + \sum_{x \in \gX} w_{k, 2}(x) \left(Q_{k, 2}(x) - Q_{2}^{\pi_{k}}(x) \right) \biggr).
    \end{align*}
    Inductively, we have 
    \begin{align*}
        \Regret(K)
        \le \sum_{k=1}^{K} \sum_{x\in \gX} w_{k, h}(x) \min\Bigl\{ \Bigl( \eR_{k}(x) - R(x) + \bR_{k}(x) + \innerp{\eTrans_{k}(x) - \Trans(x)}{\uV_{k, h+1}} + \bTrans_{k, h}(x) \Bigr), H \Bigr\}.
    \end{align*}
    Outside the failure event $\gF$ (specifically, $\gF_3$ for F-UCBVI), 
    \begin{align*}
        \Regret(K)
        & \le \sum_{k = 1}^{K} \sum_{h = 1}^{H} \sum_{x\in \gX} w_{k, h}(x) \min\left\{ \left(\innerp{\eTrans_{k}(x) - \Trans(x)}{\uV_{k, h+1}} + \bTrans_{k,h}(x) + 2 \bR_{k}(x) \right), H \right\} \\
        & = \sum_{k = 1}^{K} \sum_{h = 1}^{H} \sum_{x\in \gX} w_{k, h}(x) \min\Bigl\{ \Bigl(
            \innerp{\eTrans_{k}(x) - \Trans(x)}{\oV{h+1}} + \bTrans_{k,h}(x) \\
            &\quad + \innerp{\eTrans_{k}(x) - \Trans(x)}{\uV_{k, h+1} - \oV{h+1}} + 2 \bR_{k}(x)
        \Bigr), H \Bigr\}, \\
        & \le \sum_{k = 1}^{K} \sum_{h = 1}^{H} \sum_{x\in L_{k}} w_{k, h}(x) \Bigl(
            \innerp{\eTrans_{k}(x) - \Trans(x)}{\oV{h+1}} + \bTrans_{k, h}(x) + \innerp{\eTrans_{k}(x) - \Trans(x)}{\uV_{k, h+1} - \oV{h+1}}
        \Bigr) \\ 
        &\quad + \sum_{k = 1}^{K} \sum_{h = 1}^{H} \sum_{x\in \Lambda_{k}} w_{k, h}(x) 2\bR_{k}(x) + \sum_{k = 1}^{K} \sum_{h = 1}^{H} \sum_{x\notin L_{k}} w_{k, h}(x) H + \sum_{k = 1}^{K} \sum_{h = 1}^{H} \sum_{x\notin \Lambda_{k}} w_{k, h}(x) H \\
        & \le \sum_{k = 1}^{K} \sum_{h = 1}^{H} \sum_{x\in L_{k}} w_{k, h}(x) \Bigl(
            \innerp{\eTrans_{k}(x) - \Trans(x)}{\oV{h+1}} + \bTrans_{k, h}(x) + \innerp{\eTrans_{k}(x) - \Trans(x)}{\uV_{k, h+1} - \oV{h+1}}
        \Bigr) \\ 
        &+ \sum_{k = 1}^{K} \sum_{h = 1}^{H} \sum_{x\in \Lambda_{k}} w_{k, h}(x) 2\bR_{k}(x) + 8 H^2 \sum_{i=1}^{m} X[\idxTrans{i}] L + 8 H^2 \sum_{i=1}^{l}  X[\idxR{i}] L.
    \end{align*}
\end{proof}

\subsection{Bounds on the individual terms in regret}

To prove the following bounds on the individual terms in regret (Lemma~\ref{lem:regret-decomp}), we heavily use the lemmas derived from the notion of the good sets (Section~\ref{subsec:good-sets}).

\begin{lemma}[Cumulative transition estimation error, Hoeffding-style]  \label{lem:cul-est-err-h}
    For F-UCBVI, outside the failure event $\gF$, the sum over time of the transition estimation error satisfies that 
    \begin{align*}
        \sum_{k = 1}^{K} \sum_{h = 1}^{H} \sum_{x\in L_{k}} w_{k, h}(x) \innerpc{\eTrans_{k}(x) - \Trans(x)}{\oV{h+1}}
        \le \sum_{i=1}^{m} H\sqrt{2 X[\idxTrans{i}] T L} + 4m^2  H\max_{i} S_i \max_{i} X_i L^2.
    \end{align*}
\end{lemma}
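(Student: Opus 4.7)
The plan is to apply the pointwise bound from Lemma~\ref{lem:est-err-h} inside the cumulative sum, then control the resulting two pieces using the visit-ratio bounds established in Section~\ref{subsec:good-sets}. Concretely, outside the failure event $\gF$, Lemma~\ref{lem:est-err-h} guarantees, for every $x\in \gX$,
\[
\innerpc{\eTrans_{k}(x) - \Trans(x)}{\oV{h+1}}
\le \sum_{i=1}^{m} H\sqrt{\tfrac{L}{2\cntTrans{i, k}(x)}}
  + \sum_{i=1}^{m}\sum_{j=i+1}^{m} 2HL\sqrt{\tfrac{S_iS_j}{\cntTrans{i, k}(x)\cntTrans{j, k}(x)}},
\]
so multiplying by $w_{k,h}(x)$ and summing over $k\in[K]$, $h\in[H]$, $x\in L_k$ reduces the claim to bounding a component-wise cumulative term plus a cross-component cumulative term.

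For the component-wise term, I would apply the Cauchy--Schwarz inequality via the factorization $w_{k,h}(x)/\sqrt{\cntTrans{i,k}(x)} = \sqrt{w_{k,h}(x)}\cdot\sqrt{w_{k,h}(x)/\cntTrans{i,k}(x)}$. The first factor yields $\sqrt{\sum_{k,h,x} w_{k,h}(x)} \le \sqrt{T}$ using that $\sum_{x\in\gX} w_{k,h}(x)=1$ for each $(k,h)$, while the second factor is controlled by Lemma~\ref{lem:sum-vr}, which gives $\sum_{k,h,x\in L_k} w_{k,h}(x)/\cntTrans{i,k}(x)\le 4X[\idxTrans{i}]L$. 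Reintroducing the $H\sqrt{L/2}$ prefactor and summing over $i\in[m]$ yields (up to constants) the leading term $\sum_{i=1}^{m} H\sqrt{2X[\idxTrans{i}]TL}$.

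For the cross-component term, the mixed visit-ratio bound (Lemma~\ref{lem:sum-mixed-vr}) directly provides $\sum_{k,h,x\in L_k} w_{k,h}(x)/\sqrt{\cntTrans{i,k}(x)\cntTrans{j,k}(x)}\le 4\sqrt{X[\idxTrans{i}]X[\idxTrans{j}]}\,L$. Multiplying by $2HL\sqrt{S_iS_j}$, crudely bounding each summand via $\sqrt{S_iS_j}\sqrt{X[\idxTrans{i}]X[\idxTrans{j}]}\le \max_iS_i\cdot\max_iX[\idxTrans{i}]$, and summing over the $\binom{m}{2}\le m^2/2$ index pairs produces the advertised lower-order remainder $4m^2H\max_iS_i\max_iX[\idxTrans{i}]L^2$.

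The proof is essentially a bookkeeping exercise once Lemmas~\ref{lem:est-err-h},~\ref{lem:sum-vr}, and~\ref{lem:sum-mixed-vr} are in hand; no new probabilistic arguments are needed beyond those already made to define $\gF$. The one mild subtlety is tracking constants in the Cauchy--Schwarz step carefully so that the $\sqrt{L}$ stays inside the square root in the leading term rather than emerging as an extra multiplicative factor, which may require invoking the concentration of $\cntTrans{i,k}$ around the expected visitations (event $\gF_4$) to switch from expected to empirical counts before applying a telescoping $\sum 1/\sqrt{n}\le 2\sqrt{N}$ bound. With that care, the two pieces sum to the stated bound.
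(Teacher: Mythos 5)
Your proof is correct and follows essentially the same route as the paper's: apply Lemma~\ref{lem:est-err-h} pointwise, bound the component-wise piece by Cauchy--Schwarz together with Lemma~\ref{lem:sum-vr}, and bound the cross-component piece by Lemma~\ref{lem:sum-mixed-vr} with the crude $\sqrt{S_iS_j}\sqrt{X[\idxTrans{i}]X[\idxTrans{j}]}\le \max_i S_i\max_i X[\idxTrans{i}]$ estimate. The closing worry about needing a telescoping $\sum 1/\sqrt{n}$ bound is unnecessary: the conversion from counts to visit probabilities (via event $\gF_4$) is already absorbed into Lemma~\ref{lem:sum-vr} through Lemma~\ref{lem:vn-vp}, so the Cauchy--Schwarz step closes the argument as you first described.
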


\begin{proof}
    Outside the failure event $\gF$, by lemma~\ref{lem:est-err-h}, 
    \begin{align}
        &\sum_{k = 1}^{K} \sum_{h = 1}^{H} \sum_{x\in L_{k}} w_{k, h}(x) \innerp{\eTrans_{k}(x) - \Trans(x)}{\oV{h+1}} \nonumber \\
        \le & \sum_{k = 1}^{K} \sum_{h = 1}^{H} \sum_{x\in L_{k}} w_{k, h}(x)
        \left(  
            \sum_{i=1}^{m} H\sqrt{\frac{L}{2\cntTrans{i, k}(x)}} + \sum_{i=1}^{m} \sum_{j=i+1}^{m} 2HL \sqrt{\frac{S_i S_j}{\cntTrans{i, k}(x) \cntTrans{j, k}(x)}}
        \right),  \label{eqm:cul-est-err-h-decomp}
    \end{align}
    For the first term in~\eqref{eqm:cul-est-err-h-decomp}, by the Cauchy-Schwarz inequality and Lemma~\ref{lem:sum-vr}, 
    \begin{align*}
        \sum_{k = 1}^{K} \sum_{h = 1}^{H} \sum_{x\in L_{k}} w_{k, h}(x) \left(\sum_{i=1}^{m} H \sqrt{\frac{L}{2\cntTrans{i, k}(x)}} \right)
        & \le H \sqrt{\frac{L}{2}} \sum_{i=1}^{m} \sqrt{\sum_{k = 1}^{K} \sum_{h = 1}^{H} \sum_{x\in L_{k}} w_{k, h}(x)} 
        \cdot \sqrt{\sum_{k = 1}^{K} \sum_{h = 1}^{H} \sum_{x\in L_{k}} \frac{w_{k, h}(x)}{\cntTrans{i, k}(x)}} \\
        & \le \sum_{i=1}^{m} H\sqrt{2X[\idxTrans{i}] T L}.
    \end{align*}
    For the second term in~\eqref{eqm:cul-est-err-h-decomp}, by Lemma~\ref{lem:sum-mixed-vr}, 
    \begin{align*}
        \sum_{k = 1}^{K} \sum_{h = 1}^{H} \sum_{x\in L_{k}} w_{k, h}(x)
        \left(  
            \sum_{i=1}^{m} \sum_{j=i+1}^{m} 2HL \sqrt{\frac{S_i S_j}{\cntTrans{i, k}(x) \cntTrans{j, k}(x)}}
        \right)
        & \le m^2 HL \max_{i} S_i \cdot 4 \sqrt{X[\idxTrans{i}] X[\idxTrans{j}]} L \\
        & \le 4m^2 H\max_{i} S_i \max_{i} X_i L^2.
    \end{align*}
    Substituting the above two bounds into~\eqref{eqm:cul-est-err-h-decomp}completes the proof.
\end{proof}

\begin{lemma}[Cumulative transition optimism, Hoeffding-style]  \label{lem:cul-bonus-h}
    For F-UCBVI, outside the failure event $\gF$, the sum over time of the transition optimism satisfies that 
    \begin{align*}
        \sum_{k = 1}^{K} \sum_{h = 1}^{H} \sum_{x\in L_{k}} w_{k, h}(x) \bTrans_{k}(x)
        &\le \sum_{i=1}^{m} H\sqrt{2 X[\idxTrans{i}] T L} + 4m^2  H\max_{i} S_i \max_{i} X_i L^2.
    \end{align*}
\end{lemma}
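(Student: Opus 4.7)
The plan is to observe that the transition bonus in~\eqref{eqn:trans-bonus-f-ucbvi} has the \emph{exact same functional form} as the upper bound on the transition estimation error derived in Lemma~\ref{lem:est-err-h}. Consequently, this proof is essentially a replay of the bookkeeping done in Lemma~\ref{lem:cul-est-err-h}, with $\bTrans_{k}(x)$ playing the role of the error bound itself (so no additional concentration argument is needed here). The bonus splits into two pieces, and each piece is handled by one of the previously-developed summation lemmas based on the notion of good sets.

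First I would substitute the definition of $\bTrans_{k}(x)$ into the left-hand side, yielding
\begin{align*}
    \sum_{k = 1}^{K} \sum_{h = 1}^{H} \sum_{x\in L_{k}} w_{k, h}(x) \bTrans_{k}(x)
    = &\sum_{k, h} \sum_{x\in L_{k}} w_{k, h}(x) \sum_{i=1}^{m} H\sqrt{\tfrac{L}{2\cntTrans{i, k}(x)}} \\
    &+ \sum_{k, h} \sum_{x\in L_{k}} w_{k, h}(x) \sum_{i=1}^{m} \sum_{j=i+1}^{m} 2HL \sqrt{\tfrac{S_i S_j}{\cntTrans{i, k}(x)\cntTrans{j, k}(x)}}.
\end{align*}
For the first (component-wise) sum, I would swap the $i$-sum to the outside and apply the Cauchy--Schwarz inequality, using $\sum_{k,h}\sum_{x\in L_k}w_{k,h}(x)\le T$ together with the bound $\sum_{k,h}\sum_{x\in L_k}w_{k,h}(x)/\cntTrans{i,k}(x)\le 4X[\idxTrans{i}]L$ from Lemma~\ref{lem:sum-vr}, which yields $\sum_{i=1}^{m}H\sqrt{2X[\idxTrans{i}]TL}$. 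For the second (cross-component) sum I would pull out $2HL\sqrt{S_iS_j}\le 2HL\max_i S_i$ and apply Lemma~\ref{lem:sum-mixed-vr} directly to bound $\sum_{k,h}\sum_{x\in L_k} w_{k,h}(x)/\sqrt{\cntTrans{i,k}(x)\cntTrans{j,k}(x)}$ by $4\sqrt{X[\idxTrans{i}]X[\idxTrans{j}]}L\le 4\max_i X[\idxTrans{i}]L$, summed over at most $m^2/2$ pairs $(i,j)$, contributing the lower-order term $4m^2 H\max_i S_i\max_i X[\idxTrans{i}]L^2$.

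Combining the two contributions gives the claimed bound. There is essentially no real obstacle here: all heavy lifting has already been done in Lemmas~\ref{lem:sum-vr} and~\ref{lem:sum-mixed-vr}. The only small subtlety is to be careful that the Cauchy--Schwarz step splits the $i$-index \emph{outside} the square root (so that the dependence becomes $\sum_i\sqrt{X[\idxTrans{i}]T}$ rather than $\sqrt{\sum_i X[\idxTrans{i}]T}$), which matches both the intended regret bound and the parallel computation in Lemma~\ref{lem:cul-est-err-h}.
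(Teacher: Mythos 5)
Your proposal is correct and matches the paper's proof exactly: the paper likewise observes that $\bTrans_{k}(x)$ coincides term-for-term with the estimation-error bound of Lemma~\ref{lem:est-err-h} and then simply replays the computation of Lemma~\ref{lem:cul-est-err-h}, i.e., Cauchy--Schwarz with Lemma~\ref{lem:sum-vr} for the component-wise term and Lemma~\ref{lem:sum-mixed-vr} for the cross-component term.
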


\begin{proof}
    The proof is exactly the same as that of Lemma~\ref{lem:cul-est-err-h} by noting 
    \begin{align*}
        \bTrans_{k}(x) = \sum_{i=1}^{m} H\sqrt{\frac{L}{2\cntTrans{i, k}(x)}} + \sum_{i=1}^{m} \sum_{j=i+1}^{m} 2HL \sqrt{\frac{S_i S_j}{\cntTrans{i, k}(x) \cntTrans{j, k}(x)}}.
    \end{align*}
\end{proof}

\begin{lemma}[Cumulative reward estimation error and optimism, Hoeffding-style]  \label{lem:cul-bonus-r-h}
    For F-UCBVI, outside the failure event $\gF$, the sum over time of the reward estimation error and optimism satisfies that 
    \begin{align*}
        \sum_{k = 1}^{K} \sum_{h = 1}^{H} \sum_{x\in \Lambda_{k}} w_{k, h}(x) 2\bR_{k}(x) \le \sum_{i=1}^{l} 2\sqrt{2X[\idxR{i}] T} L
    \end{align*}
\end{lemma}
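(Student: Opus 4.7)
The plan is to follow essentially the same Cauchy--Schwarz template used in the proof of Lemma~\ref{lem:cul-est-err-h}, but applied to the reward bonus $\bR_{k}(x) = \sum_{i=1}^{l}\sqrt{L/(2\cntR{i,k}(x))}$ and the reward good set $\Lambda_{k}$ rather than $L_{k}$. First I would substitute the definition of $\bR_k$ and interchange the finite sums, reducing the quantity to be bounded to $\sqrt{2L}\sum_{i=1}^{l}\sum_{k,h}\sum_{x\in\Lambda_k} w_{k,h}(x)/\sqrt{\cntR{i,k}(x)}$, so that the problem decouples across the $l$ reward components.

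For each fixed $i\in[l]$ I would then apply the Cauchy--Schwarz inequality in the form
\begin{align*}
\sum_{k=1}^{K}\sum_{h=1}^{H}\sum_{x\in\Lambda_k}\frac{w_{k,h}(x)}{\sqrt{\cntR{i,k}(x)}}
\le \sqrt{\sum_{k,h}\sum_{x\in\Lambda_k} w_{k,h}(x)}\cdot\sqrt{\sum_{k,h}\sum_{x\in\Lambda_k}\frac{w_{k,h}(x)}{\cntR{i,k}(x)}}.
\end{align*}
The first factor is bounded by $\sqrt{T}$, since $\sum_{x\in\gX} w_{k,h}(x)=1$ for every $(k,h)$ and $KH=T$. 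The second factor is exactly the quantity controlled by the reward half of Lemma~\ref{lem:sum-vr}, which outside $\gF$ gives the bound $\sqrt{4X[\idxR{i}]L}=2\sqrt{X[\idxR{i}]L}$.

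Multiplying out the constants yields $\sqrt{2L}\cdot\sqrt{T}\cdot 2\sqrt{X[\idxR{i}]L}=2L\sqrt{2X[\idxR{i}]T}$ per index $i$, and summing over $i\in[l]$ produces the claimed bound $\sum_{i=1}^{l}2\sqrt{2X[\idxR{i}]T}\,L$. I do not anticipate any real obstacle here: the restriction to the good set $\Lambda_k$ is exactly what enables the invocation of Lemma~\ref{lem:sum-vr} (which in turn relies on the failure event $\gF_5$ to relate $\cntR{i,k}$ to the cumulative visit probability), and the contribution of $x\notin\Lambda_k$ has already been paid for by the lower-order term in the regret decomposition (Lemma~\ref{lem:regret-decomp}). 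The only minor bookkeeping step is to confirm that the Cauchy--Schwarz factor $\sqrt{\sum_{x\in\Lambda_k} w_{k,h}(x)}$ is legitimately upper-bounded by extending the sum to all $x\in\gX$, which is immediate since $w_{k,h}\ge 0$.
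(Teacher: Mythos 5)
Your proof is correct and follows essentially the same route as the paper: substitute the definition of $\bR_{k}$, apply Cauchy--Schwarz per reward component, bound the total visit probability by $T$, and invoke the reward half of Lemma~\ref{lem:sum-vr} for the visit-ratio sum. The only cosmetic difference is that you absorb the factor of $2$ at the outset while the paper bounds $\sum w_{k,h}\bR_{k}$ first and doubles at the end.
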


\begin{proof}
    Outside the failure event $\gF$, by the Cauchy-Schwarz inequality, 
    \begin{align*}
        \sum_{k = 1}^{K} \sum_{h = 1}^{H} \sum_{x\in \Lambda_{k}} w_{k, h}(x) \bR_{k}(x)
        &= \sum_{k = 1}^{K} \sum_{h = 1}^{H} \sum_{x\in \Lambda_{k}} w_{k, h}(x) \sum_{i=1}^{l} \sqrt{\frac{L}{2\cntR{i, k}(x)}} \\ 
        &\le \sqrt{\frac{L}{2}} \sum_{i=1}^{l} 
        \sqrt{\sum_{k=1}^{K}\sum_{h=1}^{H} \sum_{x\in \Lambda_k} \frac{w_{k, h}(x)}{\cntR{i, k}(x)}} 
        \sqrt{\sum_{k=1}^{K}\sum_{h=1}^{H} \sum_{x\in \Lambda_k} w_{k, h}(x)}  \\
        &\le \sum_{i=1}^{l}\sqrt{2X[\idxR{i}]TL},
    \end{align*}
    where the last inequality is due to Lemma~\ref{lem:sum-vr}.
\end{proof}

\begin{lemma}[Cumulative correction term, Hoeffding-style]  \label{lem:cul-corr-h}
    For F-UCBVI, outside the failure event $\gF$, the sum over time of the correction term satisfies that 
    \begin{align*}
        &\sum_{k = 1}^{K} \sum_{h = 1}^{H} \sum_{x\in L_{k}} w_{k, h}(x) \innerp{\eTrans_{k}(x) - \Trans(x)}{\uV_{k, h+1} - \oV{h+1}} \\
        \le & 45 m^3 H^2 (\max_{i} S_i)^{1.5} \max_{i} X[\idxTrans{i}] L^{2.5} + 14 m l H^{1.5} (\max_{i} S_i)^{0.5} (\max_{i} X[\idxTrans{i}])^{0.5} (\max_{i} X[\idxR{i}])^{0.5} L^2.
    \end{align*}
\end{lemma}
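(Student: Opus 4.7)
The plan is to mirror the inverse telescoping decomposition used in Lemma~\ref{lem:est-err-h}, but handle the resulting summands using the Bernstein-style failure event $\gF_6$ rather than the Hoeffding event $\gF_1$, so as to convert the correction into a variance quantity that is controlled by the cumulative squared confidence radius bound of Lemma~\ref{lem:cul-conf-r-h}. The reason Hoeffding does not suffice here is the main conceptual obstacle: $\uV_{k,h+1}-\oV{h+1}$ is history-dependent, so the bounded-random-variable concentration used to define $\gF_1$ (which treats $\oV{h+1}$ as fixed) cannot be invoked directly on it.

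First, I would expand $\innerp{\eTrans_{k}(x)-\Trans(x)}{\uV_{k,h+1}-\oV{h+1}}$ exactly as in~\eqref{eqn:est-err-h-decomp-1}--\eqref{eqn:est-err-h-decomp-2}, producing a component-wise part $\sum_{i=1}^{m}\innerp{\eTrans_{i,k}(x)-\Trans_{i}(x)}{W_{i,k,h,x}}$ with $W_{i,k,h,x}:=\E_{\Trans_{1:i-1}(x)}\E_{\Trans_{i+1:m}(x)}[\uV_{k,h+1}-\oV{h+1}]\in\sR^{\gS_i}$, plus a cross-component residual of the same form as~\eqref{eqn:est-err-h-decomp-2} with $\uV_{k,h+1}-\oV{h+1}$ in place of $\oV{h+1}$. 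Since optimism (Lemma~\ref{lem:ucb-h}) gives $\uV_{k,h+1}-\oV{h+1}\in[0,H]^{\gS}$, the cross-component residual admits the same bound as in Lemma~\ref{lem:holder}, and summing it over time using Lemma~\ref{lem:sum-mixed-vr} yields a term of order $m^2 H\max_i S_i\max_i X[\idxTrans{i}]L^2$, which is absorbed into the stated bound.

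For each component-wise piece, I would use that $W_{i,k,h,x}\ge 0$ entrywise (optimism again) and invoke the entrywise Bernstein transition concentration afforded by $\gF_6$ in Definition~\ref{def:fail-evnt-h} to write
\begin{align*}
  |\innerp{\eTrans_{i,k}(x)-\Trans_{i}(x)}{W_{i,k,h,x}}|
  \le \sum_{s[i]\in\gS_i}\Bigl(\tfrac{2L}{3\cntTrans{i,k}(x)}+\sqrt{\tfrac{2\Trans_{i}(s[i]\vert x)L}{\cntTrans{i,k}(x)}}\Bigr)W_{i,k,h,x}(s[i]).
\end{align*}
A Cauchy--Schwarz on the Bernstein-variance term bounds it by $\sqrt{2S_i L/\cntTrans{i,k}(x)}\cdot\sqrt{\E_{\Trans_i(x)}[W_{i,k,h,x}^{2}]}$, and a single application of Jensen's inequality gives the key comparison
\begin{align*}
  \E_{\Trans_i(x)}[W_{i,k,h,x}^{2}] \le \E_{\Trans(x)}[(\uV_{k,h+1}-\oV{h+1})^{2}],
\end{align*}
which is precisely the quantity controlled by Lemma~\ref{lem:cul-conf-r-h}; the $\|W\|_\infty\le H$ bound covers the ``$L/\cntTrans{i,k}(x)$'' remainder.

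Finally I would sum over $(k,h,x\in L_k)$. The remainder piece contributes $O(S_i H X[\idxTrans{i}]L^2)$ via Lemma~\ref{lem:sum-vr}. For the Bernstein-variance piece, one more Cauchy--Schwarz splits the time sum as $\sqrt{\sum w_{k,h}(x)\cdot 2S_iL/\cntTrans{i,k}(x)}\cdot\sqrt{\sum w_{k,h}(x)\E_{\Trans_i(x)}[W_{i,k,h,x}^{2}]}$, whose first factor is $O(\sqrt{S_iX[\idxTrans{i}]}\,L)$ by Lemma~\ref{lem:sum-vr} and whose second factor is $\sqrt{G_0}$ by Lemma~\ref{lem:cul-conf-r-h}. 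Summing over $i\in[m]$ and plugging in $G_0$, the transition component of $\sqrt{G_0}$ contributes the $m^{3}H^{2}(\max_i S_i)^{1.5}\max_i X[\idxTrans{i}]L^{2.5}$ term, while the reward component of $G_0$ produces the mixed-scope $mlH^{1.5}(\max_i S_i)^{0.5}(\max_i X[\idxTrans{i}])^{0.5}(\max_i X[\idxR{i}])^{0.5}L^{2}$ term. The crux of the argument, and the step I expect to be the trickiest to execute cleanly, is the Jensen step combined with the choice of telescoping order: using the true transitions $\Trans_{-i}$ inside the conditional expectation defining $W$ (rather than a mix of $\Trans$ and $\eTrans$ as in Lemma~\ref{lem:est-err-h}) is what allows the straight inequality $\E_{\Trans_i}[W^{2}]\le\E_{\Trans}[(\uV-\oV)^{2}]$ and keeps the whole cumulative correction safely in the polylog regime.
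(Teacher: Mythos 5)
Your proposal matches the paper's proof essentially step for step: the same inverse-telescoping decomposition into a component-wise part and a cross-component residual, the residual handled by the Holder argument (Lemma~\ref{lem:holder}) and Lemma~\ref{lem:sum-mixed-vr}, and the component-wise part bounded via the entrywise Bernstein event $\gF_6$, Cauchy--Schwarz, the Jensen comparison $\E_{\Trans_i}[W^2]\le\E_{\Trans}[(\uV-\oV)^2]$ (which is built into the statement of Lemma~\ref{lem:cul-conf-r-h}), and the cumulative squared confidence radius $G_0$. No gaps; this is the paper's argument.
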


\begin{proof}
    Since $\uV_{k, h+1}$ is a random vector, we cannot apply scalar concentration as in bounding the transition estimation error (Lemma~\ref{lem:est-err-h}).
    However, some techniques there are useful here, including the inverse telescoping technique and the Holder's argument (Lemma~\ref{lem:holder}).

    Omitting the dependence of $\eTrans_{k}(x), \Trans(x), \eTrans_{i, k}(x), \Trans_{i}(x)$ on $x$, for any fixed $i$ and $s'[i] \in \gS_i$, by the inverse telescoping technique, 
    \begin{align}
        \innerp{\eTrans_{k} - \Trans}{\uV_{k, h+1} - \oV{h+1}}
        & = \innerp{\prod_{i=1}^{m} \eTrans_{i, k} - \prod_{i=1}^{m} \Trans_{i}}{\uV_{k, h+1} - \oV{h+1}} \nonumber \\
        & = \innerp{\sum_{i=1}^{m} (\eTrans_{i, k} - \Trans_{i}) \Trans_{1:i-1} \eTrans_{i+1:m, k}}{\uV_{k, h+1} - \oV{h+1}} \nonumber \\
        & = \sum_{i=1}^{m} \innerp{\eTrans_{i, k} - \Trans_{i}}{\E_{\Trans_{1:i-1}} \E_{\eTrans_{i+1:m, k}}[\uV_{k, h+1} - \oV{h+1}]} \nonumber \\
        & = \sum_{i=1}^{m} \innerp{\eTrans_{i, k} - \Trans_{i}}{\E_{\Trans_{1:i-1}} \E_{\Trans_{i+1:m}}[\uV_{k, h+1} - \oV{h+1}]} \label{eqn:correction-1} \\
        &\quad + \sum_{i=1}^{m} \innerp{\eTrans_{i, k} - \Trans_{i}}{\E_{\Trans_{1:i-1}} (\E_{\eTrans_{i+1:m, k}} - \E_{\Trans_{i+1:m}})[\uV_{k, h+1} - \oV{h+1}]}.  \label{eqn:correction-2}
    \end{align}  
    For~\eqref{eqn:correction-1}, outside the failure event $\gF$ (specifically, $\gF_6$), 
    \begin{align*}
        & \innerp{\eTrans_{i, k} - \Trans_{i}}{\E_{\Trans_{1:i-1}} \E_{\Trans_{i+1:m}}[\uV_{k, h+1} - \oV{h+1}]} \\
        \le & \sum_{s'[i] \in S_i} \left(
            \frac{2L}{3\cntTrans{i, k}(x)} 
            + \sqrt{\frac{2\Trans_{i}(s'[i]\vert x)L}{\cntTrans{i, k}(x)}}
        \right) \E_{\Trans_{1:i-1}} \E_{\Trans_{i+1:m}}[\uV_{k, h+1} - \oV{h+1}]\\
        \le & \frac{2H S_i L}{3 \cntTrans{i, k}(x)} + \sum_{s'[i] \in S_i} \sqrt{\frac{2\Trans_{i}(s'[i]\vert x)L}{\cntTrans{i, k}(x)}} \E_{\Trans_{-i}}[\uV_{k, h+1} - \oV{h+1}], 
    \end{align*}
    the sum over time of which are both bounded, respectively, by
    \begin{align*}
        \sum_{k = 1}^{K} \sum_{h = 1}^{H} \sum_{x\in L_{k}} w_{k, h}(x) \frac{2H S_i L}{3 \cntTrans{i, k}(x)}
        \le \frac{8}{3} HS_i X[\idxTrans{i}] L^2 \le 3HS_i X[\idxTrans{i}] L^2, 
    \end{align*}
    and  
    \begin{align*}
        & \sum_{k = 1}^{K} \sum_{h = 1}^{H} \sum_{x\in L_{k}} w_{k, h}(x) \sum_{s'[i] \in S_i} \sqrt{\frac{2\Trans_{i}(s'[i]\vert x)L}{\cntTrans{i, k}(x)}} \E_{\Trans_{-i}}[\uV_{k, h+1} - \oV{h+1}] \\
        \le & \sqrt{2S_i L} \sum_{k = 1}^{K} \sum_{h = 1}^{H} \sum_{x\in L_{k} } w_{k, h}(x) \sqrt{\frac{\E_{\Trans_{i}} (\E_{\Trans_{-i}}[\uV_{k, h+1} - \oV{h+1}])^2 }{\cntTrans{i, k}(x)}} \\
        \le & \sqrt{2S_i L} \sqrt{\sum_{k = 1}^{K} \sum_{h = 1}^{H} \sum_{x\in L_{k}} \frac{w_{k, h}(x)}{\cntTrans{i, k}(x)}} \cdot \sqrt{\sum_{k = 1}^{K} \sum_{h = 1}^{H} \sum_{x\in L_{k} } w_{k, h}(x) \E_{\Trans_{i}} (\E_{\Trans_{-i}}[\uV_{k, h+1} - \oV{h+1}])^2} \\
        \le & \sqrt{2S_i L} \cdot 2 \sqrt{X[\idxTrans{i}] L} \cdot \sqrt{G_0} \\
        \le & 41 m^2 H^2 (\max_{i} S_i)^{1.5} \max_{i} X[\idxTrans{i}] L^{2.5} + 14 l H^{1.5} (\max_{i} S_i)^{0.5} (\max_{i} X[\idxTrans{i}])^{0.5} (\max_{i} X[\idxR{i}])^{0.5} L^2,
    \end{align*}
    where the first and second inequalities are due to the Cauchy-Schwarz inequality, and the third inequality is due to Lemma~\ref{lem:sum-vr} and Lemma~\ref{lem:cul-conf-r-h}.
    With the same Holder's argument as in Lemma~\ref{lem:holder},~\eqref{eqn:correction-2} is upper bounded by 
    \begin{align*}
        \sum_{i=1}^{m} \innerp{\eTrans_{i, k} - \Trans_{i}}{\E_{\Trans_{1:i-1}} (\E_{\eTrans_{i+1:m, k}} - \E_{\Trans_{i+1:m}})[\uV_{k, h+1} - \oV{h+1}]}
        \le \sum_{i=1}^{m} \sum_{j=i+1}^{m} 2HL \sqrt{\frac{S_i S_j}{\cntTrans{i, k}(x) \cntTrans{j, k}(x)}},
    \end{align*}
    the sum over time of which is upper bounded by $4m^2 H\max_{i} S_i \max_{i} X_i L^2$ due to Lemma~\ref{lem:sum-mixed-vr}.
\end{proof}

\subsection{Regret bounds (proof of Theorem~\ref{thm:f-ucbvi})}

\begin{proof}
    Outside the failure event $\gF$, combining Lemmas~\ref{lem:regret-decomp},~\ref{lem:cul-est-err-h},~\ref{lem:cul-bonus-h},~\ref{lem:cul-bonus-r-h} and~\ref{lem:cul-corr-h}, we obtain 
    \begin{align*}
        \Regret(K) 
        & \le 2 \left(\sum_{i=1}^{m} H\sqrt{2 X[\idxTrans{i}] T L} + 4m^2  H\max_{i} S_i \max_{i} X_i L^2 \right) \\
        &\quad + 45 m^3 H^2 (\max_{i} S_i)^{1.5} \max_{i} X[\idxTrans{i}] L^{2.5} \\ 
        &\quad + 14 m l H^{1.5} (\max_{i} S_i)^{0.5} (\max_{i} X[\idxTrans{i}])^{0.5} (\max_{i} X[\idxR{i}])^{0.5} L^2 + 4 \sum_{i=1}^{l} \sqrt{X[\idxR{i}] T L} \\
        &\quad + 8 \sum_{i=1}^{m} X[\idxTrans{i}] H^2 L + 8 \sum_{i=1}^{l} X[\idxR{i}] H^2 L\\
        & \le 3\sum_{i=1}^{m} H\sqrt{X[\idxTrans{i}] T L} + 4\sum_{i=1}^{l} \sqrt{X[\idxR{i}] T L} + 53 m^3 H^2 (\max_{i} S_i)^{1.5} \max_{i} X[\idxTrans{i}] L^{2.5} \\
        &\quad + 22 m l H^2 (\max_{i} S_i)^{0.5} (\max_{i} X[\idxTrans{i}])^{0.5} \max_{i} X[\idxR{i}] L^2 \\
        &= \bigotilde\left( \sum_{i=1}^{m} \sqrt{H^2 X[\idxTrans{i}] T} +   \sum_{i=1}^{l} \sqrt{X[\idxR{i}] T}\right), 
    \end{align*}
    where in the last equality we assume that $T \ge \mathrm{poly}(m, l, \max_{i}S_{i}, \max_{i}X[\idxTrans{i}], H)$.

    To accommodate the case of known rewards, it suffices to remove the parts related to reward estimation and reward bonuses in both the algorithm and the analysis, which yields the regret bound $\bigotilde( \sum_{i=1}^{m} \sqrt{H^2 X[\idxTrans{i}] T} )$.
\end{proof}

\newpage
\section{Regret analysis of F-EULER}
\label{sec:appx-f-euler}

The basic structure of the regret analysis in this section is the same as that of F-UCBVI (Section~\ref{sec:appx-f-ucbvi}), e.g., the notion of the good sets, including the definitions and lemmas (specifically, Definition~\ref{def:good-sets} and Lemmas~\ref{lem:sum-out-of-good-set},~\ref{lem:vn-vp},~\ref{lem:sum-vr} and~\ref{lem:sum-mixed-vr}), carries through here.
The key difference is a more refined analysis using Bernstein-style concentrations (Lemmas~\ref{lem:concen-b} and~\ref{lem:concen-b-emp}).

\subsection{Failure event}

Recall that $L = \log (16mlSXT / \delta)$. We define the failure event for F-EULER as follows.
\begin{definition}[Failure events]  \label{def:fail-evnt-b}
    Define the events 
    \begin{align*}
        \gB_1 &:= \bigg\{ \exists (i\in [m], k \in [K], h\in [H], x\in \gX), \\
        &\qquad \qquad 
        \left| \innerp{\eTrans_{i, k}(x) - \Trans_{i}(x)}{\E_{\Trans_{-i}(x)} [\oV{h+1}]} \right| 
        > \sqrt{\frac{2 \Var_{\Trans_{i}(x)} \E_{\Trans_{-i}(x)} [\oV{h+1}] L}{\cntTrans{i, k}(x)}} 
        + \frac{2 H L}{3 \cntTrans{i, k}(x)}  \bigg\}, \\
        \gB_2 &:= \gF_2, \\
        \gB_3 &:= \left\{ \exists (i\in [l], k \in [K], x\in \gX), \quad \left| \eR_{i, k}(x) - R_i(x) \right| > \sqrt{\frac{2 \SV[\er_i(x)] L}{\cntR{i, k}(x)}} + \frac{14 L}{3 \cntR{i, k}(x)} \right\}, \\
        \gB_4 &:= \Biggl\{ \exists (i\in [m], k\in[K], h\in [H], x\in \gX, s[-i] \in S_{-i}), \\
        &\qquad \qquad \left| \innerp{\eTrans_{i, k}(x) - \Trans_{i}(x)}{\oV{h+1}(s[-i])} \right| >  H\sqrt{\frac{L}{2 \cntTrans{i, k}(x)}} \Biggr\}, \\
        \gB_5 &:= \biggl\{ \exists (i\in [m], k\in[K], h\in [H], x\in \gX), \\
        &\qquad \qquad \left| \sqrt{\Var_{\eTrans_{i, k}(x)} \E_{\Trans_{-i}(x)} [\oV{h}]} - \sqrt{\Var_{\Trans_{i}(x)} \E_{\Trans_{-i}(x)} [\oV{h}]} \right| 
        > 3H \sqrt{\frac{L}{\cntTrans{i, k}(x)}} \biggr\}, \\
        \gB_6 &:= \left\{ \exists (i\in [m], k\in[K], h\in[H], x\in \gX), \quad \left|\sqrt{\SV[\er_i(x)]} - \sqrt{\Var(r_i(x))}\right| > \sqrt{\frac{4L}{\cntR{i, k}(x)}} \right\}, \\
        \gB_7 &:= \gF_4, \quad \gB_8 := \gF_5, \quad \gB_9 := \gF_6,
    \end{align*}
    where in $\gB_3$ and $\gB_5$ we assume $\cntTrans{i, k} \ge 2$ (true for $x$ in the good set $L_{k}$), and in $\gB_6$ we assume $\cntR{i, k} \ge 2$ (true for $x$ in the good set $\Lambda_{k}$).
    Then the failure event for F-EULER is defined by $\gB := \bigcup_{i=1}^{9} \gB_{i}$.
\end{definition}

The following lemma shows that the failure event $\gB$ happens with low probability.

\begin{lemma}[Failure probability]  \label{lem:fail-prob-b}
    For any FMDP specified by~\eqref{eqn:fmdp}, during the running of F-EULER for $K$ episodes, the failure event $\gB$ happens with probability at most $\delta$.
\end{lemma}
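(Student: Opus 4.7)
The plan is to bound each of the nine component events $\gB_1, \ldots, \gB_9$ separately by an appropriate concentration inequality and then apply a union bound. Since $\gB_2 = \gF_2$, $\gB_7 = \gF_4$, $\gB_8 = \gF_5$, and $\gB_9 = \gF_6$ are identical to failure sub-events already handled in Lemma~\ref{lem:fail-prob-h}, I will simply quote those bounds (each is at most $\delta/8$ or $\delta/16$, which is more than enough with the extra factor of $2$ I will build into $L$). The remaining work is to bound $\gB_1$, $\gB_3$, $\gB_4$, $\gB_5$, $\gB_6$.

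For $\gB_1$ and $\gB_4$, note that $\oV{h+1}$ depends only on the underlying MDP and is therefore a deterministic vector, so $\innerp{\eTrans_{i,k}(x) - \Trans_i(x)}{\E_{\Trans_{-i}(x)}[\oV{h+1}]}$ is a sum of $\cntTrans{i,k}(x)$ centered i.i.d.\ bounded random variables with variance $\Var_{\Trans_i(x)}\E_{\Trans_{-i}(x)}[\oV{h+1}]$. An application of the empirical Bernstein-type inequality (or the standard Bernstein inequality, Lemma~\ref{lem:concen-b}) plus a peeling/union bound over the random stopping time $\cntTrans{i,k}(x)$ gives the tail bound in $\gB_1$. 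The same treatment with Hoeffding's inequality (Lemma~\ref{lem:concen-h}) applied to the scalar $\innerp{\eTrans_{i,k}(x)-\Trans_i(x)}{\oV{h+1}(s[-i])}$ gives $\gB_4$. For $\gB_3$ we use an empirical Bernstein inequality for the reward samples, since the bonus in the definition involves the sample variance $\SV[\er_i(x)]$ instead of the true variance.

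For the variance-concentration events $\gB_5$ and $\gB_6$, I will use the standard fact that if $\hat\sigma^2$ is the empirical variance of $n$ i.i.d.\ samples of a $[0,B]$-valued random variable with true standard deviation $\sigma$, then $|\hat\sigma - \sigma| \le C B\sqrt{\log(1/\delta)/n}$ with high probability; this is, e.g., a consequence of Bernstein's inequality on the squared deviations together with the sub-additivity $|\sqrt{a}-\sqrt{b}|\le \sqrt{|a-b|}$. Applying this to the $[0,H]$-valued quantity $\E_{\Trans_{-i}(x)}[\oV{h+1}]$ gives $\gB_5$ with constant $3H$, and applying it to the $[0,1]$-valued reward $r_i$ gives $\gB_6$.

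The final step is the union bound. Each of the nine events is taken over $i$ (at most $m$ or $l$), $k\in[K]$, $h\in[H]$, and $x\in\gX$ (and occasionally $s'\in\gS$ or $s[-i]\in\gS_{-i}$); by choosing $L = \log(16mlSXT/\delta)$ the log factor already absorbs $\log$ of a product like $m l S X K H$ up to constants, so each $\gB_i$ can be bounded by $\delta/9$ (or a fixed constant times $\delta$ as in Lemma~\ref{lem:fail-prob-h}), and summing gives $\Pr[\gB]\le\delta$. The main obstacle here is purely bookkeeping: one must verify that the pre-specified constant $L$ is large enough to simultaneously handle all nine events with the claimed numerical constants in the definition of $\gB$. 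No new idea is required beyond the concentration inequalities invoked; the analytical content is identical in spirit to Lemma~\ref{lem:fail-prob-h}, with Bernstein-style inequalities replacing Hoeffding ones where the bonus is Bernstein-style.
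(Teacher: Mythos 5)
Your proposal follows essentially the same route as the paper's proof: reuse the bounds from Lemma~\ref{lem:fail-prob-h} for $\gB_2, \gB_7, \gB_8, \gB_9$, apply Bernstein's inequality (Lemma~\ref{lem:concen-b}) for $\gB_1$, Hoeffding's inequality for $\gB_4$, the empirical Bernstein inequality (Lemma~\ref{lem:concen-b-emp}) for $\gB_3$, a Maurer--Pontil-type concentration of the sample standard deviation for $\gB_5$ and $\gB_6$, and finish with a union bound absorbed into $L$. The only point glossed over is the $\gB_5$ bookkeeping that the paper spells out---converting between the plug-in variance $\Var_{\eTrans_{i,k}(x)}$ and the unbiased sample variance (the $n/(n-1)$ factor) before invoking the standard-deviation concentration, which is where the constant $3H$ actually comes from---but this is a detail, not a gap.
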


\begin{proof}
    By Bernstein's inequality (Lemma~\ref{lem:concen-b}) and the union bound, $\gB_1$ happens with probability at most $\delta / 8$.
    By empirical Bernstein's inequality (Lemma~\ref{lem:concen-b-emp})~\citep{maurer2009empirical} and the union bound, $\gB_3$ happens with probability at most $\delta / 8$, where $1 / (\cntTrans{i, k}(x) - 1)$ is replaced by $2 / \cntTrans{i, k}(x)$ for $\cntTrans{i, k}(x) \ge 2$.
    By Hoeffding's inequality (Lemma~\ref{lem:concen-h}) and the union bound, $\gB_4$ happens with probability at most $\delta / 8$.
    By Theorem 10 in~\citep{maurer2009empirical} and the union bound, $\gB_6$ happens with probability at most $\delta / 8$, where $1 / (\cntR{i, k}(x) - 1)$ is replaced by $2 / \cntR{i, k}(x)$ for $\cntR{i, k}(x) \ge 2$.
    By the proof of Lemma~\ref{lem:fail-prob-h}, $\gB_2$ and $\gB_9$ happen with probability at most $\delta / 8$, respectively; 
    $\gB_7$ and $\gB_8$ happen with probability at most $\delta / 16$, respectively. The argument on $\gB_5$ is more involved, which is stated as follows.

    For $k\in [K], h\in [H], x\in \gX$, let $\SV_{\eTrans_{i, k}(x)} \E_{\Trans_{-i}(x)} [\oV{h}]$ denote the sample variance of $\E_{\Trans_{-i}(x)} [\oV{h}]$, whose relationship with the empirical variance is given by 
    \begin{align*}
        \SV_{\eTrans_{i, k}(x)} \E_{\Trans_{-i}(x)} [\oV{h}] = \frac{\cntTrans{i, k}(x)}{\cntTrans{i, k}(x) - 1} \Var_{\eTrans_{i, k}(x)} \E_{\Trans_{-i}(x)} [\oV{h}].
    \end{align*}
    Hence, for $\cntTrans{i, k}(x) \ge 2$, 
    \begin{align*}
        \SV_{\eTrans_{i, k}(x)} \E_{\Trans_{-i}(x)} [\oV{h}] - \Var_{\eTrans_{i, k}(x)} \E_{\Trans_{-i}(x)} [\oV{h}]
        = \frac{1}{\cntTrans{i, k}(x) - 1} \Var_{\eTrans_{i, k}(x)} \E_{\Trans_{-i}(x)} [\oV{h}] 
        \le \frac{2H^2}{\cntTrans{i, k}(x)}.
    \end{align*}
    By Theorem 10 in~\citep{maurer2009empirical} and the union bound, with probability at least $1 - \delta / 8$, for all $i\in [m], k\in[K], h\in [H], x\in \gX$ and $\cntTrans{i, k}(x) \ge 2$, 
    \begin{align*}
        \left| \sqrt{\SV_{\eTrans_{i, k}(x)} \E_{\Trans_{-i}(x)} [\oV{h}]}
        - \sqrt{\Var_{\Trans_{i}(x)} \E_{\Trans_{-i}(x)} [\oV{h}]} \right| 
        \le H \sqrt{\frac{2L}{\cntTrans{i, k}(x) - 1}}
        \le H \sqrt{\frac{4L}{\cntTrans{i, k}(x)}}, 
    \end{align*}
    which yields that 
    \begin{align*}
        & \left| \sqrt{\Var_{\eTrans_{i, k}(x)} \E_{\Trans_{-i}(x)} [\oV{h}]} 
        - \sqrt{\Var_{\Trans_{i}(x)} \E_{\Trans_{-i}(x)} [\oV{h}]} \right| \\
        \le & 
        \Bigl| \sqrt{\Var_{\eTrans_{i, k}(x)} \E_{\Trans_{-i}(x)} [\oV{h}]} 
        - \sqrt{\SV_{\eTrans_{i, k}(x)} \E_{\Trans_{-i}(x)} [\oV{h}]} 
        + \sqrt{\SV_{\eTrans_{i, k}(x)} \E_{\Trans_{-i}(x)} [\oV{h}]}
        - \sqrt{\Var_{\Trans_{i}(x)} \E_{\Trans_{-i}(x)} [\oV{h}]} \Bigr| \\
        \le & \sqrt{\frac{2H^2}{\cntTrans{i, k}(x)}} + H \sqrt{\frac{4L}{\cntTrans{i, k}(x)}} 
        \le 3H \sqrt{\frac{L}{\cntTrans{i, k}(x)}}, 
    \end{align*}
    where in the second inequality we use $|\sqrt{a} - \sqrt{b}| \le \sqrt{|a - b|}$ for all $a, b \ge 0$ and in the third inequality we use $L = \log (16mlSXT / \delta) \ge 2$.
    Therefore, $\gB_5$ happens with probability at most $\delta / 8$.
    
    Finally, applying the union bound on $\gB_i$ for $i \in [9]$ yields that the failure event $\gF$ happens with probability at most $\delta$. 
\end{proof}

The deduction in the rest of this section and hence the regret bound hold outside the failure event $\gB$, with probability at least $1-\delta$.
From the above derivation, note that we can actually use a smaller 
\begin{align*}
    L_1 = \log (16ml\max\{ S \max_{i} X[\idxTrans{i}], \max_{i} X[\idxR{i}] \} T / \delta)
\end{align*}
to replace $L$ for F-EULER. We use $L$ for simplicity.

\subsection{Upper confidence bound}

The goal of this section is to show that $\uV_{k, h}$ is an entrywise UCB of $\oV{h}$ for all $k\in [K], h\in [H]$.
Analogously to the analysis of F-UCBVI, this optimism is achieved by setting the transition (reward, respectively) bonus as the UCB of the transition (reward, respectively) estimation error.
While the reward estimation error is direct to bound (failure event $\gB_3$), to bound the transition estimation error, we need some properties of the relevant functions.

Recall that in~\eqref{eqn:gi}, for $i\in [m], \Trans_{i} \in \Delta(\gS_i), \Trans = \prod_{i=1}^{m} \Trans_i \in \Delta(\gS)$ and $V\in \sR^{\gS}$, we define $g_{i}(\Trans, V) := 2\sqrt{L} \sqrt{\Var_{\Trans_{i}} \E_{\Trans_{-i}} [V]}$.
Now for $i\in [m], k\in [K], \Trans \in \Delta(\gS)$, $V\in \sR^{\gS}$ and $x\in \gX$, we define 
\begin{align*}
    \phi_{i, k}(\Trans, V, x) 
    := \frac{g_{i}(\Trans, V)}{\sqrt{\cntTrans{i, k}(x)}} + \frac{2 H L}{3 \cntTrans{i, k}(x)}.
\end{align*}
The following lemma establishes some ``Lipschitzness'' properties of $g_i$, which are then used to prove a similar property of $\phi_{i, k}$.
\begin{lemma}[Properties of $g_i$]  \label{lem:ppt-g}
    Outside the failure event $\gB$, for any index $i\in[m]$, episode $k\in[K]$, step $h\in[H]$, state-action pair $x\in \gX$ and vectors $V_1, V_2\in \mathbb{R}^S$,
    \begin{align}
        \left| g_i(\Trans(x), V_1) - g_i(\Trans(x), V_2) \right|
        &\le \sqrt{2L} \| V_1 - V_2 \|_{2, \Trans(x)},  \label{eqn:gi-prop-1} \\
        \left| g_i(\eTrans_{k}(x), \oV{h}) - g_i(\Trans(x), \oV{h}) \right|
        &\le 3\sqrt{2}HL \sum_{j=1}^{m} \frac{1}{\sqrt{\cntTrans{j, k}(x)}}.  \label{eqn:gi-prop-2}
    \end{align}
\end{lemma}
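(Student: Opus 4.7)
Let me write $\sigma_{\Trans_i}[Y] := \sqrt{\Var_{\Trans_i}[Y]}$. The main point is that $\sigma_{\Trans_i}[\cdot]$ is a seminorm on real-valued random variables on $\gS_i$ (it is the $L^2(\Trans_i)$ norm of the centered variable), so it satisfies the reverse triangle inequality.

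For inequality \eqref{eqn:gi-prop-1}, I would apply the reverse triangle inequality directly:
\begin{align*}
    \bigl|\sigma_{\Trans_i(x)}\E_{\Trans_{-i}(x)}[V_1] - \sigma_{\Trans_i(x)}\E_{\Trans_{-i}(x)}[V_2]\bigr|
    \le \sigma_{\Trans_i(x)}\E_{\Trans_{-i}(x)}[V_1 - V_2].
\end{align*}
Then I would drop the centering ($\Var[Y] \le \E[Y^2]$) and use Jensen's inequality on the inner expectation followed by the tower property:
\begin{align*}
    \Var_{\Trans_i(x)}\E_{\Trans_{-i}(x)}[V_1 - V_2]
    \le \E_{\Trans_i(x)}\bigl(\E_{\Trans_{-i}(x)}[V_1 - V_2]\bigr)^2
    \le \E_{\Trans(x)}\bigl[(V_1-V_2)^2\bigr]
    = \|V_1-V_2\|_{2,\Trans(x)}^2.
\end{align*}
Multiplying by $2\sqrt{L}$ (and invoking $L\ge 2$ to absorb constants) gives \eqref{eqn:gi-prop-1}.

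For inequality \eqref{eqn:gi-prop-2}, I would insert the intermediate quantity $\sigma_{\eTrans_{i,k}(x)}\E_{\Trans_{-i}(x)}[\oV{h}]$ and split via the triangle inequality into (A) a ``mean-perturbation'' piece and (B) a ``variance-perturbation'' piece. Piece (B) is exactly what the failure event $\gB_5$ controls: outside $\gB$, it is at most $3H\sqrt{L/\cntTrans{i,k}(x)}$. For piece (A), the seminorm triangle inequality again gives
\begin{align*}
    \bigl|\sigma_{\eTrans_{i,k}(x)}\E_{\eTrans_{-i,k}(x)}[\oV{h}] - \sigma_{\eTrans_{i,k}(x)}\E_{\Trans_{-i}(x)}[\oV{h}]\bigr|
    \le \sigma_{\eTrans_{i,k}(x)}\bigl(\E_{\eTrans_{-i,k}(x)}[\oV{h}] - \E_{\Trans_{-i}(x)}[\oV{h}]\bigr),
\end{align*}
and then I bound the standard deviation by the sup-norm of its argument over $s[i]\in\gS_i$.

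The crux is to bound $\|\E_{\eTrans_{-i,k}(x)}[\oV{h}] - \E_{\Trans_{-i}(x)}[\oV{h}]\|_\infty$ without incurring $\sqrt{S_j}$ factors. I would apply inverse telescoping to the product $\prod_{j\neq i}\eTrans_{j,k}(x) - \prod_{j\neq i}\Trans_j(x)$, writing the difference of expectations as a sum over $j\neq i$ of inner products $\langle\eTrans_{j,k}(x)-\Trans_j(x),\,W_j(s[i],\cdot)\rangle$, where each $W_j(s[i],\cdot)\in\sR^{\gS_j}$ is a convex combination of slices $\oV{h}(s[-j])$ over the remaining components. Since each such slice satisfies the Hoeffding bound $|\langle\eTrans_{j,k}(x)-\Trans_j(x),\oV{h}(s[-j])\rangle|\le H\sqrt{L/(2\cntTrans{j,k}(x))}$ outside $\gB_4$, and the bound is preserved by convex combinations, each term contributes $H\sqrt{L/(2\cntTrans{j,k}(x))}$. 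Summing over $j\neq i$, multiplying by $2\sqrt{L}$, and combining with (B), I obtain an upper bound of the form $cHL\sum_{j=1}^m 1/\sqrt{\cntTrans{j,k}(x)}$ for an absolute constant $c\le 3\sqrt{2}$ (with $L\ge 2$ used to merge the $\sqrt{2}$ factors from $\gB_4$ with the constant from $\gB_5$).

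The main obstacle I anticipate is the inverse telescoping step for piece (A): one must verify that after stripping off the $j$th component, the remaining mixed expectation of $\oV{h}$ over components other than $i$ and $j$ is itself a convex combination of slices $\oV{h}(s[-j])$ (and hence takes values in $[0,H]$), so that the Hoeffding bound from $\gB_4$ applies uniformly. This is exactly the same scoping argument used earlier in Lemma~\ref{lem:holder}, so it goes through, but keeping track of the factored product structure is where the bookkeeping is subtle.
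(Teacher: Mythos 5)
Your proposal is correct and follows essentially the same route as the paper: the reverse triangle inequality for the standard-deviation seminorm plus Jensen and the tower property for \eqref{eqn:gi-prop-1}, and for \eqref{eqn:gi-prop-2} the same split into a variance-perturbation piece (controlled by $\gB_5$) and a mean-perturbation piece handled by inverse telescoping with the per-slice Hoeffding bounds from $\gB_4$. The only cosmetic discrepancy is in \eqref{eqn:gi-prop-1}, where multiplying by $2\sqrt{L}$ yields the constant $2\sqrt{L}$ rather than the stated $\sqrt{2L}$ (and invoking $L\ge 2$ does not repair this since both sides scale identically in $L$); the paper's own proof exhibits the same slippage, and it is immaterial downstream.
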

\begin{proof}
    For any finite set $\gS$, $P \in \Delta(\gS)$ and vectors $V_1, V_2 \in \sR^{\gS}$, we have that $\sqrt{\Var_{P}[V_1]} \leq \sqrt{\Var_{P}[V_2]} + \sqrt{\Var_{P} [V_1 - V_2]}$ (see~\citep[Section D.3]{zanette2019tighter} for a proof).
    Then for the first property~\eqref{eqn:gi-prop-1}, 
    \begin{align*}
        \left| g_i(\Trans(x), V_1) - g_i(\Trans(x), V_2) \right|
        &\le \sqrt{2L} \sqrt{\Var_{\Trans_{i}(x)} \E_{\Trans_{1:i-1}(x)} \E_{\Trans_{i+1:m}(x)} [V_1 - V_2]} \\
        &\le \sqrt{2L} \sqrt{\E_{\Trans_{i}(x)} \left(\E_{\Trans_{1:i-1}(x)} \E_{\Trans_{i+1:m}(x)} [V_1 - V_2] \right)^2} \\
        &\le \sqrt{2L} \sqrt{\E_{P}[(V_1 - V_2)^2]} 
        := \sqrt{2L} \| V_1 - V_2 \|_{2, \Trans(x)}.
    \end{align*}
    For the second property~\eqref{eqn:gi-prop-2}, 
    \begin{align}
        \left| g_i(\eTrans_{k}(x), \oV{h}) - g_i(\Trans(x), \oV{h}) \right| 
        & = \sqrt{2L} \left| 
            \sqrt{ \Var_{\eTrans_{i, k}(x)} \E_{\eTrans_{-i, k}(x)}[\oV{h}]} 
            - \sqrt{\Var_{\Trans_{i}(x)} \E_{\Trans_{-i}(x)}[\oV{h}]}
        \right| \nonumber \\
        & = \sqrt{2L} \biggl| 
            \sqrt{ \Var_{\eTrans_{i, k}(x)} \E_{\eTrans_{-i, k}(x)}[\oV{h}]} 
            - \sqrt{ \Var_{\eTrans_{i, k}(x)} \E_{\Trans_{-i}(x)}[\oV{h}]} \nonumber \\
        &\quad + \sqrt{ \Var_{\eTrans_{i, k}(x)} \E_{\Trans_{-i}(x)}[\oV{h}]}
            - \sqrt{\Var_{\Trans_{i}(x)} \E_{\Trans_{-i}(x)}[\oV{h}]}
        \biggr| \nonumber \\
        & \le \sqrt{2L} \left| 
            \sqrt{ \Var_{\eTrans_{i, k}(x)} \E_{\eTrans_{-i, k}(x)}[\oV{h}]} 
            - \sqrt{ \Var_{\eTrans_{i, k}(x)} \E_{\Trans_{-i}(x)}[\oV{h}]} 
        \right|  \label{eqn:gi-prop-2-decomp-1} \\
        &\quad + \sqrt{2L} \left| 
            \sqrt{ \Var_{\eTrans_{i, k}(x)} \E_{\Trans_{-i}(x)}[\oV{h}]}
            - \sqrt{\Var_{\Trans_{i}(x)} \E_{\Trans_{-i}(x)}[\oV{h}]}
        \right|.  \label{eqn:gi-prop-2-decomp-2}
    \end{align}
    To bound~\eqref{eqn:gi-prop-2-decomp-1}, omitting the dependence of $\eTrans_{i, k}(x), \Trans_{i}(x)$ on $x$, outside the failure event $\gB$ (specifically, $\gB_4$), by an inverse telescoping argument, 
    \begin{align*}
        &\left| 
            \sqrt{ \Var_{\eTrans_{i, k}} \E_{\eTrans_{-i, k}}[\oV{h}]} 
            - \sqrt{ \Var_{\eTrans_{i, k}} \E_{\Trans_{-i}}[\oV{h}]} 
        \right| \\
        \le & \sum_{j=1, j\neq i}^{m} \left| 
        \sqrt{ \Var_{\eTrans_{i, k}} \E_{\Trans_{1:j-1\backslash i, k}}\E_{\eTrans_{j:m\backslash i, k}} [\oV{h}]} 
        - \sqrt{ \Var_{\eTrans_{i, k}} \E_{\Trans_{1:j\backslash i, k}}\E_{\eTrans_{j+1:m\backslash i, k}}[\oV{h}]} 
        \right|\\
        \le & \sum_{j=1, j\neq i}^{m} \sqrt{ \Var_{\eTrans_{i, k}} \E_{\Trans_{1:j-1\backslash i}} \E_{\eTrans_{j+1:m\backslash i, k}} (\E_{\eTrans_{j, k}} - \E_{\Trans_{j}})[\oV{h}] }  \\
        \le & \sum_{j=1, j\neq i}^{m} \sqrt{ \E_{\eTrans_{i, k}} \left(\E_{\Trans_{1:j-1\backslash i}} \E_{\eTrans_{j+1:m\backslash i, k}} (\E_{\eTrans_{j, k}} - \E_{\Trans_{j}})[\oV{h}]\right)^2 } 
        \le \sum_{j=1, j\neq i}^{m} H \sqrt{\frac{L}{2\cntTrans{j, k}(x)}},
    \end{align*}
    where ``$\cdot \backslash i$'' denotes excluding $i$.
    For~\eqref{eqn:gi-prop-2-decomp-2}, outside the failure event $\gB$ (specifically, $\gB_5$),
    \begin{align*}
        \left| \sqrt{\Var_{\eTrans_{i, k}(x)} \E_{\Trans_{-i}(x)} [\oV{h}]} - \sqrt{\Var_{\Trans_{i}(x)} \E_{\Trans_{-i}(x)} [\oV{h}]} \right| 
        \le 3H \sqrt{\frac{L}{\cntTrans{i, k}(x)}},
    \end{align*}
    Combining the above bounds on~\eqref{eqn:gi-prop-2-decomp-1} and~\eqref{eqn:gi-prop-2-decomp-2} yields 
    \begin{align*}
        \left| g_i(\eTrans_{k}(x), \oV{h}) - g_i(\Trans(x), \oV{h}) \right|
        \le 3\sqrt{2}HL \sum_{j=1}^{m} \frac{1}{\sqrt{\cntTrans{j, k}(x)}}.
    \end{align*}
\end{proof}

Then $\phi_{i, k}$ satisfies the following ``Lipschitzness'' property.

\begin{lemma}[Property of $\phi_{i, k}$] \label{lem:ppt-phi}
    Outside the failure event $\gB$, for any index $i\in [m]$, episode $k\in [K]$, step $h\in [H]$, state-action pair $x\in \gX$ and vector $V\in \mathbb{R}^S$,
    \begin{align*}
        \left| \phi_{i, k}(\eTrans_{k}(x), V, x) - \phi_{i, k}(\Trans(x), \oV{h+1}, x) \right|
        \le \frac{\sqrt{2L} \left\| V - \oV{h+1} \right\|_{2, \eTrans_{k}(x)}}{\sqrt{\cntTrans{i, k}(x)}} 
        + \frac{3\sqrt{2}HL}{\sqrt{\cntTrans{i, k}(x)}} \sum_{j=1}^{m} \frac{1}{\sqrt{\cntTrans{j, k}(x)}}.
    \end{align*}
\end{lemma}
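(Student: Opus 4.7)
The plan is to exploit the fact that the two $\phi_{i,k}$ values share the additive term $\tfrac{2HL}{3\cntTrans{i,k}(x)}$, which depends on neither the transition distribution nor the value function argument. That term cancels in the difference, so it suffices to bound
\[
\frac{\bigl| g_i(\eTrans_{k}(x), V) - g_i(\Trans(x), \oV{h+1}) \bigr|}{\sqrt{\cntTrans{i,k}(x)}}.
\]

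Next, I would insert the intermediate quantity $g_i(\eTrans_{k}(x), \oV{h+1})$ and apply the triangle inequality to split the numerator into two pieces:
\[
\bigl| g_i(\eTrans_{k}(x), V) - g_i(\eTrans_{k}(x), \oV{h+1}) \bigr|
+ \bigl| g_i(\eTrans_{k}(x), \oV{h+1}) - g_i(\Trans(x), \oV{h+1}) \bigr|.
\]
The second piece is exactly the situation handled by property~\eqref{eqn:gi-prop-2} of Lemma~\ref{lem:ppt-g}, yielding the $3\sqrt{2}HL\sum_{j=1}^{m} 1/\sqrt{\cntTrans{j,k}(x)}$ contribution. For the first piece, I would invoke the argument behind property~\eqref{eqn:gi-prop-1} of Lemma~\ref{lem:ppt-g}, but applied to the empirical transition $\eTrans_{k}(x)$ rather than $\Trans(x)$: the proof of~\eqref{eqn:gi-prop-1} only uses the variance inequality $\sqrt{\Var_{P}[V_1]} \le \sqrt{\Var_{P}[V_2]} + \sqrt{\Var_{P}[V_1-V_2]}$, which holds for \emph{any} distribution $P$, so it yields $\sqrt{2L}\, \|V - \oV{h+1}\|_{2,\eTrans_{k}(x)}$ with the $\normltwo$-norm now taken under $\eTrans_{k}(x)$. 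Dividing everything by $\sqrt{\cntTrans{i,k}(x)}$ gives the claimed bound.

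There is essentially no obstacle here beyond bookkeeping; the only subtlety is the mild reuse of property~\eqref{eqn:gi-prop-1} with $\eTrans_{k}(x)$ instead of $\Trans(x)$, which is immediate from inspecting its proof. Note in particular that I do not need the failure event $\gB$ for the first piece (it holds deterministically once the variance inequality is applied), whereas the second piece inherits the dependence on $\gB$ (specifically $\gB_4$ and $\gB_5$) through its use of Lemma~\ref{lem:ppt-g}\eqref{eqn:gi-prop-2}. Hence outside $\gB$ the full bound holds with the stated constants.
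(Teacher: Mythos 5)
Your proof is correct and follows essentially the same route as the paper: the constant term cancels, the triangle inequality through the intermediate quantity $g_i(\eTrans_{k}(x), \oV{h+1})$ splits the difference, and the two pieces are handled by the two properties of Lemma~\ref{lem:ppt-g}. Your observation that property~\eqref{eqn:gi-prop-1} must be reapplied under $\eTrans_{k}(x)$ (valid because the underlying variance inequality holds for any distribution) is exactly the implicit step the paper's one-line invocation of Lemma~\ref{lem:ppt-g} relies on.
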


\begin{proof}
    By Lemma~\ref{lem:ppt-g}, outside the failure event $\gB$, 
    \begin{align*}
        &\left| \phi_{i, k}(\eTrans_{k}(x), V, x) - \phi_{i, k}(\Trans(x), \oV{h+1}, x) \right| \\
        \le & \frac{1}{\sqrt{\cntTrans{i, k}(x)}} \left(\left| g_{i}(\eTrans_{k}(x), V) 
        - g_{i}(\eTrans_{k}(x), \oV{h+1}) \right|
        + \left| g_{i}(\eTrans_{k}(x), \oV{h+1})
        - g_{i}(\Trans(x), \oV{h+1}) \right| \right) \\
        \le & \frac{\sqrt{2L} \left\| V - \oV{h+1} \right\|_{2, \eTrans_{k}(x)}}{\sqrt{\cntTrans{i, k}(x)}} 
        + \frac{3\sqrt{2}HL}{\sqrt{\cntTrans{i, k}(x)}} \sum_{j=1}^{m} \frac{1}{\sqrt{\cntTrans{j, k}(x)}}.
    \end{align*}
\end{proof}

Now we are ready to present the bounds on transition estimation error in the following lemma.

\begin{lemma}[Transition estimation error, Bernstein-style] \label{lem:est-err-b}
    Outside the failure event $\gB$, for any episode $k\in [K]$, step $h\in [H]$ and state-action pair $x\in \gX$, 
    \begin{align*}
        \left| \innerp{\eTrans_{k}(x) - \Trans_{k}(x)}{\oV{h+1}} \right| 
        \le \sum_{i=1}^{m} \phi_{i, k}(\Trans(x), \oV{h+1},x) + \sum_{i=1}^{m} \sum_{j=i+1}^{m} 2HL \sqrt{\frac{S_i S_j}{\cntTrans{i, k}(x) \cntTrans{j, k}(x)}}.
    \end{align*}
    And for a given $k\in [K]$ and a given $h\in [H]$, if $\lV_{k, h+1} \le \oV{h+1} \le \uV_{k, h+1}$ entrywise, then the above inequality yields 
    \begin{align*}
        \left| \innerp{\eTrans_{k}(x) - \Trans_{k}(x)}{\oV{h+1}} \right|
        & \le \sum_{i=1}^{m} \phi_{i, k}(\eTrans_{k}(x), \uV_{k, h+1},x) 
        + \sum_{i=1}^{m} \frac{\sqrt{2L} \left\| \uV_{k, h+1} - \lV_{k, h+1} \right\|_{2, \eTrans_{k}(x)}}{\sqrt{\cntTrans{i, k}(x)}} \\
        &+ \sum_{i=1}^{m} \frac{3\sqrt{2} HL}{\sqrt{\cntTrans{i, k}(x)}} \sum_{j=1}^{m} \frac{1}{\sqrt{\cntTrans{j, k}(x)}} 
        + \sum_{i=1}^{m} \sum_{j=i+1}^{m} 2HL \sqrt{\frac{S_i S_j}{\cntTrans{i, k}(x) \cntTrans{j, k}(x)}}.
    \end{align*}
\end{lemma}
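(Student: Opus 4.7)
The plan is to mirror the proof of the Hoeffding-style analogue (Lemma~\ref{lem:est-err-h}), replacing the scalar Hoeffding step by scalar Bernstein, and then to convert the resulting population-quantity bound into an algorithmically-computable one via the Lipschitz-type estimate for $\phi_{i,k}$.

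First, apply the inverse telescoping decomposition exactly as in \eqref{eqn:est-err-decomp-1}--\eqref{eqn:est-err-decomp-2}:
\begin{align*}
\innerp{\eTrans_k(x)-\Trans(x)}{\oV{h+1}}
&= \sum_{i=1}^m \innerpc{\eTrans_{i,k}(x)-\Trans_i(x)}{\E_{\Trans_{1:i-1}(x)}\E_{\Trans_{i+1:m}(x)}[\oV{h+1}]}\\
&\quad + \sum_{i=1}^m \innerpc{\eTrans_{i,k}(x)-\Trans_i(x)}{\E_{\Trans_{1:i-1}(x)}(\E_{\eTrans_{i+1:m,k}(x)}-\E_{\Trans_{i+1:m}(x)})[\oV{h+1}]}.
\end{align*}
For each summand in the first line, the inner vector $\E_{\Trans_{-i}(x)}[\oV{h+1}]$ is deterministic and in $[0,H]^{\gS_i}$, so outside $\gB_1$ the scalar Bernstein bound gives
$$\bigl|\innerp{\eTrans_{i,k}(x)-\Trans_i(x)}{\E_{\Trans_{-i}(x)}[\oV{h+1}]}\bigr|\le \sqrt{\tfrac{2\Var_{\Trans_i(x)}\E_{\Trans_{-i}(x)}[\oV{h+1}]\,L}{\cntTrans{i,k}(x)}}+\tfrac{2HL}{3\cntTrans{i,k}(x)}\le \phi_{i,k}(\Trans(x),\oV{h+1},x),$$
where the last inequality uses $g_i(\Trans(x),\oV{h+1})=2\sqrt{L}\sqrt{\Var_{\Trans_i(x)}\E_{\Trans_{-i}(x)}[\oV{h+1}]}\ge\sqrt{2L\,\Var_{\Trans_i(x)}\E_{\Trans_{-i}(x)}[\oV{h+1}]}$. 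The second line is bounded term by term by Lemma~\ref{lem:holder} verbatim; that argument only needs $|\oV{h+1}|\le H$ and the $\normlone$-concentration from $\gB_2=\gF_2$, yielding $\sum_{j=i+1}^m 2HL\sqrt{S_iS_j/(\cntTrans{i,k}(x)\cntTrans{j,k}(x))}$. Summing over $i$ gives the first inequality of the lemma.

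Next, to obtain the second inequality, I add and subtract $\phi_{i,k}(\eTrans_k(x),\uV_{k,h+1},x)$ inside each $\phi_{i,k}(\Trans(x),\oV{h+1},x)$ and invoke Lemma~\ref{lem:ppt-phi}:
$$\phi_{i,k}(\Trans(x),\oV{h+1},x)\le \phi_{i,k}(\eTrans_k(x),\uV_{k,h+1},x)+\tfrac{\sqrt{2L}\,\|\uV_{k,h+1}-\oV{h+1}\|_{2,\eTrans_k(x)}}{\sqrt{\cntTrans{i,k}(x)}}+\tfrac{3\sqrt{2}HL}{\sqrt{\cntTrans{i,k}(x)}}\sum_{j=1}^m\tfrac{1}{\sqrt{\cntTrans{j,k}(x)}}.$$
The only remaining nuisance is that $\uV_{k,h+1}-\oV{h+1}$ still depends on the inaccessible $\oV{h+1}$; but the hypothesis $\lV_{k,h+1}\le \oV{h+1}\le \uV_{k,h+1}$ entrywise implies $(\uV_{k,h+1}-\oV{h+1})^2\le (\uV_{k,h+1}-\lV_{k,h+1})^2$ entrywise, so monotonicity of the $\|\cdot\|_{2,\eTrans_k(x)}$ seminorm upgrades the norm to $\|\uV_{k,h+1}-\lV_{k,h+1}\|_{2,\eTrans_k(x)}$. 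Summing the $\phi_{i,k}$-Lipschitz bounds over $i$ and carrying the unchanged cross-component term from Step~1 produces the second inequality.

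The main obstacle is the careful bookkeeping in Step~3: the Bernstein bound and the Lipschitz control of $\phi_{i,k}$ each introduce several distinct lower-order terms (the $\sum_j 1/\sqrt{\cntTrans{j,k}}$ cross-$j$ sum is a key one), and one must verify that all of them are subsumed into the stated right-hand side rather than inflating its leading coefficients. Once the sandwich argument is used to eliminate the dependence on the unknown $\oV{h+1}$ inside the $\normltwo$-norm, the remaining inequalities are algebraic.
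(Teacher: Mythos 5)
Your proposal is correct and follows essentially the same route as the paper's proof: the inverse-telescoping decomposition, the scalar Bernstein bound via $\gB_1$ identified with $\phi_{i,k}(\Trans(x),\oV{h+1},x)$ (you are in fact slightly more careful than the paper in noting that $\sqrt{2L\Var} \le g_i$ is an inequality, not an equality), the Holder argument of Lemma~\ref{lem:holder} for the cross-component term, and Lemma~\ref{lem:ppt-phi} plus the sandwich $\lV_{k,h+1}\le \oV{h+1}\le \uV_{k,h+1}$ to replace $\|\uV_{k,h+1}-\oV{h+1}\|_{2,\eTrans_k(x)}$ by $\|\uV_{k,h+1}-\lV_{k,h+1}\|_{2,\eTrans_k(x)}$. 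No gaps.
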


\begin{proof}
    Recall that in Lemma~\ref{lem:est-err-h}, for all $k\in [K], h\in [H], x\in \gX$, omitting the dependence of $\eTrans_{k}(x), \Trans(x), \allowbreak \eTrans_{i, k}(x), \Trans_{i}(x)$ on $x$, we decompose the transition estimation error as 
    \begin{align}
        \innerp{\eTrans_{k} - \Trans}{\oV{h+1}}
        &= \sum_{i=1}^{m} \innerp{\eTrans_{i, k} - \Trans_{i}}{\E_{\Trans_{1:i-1}} \E_{\Trans_{i+1:m}} [\oV{h+1}]} \label{eqn:est-err-b-decomp-1} \\
        &\quad + \sum_{i=1}^{m} \innerp{\eTrans_{i, k} - \Trans_{i}}{\E_{\Trans_{1:i-1}} (\E_{\eTrans_{i+1:m, k}} - \E_{\Trans_{i+1:m}}) [\oV{h+1}]}. \label{eqn:est-err-b-decomp-2}
    \end{align}
    Outside the failure event $\gB$ (specifically, $\gB_1$),~\eqref{eqn:est-err-b-decomp-1} is bounded by 
    \begin{align*}
        \left| \innerp{\eTrans_{i, k}(x) - \Trans_{i}(x)}{\E_{\Trans_{1:i-1}(x)} \E_{\Trans_{i+1:m}(x)} [\oV{h+1}]} \right|
        & \le \sqrt{\frac{2\Var_{\Trans_{i}(x)} \E_{\Trans_{1:i-1}(x)} \E_{\Trans_{i+1:m}(x)} [\oV{h+1}] L}{\cntTrans{i, k}(x)}} 
        + \frac{2HL}{3\cntTrans{i, k}(x)} \\
        & = \phi_{i, k}(\Trans(x), \oV{h+1}, x).
    \end{align*}
    If $\lV_{k, h+1} \le \oV{h+1} \le \uV_{k, h+1}$ entrywise, then by Lemma~\ref{lem:ppt-phi},~\eqref{eqn:est-err-b-decomp-1} is further bounded by 
    \begin{align*}
        \phi_{i, k}(\Trans(x), \oV{h+1}, x)
        & \le \phi_{i, k}(\eTrans_{k}(x), \uV_{h+1},x) 
        + \frac{\sqrt{2L} \left\| \uV_{k, h+1} - \oV{h+1} \right\|_{2, \eTrans_{k}(x)}}{\sqrt{\cntTrans{i, k}(x)}} 
        + \frac{3\sqrt{2} HL}{\sqrt{\cntTrans{i, k}(x)}} \sum_{j=1}^{m} \frac{1}{\sqrt{\cntTrans{j, k}(x)}} \\
        & \le \phi_{i, k}(\eTrans_{k}(x), \uV_{h+1},x) 
        + \frac{\sqrt{2L} \left\| \uV_{k, h+1} - \lV_{k, h+1} \right\|_{2, \eTrans_{k}(x)}}{\sqrt{\cntTrans{i, k}(x)}} 
        + \frac{3\sqrt{2} HL}{\sqrt{\cntTrans{i, k}(x)}} \sum_{j=1}^{m} \frac{1}{\sqrt{\cntTrans{j, k}(x)}}, 
    \end{align*}
    For~\eqref{eqn:est-err-b-decomp-2}, the Holder's argument (Lemma~\ref{lem:holder}) yields 
    \begin{align*}
        \left| \innerp{\eTrans_{i, k}(x) - \Trans_{i}(x)}{\E_{\Trans_{1:i-1}(x)} (\E_{\eTrans_{i+1:m, k}(x)} - \E_{\Trans_{i+1:m}(x)}) [\oV{h+1}]} \right|
        \le \sum_{j=i+1}^{m} 2HL \sqrt{\frac{S_i S_j}{\cntTrans{i, k}(x) \cntTrans{j, k}(x)}}.
    \end{align*} 
    Combining the above bounds on~\eqref{eqn:est-err-b-decomp-1} and~\eqref{eqn:est-err-b-decomp-2} completes the proof.
\end{proof}

The first Bernstein-style bound on the transition estimation error (Lemma~\ref{lem:est-err-b}) depends on the unknown $\oV{h+1}$, which is why we derive the second bound in terms of the UCB $\uV_{k, h}$ and LCB $\lV_{k, h}$.
Now simplify the second bound to the form of the transition bonus.
Expanding $\phi_{i, k}$ by definition, 
\begin{align*}
    &\sum_{i=1}^{m} \phi_{i, k}(\eTrans_{k}(x), \uV_{k, h+1},x) 
    + \sum_{i=1}^{m} \frac{\sqrt{2L} \left\| \uV_{k, h+1} - \lV_{k, h+1} \right\|_{2, \eTrans_{k}(x)}}{\sqrt{\cntTrans{i, k}(x)}} \\
    &\quad + \sum_{i=1}^{m} \frac{3\sqrt{2} HL}{\sqrt{\cntTrans{i, k}(x)}} \sum_{j=1}^{m} \frac{1}{\sqrt{\cntTrans{j, k}(x)}} 
    + \sum_{i=1}^{m} \sum_{j=i+1}^{m} 2HL \sqrt{\frac{S_i S_j}{\cntTrans{i, k}(x) \cntTrans{j, k}(x)}} \\
    =& \sum_{i=1}^{m}\frac{g_{i}(\eTrans_k(x), \uV_{k, h+1})}{\sqrt{\cntTrans{i, k}(x)}} + \sum_{i=1}^{m}\frac{2HL}{3\cntTrans{i, k}(x)} + \sum_{i=1}^{m} \frac{\sqrt{2L} \left\| \uV_{k, h+1} - \lV_{k, h+1} \right\|_{2, \eTrans_{k}(x)}}{\sqrt{\cntTrans{i, k}(x)}} \\
    &\quad + \sum_{i=1}^{m} \frac{3\sqrt{2}HL}{\sqrt{\cntTrans{i, k}(x)}} \sum_{j=1}^{m} \frac{1}{\sqrt{\cntTrans{j, k}(x)}} 
    + \sum_{i=1}^{m} \sum_{j=i+1}^{m} 2HL \sqrt{\frac{S_i S_j}{\cntTrans{i, k}(x) \cntTrans{j, k}(x)}}\\
    \le& \sum_{i=1}^{m} \frac{g_{i}(\eTrans_k(x), \uV_{k, h+1})}{\sqrt{\cntTrans{i, k}(x)}}
    + \sum_{i=1}^{m} \frac{\sqrt{2L} \left\| \uV_{k, h+1} - \lV_{k, h+1} \right\|_{2, \eTrans_k(x)}}{\sqrt{\cntTrans{i, k}(x)}} \\
    &\quad + \sum_{i=1}^{m} \sum_{j=i+1}^{m} 11 HL \sqrt{\frac{S_i S_j}{\cntTrans{i, k}(x) \cntTrans{j, k}(x)}} + \sum_{i=1}^{m} \frac{5 HL}{\cntTrans{i, k}(x)}, 
\end{align*}
which is precisely the transition bonus $\bTrans_{k, h}(x)$ in~\eqref{eqn:trans-bonus-f-euler}. 
Therefore, for any $k\in [K], h\in [H]$, 
\begin{align*}
    \left| \innerp{\eTrans_{k}(x) - \Trans_{k}(x)}{\oV{h+1}} \right| 
    \le & b_{k, h}(x), 
\end{align*}
if $\lV_{k, h+1} \le \oV{h+1} \le \uV_{k, h+1}$ holds entrywise, which we soon prove in Lemma~\ref{lem:ulcb}.
Note that our choice of the reward bonus upper bounds the reward estimation error outside the failure event $\gB$ (specifically, $\gB_{3}$), i.e., 
\begin{align*}
    \left| \eR_{k}(x) - R(x) \right| \le \sum_{i=1}^{l} \left| \eR_{i, k}(x) - R_i(x) \right| \le \sum_{i=1}^{l} \sqrt{\frac{2 \SV[\er_i(x)] L}{\cntR{i, k}(x)}} + \sum_{i=1}^{l} \frac{14 L}{3 \cntR{i, k}(x)} := \bR_{k}(x).
\end{align*}
With $x_{k, h} = (s, \pi_{k}(s, h))$, recall the optimistic and pessimistic value iterations are defined as
\begin{align*}
    \uV_{k, h}(s) &= \min\left\{ H - h + 1, \eR(x_{k, h})  + \innerp{\eTrans_{k}(x_{k, h})}{\uV_{k, h+1}} + \bTrans_{k, h}(x_{k, h}) + \bR_{k}(x_{k, h}) \right\}, \\
    \lV_{k, h}(s) &= \max\left\{ 0, R(x_{k, h}) + \innerp{\eTrans_{k}(x_{k, h})}{\lV_{k, h+1}} - \bTrans_{k, h}(x_{k, h}) - \bR_{k}(x_{k, h}) \right\}.
\end{align*}
The following lemma indicates that the Bernstein-style bonuses and the above value iterations ensure optimism and pessimism.
Specifically, $\uV_{k, h}$ and $\lV_{k, h}$ are entrywise upper and lower confidence bounds of $\oV{h}$ for all $k\in [K], h\in [H]$.
\begin{lemma}[Upper-lower confidence bounds]  \label{lem:ulcb}
    Outside the failure event $\gB$, for the choices of bonuses in~\eqref{eqn:trans-bonus-f-euler} and~\eqref{eqn:r-bonus-f-euler}, for any episode $k\in [K]$, step $h\in [H]$ and state $s\in \gS$, 
    \begin{align}  \label{eqn:ulcb}
        \lV_{k, h}(s) \le \oV{h}(s) \le \uV_{k, h}(s).
    \end{align}
\end{lemma}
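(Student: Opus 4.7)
The plan is to prove both inequalities in~\eqref{eqn:ulcb} simultaneously by backward induction on $h$, because the Bernstein-style transition bonus~\eqref{eqn:trans-bonus-f-euler} depends on $\uV_{k,h+1}-\lV_{k,h+1}$, so a one-sided induction (as in Lemma~\ref{lem:ucb-h} for F-UCBVI) cannot close on itself. The base case $h=H+1$ is trivial since $\uV_{k,H+1}=\lV_{k,H+1}=\oV{H+1}\equiv 0$. For the inductive step, I assume $\lV_{k,h+1}\le \oV{h+1}\le \uV_{k,h+1}$ entrywise and aim to upgrade it to step $h$.

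For the upper bound, fix $s\in\gS$, set $x_{k,h}=(s,\pi_k(s,h))$ and $x_h^{*}=(s,\pi^{*}(s,h))$. Restricting to the nontrivial case $\uV_{k,h}(s)<H-h+1$, the greedy choice of $\pi_k$ gives
\begin{align*}
\uV_{k,h}(s)-\oV{h}(s)
&\ge \bigl(\eR_k(x_h^{*})-R(x_h^{*})\bigr)+\bR_k(x_h^{*})
+\innerp{\eTrans_k(x_h^{*})-\Trans(x_h^{*})}{\oV{h+1}}\\
&\quad +\innerp{\eTrans_k(x_h^{*})}{\uV_{k,h+1}-\oV{h+1}}+\bTrans_{k,h}(x_h^{*}).
\end{align*}
The last inner product is nonnegative by the inductive hypothesis. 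Outside $\gB$ (event $\gB_3$), $|\eR_k(x_h^{*})-R(x_h^{*})|\le \bR_k(x_h^{*})$, so those two terms combine nonnegatively. Because the inductive hypothesis $\lV_{k,h+1}\le\oV{h+1}\le\uV_{k,h+1}$ holds, I can invoke the second bound in Lemma~\ref{lem:est-err-b}, which, after simplifying as in the paragraph preceding the current lemma, shows $|\innerp{\eTrans_k(x_h^{*})-\Trans(x_h^{*})}{\oV{h+1}}|\le \bTrans_{k,h}(x_h^{*})$. Hence $\uV_{k,h}(s)-\oV{h}(s)\ge 0$.

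For the lower bound, the argument is symmetric but slightly subtler because $\lV_{k,h}$ uses $\lV_{k,h+1}$ while $\bTrans_{k,h}$ is defined through $\uV_{k,h+1}$; the inductive hypothesis is exactly what lets both appear consistently. In the nontrivial case $\lV_{k,h}(s)>0$, using that $\pi^{*}$ is optimal for $\oV{h}$,
\begin{align*}
\oV{h}(s)-\lV_{k,h}(s)
&\ge \bigl(R(x_h^{*})-\eR_k(x_h^{*})\bigr)+\bR_k(x_h^{*})
+\innerp{\Trans(x_h^{*})-\eTrans_k(x_h^{*})}{\oV{h+1}}\\
&\quad +\innerp{\eTrans_k(x_h^{*})}{\oV{h+1}-\lV_{k,h+1}}+\bTrans_{k,h}(x_h^{*}),
\end{align*}
where the last inner product is $\ge 0$ by the inductive hypothesis, the reward-error term is dominated by $\bR_k(x_h^{*})$ outside $\gB_3$, and the transition-error term is dominated in absolute value by $\bTrans_{k,h}(x_h^{*})$ by the same application of Lemma~\ref{lem:est-err-b}. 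Thus $\oV{h}(s)\ge\lV_{k,h}(s)$, completing the induction.

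The main obstacle is handling the self-referential nature of the Bernstein bonus: unlike the Hoeffding case, $\bTrans_{k,h}$ is built from the empirical $\eTrans_k$ and the current $\uV_{k,h+1},\lV_{k,h+1}$ rather than from $\Trans$ and $\oV{h+1}$. This is precisely the reason for the ``Lipschitzness'' transfer encoded in Lemma~\ref{lem:ppt-phi} and for deriving the second (empirical) bound in Lemma~\ref{lem:est-err-b}: those results require $\lV_{k,h+1}\le\oV{h+1}\le\uV_{k,h+1}$ as input, which is exactly the two-sided inductive hypothesis maintained here. Once this interlocking is set up correctly, the remaining algebra is routine.
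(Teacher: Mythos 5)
Your overall strategy is exactly the paper's: a two-sided backward induction maintaining $\lV_{k,h+1}\le \oV{h+1}\le \uV_{k,h+1}$, which is then fed into the second (empirical) bound of Lemma~\ref{lem:est-err-b} so that the self-referential Bernstein bonus dominates the transition estimation error. Your upper-bound half is correct and matches the paper. However, your lower-bound half contains a genuine error in its first inequality. Writing $\mathrm{pessQ}(x):=\eR_k(x)+\innerp{\eTrans_k(x)}{\lV_{k,h+1}}-\bTrans_{k,h}(x)-\bR_k(x)$, the algorithm defines $\lV_{k,h}(s)=\max\{0,\mathrm{pessQ}(x_{k,h})\}$ at the action $x_{k,h}=(s,\pi_k(s,h))$ chosen by maximizing the \emph{optimistic} Q-value. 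Your displayed bound, once the terms are regrouped, is equivalent to the claim $\lV_{k,h}(s)\le \mathrm{pessQ}(x_h^{*})$, i.e.\ $\mathrm{pessQ}(x_{k,h})\le \mathrm{pessQ}(x_h^{*})$. Nothing justifies this: $\pi_k$ is greedy with respect to $\uQ_h$, not with respect to the pessimistic Q-value, and it is easy to arrange valid confidence intervals (e.g.\ $\pi^{*}$'s action under-visited with a wide interval centered above its true value, a near-optimal action well-visited with a narrow interval) under which the UCB-greedy action has strictly larger pessimistic Q-value than $\pi^{*}$'s action while still being the UCB argmax. In that situation your first ``$\ge$'' is strictly false, even though the final conclusion $\oV{h}(s)\ge\lV_{k,h}(s)$ remains true.

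The fix is the one the paper uses: do not move to $x_h^{*}$ on the pessimistic side. Instead lower-bound the optimal value via $\oV{h}(s)=\max_{a}Q_h^{*}(s,a)\ge Q_h^{*}(x_{k,h})=R(x_{k,h})+\innerp{\Trans(x_{k,h})}{\oV{h+1}}$, and compare everything at $x_{k,h}$:
\begin{align*}
\oV{h}(s)-\lV_{k,h}(s)
\ge \bigl(R(x_{k,h})-\eR_k(x_{k,h})+\bR_k(x_{k,h})\bigr)
+\innerp{\Trans(x_{k,h})-\eTrans_k(x_{k,h})}{\oV{h+1}}
+\innerp{\eTrans_k(x_{k,h})}{\oV{h+1}-\lV_{k,h+1}}
+\bTrans_{k,h}(x_{k,h}),
\end{align*}
after which each group is nonnegative by $\gB_3$, Lemma~\ref{lem:est-err-b} (which holds for all $x\in\gX$, hence at $x_{k,h}$), and the inductive hypothesis, respectively. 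With this substitution your proof coincides with the paper's.
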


\begin{proof}
    For $h = H + 1$, $\lV_{k, H+1}(s) = \oV{H+1}(s) = \uV_{k, H+1}(s) = 0$ for all $s\in \gS, k\in [K]$.
    We proceed by backward induction. For $h\in [H]$, assume~\eqref{eqn:ulcb} holds for $h+1$. 
    The transition bonus then satisfies $| \innerpc{\eTrans_{k}(x) - \Trans_{k}(x)}{\oV{h+1}} | \le b_{k, h}(x)$ for all $x\in \gX$.
    For all $s\in \gS$, with $x_{k, h} = (s, \pi_k(s, h))$ and $x_{h}^{*} = (s, \pi^{*}(s, h))$, $\uV_{k, h}$ satisfies that 
    \begin{align*}
        \uV_{k, h}(s) -\oV{h}(s)
        & = \eR(x_{k, h}) + \bR_{k}(x_{k, h})  + \innerp{\eTrans_{k}(x_{k, h})}{\uV_{k, h+1}} + \bTrans_{k}(x_{k, h}) - R(x_{h}^{*}) - \innerp{\Trans(x_{h}^{*})}{\oV{h+1}} \\
        & \ge \eR(x_{h}^{*})+ \bR_{k}(x^*_{h}) + \innerp{\eTrans_{k}(x_{h}^{*})}{\uV_{k, h+1}} + \bTrans_{k}(x_{h}^{*}) - R(x_{h}^{*}) - \innerp{\Trans(x_{h}^{*})}{\oV{h+1}} \\
        & \ge \innerp{\eTrans_{k}(x_{h}^{*})}{\uV_{k, h+1} - \oV{h+1}}
        + \innerp{\eTrans_{k}(x_{h}^{*}) - \Trans(x_{h}^{*})}{\oV{h+1}} 
        + \bTrans_{k}(x_{h}^{*}) \ge 0,
    \end{align*}
    and $\lV_{k, h}$ satisfies that 
    \begin{align*}
        \oV{h}(s) - \lV_{k, h}(s)
        & \ge R(x_{h}^{*}) + \innerp{\Trans(x_{h}^{*})}{\oV{h+1}} - \eR_{k}(x_{k, h}) - \bR_{k}(x_{k, h}) - \innerp{\eTrans_{k}(x_{k, h})}{\lV_{k, h+1}} + \bTrans_{k, h}(x_{k, h}) \\
        & \ge R(x_{k, h}) + \innerp{\Trans(x_{k, h})}{\oV{h+1}} - \eR_{k}(x_{k, h}) - \bR_{k}(x_{k, h}) - \innerp{\eTrans_{k}(x_{k, h})}{\lV_{k, h+1}} + \bTrans_{k, h}(x_{k, h}) \\
        & \ge \innerp{\eTrans_{k}(x_{k, h})}{\oV{h+1} - \lV_{k, h+1}} 
        + \innerp{\Trans(x_{k, h}) - \eTrans_{k}(x_{k, h})}{\oV{h+1}} 
        + \bTrans_{k, h}(x_{k, h}) \ge 0.
    \end{align*}
    Inductively, $\lV_{k, h} \le \oV{h} \le \uV_{k, h}$ holds entrywise for all $k\in [K]$ and $h\in [H]$.
\end{proof}

For F-EULER, we refer to the difference between the optimistic value function and the pessimistic value function as the \emph{confidence radius}, which we bound in the following lemma.
For here and below, we use ``$\lesssim, \approx$'' to denote ``$\le, =$'' neglecting constants. Different from the main text, we make lower-order terms explicit in the appendices.

\begin{lemma}[Confidence radius, Bernstein-style]  \label{lem:conf-r-b}
    Let $F_1 := mH\max_{i} S_i L$ be a lower-order term.
    Let $s_{k, t} \in \gS$ denote the state at step $t$ of episode $k$ and $x_{k, t} = (s_{k, t}, \pi_{k}(s_{k, t}, t))$.
    Outside the failure event $\gB$, for any episode $k\in [K]$, step $h\in [H]$ and state $s\in \gS$, the confidence radius of F-EULER satisfies that 
    \begin{align*}
        \uV_{k, h}(s) - \lV_{k, h}(s) \lesssim \min\left\{ \sum_{t=h}^{H} \E_{\pi_k}\left[ \sum_{i=1}^{m} \frac{F_1}{\sqrt{\cntTrans{i, k}(x_{k, t})}}   + \sum_{i=1}^{l} \frac{L}{ \sqrt{\cntR{i, k}(x_{k, t})} }  \middle| s_{k, h} = s \right], H \right\}.
    \end{align*}
\end{lemma}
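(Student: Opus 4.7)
The argument mirrors the Hoeffding-style analogue (Lemma~\ref{lem:conf-r-h}): peel off one step of the value iteration, exchange $\eTrans_{k}$ for $\Trans$, bound the resulting residual by $\lesssim \sum_{i} F_{1}/\sqrt{\cntTrans{i,k}(x_{k,h})}+\sum_{i} L/\sqrt{\cntR{i,k}(x_{k,h})}$, and unroll by backward induction starting from $\uV_{k,H+1}=\lV_{k,H+1}=0$. The new ingredient compared with F-UCBVI is that the Bernstein transition bonus~\eqref{eqn:trans-bonus-f-euler} contains both $g_{i}(\eTrans_{k},\uV_{k,h+1})$ and $\|\uV_{k,h+1}-\lV_{k,h+1}\|_{2,\eTrans_{k}(x)}$, but since this lemma only needs a $\mathrm{polylog}$-in-$T$ level of tightness, crude pointwise estimates suffice.

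\textbf{Step 1 (One-step expansion).} Because \texttt{VI\_Optimism} uses the same greedy action $\pi_{k}(s,h)$ in both the UCB and LCB updates, setting $x_{k,h}=(s,\pi_{k}(s,h))$ and restricting to the non-trivial branches of the $\min/\max$ gives
\begin{align*}
    \uV_{k,h}(s)-\lV_{k,h}(s)
    &\le \innerp{\eTrans_{k}(x_{k,h})}{\uV_{k,h+1}-\lV_{k,h+1}}+2\bTrans_{k,h}(x_{k,h})+2\bR_{k}(x_{k,h})\\
    &= \innerp{\Trans(x_{k,h})}{\uV_{k,h+1}-\lV_{k,h+1}}+\innerp{\eTrans_{k}(x_{k,h})-\Trans(x_{k,h})}{\uV_{k,h+1}-\lV_{k,h+1}}\\
    &\quad +2\bTrans_{k,h}(x_{k,h})+2\bR_{k}(x_{k,h}).
\end{align*}
On the trivial branches the difference is at most $H-h+1\le H$, which supplies the other argument of the minimum in the statement.

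\textbf{Step 2 (Residual bounds).} Each summand in~\eqref{eqn:trans-bonus-f-euler} reduces to $\lesssim F_{1}/\sqrt{\cntTrans{i,k}}$ after the crude pointwise estimates $g_{i}(\eTrans_{k},\uV_{k,h+1})\le 2H\sqrt{L}$ (using $\uV_{k,h+1}\in[0,H]$), $\|\uV_{k,h+1}-\lV_{k,h+1}\|_{2,\eTrans_{k}(x)}\le H$, and $\sqrt{S_{i}S_{j}/(\cntTrans{i,k}\cntTrans{j,k})}\le \max_{i}S_{i}/\sqrt{\cntTrans{i,k}}$ (since counts are at least $1$); likewise the reward bonus~\eqref{eqn:r-bonus-f-euler} reduces to $\lesssim L/\sqrt{\cntR{i,k}}$ via $\SV[\er_{i}(x)]\le 1$. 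For the kernel-gap term, the product-distribution inequality $\|\eTrans_{k}-\Trans\|_{1}\le\sum_{i}\|\eTrans_{i,k}-\Trans_{i}\|_{1}$ combined with $L_{1}$ concentration ($\gB_{2}$) and $\|\uV_{k,h+1}-\lV_{k,h+1}\|_{\infty}\le H$ yields
\begin{align*}
    \bigl|\innerp{\eTrans_{k}(x_{k,h})-\Trans(x_{k,h})}{\uV_{k,h+1}-\lV_{k,h+1}}\bigr|
    \le H\sum_{i=1}^{m}\sqrt{\tfrac{2S_{i}L}{\cntTrans{i,k}(x_{k,h})}}
    \lesssim \sum_{i=1}^{m}\tfrac{F_{1}}{\sqrt{\cntTrans{i,k}(x_{k,h})}}.
\end{align*}

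\textbf{Step 3 and main obstacle.} Combining Steps 1--2 produces the one-step recursion
\begin{align*}
    \uV_{k,h}(s)-\lV_{k,h}(s)\lesssim \innerp{\Trans(x_{k,h})}{\uV_{k,h+1}-\lV_{k,h+1}}+\sum_{i=1}^{m}\tfrac{F_{1}}{\sqrt{\cntTrans{i,k}(x_{k,h})}}+\sum_{i=1}^{l}\tfrac{L}{\sqrt{\cntR{i,k}(x_{k,h})}},
\end{align*}
and backward induction (using the tower property to rewrite the expectation under $\Trans(x_{k,h})$ as $\E_{\pi_{k}}[\,\cdot\mid s_{k,h}=s]$) then delivers the cumulative trajectory sum. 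The only delicate point is the self-referential appearance of $\|\uV_{k,h+1}-\lV_{k,h+1}\|_{2,\eTrans_{k}(x)}$ inside $\bTrans_{k,h}$: keeping it sharp would induce a recursion on the very quantity being bounded. Fortunately this lemma is used downstream only to argue that the cumulative correction term is lower-order, so replacing that norm by $H$ is exactly the right level of tightness, with the loss cleanly absorbed into $F_{1}=mH\max_{i}S_{i}L$.
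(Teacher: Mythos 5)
Your proposal is correct and follows essentially the same route as the paper's proof: one-step expansion of the UCB/LCB recursions, a Hölder/$\normlone$-concentration bound (via $\gB_2$ and the product-measure telescoping) on the kernel-gap term, crude pointwise bounds $g_i(\eTrans_k,\uV_{k,h+1})\le 2H\sqrt{L}$ and $\|\uV_{k,h+1}-\lV_{k,h+1}\|_{2,\eTrans_k(x)}\le H$ to collapse the Bernstein bonus into $F_1/\sqrt{\cntTrans{i,k}}$, and backward induction to produce the trajectory sum. Your remark that replacing the self-referential $\normltwo$-norm term by $H$ is the right level of tightness for this lemma's downstream use matches the paper's treatment exactly.
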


\begin{proof}
    By definition, for any $k\in [K], h\in [H], s\in \gS$, 
    \begin{align}
        &\uV_{k, h}(s) - \lV_{k, h}(s) \nonumber \\
        \le & \eR_{k}(x_{k, h}) + \innerp{\eTrans_{k}(x_{k, h})}{\uV_{k, h+1}} + \bTrans_{k, h}(x_{k, h}) + \bR_{k}(x_{k, h}) \nonumber \\ 
        &\quad - \eR_{k}(x_{k, h}) - \innerp{\eTrans_{k}(x_{k, h})}{\lV_{k, h+1}} + \bTrans_{k, h}(x_{k, h}) + \bR_{k}(x_{k, h}) \nonumber \\
        = & \innerp{\eTrans_{k}(x_{k, h})}{\uV_{k, h+1} - \lV_{k, h+1}} + 2 \bTrans_{k, h}(x_{k, h}) + 2 \bR_{k}(x_{k, h}) \nonumber \\
        = & \innerp{\Trans(x_{k, h})}{\uV_{k, h+1} - \lV_{k, h+1}} + \innerp{\Trans(x_{k, h}) - \eTrans_{k}(x_{k, h})}{\uV_{k, h+1} - \lV_{k, h+1}} 
        + 2 \bTrans_{k, h}(x_{k, h}) + 2\bR_{k}(x_{k, h}) \nonumber \\
        \le & \innerp{\Trans(x_{k, h})}{\uV_{k, h+1} - \lV_{k, h+1}} 
        + \sum_{i=1}^{m} H \sqrt{\frac{2S_i L}{\cntTrans{i, k}(x_{k, h})}} 
        + 2 \bTrans_{k, h}(x_{k, h}) + 2\bR_{k}(x_{k, h}), \label{eqn:conf-r-b-decomp}
    \end{align}
    where the last inequality results from Holder's inequality and holds outside the failure event $\gB$ (specifically, $\gB_2$).
    We apply the following loose bounds on the transition bonus and the reward bonus that for any $k\in [K], h\in [H], x\in \gX$, 
    \begin{align*}
        \bTrans_{k, h}(x)
        &= \sum_{i=1}^{m} \frac{g_{i}(\eTrans_k(x), \uV_{k, h+1})}{\sqrt{\cntTrans{i, k}(x)}}
        + \sum_{i=1}^{m} \frac{\sqrt{2L} \left\| \uV_{k, h+1} - \lV_{k, h+1} \right\|_{2, \eTrans_k(x)}}{\sqrt{\cntTrans{i, k}(x)}} \\
        &\quad + \sum_{i=1}^{m} \sum_{j=i+1}^{m} 8 HL \sqrt{\frac{S_i S_j}{\cntTrans{i, k}(x) \cntTrans{j, k}(x)}} + \sum_{i=1}^{m} \frac{4 HL}{\cntTrans{i, k}(x)} \\
        & \le  \frac{2\sqrt{L} \sqrt{\Var_{\eTrans_{i, k}} \E_{\eTrans_{-i, k}} [\uV_{k, h+1}]} }{ \sqrt{\cntTrans{i, k}(x)}} + \sum_{i=1}^{m} \frac{\sqrt{2L} H}{\sqrt{\cntTrans{i, k}(x)}} \\
        &\quad + \sum_{i=1}^{m} \sum_{j=i+1}^{m} 8HL \max_{i} S_i \frac{1}{\sqrt{\cntTrans{i, k}(x)}} + \sum_{i=1}^{m} \frac{4HL}{\cntTrans{i, k}(x)} \\
        &\lesssim m H \max_{i} S_i L \sum_{i=1}^{m} \frac{1}{\sqrt{\cntTrans{i, k}(x)}}, 
    \end{align*}
    and 
    \begin{align*}
        \bR_{k}(x)
        = \sum_{i=1}^{l} \sqrt{\frac{2 \SV[\er_i(x)] L}{\cntR{i, k}(x)}} + \sum_{i=1}^{l} \frac{14 L}{3 \cntR{i, k}(x)}
        \lesssim L \sum_{i=1}^{l} \frac{1}{\sqrt{\cntR{i, k}(x)}}.
    \end{align*}
    Substituting the above bounds on $\bTrans_{k, h}(x)$ and $\bR_{k}(x)$ at $x = x_{k, h}$ into~\eqref{eqn:conf-r-b-decomp} yields 
    \begin{align*}
        \uV_{k, h}(s) - \lV_{k, h}(s)
        \lesssim & \innerp{\Trans(x_{k, h})}{\uV_{k, h+1} - \lV_{k, h+1}} 
        + \sum_{i=1}^{m} \frac{F}{\sqrt{\cntTrans{i, k}(x_{k, h})}} + \sum_{i=1}^{l} \frac{L}{\sqrt{\cntR{i, k}(x_{k, h})}}.
    \end{align*}    
    Inductively, we have 
    \begin{align*}
        \uV_{k, h}(s) - \lV_{k, h}(s) 
        \lesssim \min\left\{  \sum_{t=h}^{H} \E_{\pi_k}\left[ \sum_{i=1}^{m} \frac{ F_1}{\sqrt{\cntTrans{i, k}(x_{k, t})}} + \sum_{i=1}^{l} \frac{L}{\sqrt{\cntR{i, k}(x_{k, t})}}  \middle| s_{k, h} = s \right], H \right\}.
    \end{align*}
\end{proof}

As noted above, the notion of the good sets in the analysis of F-UCBVI carries over here, which again plays an important role in showing sum-over-time bounds.
Analogous to Lemma~\ref{lem:cul-conf-r-h}, the following lemma bounds the sum over time of the squared confidence radius, which is later used to bound the cumulative correction term (Lemma~\ref{lem:cul-corr-b}).

\begin{lemma}[Cumulative confidence radius, Bernstein-style]  \label{lem:cul-conf-r-b}
    Define the lower-order term
    \begin{align*}
        G_1 := m^4 H^4 (\max_{i} S_i)^2 \max_{i} X[\idxTrans{i}] L^3 + l^2 H^3 \max_{i} X[\idxR{i}] L^3.
    \end{align*}
    Then outside the failure event $\gB$, for all $i\in [m]$, the sum over time of the following expected squared confidence radius of F-EULER satisfies that 
    \begin{align*}
        \sum_{k=1}^{K} \sum_{h=1}^{H} \sum_{x\in \gX} w_{k, h}(x) \left( \E_{\Trans_i} \left(\E_{\Trans_{-i}}[\uV_{k, h+1} - \lV_{k, h+1}]\right)^2 \right)
        \le \sum_{k=1}^{K} \sum_{h=1}^{H} \sum_{x\in \gX} w_{k, h}(x) \left( \E_{\Trans} [(\uV_{k, h+1} - \lV_{k, h+1})^2] \right)
        \lesssim G_1.
    \end{align*}
\end{lemma}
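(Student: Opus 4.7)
The plan is to mirror the proof of Lemma~\ref{lem:cul-conf-r-h}, substituting the Bernstein-style confidence radius bound (Lemma~\ref{lem:conf-r-b}) for its Hoeffding analogue at the appropriate step. The first inequality is immediate from Jensen applied to the inner expectation, namely $(\E_{\Trans_{-i}(x)}[U])^2 \le \E_{\Trans_{-i}(x)}[U^2]$ with $U = \uV_{k,h+1} - \lV_{k,h+1}$, which dominates the $i$-indexed variance-type quantity by the full expectation of the square under $\Trans(x)$.

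For the main inequality, I would first use $\sum_{x} w_{k,h}(x) \E_{\Trans(x)}[f] = \E_{\pi_k}[f(s_{k,h+1}) \mid s_{k,1}]$ to rewrite the right-hand side as $\sum_{k,h} \E_{\pi_k}[(\uV_{k,h+1}(s_{k,h+1}) - \lV_{k,h+1}(s_{k,h+1}))^2 \mid s_{k,1}]$, and then reindex $h+1 \mapsto h$ (free, since $\uV_{k,H+1} = \lV_{k,H+1} = 0$). Next I would apply Lemma~\ref{lem:conf-r-b} and expand the square using $(\sum_{i=1}^{n} a_i)^2 \le n \sum_{i=1}^{n} a_i^2$ three times: once to separate the transition and reward contributions, once with $n = H$ to collapse the inner sum over $t \in \{h, \ldots, H\}$, and once with $n = m$ (resp.\ $n = l$) to split the sum over components. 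This produces the per-$(k,h)$ bound
\begin{align*}
    2 m H^2 F_1^2 \nlsum_{i=1}^{m} \E_{\pi_k}\!\left[\frac{1}{\cntTrans{i,k}(x_{k,h})} \,\middle|\, s_{k,1}\right]
    + 2 l H^2 L^2 \nlsum_{i=1}^{l} \E_{\pi_k}\!\left[\frac{1}{\cntR{i,k}(x_{k,h})} \,\middle|\, s_{k,1}\right].
\end{align*}

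I would then sum over $k$ and $h$, split each $\sum_{x}$ according to whether $x$ lies in the good set $L_k$ (resp.\ $\Lambda_k$), and bound the two regions separately. Inside the good sets, Lemma~\ref{lem:sum-vr} gives $\sum_{k,h} \sum_{x\in L_k} w_{k,h}(x)/\cntTrans{i,k}(x) \le 4 X[\idxTrans{i}] L$ and the analogous bound for $\cntR{i,k}$; outside the good sets, the crude $\min\{\cdot, H\}$ bound from Lemma~\ref{lem:conf-r-b} combined with Lemma~\ref{lem:sum-out-of-good-set} contributes at most a lower-order $\bigo(H^3 \sum_{i}(X[\idxTrans{i}] + X[\idxR{i}]) L)$. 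Plugging in $F_1 = m H \max_{i} S_i L$ yields a transition piece of order $m^4 H^4 (\max_{i} S_i)^2 \max_{i} X[\idxTrans{i}] L^3$ and a reward piece of order $l^2 H^2 \max_{i} X[\idxR{i}] L^3$, both of which fit inside $G_1$.

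No step is really an obstacle: every move has a direct Hoeffding-style analogue in Lemma~\ref{lem:cul-conf-r-h}, and the notion of good sets (Definition~\ref{def:good-sets}) and its attendant accounting lemmas port over verbatim. The one bookkeeping point worth flagging is that the reward term in Lemma~\ref{lem:conf-r-b} carries a full $L$ rather than the $\sqrt{L}$ appearing in Lemma~\ref{lem:conf-r-h}, so after squaring, the reward contribution picks up an extra factor of $L$; this is the precise source of $L^3$ in the reward summand of $G_1$ as opposed to $L^2$ in $G_0$.
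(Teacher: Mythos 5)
Your proposal is correct and follows essentially the same route as the paper, which simply states that the proof of Lemma~\ref{lem:cul-conf-r-h} carries over after swapping in the failure event $\gB$ and the Bernstein-style confidence radius bound (Lemma~\ref{lem:conf-r-b}). You also correctly identify the one substantive bookkeeping change — the reward term in Lemma~\ref{lem:conf-r-b} carries $L$ rather than $\sqrt{L}$, which after squaring produces the $L^3$ in the reward summand of $G_1$ versus $L^2$ in $G_0$ — which is exactly the remark the paper makes.
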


\begin{proof}
    Replacing the failure event $\gF$ and confidence radius bound (Lemma~\ref{lem:conf-r-h}) of F-UCBVI by the failure event $\gB$ and confidence radius bound (Lemma~\ref{lem:conf-r-b}) of F-EULER, the proof of Lemma~\ref{lem:cul-conf-r-h} carries over here.
    Note that there is a minor difference in the order of $L$ between the second terms of $G_1$ (F-EULER) and $G_0$ (F-UCBVI), resulting from the $\sqrt{L}$ difference in the corresponding confidence radius bounds.
\end{proof}

Like the analysis of EULER~\citep{zanette2019tighter}, we show two problem-dependent regret bounds of F-EULER.
The following lemma bridges one bound to the other.

\begin{lemma}[Bound bridge]  \label{lem:bound-bridge}
    Outside the failure event $\gB$, for F-EULER, we have 
    \begin{align*}
        \sum_{k=1}^{K} \sum_{h=1}^{H} \sum_{x\in L_k} w_{k, h}(x) \frac{g_i(\Trans(x), \oV{h+1}) - g_i(\Trans(x), V_{h+1}^{\pi_k})}{\sqrt{\cntTrans{i, k}(x)}}
        \le 2 \sqrt{2L} H \sqrt{X[\idxTrans{i}] L} \sqrt{\Regret(K)}.
    \end{align*}
\end{lemma}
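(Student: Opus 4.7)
The plan is to use the first Lipschitz property of $g_i$ in Lemma~\ref{lem:ppt-g} to replace the difference of $g_i$'s by a weighted $\normltwo$-norm of $\oV{h+1} - V_{h+1}^{\pi_k}$, then apply Cauchy--Schwarz so as to peel off the $1/\sqrt{\cntTrans{i,k}(x)}$ factor (handled by Lemma~\ref{lem:sum-vr}) from the weighted value-gap norm. The remaining weighted norm is then linearized via $(V_1-V_2)^2 \le H(V_1-V_2)$ for $V_1\ge V_2$ with values in $[0,H]$, and a one-step performance-difference argument converts the resulting cumulative gap into a multiple of the total regret.

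Concretely, since $\oV{h+1} \ge V_{h+1}^{\pi_k}$ entrywise (optimality), Lemma~\ref{lem:ppt-g} gives
\[
g_i(\Trans(x), \oV{h+1}) - g_i(\Trans(x), V_{h+1}^{\pi_k}) \;\le\; \sqrt{2L}\,\bigl\|\oV{h+1}-V_{h+1}^{\pi_k}\bigr\|_{2, \Trans(x)}.
\]
Writing $w_{k,h}(x)/\sqrt{\cntTrans{i,k}(x)} = \sqrt{w_{k,h}(x)/\cntTrans{i,k}(x)}\cdot\sqrt{w_{k,h}(x)}$ and applying Cauchy--Schwarz over the triple index $(k,h,x\in L_k)$, I would bound the LHS of the lemma by
\[
\sqrt{2L}\,\sqrt{\nlsum_{k,h,x\in L_k}\tfrac{w_{k,h}(x)}{\cntTrans{i,k}(x)}}\,\cdot\,\sqrt{\nlsum_{k,h,x\in L_k} w_{k,h}(x)\,\bigl\|\oV{h+1}-V_{h+1}^{\pi_k}\bigr\|_{2,\Trans(x)}^{2}}.
\]
By Lemma~\ref{lem:sum-vr}, the first radical is at most $2\sqrt{X[\idxTrans{i}]L}$.

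For the second radical I would use that $\oV{h+1}(s)-V_{h+1}^{\pi_k}(s) \in [0,H]$, which yields $\|\oV{h+1}-V_{h+1}^{\pi_k}\|_{2,\Trans(x)}^{2} \le H\,\E_{\Trans(x)}[\oV{h+1}-V_{h+1}^{\pi_k}]$, and I would drop the restriction $x\in L_k$ (this only enlarges the nonnegative sum). Now $\sum_{x} w_{k,h}(x)\,\E_{\Trans(x)}[\oV{h+1}-V_{h+1}^{\pi_k}]$ is exactly the marginal expectation $\E_{\pi_k}\!\bigl[\oV{h+1}(s_{k,h+1}) - V_{h+1}^{\pi_k}(s_{k,h+1}) \mid s_{k,1}\bigr]$. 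A one-step Bellman-gap recursion, using $Q_h^*(s,\pi^*(s,h)) \ge Q_h^*(s,\pi_k(s,h))$, shows that this quantity is nonincreasing in $h$, so each term is at most $\oV{1}(s_{k,1}) - V_1^{\pi_k}(s_{k,1})$. Summing over $h\in[H]$ and $k\in[K]$ yields $H\cdot\Regret(K)$, so the second radical is at most $H\sqrt{\Regret(K)}$. Multiplying the three pieces gives exactly $2\sqrt{2L}\,H\,\sqrt{X[\idxTrans{i}]L}\,\sqrt{\Regret(K)}$, as required.

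The main obstacle is identifying the right way to convert the squared $\normltwo$-norm of the value gap into something proportional to the regret: the pointwise inequality $(V_1-V_2)^2\le H(V_1-V_2)$ linearizes the gap, after which the observation that expected future suboptimality under $\pi_k$ is dominated by the initial suboptimality turns the cumulative weighted gap into at most $H$ copies of the per-episode regret. Everything else is a straightforward reuse of lemmas already proved in the appendix.
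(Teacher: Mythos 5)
Your proposal is correct and follows essentially the same route as the paper: Lipschitz property of $g_i$ (Lemma~\ref{lem:ppt-g}), Cauchy--Schwarz, and Lemma~\ref{lem:sum-vr} for the visit-ratio radical. The only difference is that where the paper cites Lemma 16 of \citep{zanette2019tighter} to bound $\sum_{k,h,x} w_{k,h}(x)\innerp{\Trans(x)}{(\oV{h+1}-V_{h+1}^{\pi_k})^2}$ by $H^2\Regret(K)$, you reprove that fact inline via the linearization $(V_1-V_2)^2\le H(V_1-V_2)$ and the monotonicity of the expected value gap $\E_{\pi_k}[\oV{h}(s_{k,h})-V_h^{\pi_k}(s_{k,h})]$ in $h$, which is a valid self-contained argument.
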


\begin{proof}
    Outside the failure event $\gB$, by the properties of $g_i$ (Lemma~\ref{lem:ppt-g}), 
    \begin{align*}
        &\sum_{k=1}^{K} \sum_{h=1}^{H} \sum_{x\in L_k} w_{k, h}(x) \frac{g_i(\Trans(x), \oV{h+1}) - g_i(\Trans(x), V_{h+1}^{\pi_k})}{\sqrt{\cntTrans{i, k}(x)}} \\
        \le & \sqrt{2L} \sum_{k=1}^{K} \sum_{h=1}^{H} \sum_{x\in L_k} w_{k, h}(x) \frac{\|\oV{h+1} - V_{h+1}^{\pi_k}\|_{2, \Trans(x)}}{\sqrt{\cntTrans{i, k}(x)}} \\
        \le & \sqrt{2L} \sqrt{\sum_{k=1}^{K} \sum_{h=1}^{H} \sum_{x\in L_k} \frac{w_{k, h}(x)}{\cntTrans{i, k(x)}}} \cdot \sqrt{\sum_{k=1}^{K} \sum_{h=1}^{H} \sum_{x\in L_k} w_{k, h}(x) \innerp{\Trans(x)}{(\oV{h+1} - V_{h+1}^{\pi_k})^2}} \\
        \le & 2 \sqrt{2L} \sqrt{X[\idxTrans{i}] L} \cdot \sqrt{H^2 \Regret(K)} = 2 \sqrt{2L} H \sqrt{X[\idxTrans{i}] L} \sqrt{\Regret(K)},
    \end{align*}
    where in the second inequality we use the Cauchy-Schwarz inequality, and the third inequality is due to Lemma~\ref{lem:sum-vr} in this work and Lemma 16 in~\citep{zanette2019tighter}.
\end{proof}

\subsection{Bounds on the individual terms in regret}

Replacing the failure event $\gF$ by $\gB$, the regret decomposition in Lemma~\ref{lem:regret-decomp} carries over here. Hence, outside the failure event $\gB$, 
\begin{align*}
    \Regret(K)
    & \le \sum_{k = 1}^{K} \sum_{h = 1}^{H} \sum_{x\in L_{k}} w_{k, h}(x) \Bigl(
        \underbrace{\innerp{\eTrans_{k}(x) - \Trans(x)}{\oV{h+1}}}_{\text{transition estimation error}} + \bTrans_{k, h}(x) + \underbrace{\innerp{\eTrans_{k}(x) - \Trans(x)}{\uV_{k, h+1} - \oV{h+1}}}_{\text{correction term}}
    \Bigr) \\ 
    &\quad + \sum_{k = 1}^{K} \sum_{h = 1}^{H} \sum_{x\in \Lambda_{k}} w_{k, h}(x) 2\bR_{k}(x) + 8 H^2 \sum_{i=1}^{m} X[\idxTrans{i}] L + 8 H^2 \sum_{i=1}^{l} X[\idxR{i}] L,
\end{align*}
where the $\bTrans_{k, h}(x)$ term is referred to as ``transition optimism'' and $2\bR_{k}(x)$ term is referred to as ``reward estimation error and optimism''.
In this subsection, we present the bounds on the above individual terms for F-EULER.

\begin{lemma}[Cumulative transition estimation error, Bernstein-style]  \label{lem:cul-est-err-b}
    Define 
    \begin{align*}
        \sC_{i}^{*} 
        &:= \frac{1}{T} \sum_{k = 1}^{K} \sum_{h = 1}^{H} \sum_{x\in \gX} w_{k, h}(x) g_i^2(\Trans(x), \oV{h+1})
        = \frac{1}{T} \sum_{k = 1}^{K} \sum_{h = 1}^{H} \E_{\pi_k} [g_i^2(P, \oV{h+1}) \vert s_{k, 1}], \\
        \sC_{i}^{\pi}
        &:= \frac{1}{T} \sum_{k = 1}^{K} \sum_{h = 1}^{H} \sum_{x\in \gX} w_{k, h}(x) g_i^2(\Trans(x), V_{h+1}^{\pi_k})
        = \frac{1}{T} \sum_{k = 1}^{K} \sum_{h = 1}^{H} \E_{\pi_k} [g_i^2(P, V_{h+1}^{\pi_k})\vert s_{k, 1}],
    \end{align*}
    where $s_{k, 1}$ denotes the initial state in the $k$th episode.
    Then for F-EULER, outside the failure event $\gB$, the sum over time of the transition estimation error satisfies that 
    \begin{align*}
        & \sum_{k = 1}^{K} \sum_{h = 1}^{H} \sum_{x\in L_{k}} w_{k, h}(x) \innerp{\eTrans_{k}(x) - \Trans(x)}{\oV{h+1}}
        \lesssim \sum_{i=1}^{m} \sqrt{\sC_{i}^{*} X[\idxTrans{i}] T} 
        + m^2 H \max_{i} S_i \max_{i} X[\idxTrans{i}] L^2, 
    \end{align*}
    and that 
    \begin{align*}
        &\sum_{k = 1}^{K} \sum_{h = 1}^{H} \sum_{x\in L_{k}} w_{k, h}(x) \innerp{\eTrans_{k}(x) - \Trans(x)}{\oV{h+1}} \\
        \lesssim & \sum_{i=1}^{m} \sqrt{\sC_{i}^{\pi} X[\idxTrans{i}] T L}
        + \sum_{i=1}^{m} H \sqrt{X[\idxTrans{i}] L} \sqrt{\Regret(K)}
        + m^2 H \max_{i} S_i \max_{i} X[\idxTrans{i}] L^2.
    \end{align*}
\end{lemma}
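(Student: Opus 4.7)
The starting point is the Bernstein-style decomposition of the transition estimation error already established in Lemma~\ref{lem:est-err-b}:
\begin{align*}
\bigl| \innerp{\eTrans_k(x) - \Trans(x)}{\oV{h+1}} \bigr|
\le \sum_{i=1}^m \phi_{i,k}(\Trans(x), \oV{h+1}, x) + \sum_{i=1}^m \sum_{j=i+1}^m 2HL\sqrt{\tfrac{S_i S_j}{\cntTrans{i,k}(x) \cntTrans{j,k}(x)}},
\end{align*}
where by definition $\phi_{i,k}(\Trans, V, x) = g_i(\Trans, V)/\sqrt{\cntTrans{i,k}(x)} + 2HL/(3\,\cntTrans{i,k}(x))$. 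I would sum this inequality against $w_{k,h}(x)$ over $k,h$ and $x\in L_k$, and bound each of the three resulting contributions separately. Only the $g_i/\sqrt{\cntTrans{i,k}}$ piece will produce a leading-order term; the $1/\cntTrans{i,k}$ piece and the cross-component piece will both fall into the stated lower-order remainder.

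For the first bound, the dominant piece is handled by a standard Cauchy--Schwarz split that separates the variance from the visit ratio:
\begin{align*}
\sum_{k,h}\sum_{x\in L_k} w_{k,h}(x) \frac{g_i(\Trans(x), \oV{h+1})}{\sqrt{\cntTrans{i,k}(x)}}
\le \sqrt{\sum_{k,h,x} w_{k,h}(x)\, g_i^2(\Trans(x),\oV{h+1})}\,\cdot\,\sqrt{\sum_{k,h}\sum_{x\in L_k}\tfrac{w_{k,h}(x)}{\cntTrans{i,k}(x)}}.
\end{align*}
The first factor equals $\sqrt{\sC_i^{*}\,T}$ by definition of $\sC_i^{*}$, and Lemma~\ref{lem:sum-vr} bounds the second factor by $2\sqrt{X[\idxTrans{i}] L}$, yielding the advertised $\bigotilde(\sqrt{\sC_i^{*} X[\idxTrans{i}] T})$ term (the extra $\sqrt{L}$ is absorbed into $\sC_i^{*}$, which already contains $4L$ via $g_i$). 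The residual $2HL/(3\cntTrans{i,k})$ term contributes $\lesssim HL^2 X[\idxTrans{i}]$ by a second application of Lemma~\ref{lem:sum-vr}, and the cross-component piece contributes $\lesssim HL^2 \max_i S_i \max_i X[\idxTrans{i}]$ by Lemma~\ref{lem:sum-mixed-vr}; both fit into the stated lower-order remainder $m^2 H \max_i S_i \max_i X[\idxTrans{i}] L^2$.

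For the second bound, the trick is to swap $\oV{h+1}$ for $V_{h+1}^{\pi_k}$ inside $g_i$ using its Lipschitz property:
\begin{align*}
\frac{g_i(\Trans(x), \oV{h+1})}{\sqrt{\cntTrans{i,k}(x)}}
= \frac{g_i(\Trans(x), V_{h+1}^{\pi_k})}{\sqrt{\cntTrans{i,k}(x)}}
+ \frac{g_i(\Trans(x), \oV{h+1}) - g_i(\Trans(x), V_{h+1}^{\pi_k})}{\sqrt{\cntTrans{i,k}(x)}}.
\end{align*}
The same Cauchy--Schwarz argument applied to the first term produces $\bigotilde(\sqrt{\sC_i^{\pi} X[\idxTrans{i}] T L})$, where now the explicit $L$ appears because $\sC_i^{\pi}$ is computed against $V^{\pi_k}$ rather than $V^{*}$ and we still pick up the $\sqrt{L}$ from the visit-ratio sum. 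The correction term is precisely the object controlled by Lemma~\ref{lem:bound-bridge} (whose proof is in turn a Cauchy--Schwarz that invokes the first property of $g_i$ in Lemma~\ref{lem:ppt-g}), giving the $H\sqrt{X[\idxTrans{i}] L}\sqrt{\Regret(K)}$ term. The residual and cross-component contributions are handled exactly as in the first bound.

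\textbf{Main obstacle.} The subtle bookkeeping is with the $\log$ factors: since $g_i$ already carries $\sqrt{L}$ inside, one must be careful not to double-count $L$ when applying Cauchy--Schwarz against the visit-ratio sum, which also contributes an $L$. The asymmetry between the two stated bounds (no $L$ inside the square root in the first, an $L$ inside in the second) reflects the accounting choice of whether that factor is absorbed into $\sC_i^{*}$ or left explicit. Certifying that the residual $1/\cntTrans{i,k}$ and cross-component terms really are lower-order, rather than silently perturbing the leading $\sqrt{\sC_i X[\idxTrans{i}] T}$ rate, is the main care point of the calculation.
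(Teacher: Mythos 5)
Your proposal matches the paper's proof essentially step for step: the same three-way split of Lemma~\ref{lem:est-err-b}'s bound, the same Cauchy--Schwarz against the visit-ratio sum (Lemma~\ref{lem:sum-vr}) for the leading $g_i/\sqrt{\cntTrans{i,k}}$ term, Lemmas~\ref{lem:sum-vr} and~\ref{lem:sum-mixed-vr} for the residual and cross-component pieces, and the add-and-subtract of $g_i(\Trans(x), V_{h+1}^{\pi_k})$ resolved by Lemma~\ref{lem:bound-bridge} for the second bound. Your observation about the $L$-bookkeeping asymmetry between the two bounds is also consistent with what the paper's own derivation shows.
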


\begin{proof}
    Outside the failure event $\gB$, by lemma~\ref{lem:est-err-b}, 
    \begin{align}
        &\sum_{k = 1}^{K} \sum_{h = 1}^{H} \sum_{x\in L_{k}} w_{k, h}(x) \innerp{\eTrans_{k}(x) - \Trans(x)}{\oV{h+1}} \nonumber \\
        \le & \sum_{k = 1}^{K} \sum_{h = 1}^{H} \sum_{x\in L_{k}} w_{k, h}(x)
        \left(  
            \sum_{i=1}^{m} \phi_{i, k}(\Trans(x), \oV{h+1}, x) + \sum_{i=1}^{m} \sum_{j=i+1}^{m} 2HL \sqrt{\frac{S_i S_j}{\cntTrans{i, k}(x) \cntTrans{j, k}(x)}}
        \right) \nonumber \\
        = & \sum_{k = 1}^{K} \sum_{h = 1}^{H} \sum_{x\in L_{k}} w_{k, h}(x)
        \vast(   
            \sum_{i=1}^{m} \frac{g_i(\Trans(x), \oV{h+1})}{\sqrt{\cntTrans{i, k}(x)}} \label{eqn:cul-est-err-b-decomp-1} \\
            &\qquad \qquad \qquad \qquad \qquad \qquad 
            + \sum_{i=1}^{m} \frac{2HL}{3\cntTrans{i, k}(x)}
            + \sum_{i=1}^{m} \sum_{j=i+1}^{m} 2HL \sqrt{\frac{S_i S_j}{\cntTrans{i, k}(x) \cntTrans{j, k}(x)}}
        \vast). \label{eqn:cul-est-err-b-decomp-2}
    \end{align}
    By the Cauchy-Schwarz inequality and Lemma~\ref{lem:sum-vr}, the term~\eqref{eqn:cul-est-err-b-decomp-1} is bounded by 
    \begin{align*}
        &\sum_{k = 1}^{K} \sum_{h = 1}^{H} \sum_{x\in L_{k}} w_{k, h}(x) \left(\sum_{i=1}^{m} \frac{g_i(\Trans(x), \oV{h+1})}{\sqrt{\cntTrans{i, k}(x)}} \right) \\
        \le & \sum_{i=1}^{m} \sqrt{\sum_{k = 1}^{K} \sum_{h = 1}^{H} \sum_{x\in L_{k}} w_{k, h}(x) g_i^2(\Trans(x), \oV{h+1})} 
        \cdot \sqrt{\sum_{k = 1}^{K} \sum_{h = 1}^{H} \sum_{x\in L_{k}} \frac{w_{k, h}(x)}{\cntTrans{i, k}(x)}} \\
        \le & \sum_{i=1}^{m} 2 \sqrt{\sC_{i}^{*} X[\idxTrans{i}] T L}.
    \end{align*}
    By Lemma~\ref{lem:sum-vr} and Lemma~\ref{lem:sum-mixed-vr}, the terms in~\eqref{eqn:cul-est-err-b-decomp-2} are bounded by 
    \begin{align*}
        &\sum_{k = 1}^{K} \sum_{h = 1}^{H} \sum_{x\in L_{k}} w_{k, h}(x) \left(
            \sum_{i=1}^{m} \frac{2HL}{3\cntTrans{i, k}(x)}
            + \sum_{i=1}^{m} \sum_{j=i+1}^{m} 2HL \sqrt{\frac{S_i S_j}{\cntTrans{i, k}(x) \cntTrans{j, k}(x)}}
        \right) \\
        \le & \frac{2}{3} HL \sum_{i=1}^{m} 4X[\idxTrans{i}] L + 2 HL \max_{i} S_i \sum_{i=1}^{m} \sum_{j=i+1}^{m} 4 \sqrt{X[\idxTrans{i}] X[\idxTrans{j}]} L \\
        \lesssim & m^2 H \max_{i} S_i \max_{i} X[\idxTrans{i}] L^2.
    \end{align*}
    Combining the above bounds on~\eqref{eqn:cul-est-err-b-decomp-1} and~~\eqref{eqn:cul-est-err-b-decomp-2} yields the first bound in this lemma.
    To show the second bound, we bound~\eqref{eqn:cul-est-err-b-decomp-1} otherwise, by 
    \begin{align*}
        &\sum_{k = 1}^{K} \sum_{h = 1}^{H} \sum_{x\in L_{k}} w_{k, h}(x) \left(\sum_{i=1}^{m} \frac{g_i(\Trans(x), \oV{h+1})}{\sqrt{\cntTrans{i, k}(x)}} \right) \\
        = & \sum_{i=1}^{m} \sum_{k = 1}^{K} \sum_{h = 1}^{H} \sum_{x\in L_{k}} w_{k, h}(x) \left( 
            \frac{g_i(\Trans(x), V_{h+1}^{\pi_k})}{\sqrt{\cntTrans{i, k}(x)}} 
            + \frac{g_i(\Trans(x), \oV{h+1}) - g_i(\Trans(x), V_{h+1}^{\pi_k})}{\sqrt{\cntTrans{i, k}(x)}} 
        \right) \\
        \le & \sum_{i=1}^{m} 2\sqrt{\sC_{i}^{\pi} X[\idxTrans{i}] T L}
        + \sum_{i=1}^{m} 2 \sqrt{2L} H \sqrt{X[\idxTrans{i}] L} \sqrt{\Regret(K)},
    \end{align*}
    where the inequality is due to the bound bridge (Lemma~\ref{lem:bound-bridge}).
\end{proof}

\begin{lemma}[Cumulative transition optimism, Bernstein-style]  \label{lem:cul-bonus-b}
    For F-EULER, outside the failure event $\gB$, the sum over time of the transition optimism satisfies that 
    \begin{align*}
        \sum_{k = 1}^{K} \sum_{h = 1}^{H} \sum_{x\in L_{k}} w_{k, h}(x) \bTrans_{k, h}(x)
        &\lesssim \sum_{i=1}^{m} \sqrt{\sC_{i}^{*} X[\idxTrans{i}] T} + m^3 H^2 \max_{i} S_i \max_{i} X[\idxTrans{i}] L^2 \\
        &\quad + m^{1.5} l H^{1.5} (\max_{i} S_i)^{0.25} (\max_{i} X[\idxTrans{i}])^{0.75} (\max_{i} X[\idxR{i}])^{0.5} L^2,
    \end{align*}
    and that 
    \begin{align*}
        \sum_{k = 1}^{K} \sum_{h = 1}^{H} \sum_{x\in L_{k}} w_{k, h}(x) \bTrans_{k, h}(x)
        &\lesssim \sum_{i=1}^{m} \sqrt{\sC_{i}^{\pi} X[\idxTrans{i}] T L}
        + \sum_{i=1}^{m} H \sqrt{X[\idxTrans{i}] L} \sqrt{\Regret(K)} \\
        &\quad + m^3 H^2 \max_{i} S_i \max_{i} X[\idxTrans{i}] L^2 \\
        &\quad + m^{1.5} l H^{1.5} (\max_{i} S_i)^{0.25} (\max_{i} X[\idxTrans{i}])^{0.75} (\max_{i} X[\idxR{i}])^{0.5} L^2.
    \end{align*}
\end{lemma}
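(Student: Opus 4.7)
The goal is to control
\[
\nlsum_{k}\nlsum_{h}\nlsum_{x\in L_k} w_{k,h}(x)\,\bTrans_{k,h}(x)
\]
with $\bTrans_{k,h}$ as in~\eqref{eqn:trans-bonus-f-euler}. The natural strategy is to split $\bTrans_{k,h}$ into its four pieces and bound each separately, reducing everything to the building blocks already available: the Lipschitzness of $g_i$ (Lemma~\ref{lem:ppt-g}), the sum-of-visit-ratio bounds (Lemmas~\ref{lem:sum-vr} and~\ref{lem:sum-mixed-vr}), and the cumulative confidence-radius bound (Lemma~\ref{lem:cul-conf-r-b}).

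The first, dominant piece is $\sum_i g_i(\eTrans_k(x),\uV_{k,h+1})/\sqrt{\cntTrans{i,k}(x)}$. I would first use Lemma~\ref{lem:ppt-g} to replace this by the ``oracle'' quantity $g_i(\Trans(x),\oV{h+1})/\sqrt{\cntTrans{i,k}(x)}$, paying two correction terms: a variance-type term $\sqrt{2L}\|\uV_{k,h+1}-\oV{h+1}\|_{2,\eTrans_k(x)}/\sqrt{\cntTrans{i,k}(x)}$ (bounded above by the analogous $\uV-\lV$ norm since $\lV\le\oV\le\uV$) and a cross-count term $3\sqrt{2}HL\sum_j 1/\sqrt{\cntTrans{i,k}(x)\cntTrans{j,k}(x)}$. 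For the resulting main term, Cauchy--Schwarz followed by Lemma~\ref{lem:sum-vr} yields
\[
\nlsum_{k,h,x\in L_k} w_{k,h}(x)\tfrac{g_i(\Trans(x),\oV{h+1})}{\sqrt{\cntTrans{i,k}(x)}}
\le 2\sqrt{\sC_i^{*}\,X[\idxTrans{i}]\,T\,L},
\]
by definition of $\sC_i^{*}$; summing over $i$ gives the leading $\sum_i\sqrt{\sC_i^{*}X[\idxTrans{i}]T}$ term (absorbing $L$ into $\lesssim$).

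The three remaining pieces of $\bTrans_{k,h}$, together with the two correction terms above, are all lower-order. The variance-type pieces of the form $\sqrt{L}\|\uV_{k,h+1}-\lV_{k,h+1}\|_{2,\eTrans_k(x)}/\sqrt{\cntTrans{i,k}(x)}$ are handled by Cauchy--Schwarz, Lemma~\ref{lem:sum-vr}, and then Lemma~\ref{lem:cul-conf-r-b}; this is where the ugly second lower-order term $m^{1.5} l H^{1.5}(\max_i S_i)^{0.25}(\max_i X[\idxTrans{i}])^{0.75}(\max_i X[\idxR{i}])^{0.5}L^2$ comes from, since $\sqrt{G_1}$ inherits a contribution $l H^{1.5}(\max_i X[\idxR{i}])^{0.5}L^{1.5}$ from the reward part of $G_1$. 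The cross-component $HL\sqrt{S_iS_j/\cntTrans{i,k}\cntTrans{j,k}}$ term is killed directly by Lemma~\ref{lem:sum-mixed-vr}, yielding the $m^3 H^2\max_iS_i\max_i X[\idxTrans{i}]L^2$ term (after multiplying by $HL\max_iS_i$ and summing over $i,j$). The $HL/\cntTrans{i,k}(x)$ term is even smaller by Lemma~\ref{lem:sum-vr}. Finally, the subtle Lipschitz correction $3\sqrt{2}HL/\sqrt{\cntTrans{i,k}\cntTrans{j,k}}$ introduced in the first step is again absorbed by Lemma~\ref{lem:sum-mixed-vr}. For the second bound of the lemma, the only change is the main term: after obtaining $\sum_i g_i(\Trans(x),\oV{h+1})/\sqrt{\cntTrans{i,k}(x)}$, insert $\pm g_i(\Trans(x),V_{h+1}^{\pi_k})$ and apply the bound bridge (Lemma~\ref{lem:bound-bridge}) to produce the $\sum_i\sqrt{\sC_i^{\pi}X[\idxTrans{i}]TL}$ main term and the extra $\sum_i H\sqrt{X[\idxTrans{i}]L}\sqrt{\Regret(K)}$ contribution.

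The only genuine subtlety, and therefore the main obstacle, is the confidence-radius term: Lemma~\ref{lem:cul-conf-r-b} bounds $\sum w_{k,h}(x)\E_{\Trans}[(\uV-\lV)^2]$, whereas Cauchy--Schwarz hands us the $\eTrans_k$-weighted version $\|\uV-\lV\|_{2,\eTrans_k(x)}^2$. I would bridge these by Holder's inequality, $|\innerp{\eTrans_k(x)-\Trans(x)}{(\uV-\lV)^2}|\le \|\eTrans_k(x)-\Trans(x)\|_1 H^2$, and control $\|\eTrans_k(x)-\Trans(x)\|_1$ via the inverse-telescoping argument of Lemma~\ref{lem:holder} together with the failure event $\gB_2$, contributing only $\mathrm{polylog}$ lower-order terms. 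Then everything collapses into the two claimed expressions.
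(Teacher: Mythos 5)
Your overall architecture matches the paper's: decompose $\bTrans_{k,h}$ into its four pieces, use the Lipschitzness of $g_i$ (the paper packages this as Lemma~\ref{lem:ppt-phi}, built on Lemma~\ref{lem:ppt-g}) to pass from $g_i(\eTrans_k(x),\uV_{k,h+1})$ to $g_i(\Trans(x),\oV{h+1})$, bound the main term by Cauchy--Schwarz plus Lemma~\ref{lem:sum-vr} to get $\sum_i\sqrt{\sC_i^{*}X[\idxTrans{i}]TL}$, kill the cross-component and $1/\cntTrans{i,k}$ pieces with Lemmas~\ref{lem:sum-mixed-vr} and~\ref{lem:sum-vr}, and obtain the second bound by inserting $\pm g_i(\Trans(x),V_{h+1}^{\pi_k})$ and invoking the bound bridge. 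All of that is exactly what the paper does.

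The gap is in the one step you yourself flag as the main obstacle. After Cauchy--Schwarz you need to control $\sum_{k,h,x\in L_k}w_{k,h}(x)\bigl\|\uV_{k,h+1}-\lV_{k,h+1}\bigr\|^2_{2,\eTrans_k(x)}$, and you propose to write $\innerpc{\eTrans_k(x)-\Trans(x)}{(\uV-\lV)^2}\le\|\eTrans_k(x)-\Trans(x)\|_1 H^2$ with $\|\eTrans_k(x)-\Trans(x)\|_1\lesssim\sum_j\sqrt{S_jL/\cntTrans{j,k}(x)}$ from $\gB_2$, claiming this contributes ``only polylog lower-order terms.'' It does not: the $w_{k,h}$-weighted sum over time of $1/\sqrt{\cntTrans{j,k}(x)}$ is of order $\sqrt{X[\idxTrans{j}]TL}$ (only the sum of $1/\cntTrans{j,k}$ is $T$-free, by Lemma~\ref{lem:sum-vr}), so your correction is of order $H^2\sqrt{\max_j S_j}\,L\sqrt{\max_j X[\idxTrans{j}]T}$. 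Sitting under the outer square root from Cauchy--Schwarz, this injects a $T^{1/4}$-order term into the final bound, which is not among the $T$-free lower-order terms the lemma asserts and would degrade the problem-dependent guarantee of Theorem~\ref{thm:f-euler} (e.g.\ when the $\gQ_i$ are tiny, the paper's bound is $T$-free up to the reward part, yours is not). The paper avoids this by instead bounding $\innerpc{\eTrans_k(x)-\Trans(x)}{(\uV_{k,h+1}-\lV_{k,h+1})^2}\le H\bigl|\innerpc{\eTrans_k(x)-\Trans(x)}{\uV_{k,h+1}-\lV_{k,h+1}}\bigr|$ and invoking the cumulative correction-term lemma (Lemma~\ref{lem:cul-corr-b}); that lemma pairs the $1/\sqrt{\cntTrans{}}$ factors against the $T$-free cumulative squared confidence radius $G_1$ (via $\gB_9$ and Cauchy--Schwarz) rather than against the total visitation mass $T$, which is what produces the stated $T$-free lower-order terms. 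Replacing your Hölder step with that argument closes the gap; the rest of your plan goes through as written.
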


\begin{proof}
    Outside the failure event $\gB$, for all $k\in [K], h\in [H]$ and $x\in \gX$, by definition, 
    \begin{align*}
        \bTrans_{k, h}(x)
        &= \sum_{i=1}^{m} \frac{g_{i}(\eTrans_k(x), \uV_{k, h+1})}{\sqrt{\cntTrans{i, k}(x)}}
        + \sum_{i=1}^{m} \frac{\sqrt{2L} \left\| \uV_{k, h+1} - \lV_{k, h+1} \right\|_{2, \eTrans_k(x)}}{\sqrt{\cntTrans{i, k}(x)}} \\
        &\quad + \sum_{i=1}^{m} \sum_{j=i+1}^{m} 11 HL \sqrt{\frac{S_i S_j}{\cntTrans{i, k}(x) \cntTrans{j, k}(x)}} + \sum_{i=1}^{m} \frac{5 HL}{\cntTrans{i, k}(x)} \\
        &= \sum_{i=1}^{m} \phi_{i, k}(\eTrans_{k}(x), \uV_{h+1}, x) + \sum_{i=1}^{m} \frac{\sqrt{2L} \left\| \uV_{k, h+1} - \lV_{k, h+1} \right\|_{2, \eTrans_k(x)}}{\sqrt{\cntTrans{i, k}(x)}} \\
        &\quad + \sum_{i=1}^{m} \sum_{j=i+1}^{m} 11 HL \sqrt{\frac{S_i S_j}{\cntTrans{i, k}(x) \cntTrans{j, k}(x)}} + \sum_{i=1}^{m} \frac{13 HL}{3\cntTrans{i, k}(x)} \\
        &\le \sum_{i=1}^{m} \phi_{i, k}(\Trans(x), \oV{h+1}, x) 
        + \sum_{i=1}^{m} \frac{2\sqrt{2L} \left\| \uV_{k, h+1} - \lV_{k, h+1} \right\|_{2, \eTrans_{k}(x)}}{\sqrt{\cntTrans{i, k}(x)}} \\
        &\quad + \sum_{i=1}^{m} \frac{3\sqrt{2} HL}{\sqrt{\cntTrans{i, k}(x)}} \sum_{j=1}^{m} \frac{1}{\sqrt{\cntTrans{j, k}(x)}} 
        + \sum_{i=1}^{m} \sum_{j=i+1}^{m} 11 HL \sqrt{\frac{S_i S_j}{\cntTrans{i, k}(x) \cntTrans{j, k}(x)}} + \sum_{i=1}^{m} \frac{13 HL}{3\cntTrans{i, k}(x)} \\
        &\le \sum_{i=1}^{m} \phi_{i, k}(\Trans(x), \oV{h+1}, x) 
        + \sum_{i=1}^{m} \frac{2\sqrt{2L} \left\| \uV_{k, h+1} - \lV_{k, h+1} \right\|_{2, \eTrans_{k}(x)}}{\sqrt{\cntTrans{i, k}(x)}} \\
        &\quad + \sum_{i=1}^{m} \sum_{j=i+1}^{m} 20 HL \sqrt{\frac{S_i S_j}{\cntTrans{i, k}(x) \cntTrans{j, k}(x)}}
        + \sum_{i=1}^{m} \frac{9 HL}{\cntTrans{i, k}(x)}
    \end{align*}
    where the first inequality is due to Lemma~\ref{lem:ppt-phi}.
    Substituting the definition of $\phi_{i, k}$ into the above bound on $\bTrans_{k, h}(x)$, we have 
    \begin{align}
        & \sum_{k = 1}^{K} \sum_{h = 1}^{H} \sum_{x\in L_{k}} w_{k, h}(x) \bTrans_{k, h}(x) \nonumber \\
        \lesssim & \sum_{k = 1}^{K} \sum_{h = 1}^{H} \sum_{x\in L_{k}} w_{k, h}(x) \Biggl(
            \sum_{i=1}^{m} \frac{\sqrt{L} \left\| \uV_{k, h+1} - \lV_{k, h+1} \right\|_{2, \eTrans_{k}(x)}}{\sqrt{\cntTrans{i, k}(x)}} \label{eqn:cul-trans-o-b-decomp-1} \\
            &\quad + \sum_{i=1}^{m} \frac{g_i(\Trans(x), \oV{h+1})}{\sqrt{\cntTrans{i, k}(x)}} 
            + \sum_{i=1}^{m} \frac{HL}{\cntTrans{i, k}(x)}
            + \sum_{i=1}^{m} \sum_{j=i+1}^{m} HL \sqrt{\frac{S_i S_j}{\cntTrans{i, k}(x) \cntTrans{j, k}(x)}} \label{eqn:cul-trans-o-b-decomp-2}
        \Biggr).
    \end{align}
    By the Cauchy-Schwarz inequality and the inequality that $\sqrt{a + b} \le \sqrt{a} + \sqrt{b}$ for all $a, b \ge 0$, we bound the term in~\eqref{eqn:cul-trans-o-b-decomp-1} by
    \begin{align}
        & \sum_{k = 1}^{K} \sum_{h = 1}^{H} \sum_{x\in L_{k}} w_{k, h}(x) \frac{\left\| \uV_{k, h+1} - \lV_{k, h+1} \right\|_{2, \eTrans_{k}(x)}}{\sqrt{\cntTrans{i, k}(x)}} \nonumber \\
        \le & \sqrt{\sum_{k = 1}^{K} \sum_{h = 1}^{H} \sum_{x\in L_{k}} \frac{w_{k, h}(x)}{\cntTrans{i, k}(x)}} 
        \cdot \sqrt{\sum_{k = 1}^{K} \sum_{h = 1}^{H} \sum_{x\in L_{k}} w_{k, h}(x) \left\| \uV_{k, h+1} - \lV_{k, h+1} \right\|^2_{2, \eTrans_{k}(x)}} \nonumber \\
        \le & 2\sqrt{X[\idxTrans{i}] L} \cdot \Vast(
            \sqrt{\sum_{k = 1}^{K} \sum_{h = 1}^{H} \sum_{x\in L_{k}} w_{k, h}(x) \left\| \uV_{k, h+1} - \lV_{k, h+1} \right\|^2 _{2, \Trans(x)}} \label{eqn:cul-trans-o-b-v1} \\
            & \quad + \sqrt{\sum_{k = 1}^{K} \sum_{h = 1}^{H} \sum_{x\in L_{k}} w_{k, h}(x) \innerp{\eTrans_{k}(x) - \Trans(x)}{(\uV_{k, h+1} - \lV_{k, h+1})^2}}
        \Vast), \label{eqn:cul-trans-o-b-v2}
    \end{align}
    where the term in~\eqref{eqn:cul-trans-o-b-v1} is bounded, due to Lemma~\ref{lem:cul-conf-r-b}, by 
    \begin{align*}
        \sum_{k = 1}^{K} \sum_{h = 1}^{H} \sum_{x\in L_{k}} w_{k, h}(x) \left\| \uV_{k, h+1} - \lV_{k, h+1} \right\|^2 _{2, \Trans(x)} \lesssim G_1, 
    \end{align*}
    and the term in~\eqref{eqn:cul-trans-o-b-v2} is bounded, due to Lemma~\ref{lem:cul-corr-b}, by 
    \begin{align*}
        & \sum_{k = 1}^{K} \sum_{h = 1}^{H} \sum_{x\in L_{k}} w_{k, h}(x) \innerp{\eTrans_{k}(x) - \Trans(x)}{(\uV_{k, h+1} - \lV_{k, h+1})^2} \\
        \le & H \sum_{k = 1}^{K} \sum_{h = 1}^{H} \sum_{x\in L_{k}} w_{k, h}(x) \left|\innerp{\eTrans_{k}(x) - \Trans(x)}{\uV_{k, h+1} - \lV_{k, h+1}} \right| \\
        \lesssim & m^3 H^3 (\max_{i} S_i)^{1.5} \max_{i} X[\idxTrans{i}] L^{2.5} + m l H^{2.5} (\max_{i} S_i)^{0.5} (\max_{i} X[\idxTrans{i}])^{0.5} (\max_{i} X[\idxR{i}])^{0.5} L^{2.5}.
    \end{align*}
    By the proof of Lemma~\ref{lem:cul-est-err-b}, the same bounds as those on the cumulative transition estimation error applies to~\eqref{eqn:cul-trans-o-b-decomp-2}.
    Combining the above bounds on~\eqref{eqn:cul-trans-o-b-decomp-1} and~\eqref{eqn:cul-trans-o-b-decomp-2}, we obtain the first bound that 
    \begin{align*}
        & \sum_{k = 1}^{K} \sum_{h = 1}^{H} \sum_{x\in L_{k}} w_{k, h}(x) \bTrans_{k, h}(x) \\
        \lesssim & \sum_{i=1}^{m} \sqrt{\sC_{i}^{*} X[\idxTrans{i}] T} 
        + m^2 H \max_{i} S_i \max_{i} X[\idxTrans{i}] L^2 + m\sqrt{\max_{i}X[\idxTrans{i}] L} \sqrt{G_1} \\
        &\quad + m\sqrt{\max_{i} X[\idxTrans{i}] L} \sqrt{ m^3 H^3 (\max_{i} S_i)^{1.5} \max_{i} X[\idxTrans{i}] L^{2.5} } \\
        &\quad + m\sqrt{\max_{i} X[\idxTrans{i}] L} \sqrt{ m l H^{2.5} (\max_{i} S_i)^{0.5} (\max_{i} X[\idxTrans{i}])^{0.5} (\max_{i} X[\idxR{i}])^{0.5} L^{2.5}} \\
        \lesssim & \sum_{i=1}^{m} \sqrt{\sC_{i}^{*} X[\idxTrans{i}] T} 
        + m^2 H \max_{i} S_i \max_{i} X[\idxTrans{i}] L^2 \\
        &\quad + m (\max_{i}X[\idxTrans{i}])^{0.5} L^{0.5} m^2 H^2 \max_{i} S_i (\max_{i} X[\idxTrans{i}])^{0.5} L^{1.5} \\ 
        &\quad + m (\max_{i}X[\idxTrans{i}])^{0.5} L^{0.5} l H^{1.5} (\max_{i} X[\idxR{i}])^{0.5} L^{1.5} \\
        &\quad + m (\max_{i}X[\idxTrans{i}])^{0.5} L^{0.5} m^{1.5} H^{1.5} (\max_{i} S_i)^{0.75} (\max_{i} X[\idxTrans{i}])^{0.5} L^{1.25} \\
        &\quad + m (\max_{i}X[\idxTrans{i}])^{0.5} L^{0.5} m^{0.5} l^{0.5} H^{1.25} (\max_{i} S_i)^{0.25} (\max_{i} X[\idxTrans{i}])^{0.25} (\max_{i} X[\idxR{i}])^{0.25} L^{1.25} \\
        \lesssim & \sum_{i=1}^{m} \sqrt{\sC_{i}^{*} X[\idxTrans{i}] T} + m^3 H^2 \max_{i} S_i \max_{i} X[\idxTrans{i}] L^2 \\
        &\quad + m^{1.5} l H^{1.5} (\max_{i} S_i)^{0.25} (\max_{i} X[\idxTrans{i}])^{0.75} (\max_{i} X[\idxR{i}])^{0.5} L^2,
    \end{align*}
    and the second bound that 
    \begin{align*}
        & \sum_{k = 1}^{K} \sum_{h = 1}^{H} \sum_{x\in L_{k}} w_{k, h}(x) \bTrans_{k, h}(x) \\
        \lesssim & \sum_{i=1}^{m} \sqrt{\sC_{i}^{\pi} X[\idxTrans{i}] T} 
        + \sum_{i=1}^{m} H \sqrt{X[\idxTrans{i}] L} \sqrt{\Regret(K)} \\
        &\quad + m^3 H^2 \max_{i} S_i \max_{i} X[\idxTrans{i}] L^2  + m^{1.5} l H^{1.5} (\max_{i} S_i)^{0.25} (\max_{i} X[\idxTrans{i}])^{0.75} (\max_{i} X[\idxR{i}])^{0.5} L^2.
    \end{align*}
\end{proof}

\begin{lemma}[Cumulative reward estimation error and optimism, Bernstein-style] \label{lem:cul-bonus-r-b}
    For F-EULER, outside the failure event $\gB$, the sum over time of the reward estimation error and optimism satisfies that 
    \begin{align*}
        \sum_{k = 1}^{K} \sum_{h = 1}^{H} \sum_{x\in \Lambda_{k}} w_{k, h}(x) 2\bR_{k}(x_{k, h})  
        & \lesssim \sum_{i=1}^{l} \sqrt{ \gR_{i} X[\idxR{i}] T} L + \sum_{i=1}^{l} X[\idxR{i}] L^2,
    \end{align*}
    and that 
    \begin{align*}
        \sum_{k = 1}^{K} \sum_{h = 1}^{H} \sum_{x\in \Lambda_{k}} w_{k, h}(x) 2\bR_{k}(x_{k, h})  
        &\lesssim \sum_{i=1}^{l} \sqrt{ \gG^2 X[\idxR{i}] K} L + \sum_{i=1}^{l} X[\idxR{i}] L^2.
    \end{align*}
\end{lemma}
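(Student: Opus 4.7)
The plan is to split $\bR_{k}(x)$ into its sample-variance part $\sum_{i=1}^{l}\sqrt{4\SV[\er_i(x)]L/\cntR{i,k}(x)}$ and its $1/\cntR{i,k}(x)$ part, control each piece separately, then use two different arguments to pass from $\sqrt{\cdot T}$ to $\sqrt{\cdot K}$ for the $\gG$-based bound. The $1/\cntR{i,k}(x)$ term is immediate: a direct application of Lemma~\ref{lem:sum-vr} gives $\sum_{k,h,x\in\Lambda_k} w_{k,h}(x)\cdot 14L/(3\cntR{i,k}(x))\le \tfrac{14}{3}L\cdot 4X[\idxR{i}]L\lesssim X[\idxR{i}]L^2$, accounting for the second summand in both target bounds once summed over $i\in[l]$.

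Next I would replace the sample variance $\SV[\er_i(x)]$ by the population variance $\Var(r_i(x))$ using failure event $\gB_6$, which (for $x\in\Lambda_k$, where $\cntR{i,k}(x)\ge 2$) yields $\sqrt{\SV[\er_i(x)]}\le\sqrt{\Var(r_i(x))}+2\sqrt{L/\cntR{i,k}(x)}$. The residual term produced by the $2\sqrt{L/\cntR{i,k}(x)}$ contribution is again of order $\sqrt{L/\cntR{i,k}(x)}\cdot\sqrt{L/\cntR{i,k}(x)}=L/\cntR{i,k}(x)$ after multiplying by the $\sqrt{L/\cntR{i,k}(x)}$ factor in the bonus, and Lemma~\ref{lem:sum-vr} again absorbs it into the $X[\idxR{i}]L^2$ lower-order term.

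For the surviving leading term $\sum_{k,h,x\in\Lambda_k}w_{k,h}(x)\sqrt{\Var(r_i(x))L/\cntR{i,k}(x)}$, I would apply Cauchy--Schwarz to factor it as
\[
\sqrt{L}\,\Bigl(\nlsum_{k,h,x\in\Lambda_k}w_{k,h}(x)\Var(r_i(x))\Bigr)^{1/2}\Bigl(\nlsum_{k,h,x\in\Lambda_k}\tfrac{w_{k,h}(x)}{\cntR{i,k}(x)}\Bigr)^{1/2}.
\]
Lemma~\ref{lem:sum-vr} bounds the second factor by $2\sqrt{X[\idxR{i}]L}$, so everything reduces to estimating $\sum_{k,h,x}w_{k,h}(x)\Var(r_i(x))$ in two different ways. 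For the worst-case bound I would simply use $\Var(r_i(x))\le \gR_i$ and $\sum_{k,h,x}w_{k,h}(x)= T$, producing $\sqrt{\gR_i T}\cdot 2\sqrt{X[\idxR{i}]L}\cdot\sqrt{L}=2L\sqrt{\gR_i X[\idxR{i}]T}$, which is the first claimed bound.

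The main obstacle is the $\gG$-based bound, where I need a per-episode bound of the form $\sum_{h=1}^{H}\E_{\pi_k}[\Var(r_i(x_h))]\le\gG^2$. I would obtain this by conditioning on the state--action trajectory and invoking the law of total variance: the stochastic reward components are independent across time given the trajectory, so
\[
\Var_{\pi_k}\Bigl(\nlsum_{h=1}^{H} r_i(x_h)\Bigr)=\E_{\pi_k}\Bigl[\nlsum_{h=1}^{H}\Var(r_i(x_h)\mid x_h)\Bigr]+\Var_{\pi_k}\Bigl(\nlsum_{h=1}^{H}R_i(x_h)\Bigr),
\]
so the first term dominates the desired sum, while the left-hand side is $\le \gG^2$ because $\sum_h r_i(x_h)\in[0,\gG]$ almost surely by the definition of $\gG$. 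Summing over $k\in[K]$ yields $\sum_{k,h,x}w_{k,h}(x)\Var(r_i(x))\le K\gG^2$, and plugging back into the Cauchy--Schwarz bound produces the second claim $\sum_{i=1}^{l}\sqrt{\gG^2 X[\idxR{i}]K}L+\sum_{i=1}^{l}X[\idxR{i}]L^2$. The only delicate point is verifying conditional independence of the reward noise, which is a standard modeling assumption consistent with how $\SV[\er_i(x)]$ is computed online.
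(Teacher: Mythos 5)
Your proposal is correct and follows essentially the same route as the paper: replace the sample variance by the population variance via failure event $\gB_6$, absorb the residual and the $14L/(3\cntR{i,k})$ piece into $\sum_i X[\idxR{i}]L^2$ using Lemma~\ref{lem:sum-vr}, apply Cauchy--Schwarz to the leading term, and bound $\sum_{k,h,x}w_{k,h}(x)\Var(r_i(x))$ either by $\gR_i T$ or by $K\gG^2$. Your law-of-total-variance packaging of the $\gG$-based step is only a cosmetic variant of the paper's direct bound on the trajectory-conditional variance; both rest on the same conditional independence of the reward noise across steps and on $\sum_h r_i(x_{k,h})\le\gG$.
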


\begin{proof}
    The treatment of the cumulative $\bR_{k, h}(x)$ is essentially the same as the proof of Lemma 8 in~\citep{zanette2019tighter}.
    Recall that $\gR_{i} := \max_{x\in \gX} \left\{ \Var [r_{i}(x)] \right\}$.
    Outside the failure event $\gB$ (specifically, $\gB_6$), 
    \begin{align}
        & \sum_{k=1}^{K} \sum_{h=1}^{H} \sum_{x\in \Lambda_{k}} w_{k, h}(x) \bR_{k}(x_{k, h}) \nonumber \\
        =& \sum_{k=1}^{K} \sum_{h=1}^{H} \sum_{x\in \Lambda_{k}} w_{k, h}(x) \sum_{i=1}^{l}  \left(\sqrt{\frac{2 \SV[\er_i(x)] L}{\cntR{i, k}(x)}}+ \frac{14 L}{3 \cntR{i, k}(x)} \right) \nonumber \\ 
        \le & \sum_{k=1}^{K} \sum_{h=1}^{H} \sum_{x\in \Lambda_{k}} w_{k, h}(x) \sum_{i=1}^{l} \left(\sqrt{\frac{2 \Var(r_i(x)) L}{\cntR{i, k}(x)}} + \frac{\sqrt{4 L}}{\cntR{i, k}(x)} + \frac{14 L}{3 \cntR{i, k}(x)} \right) \nonumber \\
        \lesssim & \sqrt{L} \sum_{i=1}^{l} \sqrt{\sum_{k=1}^{K} \sum_{h=1}^{H} \sum_{x\in \Lambda_k} \frac{w_{k, h}(x)}{\cntR{i, k}(x)}} \sqrt{\sum_{k=1}^{K} \sum_{h=1}^{H} \sum_{x\in \Lambda_k} w_{k, h}(x)  \Var(r_i(x))}  + X[\idxR{i}] L^2 \label{eqn:cul-bonus-r-b-decomp} \\
        \lesssim & \sum_{i=1}^{l} \sqrt{ \gR_{i} X[\idxR{i}] T} L + X[\idxR{i}]L^2, \nonumber
     \end{align}
    where the second inequality is due to the Cauchy-Schwarz inequality, and the last inequality is due to Lemma \ref{lem:sum-vr}.

    Let $s_{k, h} \in \gS$ denote the state at step $h$ of episode $k$ and $x_{k, h} = (s_{k, h}, \pi_{k}(s_{k, h}, h))$.
    Recall that $\sum_{h=1}^{H} r_i(x_{k, h}) \le \gG$ for any sequence $\{x_{k, h}\}_{h=1}^{H}$.
    To obtain the second bound, consider 
    \begin{align*}
        \Var \left[ \sum_{h=1}^{H} r_i(x_{k, h}) \middle| \{s_{k, h}\}_{h=1}^{H} \right]  
        \le \E \left[ \left(\sum_{h=1}^{H} r_i(x_{k, h})\right)^2 \middle| \{s_{k, h}\}_{h=1}^{H} \right]
        \le \gG^2.
    \end{align*}    
    Take the expectation over the trajectory $\{s_{k, h}\}_{h=1}^{H}$ yields 
    \begin{align*}
        \E_{\{s_{k, h}\}_{h=1}^{H}} \left[ \Var \left[ \sum_{h=1}^{H} r_i(x_{k, h}) \middle| \{s_{k, h}\}_{h=1}^{H} \right] \right] \le \gG^2.
    \end{align*}
    Since $r_{i}(\cdot)$ has independent randomness in different steps, in an alternative notation and taking sum over $k\in [K]$, we have 
    \begin{align*}
        \sum_{k=1}^{K} \sum_{h=1}^{H} \sum_{x\in \Lambda_k} w_{k, h}(x) \Var(r_i(x)) \le K \gG^2.
    \end{align*}
    Substituting the above into~\eqref{eqn:cul-bonus-r-b-decomp} yields 
    \begin{align*}
        \sum_{k=1}^{K} \sum_{h=1}^{H} \sum_{x\in \Lambda_{k}} w_{k, h}(x) 2\bR_{k}(x_{k, h}) 
        \lesssim \sum_{i=1}^{l}\sqrt{ \gG^2 X[\idxR{i}] K} L + \sum_{i=1}^{l} X[\idxR{i}] L^2.
    \end{align*}
\end{proof}

\begin{lemma}[Cumulative correction term, Bernstein-style]  \label{lem:cul-corr-b}
    For F-EULER, outside the failure event $\gB$, the sum over time of the correction term satisfies that 
    \begin{align*}
        & \sum_{k = 1}^{K} \sum_{h = 1}^{H} \sum_{x\in L_{k}} w_{k, h}(x) \innerp{\eTrans_{k}(x) - \Trans(x)}{\uV_{k, h+1} - \oV{h+1}} \\ 
        \le & \sum_{k = 1}^{K} \sum_{h = 1}^{H} \sum_{x\in L_{k}} w_{k, h}(x) \left|\innerp{\eTrans_{k}(x) - \Trans(x)}{\uV_{k, h+1} - \oV{h+1}} \right| \\ 
        \lesssim & m^3 H^2 (\max_{i} S_i)^{1.5} \max_{i} X[\idxTrans{i}] L^{2.5} + m l H^{1.5} (\max_{i} S_i)^{0.5} (\max_{i} X[\idxTrans{i}])^{0.5} (\max_{i} X[\idxR{i}])^{0.5} L^{2.5}.
    \end{align*}
\end{lemma}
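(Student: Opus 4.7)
The plan is to follow almost exactly the structure of Lemma~\ref{lem:cul-corr-h} (the Hoeffding-style cumulative correction bound), since the correction term is a random-vector-weighted inner product and so the Bernstein-style bonus design used for $\oV{h+1}$ does not directly apply. First, I would rewrite each summand via the same inverse telescoping identity used earlier:
\begin{align*}
    \innerpc{\eTrans_{k} - \Trans}{\uV_{k, h+1} - \oV{h+1}}
    &= \nlsum_{i=1}^{m} \innerpc{\eTrans_{i, k} - \Trans_{i}}{\E_{\Trans_{1:i-1}} \E_{\Trans_{i+1:m}}[\uV_{k, h+1} - \oV{h+1}]}\\
    &\quad + \nlsum_{i=1}^{m} \innerpc{\eTrans_{i, k} - \Trans_{i}}{\E_{\Trans_{1:i-1}} (\E_{\eTrans_{i+1:m, k}} - \E_{\Trans_{i+1:m}})[\uV_{k, h+1} - \oV{h+1}]}.
\end{align*}
The second sum is handled verbatim by the Holder argument of Lemma~\ref{lem:holder}, because only the $\normmax$-norm of the inner vector (which is bounded by $H$) is used; this contributes $\sum_{i<j} 2HL\sqrt{S_iS_j/(\cntTrans{i, k}(x)\cntTrans{j, k}(x))}$, whose sum over time is already controlled by Lemma~\ref{lem:sum-mixed-vr}, giving $\bigotilde(m^2 H \max_i S_i \max_i X[\idxTrans{i}] L^2)$, a lower-order term.

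For the first sum, I would outside $\gB$ invoke the entrywise Bernstein control from failure event $\gB_9 = \gF_6$ on $\eTrans_{i, k} - \Trans_i$, which gives
\begin{align*}
    \bigl|\innerpc{\eTrans_{i, k} - \Trans_i}{\E_{\Trans_{-i}}[\uV_{k, h+1} - \oV{h+1}]}\bigr|
    \le \tfrac{2HS_i L}{3\cntTrans{i, k}(x)} + \nlsum_{s'[i]} \sqrt{\tfrac{2 \Trans_i(s'[i]\vert x) L}{\cntTrans{i, k}(x)}} \, \E_{\Trans_{-i}}[\uV_{k, h+1} - \oV{h+1}](s'[i]).
\end{align*}
The first piece sums over time, via Lemma~\ref{lem:sum-vr}, to $\bigotilde(HS_i X[\idxTrans{i}] L^2)$, which is a lower-order term. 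For the second piece, applying Cauchy--Schwarz inside the sum over $s'[i]$ and then another Cauchy--Schwarz in the sum over $(k, h, x\in L_k)$ splits it into $\sqrt{2S_iL}\sqrt{\sum w_{k, h}(x)/\cntTrans{i, k}(x)}\cdot\sqrt{\sum w_{k, h}(x)\, \E_{\Trans_i}(\E_{\Trans_{-i}}[\uV_{k, h+1} - \oV{h+1}])^2}$.

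The first factor is $\bigotilde(\sqrt{S_i X[\idxTrans{i}]} L)$ by Lemma~\ref{lem:sum-vr}. For the second factor, the key step is to upper bound the pointwise difference $\uV_{k, h+1} - \oV{h+1}$ by the full confidence radius $\uV_{k, h+1} - \lV_{k, h+1}$ (which holds entrywise by Lemma~\ref{lem:ulcb}), and then invoke the cumulative squared confidence radius bound of Lemma~\ref{lem:cul-conf-r-b}, giving $\sqrt{G_1}$ with $G_1 = m^4 H^4(\max_i S_i)^2 \max_i X[\idxTrans{i}] L^3 + l^2 H^3 \max_i X[\idxR{i}] L^3$. Multiplying the three factors and summing over $i\in[m]$ yields the claimed bound
\begin{align*}
    \bigotilde\bigl( m^3 H^2 (\max_{i} S_i)^{1.5} \max_{i} X[\idxTrans{i}] L^{2.5} + m l H^{1.5} (\max_{i} S_i)^{0.5} (\max_{i} X[\idxTrans{i}])^{0.5} (\max_{i} X[\idxR{i}])^{0.5} L^{2.5}\bigr).
\end{align*}

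The only real subtlety compared to the Hoeffding analog is that the Bernstein bound $G_1$ (rather than $G_0$) carries an extra $\sqrt{L}$ in its reward-induced part, producing the $L^{2.5}$ (rather than $L^2$) on the second term; I would make sure to invoke Lemma~\ref{lem:cul-conf-r-b} (which was proven precisely so this step is mechanical) rather than trying to redo it. The main obstacle is bookkeeping: keeping the $\E_{\Trans_i}(\E_{\Trans_{-i}}[\cdot])^2$ form consistent with what Lemma~\ref{lem:cul-conf-r-b} controls, and verifying that pushing $\uV - \oV$ up to $\uV - \lV$ does not lose anything beyond constants (which it does not, since the cumulative bound is on the larger quantity anyway).
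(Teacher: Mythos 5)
Your proposal is correct and matches the paper's proof, which simply states that the argument of Lemma~\ref{lem:cul-corr-h} carries over with the failure event $\gF$ replaced by $\gB$ and $G_0$ replaced by $G_1$ (the extra $\sqrt{L}$ on the reward-induced term being exactly the difference you flagged). Your explicit handling of the step $0 \le \uV_{k,h+1} - \oV{h+1} \le \uV_{k,h+1} - \lV_{k,h+1}$ before invoking Lemma~\ref{lem:cul-conf-r-b} is the one detail the paper leaves implicit, and you resolve it correctly.
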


\begin{proof}
    Replacing the failure event $\gF$ by $\gB$ and the lower-order term $G_0$ by $G_1$, the proof of the bound on the cumulative correction term for F-UCBVI (Lemma~\ref{lem:cul-corr-h}) carries over. 
    Note that the second term in the bound here has an extra $\sqrt{L}$ factor due to the difference between $G_0$ and $G_1$.
\end{proof}

\subsection{Regret bounds}

The following lemma is useful in deriving the problem-dependent regret bounds of F-EULER.

\begin{lemma}[Component variance bound]  \label{lem:comp-var}
    For any finite set $\gS = \bigotimes_{i=1}^{m} \gS_i$, any probability mass function $\Trans = \prod_{i=1}^{m} \Trans_{i} \in \Delta(\gS)$ where $\Trans_{i} \in \Delta(\gS_i)$ for $i\in [m]$, and any vector $V \in \sR^{\gS}$, 
    \begin{align*}
        \Var_{\Trans_{i}} \E_{\Trans_{-i}} [V] 
        \le \Var_{\Trans} [V].
    \end{align*}
\end{lemma}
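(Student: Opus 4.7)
The plan is to recognize this as a direct application of the law of total variance, using the independence of coordinates under the product measure $P = \prod_{i=1}^m P_i$.

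First, I would interpret $V$ as a random variable on the sample space $\gS$ equipped with the product distribution $P$, and let $X_i$ denote the $i$th coordinate, whose marginal law is $P_i$. By independence of coordinates, the conditional expectation $\E_P[V \mid X_i = s_i]$ equals $\sum_{s[-i]} P_{-i}(s[-i]) V(s) = (\E_{P_{-i}}[V])(s_i)$, so $\E_P[V \mid X_i]$, viewed as a random variable in $X_i \sim P_i$, is precisely the vector $\E_{P_{-i}}[V] \in \sR^{\gS_i}$ under $P_i$.

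Next, I would invoke the law of total variance:
\begin{align*}
    \Var_P[V] \;=\; \E_{P}\!\bigl[\Var_P[V\mid X_i]\bigr] \;+\; \Var_{P_i}\!\bigl(\E_P[V\mid X_i]\bigr).
\end{align*}
Since the first term on the right-hand side is a nonnegative expectation of nonnegative conditional variances, we immediately obtain
\begin{align*}
    \Var_{P_i}\!\bigl(\E_{P_{-i}}[V]\bigr) \;=\; \Var_{P_i}\!\bigl(\E_P[V\mid X_i]\bigr) \;\le\; \Var_P[V],
\end{align*}
which is exactly the stated inequality.

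I do not anticipate any real obstacle here: the whole content is the identification $\E_P[V\mid X_i] = \E_{P_{-i}}[V]$, which requires only the independence afforded by the product structure of $P$, and the nonnegativity of the residual term in the variance decomposition. The only care needed is notational — keeping straight that $\Var_{P_i}\E_{P_{-i}}[V]$ is a variance of a $\gS_i$-indexed vector under $P_i$, rather than a conditional variance, and checking that this matches the law-of-total-variance formulation.
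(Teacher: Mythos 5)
Your proposal is correct and is essentially the same argument as the paper's: the paper simply expands $\Var_{\Trans}[V] - \Var_{\Trans_i}\E_{\Trans_{-i}}[V]$ from the definition of variance and identifies the difference as $\E_{\Trans_i}[\Var_{\Trans_{-i}}[V]] \ge 0$, which is exactly the law-of-total-variance decomposition you invoke by name. No gap.
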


\begin{proof}
    By the definition of variance, 
    \begin{align*}
        \Var_{\Trans} [V] - \Var_{\Trans_{i}} \E_{\Trans_{-i}} [V]
        & = \E_{\Trans} [V^2] - \left(\E_{\Trans} [V] \right)^2
        - \E_{\Trans_{i}} \left[ \left(\E_{\Trans_{-i}} [V] \right)^2 \right] + \left(\E_{\Trans} [V]\right)^2 \\
        & = \E_{\Trans_{i}} \left[\Var_{\Trans_{-i}} [V] \right] \\
        & \ge 0.
    \end{align*}
\end{proof}

\subsubsection{Proof of Theorem~\ref{thm:f-euler}}

\begin{proof}
    To prove Theorem~\ref{thm:f-euler}, we actually need to show four combinations of the bounds therein.

    Outside the failure event $\gB$, combining Lemmas~\ref{lem:regret-decomp},~\ref{lem:cul-est-err-b},~\ref{lem:cul-bonus-b},~\ref{lem:cul-bonus-r-b} and~\ref{lem:cul-corr-b}, for F-EULER, we obtain 
    \begin{align*}
        \Regret(K)
        & \lesssim \sum_{i=1}^{m} \sqrt{\sC_{i}^{*} X[\idxTrans{i}] T} + m^3 H^2 \max_{i} S_i \max_{i} X[\idxTrans{i}] L^2 \\
        &\quad + m^{1.5} l H^{1.5} (\max_{i} S_i)^{0.25} (\max_{i} X[\idxTrans{i}])^{0.75} (\max_{i} X[\idxR{i}])^{0.5} L^2, \\
        &\quad + \sum_{i=1}^{l} \sqrt{ \gR_{i} X[\idxR{i}] T} L + \sum_{i=1}^{l} X[\idxR{i}] L^2 \\
        &\quad + m^3 H^2 (\max_{i} S_i)^{1.5} \max_{i} X[\idxTrans{i}] L^{2.5} \\ 
        &\quad + m l H^{1.5} (\max_{i} S_i)^{0.5} (\max_{i} X[\idxTrans{i}])^{0.5} (\max_{i} X[\idxR{i}])^{0.5} L^{2.5} \\
        &\quad + H^2 \sum_{i=1}^{m} X[\idxTrans{i}] L + H^2 \sum_{i=1}^{l} X[\idxR{i}] L \\
        & \lesssim \sum_{i=1}^{m} \sqrt{\sC_{i}^{*} X[\idxTrans{i}] T} 
        + \sum_{i=1}^{l} \sqrt{ \gR_{i} X[\idxR{i}] T} L 
        + m^3 H^2 (\max_{i} S_i)^{1.5} \max_{i} X[\idxTrans{i}] L^{2.5} \\
        &\quad + m^{1.5} l H^{2} (\max_{i} S_i)^{0.5} (\max_{i} X[\idxTrans{i}])^{0.75} \max_{i} X[\idxR{i}] L^{2.5}, 
    \end{align*}
    and similarly, 
    \begin{align*}
        \Regret(K)
        & \lesssim \sum_{i=1}^{m} \sqrt{\sC_{i}^{*} X[\idxTrans{i}] T} 
        + \sum_{i=1}^{l} \sqrt{ \gG^2 X[\idxR{i}] K} L 
        + m^3 H^2 (\max_{i} S_i)^{1.5} \max_{i} X[\idxTrans{i}] L^{2.5} \\
        &\quad + m^{1.5} l H^{2} (\max_{i} S_i)^{0.5} (\max_{i} X[\idxTrans{i}])^{0.75} \max_{i} X[\idxR{i}] L^{2.5}.
    \end{align*}
    By definition, 
    \begin{align*}
        \sC_i^{*} = \frac{1}{T} \sum_{k = 1}^{K} \sum_{h = 1}^{H} \E_{\pi_k} [g_i^2(P, \oV{h+1})\vert s_{k, 1}] \le (2\sqrt{L})^2 \gQ_i = 4 \gQ_i L.
    \end{align*}
    Substituting the above bound on $\sC_i^{*}$ into the above two regret bounds, we obtain two combinations of the bounds in Theorem~\ref{thm:f-euler}. Specifically, assuming $T \ge \mathrm{poly}(m, l, \max_{i}S_{i}, \max_{i}X[\idxTrans{i}], H)$, we have 
    \begin{align*}
        \Regret(K) &= \bigotilde(\sum_{i=1}^{m} \sqrt{\gQ_i X[\idxTrans{i}]T} + \sum_{i=1}^{l} \sqrt{\gR_{i} X[\idxR{i}]T}), \\
        \Regret(K) &= \bigotilde(\sum_{i=1}^{m} \sqrt{\gQ_i X[\idxTrans{i}]T} + \sum_{i=1}^{l} \sqrt{\gG X[\idxR{i}]K}).
    \end{align*}

    For other two combinations of the regret bounds of F-EULER, outside the failure event $\gB$, combining Lemmas~\ref{lem:regret-decomp},~\ref{lem:cul-est-err-b},~\ref{lem:cul-bonus-b},~\ref{lem:cul-bonus-r-b} and~\ref{lem:cul-corr-b} yet with the alternative bounds in Lemmas~\ref{lem:cul-est-err-b} and~\ref{lem:cul-bonus-b}, we obtain 
    \begin{equation}  \label{eqn:regret-f-euler-ineq-1}
        \begin{aligned}
            \Regret(K)
            & \lesssim \sum_{i=1}^{m} \sqrt{\sC_{i}^{\pi} X[\idxTrans{i}] T} 
            + \sum_{i=1}^{m} H \sqrt{X[\idxTrans{i}] L} \sqrt{\Regret(K)} 
            + \sum_{i=1}^{l} \sqrt{ \gR_{i} X[\idxR{i}] T} L \\
            &\quad + m^3 H^2 (\max_{i} S_i)^{1.5} \max_{i} X[\idxTrans{i}] L^{2.5} \\
            &\quad + m^{1.5} l H^{2} (\max_{i} S_i)^{0.5} (\max_{i} X[\idxTrans{i}])^{0.75} \max_{i} X[\idxR{i}] L^{2.5},    
        \end{aligned}
    \end{equation}
    and 
    \begin{equation}  \label{eqn:regret-f-euler-ineq-2}
        \begin{aligned}
            \Regret(K)
            & \lesssim \sum_{i=1}^{m} \sqrt{\sC_{i}^{\pi} X[\idxTrans{i}] T} 
            + \sum_{i=1}^{m} H \sqrt{X[\idxTrans{i}] L} \sqrt{\Regret(K)} 
            + \sum_{i=1}^{l} \sqrt{ \gG^2 X[\idxR{i}] K} L \\
            &\quad + m^3 H^2 (\max_{i} S_i)^{1.5} \max_{i} X[\idxTrans{i}] L^{2.5} \\
            &\quad + m^{1.5} l H^{2} (\max_{i} S_i)^{0.5} (\max_{i} X[\idxTrans{i}])^{0.75} \max_{i} X[\idxR{i}] L^{2.5}.
        \end{aligned}
    \end{equation}
    For inequality $y \lesssim A \sqrt{y} + B$, the solution is 
    \begin{align*}
        \sqrt{y} \lesssim \frac{A + \sqrt{A^2 + 4B}}{2} \lesssim A + \sqrt{B},
    \end{align*}
    which yields $y \lesssim A^2 + B + 2A^2 \sqrt{B} \lesssim A^2 + B$, where the last inequality is due to the AM-GM inequality.
    Applying the above solution to~\eqref{eqn:regret-f-euler-ineq-1} and~\eqref{eqn:regret-f-euler-ineq-2} yields
    \begin{align}  \label{eqn:regret-f-euler-3}
        \Regret(K)
        & \lesssim \sum_{i=1}^{m} \sqrt{\sC_{i}^{\pi} X[\idxTrans{i}] T} 
        + \sum_{i=1}^{l} \sqrt{ \gR_{i} X[\idxR{i}] T} L
        + m^3 H^2 (\max_{i} S_i)^{1.5} \max_{i} X[\idxTrans{i}] L^{2.5} \\
        &\quad + m^{1.5} l H^{2} (\max_{i} S_i)^{0.5} (\max_{i} X[\idxTrans{i}])^{0.75} \max_{i} X[\idxR{i}] L^{2.5}.
    \end{align}
    and 
    \begin{align}  \label{eqn:regret-f-euler-4}
        \Regret(K)
        & \lesssim \sum_{i=1}^{m} \sqrt{\sC_{i}^{\pi} X[\idxTrans{i}] T} 
        + \sum_{i=1}^{l} \sqrt{ \gG^2 X[\idxR{i}] K} L 
        + m^3 H^2 (\max_{i} S_i)^{1.5} \max_{i} X[\idxTrans{i}] L^{2.5} \\
        &\quad + m^{1.5} l H^{2} (\max_{i} S_i)^{0.5} (\max_{i} X[\idxTrans{i}])^{0.75} \max_{i} X[\idxR{i}] L^{2.5}.
    \end{align}
    With $x_{k, h} = (s_{k, h}, \pi_{k}(s_{k, h}, h))$ where $s_{k, h}\in \gS$ denotes the state in step $h$ of episode $k$, by definition, 
    \begin{align*}
        \sC_{i}^{\pi}
        &= \frac{1}{T} \sum_{k = 1}^{K} \sum_{h = 1}^{H} \sum_{x\in \gX} w_{k, h}(x) g_i^2(\Trans(x), V_{h+1}^{\pi_k}) \\
        &= \frac{4L}{T} \sum_{k = 1}^{K} \sum_{h = 1}^{H} \sum_{x\in \gX} w_{k, h}(x) \Var_{\Trans_{i}(x)} \E_{\Trans_{-i}(x)} [V_{h+1}^{\pi_k}] \\
        &\le \frac{4L}{T} \sum_{k = 1}^{K} \sum_{h = 1}^{H} \sum_{x\in \gX} w_{k, h}(x) \Var_{\Trans(x)} [V_{h+1}^{\pi_k}] \\
        &= \frac{4L}{T} \sum_{k = 1}^{K} \sum_{h = 1}^{H} \E_{\pi_k} \left[\Var_{\Trans(x_{k, h})} [V_{h+1}^{\pi_k} \vert s_{k, h}] \middle\vert s_{k, 1} \right] \\
        &\le \frac{4L}{T} \sum_{k = 1}^{K} \E_{\pi_k} \left[ \left(\sum_{h=1}^{H} R(x_{k, h}) - V_{1}^{\pi_k}(s_{k, 1})\right)^2 \middle\vert s_{k, 1}\right] \\
        &\le \frac{4K \gG^2 L}{T} = \frac{4\gG^2 L}{H},
    \end{align*}
    where the first inequality is due to the component variance bound (Lemma~\ref{lem:comp-var}), the second inequality is due to a law of total variance argument (see Lemma 15 in~\citep{zanette2019tighter} for a proof).
    Substituting the above bound on $\sC_{i}^{\pi}$ into~\eqref{eqn:regret-f-euler-3} and~\eqref{eqn:regret-f-euler-4}, we obtain the other two combinations of the bounds in Theorem~\ref{thm:f-euler}. Specifically, assuming $T \ge \mathrm{poly}(m, l, \max_{i}S_{i}, \max_{i}X[\idxTrans{i}], H)$, we have 
    \begin{align*}
        \Regret(K) &= \bigotilde(\sum_{i=1}^{m} \sqrt{\gG X[\idxTrans{i}]K} + \sum_{i=1}^{l} \sqrt{\gR_{i} X[\idxR{i}]T}), \\
        \Regret(K) &= \bigotilde(\sum_{i=1}^{m} \sqrt{\gG X[\idxTrans{i}]K} + \sum_{i=1}^{l} \sqrt{\gG X[\idxR{i}]K}).
    \end{align*}

    To accommodate the case of known rewards, it suffices to remove the parts related to reward estimation and reward bonuses in both the algorithm and the analysis, which yields the regret bound 
    \begin{align*}
        \min\left\{ \bigotilde\left( \sum_{i=1}^{m} \sqrt{\gQ_i X[\idxTrans{i}] T} \right), \bigotilde\left( \sum_{i=1}^{m} \sqrt{\gG X[\idxTrans{i}] K} \right) \right\}. 
    \end{align*}
\end{proof}

\subsubsection{Proof of Corollary~\ref{cor:f-euler}}

\begin{proof}
    Since $r(x) \in [0, 1]$ and $R(x) \in [0, 1]$ for all $x\in \gX$, we have $\gG \le H$ and $\gR_{i} = \max_{x\in \gX}[\Var(r_i(x))] \le 1$ for all $i\in [l]$.
    Then by Theorem~\ref{thm:f-euler}, for F-EULER, 
    \begin{align*}
        \Regret(K)
        & \lesssim \sum_{i=1}^{m} \sqrt{\gG^2 X[\idxTrans{i}] K L} 
        + \sum_{i=1}^{l} \sqrt{ \gR_{i} X[\idxR{i}] T} L
        + m^3 H^2 (\max_{i} S_i)^{1.5} \max_{i} X[\idxTrans{i}] L^{2.5} \\
        &\quad + m^{1.5} l H^{2} (\max_{i} S_i)^{0.5} (\max_{i} X[\idxTrans{i}])^{0.75} \max_{i} X[\idxR{i}] L^{2.5} \\
        & \lesssim \sum_{i=1}^{m} \sqrt{H X[\idxTrans{i}] T L} 
        + \sum_{i=1}^{l} \sqrt{X[\idxR{i}] T} L
        + m^3 H^2 (\max_{i} S_i)^{1.5} \max_{i} X[\idxTrans{i}] L^{2.5} \\
        &\quad + m^{1.5} l H^{2} (\max_{i} S_i)^{0.5} (\max_{i} X[\idxTrans{i}])^{0.75} \max_{i} X[\idxR{i}] L^{2.5}.
    \end{align*}
    Assuming $T \ge \mathrm{poly}(m, l, \max_{i}S_{i}, \max_{i}X[\idxTrans{i}], H)$, we further have 
    \begin{align*}
        \Regret(K) = \bigotilde\left( 
            \sum_{i=1}^{m} \sqrt{H X[\idxTrans{i}] T} 
            + \sum_{i=1}^{l} \sqrt{X[\idxR{i}] T} 
        \right). 
    \end{align*}
    In the case of known rewards, again, by removing the parts related to reward estimation and reward bonuses in both the algorithm and the analysis, we obtain the regret bound $\bigotilde(\sum_{i=1}^{m} \sqrt{H X[\idxTrans{i}] T})$.
\end{proof}

\newpage
\section{Lower bounds for FMDPs}  \label{sec:appx-lb}

\subsection{Proof for degenerate case 1 (Theorem~\ref{thm:fmdp-lb-degen-1})}

\begin{proof}
    Without loss of generality, assume $\argmax_{i} X[\idxR{i}'] = 1$.
    Let the reward function depends only on $\gX[\idxR{1}']$.
    Then for arbitrary transition, learning in this FMDP can be converted to an MAB problem with $X[\idxR{1}']$ arms, whose regret in $T$ steps has the lower bound $\Omega(\sqrt{X[\idxR{1}'] T})$~\citep{lattimore2018bandit}.
\end{proof}

\subsection{Proof for degenerate case 2 (Theorem~\ref{thm:fmdp-lb-degen-2})}

\begin{proof}
    The construction is essentially the same as that in the proof of Theorem~\ref{thm:fmdp-lb-degen-1}.
    Without loss of generality, assume $\argmax_{i} X[\idxR{i}] = 1$.
    Let the reward function depends only on $\gX[\idxR{1}]$.
    Then for arbitrary transition, learning in this FMDP can be converted to an MAB problem with $X[\idxR{1}]$ arms, whose regret in $T$ steps has the lower bound $\Omega(\sqrt{X[\idxR{1}] T})$~\citep{lattimore2018bandit}.
\end{proof}

\subsection{Proof for the nondegenerate case (Theorem~\ref{thm:fmdp-lb-nondegen})}

\begin{proof}
    The proof of this lower bound relies on the $\Omega(\sqrt{HSAT})$ regret for nonfactored MDPs. The basic idea to prove the $\Omega(\sqrt{HSAT})$ regret bound in~~\citep{jin2020lecture} is to construct an MDP with $3$ states (an initial state, two states with reward $0$ and $1$, respectively) and $A$ actions, where the state remains unchanged in the following $H-1$ steps after one step of transition. 
    This MDP is equivalent to an MAB with $A$ arms, which has the lower bound $\Omega((H-1) \sqrt{AK})$. Making $S/3$ copies of this MAB-like MDP and restarting at each copy uniformly at random yield the expected regret lower bound $\Omega(\sqrt{HSAT})$.

    Here, without loss of generality, assume $\argmax_{i} X[\idxTrans{i}] = 1$.
    Then we only consider the transition of $\gS_1$ and neglect the rest.
    Let the reward function depends only on $\gS_1$.
    Let the transition of $\gS_1$ depend only on $\gS_1$ and $\gX[\idxTrans{1}']$, where $\idxTrans{1}' = \idxTrans{1} \cap \{m+1, \cdots, n\}$ contains only the action component indices. 
    Learning in this component can then be converted to learning in a nonfactored MDP with $\gS_1$ states and $X[\idxTrans{1}']$ actions, which has $\Omega(\sqrt{HS_1 X[\idxTrans{1}'] T})$ regret lower bound.

    Now consider the other state components in $\gX[\idxTrans{1}]$, which we denote by $\gS'_1 = \gX[\idxTrans{1} \cap ([m] \backslash \{i\})] $, where ``$\backslash$'' denotes set subtraction.
    Then $X[\idxTrans{1}] = X'[\idxTrans{1}] \cdot S'_1 \cdot S_1$. 
    Make $S'_1$ copies of the above FMDP, and restart at each copy uniformly at random, so that each copy is expected to run $T/S'_1$ steps ($K/S'_1$ episodes).
    The regret lower bound is then given by $\Omega(S'_1 \sqrt{H S_1 X'[\idxTrans{1}] T / S'_1}) = \Omega(\sqrt{HX[\idxTrans{1}]T})$.
\end{proof}

\subsection{Proof for the more general nondegenerate case (Theorem~\ref{thm:fmdp-lb-loop})}

\begin{proof}
    Refer to the number of intermediate state components in an influence loop as its length. For a state component with the loop property, define its minimum influence loop as the one with the minimum length.
    Without loss of generality, assume $\argmax_{i\in \mathcal{I}} X[\idxTrans{i}] = 1$ and the minimum influence loop of the first state component to be 
    \begin{align}  \label{eqn:loop}
        \gS_1\to \gS_2\to \dots \to \gS_{u-1} \to \gS_u \to \gS_1.
    \end{align}
    Since the above influence loop is minimum, we have $i \not\in \idxTrans{1}$ for all $i\in \{2, \cdots, u-1 \}$.
    Let $\idxTrans{1, s} := \idxTrans{1} \cap ([m] \backslash \{u\})$ and $\idxTrans{1, a} := \idxTrans{1} \cap \{m+1, \cdots, n\}$ be the state parts (excluding $u$) and action parts of the scope index sets of $\gS_1$, respectively.
    Then $X[\idxTrans{1}] = S_{u} X[\idxTrans{1, s}] X[\idxTrans{1, a}]$.
    Note that in the proof below, we can actually relax the assumption that $S_i \ge 3$ for all $i$ to the assumption that $S_u\ge 3$ and $S_i \ge 2$ for all $i\in [u-1]$.
    In the case where $\gS_1$ has the self-loop property, the proof below carries over by letting $u = 1$.
    
    For the space $\gS_i$ in the loop~\eqref{eqn:loop}, we define two special values $s_{+}^{i}, s_{-}^{i} \in \gS_i$ as positive and negative state component values, respectively. 
    Let the reward function be the indicator function of whether at least one of the state components in the loop takes its positive value, i.e., for $x = (s, a)$, 
    \begin{align*}
        R(x) := \max_{i \in [u]}\{ \sI(s[i] = s_{+}^{i}) \}.
    \end{align*}
    Construct the transition of $s[1]$ with dependence only on $\gS_u$ and $\gX[\idxTrans{1, a}]$.
    If $s[u] = s_{+}^{u}$ (or $s_{-}^{u}$, respectively), then $s[1]$ follows $s[u]$ in the next step and transits to $s_{+}^{1}$ (or $s_{-}^{1}$, respectively) deterministically.
    Otherwise, $s[1]$ also transits to $s_{+}^{1}$ or $s_{-}^{1}$, but with probabilities specified by the action components in $\gX[\idxTrans{1, a}]$.
    Construct the transition of the intermediate state components $s[i], i\in \{2, \cdots, u\}$ with dependence only on $s[i-1]$.
    If $s[i-1] = s_{+}^{i-1}$, then in the next step $s[i]$ transits to $s_{+}^{i}$; otherwise $s[i]$ transits to $s_{-}^{i}$.
    The transitions of other state components are arbitrary and irrelevant to the regret.
    Therefore, after the first step, the positive and negative state component values shift their places in a cyclic way within the influence loop.
    
    For initialization, we choose $s[u]$ to be arbitrary in $\gS_{u} \backslash \{ s^{u}_{+}, s^{u}_{-} \}$, $s[i]$ to be arbitrary in $\gS_i \backslash \{ s^{i}_{+} \}$ for $i \in [u-1]$, and $s[\idxTrans{1, s}]$ to be arbitrary in $\gX[\idxTrans{1, s}]$, where we apply the scope operation (Definition~\ref{def:scope}) to $s$.
    The initializations of other state components are arbitrary and irrelevant to the regret.
    In this way, after the first step of transition, there can be zero or one positive state component values in the influence loop, depending on the action components in $\gX[\idxTrans{1, a}]$.
    Moreover, the number of positive state components in the loop remains the same for $H-1$ steps until the end of the episode.
    
    Therefore, for any $s[u] \in \gS_{u} \backslash \{ s_{+}^{u}, s_{-}^{u} \}$ and any $s[\idxTrans{i}] \in \gX[\idxTrans{1, s}]$, learning in the above FMDP can be converted to an MAB problem with $X[\idxTrans{1, a}]$ actions where the reward is $(H - 1)$ if $s_{+}^{1}$ is reached at the second time step and $0$ otherwise. 
    The regret of such an MAB has the lower bound $\Omega((H - 1)\sqrt{X[\idxTrans{1, a}] T}) = \Omega(\sqrt{HX[\idxTrans{1, a}] T})$~\citep{lattimore2018bandit}.
    Splitting $\gS_{u}\backslash \{ s_{+}^{u}, s_{-}^{u} \}$ and $\gX[\idxTrans{1, s}]$ to make $(S_u - 2) X[\idxTrans{1, s}]$ copies of this MAB-like FMDP and restarting at each copy uniformly at random, we obtain the lower bound on regret 
    \begin{align*}
        \Omega\left( (S_u - 2) X[\idxTrans{1, s}] \sqrt{ H X[\idxTrans{1, a}] \left\lfloor \frac{T}{(S_u - 2) X[\idxTrans{1, s}]}\right\rfloor  }\right) 
        = \Omega(\sqrt{HX[\idxTrans{1}]T}).
    \end{align*} 
    Note that making $S_u / 3$ copies is unnecessary since different copies can share $s_{+}^{u}$ and $s_{-}^{u}$.
\end{proof}

\newpage
\section{Lower bound for MDPs via the JAO MDP construction}
\label{sec:jao}

As pointed out in~\citep{azar2017minimax}, ``they (the $\bigotilde(\sqrt{HSAT})$ upper bounds) help to establish the information-theoretic lower bound of reinforcement learning at $\Omega(\sqrt{HSAT})$. \dots ~Moving from this big picture insight to an analytically rigorous bound is non-trivial.''
A rigorous $\Omega(\sqrt{HSAT})$ lower bound proof is possible~\citep{jin2020lecture} by constructing MAB-like MDPs~\citep{dann2015sample}.
In the meantime, the MDP literature suggests that the JAO MDP~\citep{jaksch2010near}, which establishes the minimax lower bound for nonepisodic MDPs, also establishes the minimax lower bound for episodic MDPs.
We look into this problem in detail, and find that a direct episodic extension of the JAO MDP actually establishes the lower bound at $\Omega(\sqrt{HSAT/\log T})$, missing a log factor.
It is not clear whether this result can be further improved with the same construction.
The rest of this section introduces our derivation, which we recommend reading in comparison with~\citep[Section 6]{jaksch2010near}.

Recall that the JAO MDP is an MDP with two states $s_0, s_1$ and $A' = \floor{(A - 1) / 2}$ actions. The transition probability from $s_1$ to $s_0$ is $\delta$ for all actions, and the transition probability from $s_0$ to $s_1$ is $\delta$ for all actions except that it is $\delta + \epsilon$ for one special action $a^{*}$. 
The reward is $1$ for each step at $s_1$ and $0$ otherwise.
By making $S' = \floor{S / 2}$ copies of the JAO MDP one extends the construction to $S$ or $S - 1$ states, which then reduces to a two-state JAO MDP with $S'A'$ actions.
In the episodic setting, we simply start the MDP at $s_0$ for each episode.
The symbols $\delta, \epsilon, A'$ have the same meanings as they do in~\citep{jaksch2010near}, and we use $S'$ to replace $k$ in~\citep{jaksch2010near}.
Let $\E_{a}, \E_{\unif}$ denote the expectation under $a \in [S'A']$ being the better action $a^{*}$ and there being no better action respectively, and $\sP_{a}, \sP_{\unif}$ denote the corresponding probability measures.
Let $\E_{*}$ denote the expectation under a uniformly random choice of the better action.
Hence, $\E_{*}[f] = \frac{1}{S'A'} \sum_{a=1}^{S'A'}\E_{a}[f]$.
Let $N_1, N_0, N_0^{*}$ denote the number of visits to state $s_1$, to state $s_0$ and to state-action pair $(s_0, a^{*})$ respectively.
We use another subscript $k$ to denote the corresponding quantity in the $k$th episode.

\paragraph{Step 1}
Let $H' = 1/\delta$.
\begin{align*}
    \E_{a}[N_{1, k}] 
    \le \E_{a}[N_{0, k}] + \epsilon H' \E_{a}[N_{0, k}^{*}] - \frac{1}{2\delta} + \frac{(1 - 2\delta)^{H}}{2\delta}.
\end{align*}

\begin{proof}
    Here we adopt a more refined analysis compared to that  in \citep{jaksch2010near}, where we consider the probability of the last state, because a constant deviation in each episode results in a relaxation on the order of $\bigo(K)$ in total.
    \begin{align*}
        \E_{a}[N_{1, k}] 
        &= \sum_{h=2}^{H} \sP_{a}(s_{k, h} = s_1 \vert s_{k, h - 1} = s_0) \cdot \sP_{a}(s_{k, h-1} = s_0) \\
        &\quad  + \sum_{h=2}^{H} \sP_{a}(s_{k, h} = s_1 \vert s_{k, h - 1} = s_1) \cdot \sP_{a}(s_{k, h-1} = s_1) \\
        &= \delta \sum_{h=2}^{H} \sP_{a}(s_{k, h-1} = s_0, a_{k, h - 1} \neq a) 
        + (\delta + \epsilon) \sum_{h=2}^{H} \sP_{a}(s_{k, h-1} = s_0, a_{k, h - 1} = a) \\
        &\quad + (1 - \delta) \sum_{h=2}^{H} \sP_{a}(s_{k, h - 1} = s_1) \\
        &= \delta \E_{a}[N_{0, k} - N_{0, k}^{*}] + (\delta + \epsilon) \E_{a}[N_{0, k}^{*}] + (1 - \delta) \E_{a}[N_{1, k}] - \delta \sP_{a}(s_{k, H} = s_0) - (1 - \delta) \sP_{a}(s_{k, H} = s_1).
    \end{align*}

    Note that 
    \begin{align*}
        \sP_{a}(s_{k, H} = s_1) &\ge \sP_{\unif}(s_{k, H} = s_1) = \frac{1}{2} - \frac{1}{2} (1 - 2\delta)^{H - 1}.
    \end{align*}
    Then for $\delta < \frac{1}{2}$, 
    \begin{align*}
        \delta \sP_{a}(s_{k, H} = s_0) + (1 - \delta) \sP_{a}(s_{k, H} = s_1)
        &= \delta (1 - \sP_{a}(s_{k, H} = s_1)) + (1 - \delta) \sP_{a}(s_{k, H} = s_1) \\
        &\ge \delta (\frac{1}{2} + \frac{1}{2} (1 - 2\delta)^{H - 1}) + (1 - \delta) \frac{1}{2} - \frac{1}{2} (1 - 2\delta)^{H - 1} \\
        &= \frac{1}{2} - \frac{(1 - 2\delta)^{H}}{2}.
    \end{align*}
    Hence, 
    \begin{align*}
        \E_{a}[N_{1, k}] 
        &\le \delta \E_{a}[N_{0, k} - N_{0, k}^{*}] + (\delta + \epsilon) \E_{a}[N_{0, k}^{*}] + (1 - \delta) \E_{a}[N_{1, k}] - \frac{1}{2} + \frac{(1 - 2\delta)^{H}}{2}.
    \end{align*}
    By rearranging the terms,
    \begin{align*}
        \E_{a}[N_{1, k}] 
        \le \E_{a}[N_{0, k}] + \epsilon H' \E_{a}[N_{0, k}^{*}] 
        - \frac{1}{2\delta} + \frac{(1 - 2\delta)^{H}}{2\delta},
    \end{align*}
    where we use $H' = \frac{1}{\delta}$.
\end{proof}

\paragraph{Step 2}
Let $R$ denote the cumulative reward by a given algorithm through $K$ episodes. 
Then assuming $\E_{a}[N_0]\le \E_{\unif}[N_0]$, by Step 1 we have 
\begin{align}
    \E_{a}[R] = \E_{a}[N_1] 
    = \sum_{k=1}^{K} \E_{a}[N_{1, k}] 
    \le KH - \sum_{k=1}^{K} \E_{\unif}[N_{1, k}] + \epsilon H' \sum_{k=1}^{K} \E_{a}[N_{0, k}^{*}] 
    - \frac{K}{2\delta} + K\frac{(1 - 2\delta)^{H}}{2\delta}.  \label{eqn:appx-jao-step-2}
\end{align}

\paragraph{Step 3}
Independent of the above two steps, 
\begin{align}  \label{eqn:appx-jao-step-3-1}
    \E_{\unif}[N_{1, k}] 
    = \sum_{t=1}^{H} \sP_{\unif}(s_t = s_1) 
    = \sum_{t=1}^{H} \frac{1}{2} + (1 - 2\delta)^{t - 1} (0 - \frac{1}{2})
    = \frac{H}{2} - \frac{1 - (1 - 2\delta)^{H}}{4\delta} 
    \ge \frac{H - H'}{2}.  
\end{align}
Therefore,
\begin{align}  \label{eqn:appx-jao-step-3-2}
    \E_{\unif}[N_{0, k}] \leq  \frac{H+H'}{2}.
\end{align}

\paragraph{Step 4}
Substituting~\eqref{eqn:appx-jao-step-3-1} in Step 3 into~\eqref{eqn:appx-jao-step-2} in Step 2 yields 
\begin{align*}
    \E_{a}[R] \le \frac{KH}{2} + \epsilon H' \sum_{k=1}^{K} \E_{a}[N_{0, k}^{*}] 
    - \frac{K}{4\delta} + K\frac{(1 - 2\delta)^{H}}{4\delta}.
\end{align*}
Therefore,
\begin{align*}
    \E_{*}[R] = \frac{1}{S'A'} \sum\limits_{a=1}^{S'A'} \E_{a}[R] \leq \frac{KH}{2} + \frac{\epsilon H' }{S'A'} \sum\limits_{a=1}^{S'A'} \sum_{k=1}^{K} \E_{a}[N_{0, k}^{*}] - \frac{K}{4\delta} + K\frac{(1 - 2\delta)^{H}}{4\delta}.
\end{align*}

\paragraph{Step 5}
\begin{align*}
    \E_{*}[R] \le \frac{KH}{2} + \frac{\epsilon KH'}{S'A'} \left( \frac{H + H'}{2} + \frac{\epsilon H \sqrt{H'}}{2} \sqrt{S'A'(H' + H)K} \right) 
    - \frac{K}{4\delta} + K\frac{(1 - 2\delta)^{H}}{4\delta}.
\end{align*}

\begin{proof}
    To be more explicit, let $N_{0, k, a}$ denote the number where action $a$ is chosen in state $s_0$ at the $k$th episode. Then by Lemma 13 in~\citep{jaksch2010near}, we have
    \begin{align*}
        \sum_{k=1}^{K} \sum_{a=1}^{S'A'} \E_{a}[N_{0, k}^{*}] 
        = \sum_{a=1}^{S'A'} \E_{a}\left[\sum_{k=1}^{K}  N_{0, k, a}\right]
        \le \sum_{a=1}^{S'A'} \E_{\unif}\left[\sum\limits_{k=1}^{K} N_{0,k,a}\right] +  \sum_{a=1}^{S'A'} \epsilon KH\sqrt{H'}\sqrt{ 2\E_{\unif}\left[\sum\limits_{k=1}^{K}N_{0,k,a}\right]}.
    \end{align*}
    Since $\sum_{a=1}^{S'A'} \E_{\unif}[N_{0, k,a}] \le \frac{H + H'}{2}$ by~\eqref{eqn:appx-jao-step-3-1} in Step 3, 
    \begin{align*}
        \sum_{k=1}^{K} \sum_{a=1}^{S'A'} \E_{a}[N_{0, k}^{*}] 
        \le K \frac{H + H'}{2} + \frac{\epsilon KH \sqrt{H'}}{2} \sqrt{S'A' K (H + H')}.
    \end{align*}
    Substituting the above into the bound on $\E_{*}[R]$ in Step 4 concludes the proof.
\end{proof}

\paragraph{Step 6}
Now we determine the optimal value $V_{1}^{*}(s_0)$ we aim to compare.
Let $V_{H+1}^{*} = [0, 0]^{\top}$.
Then the optimal value function is given by the iteration $V_{h}^{*} = B + A V_{h+1}^{*}$, where 
\begin{align*}
    A = \left[ \begin{array}{cc}
        1 - \delta - \epsilon & \delta + \epsilon \\
        \delta & 1 - \delta
    \end{array}\right], \quad 
    B = \left[ \begin{array}{c}
        0 \\
        1
    \end{array}\right].
\end{align*}
Iteratively, by matrix diagonalization, 
\begin{align*}
    V_{1}^{*} 
    = B + AV_{2}^{*} = B + A(B + AV_{3}^{*}) 
    = \cdots 
    = (\sum_{h=0}^{H-1} A^h) B 
    = U (\sum_{h=0}^{H-1} \Lambda^h) U B,
\end{align*}
where $U = [1, -\frac{\delta + \epsilon}{\delta}; 1, 1]$ and $\Lambda = \diag(1, 1 - 2\delta - \epsilon)$.
Hence,
\begin{align*}
    V_1^{*}(s_0) = \frac{\delta + \epsilon}{2\delta + \epsilon} H - \frac{\delta + \epsilon}{(2\delta + \epsilon)^2} \left( 1 - (1 - 2\delta - \epsilon)^{H} \right).
\end{align*}

Note that here we compute the exact optimal value because the episodic resetting causes a constant difference than the stationary optimal value in the infinite-horizon setting in each episode, which accumulates to the order of $\bigo(K)$ in total, similar to Step 1.

\paragraph{Step 7}
Since 
\begin{align*}
\frac{\delta + \epsilon}{ (2\delta + \epsilon)^2}  = \frac{\delta + \epsilon}{ 4\delta (\delta + \epsilon)  + \epsilon^2} \leq \frac{1}{4\delta}, 
\end{align*}
we have 
\begin{align*}
    \Regret(K)
    &= K V_{1}^{*}(s_0) - \E_{*}[R] \\
    &\ge \underbrace{\frac{\delta + \epsilon}{2\delta + \epsilon} KH - \frac{KH}{2} - \frac{\epsilon KH'}{S'A'} \left( \frac{H + H'}{2} + \frac{\epsilon H \sqrt{H'}}{2} \sqrt{S'A'K(H' + H)} \right)}_{\text{standard as in~\citep{jaksch2010near}}} \\
    &\quad \underbrace{- K \left( \frac{(\delta + \epsilon)\left( 1 - (1 - 2\delta - \epsilon)^{H} \right)}{(2\delta + \epsilon)^2} 
    -\frac{1}{4\delta} 
    + \frac{(1 - 2\delta)^{H}}{4\delta}
    \right)}_{\text{new challenge in the episodic setting}} \\
    &\ge \underbrace{\frac{\epsilon}{4\delta + 2\epsilon} T 
    - \frac{\epsilon TH'}{2S'A'} \left(1 + \frac{H'}{H}\right) 
    - \frac{\epsilon^2 TH'}{2S'A'} \sqrt{H'S'A'KH} \left(\sqrt{1 + \frac{H'}{H}}\right)}_{\Theta(\sqrt{H'S'A'T})} \\ 
    &\quad - \underbrace{\frac{K}{4\delta} \left((1 - 2\delta)^{H}- (1 - 2\delta - \epsilon)^{H} \right)}_{\Theta(KH'(1 - \frac{2}{H'})^{H})},
\end{align*}
where $\Theta(\sqrt{H'S'A'T})$ in the last line is obtained by taking $\epsilon = \Theta(\sqrt{\frac{S'A'}{H'T}}) = \Theta(\sqrt{\frac{\delta S'A'}{T}})$.
The logic of taking such an $\epsilon$ is to let the first term be on the order of the third term, i.e., 
\begin{align*}
    \frac{\epsilon}{4\delta + 2\epsilon} T = \Theta\left( \frac{\epsilon^2 TH'}{2S'A'} \sqrt{H'S'A'KH} \right).
\end{align*}
The new challenge in the episodic setting results from the episodic resetting of each episode and a different definition of regret.
By taking $H = H' \log KH = H'\log T$, we have 
\begin{align*}
    KH' (1 - \frac{2}{H'})^{H} \le KH' \frac{1}{e^{\log KH}} = \frac{H'}{H} = \frac{1}{\log T} = o(\sqrt{H'S'A'T}).
\end{align*}
Hence, we obtain the $\Omega(\sqrt{HSAT/\log T}) = \tilde{\Omega}(\sqrt{HSAT})$ lower bound.
Note that taking $H' = H / \log T$ is reasonable, because none of the parameters is exponential in others in our consideration.

\newpage
\section{Concentration inequalities}

In this section, we provide a summary of some important concentration inequalities that are frequently invoked in this work. In what follows, $\sP(\cdot)$ denotes an appropriate probability measure.

\begin{lemma}[Concentration on $\normlone$-norm of probability distributions] \label{lem:concen-l1}
    Let $P$ be a probability mass function on a finite set $\gY$ with cardinality $Y$. 
    Let $\rvy = [\ry_1, \cdots, \ry_n]$ be $n$ i.i.d. samples from $P$. 
    Let $\hat{P}_{\rvy}$ be the empirical distribution based on the observed samples.
    Then for all $\epsilon > 0$
    \begin{align*}
        \sP(\| P - \hat{P}_{\rvy}\|_1 \ge \epsilon) \le 2^{Y} \exp\left\{ -\frac{n\epsilon^2 }{2} \right\}.
    \end{align*}
    Alternatively, with probability at least $1 - \delta$,
    \begin{align*}
        \| P - \hat{P}_{\rvy}\|_1 \le \sqrt{\frac{2Y \log 2 + 2 \log \frac{1}{\delta}}{n}} \le \sqrt{\frac{2Y}{n} \log \frac{2}{\delta}}
    \end{align*}
\end{lemma}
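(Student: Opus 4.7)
The plan is to reduce the $\normlone$-norm deviation to a supremum over subsets of $\gY$ and then combine Hoeffding's inequality on each subset with a union bound.

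First I would use the standard identity for signed measures with zero total mass: since $\sum_{y\in\gY}(P(y)-\hat P_{\rvy}(y))=0$, splitting $\gY$ into the positive and negative parts of $P-\hat P_{\rvy}$ gives
\begin{align*}
    \| P - \hat P_{\rvy}\|_{1} \;=\; 2\sup_{A\subset\gY}\bigl(P(A)-\hat P_{\rvy}(A)\bigr).
\end{align*}
Hence the event $\{\|P-\hat P_{\rvy}\|_{1}\ge\epsilon\}$ is contained in $\bigcup_{A\subset\gY}\{P(A)-\hat P_{\rvy}(A)\ge\epsilon/2\}$.

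Next, for each fixed $A\subset\gY$, the indicator variables $\mathbbm{1}\{\ry_j\in A\}$ are i.i.d.\ Bernoulli$(P(A))$ and $\hat P_{\rvy}(A)=\tfrac{1}{n}\sum_{j=1}^{n}\mathbbm{1}\{\ry_j\in A\}$. One-sided Hoeffding (Lemma~\ref{lem:concen-h}, applied with range $[0,1]$) gives
\begin{align*}
    \sP\!\left(P(A)-\hat P_{\rvy}(A)\ge \epsilon/2\right)\;\le\;\exp\!\left(-\tfrac{n\epsilon^{2}}{2}\right).
\end{align*}
Taking a union bound over the $2^{Y}$ subsets of $\gY$ yields the claimed tail bound $2^{Y}\exp(-n\epsilon^{2}/2)$.

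Finally, inverting this tail: setting $\delta=2^{Y}\exp(-n\epsilon^{2}/2)$ and solving for $\epsilon$ gives the high-probability statement $\|P-\hat P_{\rvy}\|_{1}\le \sqrt{(2Y\log 2+2\log(1/\delta))/n}$, and the further loosening $Y\log 2+\log(1/\delta)\le Y\log(2/\delta)$ (valid since $Y\ge 1$) produces the more compact form. There is no real obstacle here; the only subtle point is remembering to use the \emph{one-sided} Hoeffding bound so that the union bound picks up a factor of $2^{Y}$ rather than $2\cdot 2^{Y}$, which would spoil the $2^{Y}\exp(-n\epsilon^{2}/2)$ constant.
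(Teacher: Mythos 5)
Your proof is correct: the reduction $\|P-\hat P_{\rvy}\|_1 = 2\max_{A\subset\gY}(P(A)-\hat P_{\rvy}(A))$, one-sided Hoeffding per subset, and the $2^Y$ union bound is exactly the standard argument behind Theorem~2.1 of Weissman et al., which is all the paper does here (its "proof" is just that citation, without the refinement involving the number of nonzero atoms of $P$). The inversion to the high-probability form and the loosening using $Y\ge 1$ are also handled correctly.
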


\begin{proof}
    This lemma is a relaxation of Theorem 2.1 in~\citep{weissman2003inequalities}.
\end{proof}

\begin{lemma}[Hoeffding's inequality] \label{lem:concen-h}
    Given $n$ independent random variables such that $\rx_i \in [a, b]$ a.s., then for all $\epsilon \ge 0$, 
    \begin{align*}
        \sP\left( \left| \sum_{i=1}^{n} (\rx_i - \E[\rx_i]) \right| \ge \epsilon \right) \le 2\exp\left\{ -\frac{2\epsilon^2}{n (b - a)^2} \right\}.
    \end{align*}
    Alternatively, with probability at least $1 - \delta$,
    \begin{align*}
        \left| \frac{1}{n} \sum_{i=1}^{n} (\rx_i - \E[\rx_i]) \right| \le \sqrt{\frac{(b - a)^2}{2n} \log \frac{2}{\delta}}.
    \end{align*}
\end{lemma}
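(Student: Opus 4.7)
The plan is to prove the tail bound via the standard Chernoff--Cram\'er exponential moment method, then derive the high-probability form by inverting the tail bound.

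First I would handle one tail at a time, say the upper tail $\sP(S_n \ge \epsilon)$ where $S_n := \sum_{i=1}^{n}(\rx_i - \E[\rx_i])$. By Markov's inequality applied to $e^{\lambda S_n}$ for an arbitrary $\lambda > 0$, we obtain $\sP(S_n \ge \epsilon) \le e^{-\lambda\epsilon}\,\E[e^{\lambda S_n}]$, and independence of the $\rx_i$ factorizes this MGF as $\prod_{i=1}^{n}\E[e^{\lambda(\rx_i - \E[\rx_i])}]$. The crux is then to bound each factor; this is Hoeffding's lemma, which I would state and prove separately: for any zero-mean random variable $Y$ supported in $[a',b']$ one has $\E[e^{\lambda Y}] \le \exp(\lambda^2(b'-a')^2/8)$. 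Applying this with $Y = \rx_i - \E[\rx_i]$, which is zero-mean and supported in an interval of length at most $b-a$, yields $\E[e^{\lambda S_n}] \le \exp(n\lambda^2(b-a)^2/8)$, so $\sP(S_n \ge \epsilon) \le \exp(-\lambda\epsilon + n\lambda^2(b-a)^2/8)$. Optimizing the exponent by choosing $\lambda = 4\epsilon/(n(b-a)^2)$ gives $\sP(S_n \ge \epsilon) \le \exp(-2\epsilon^2/(n(b-a)^2))$.

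The symmetric argument applied to $-S_n$ controls the lower tail with the same bound, and a union bound on the two tails introduces the factor of $2$, proving the first inequality. For the high-probability form, I would set the right-hand side $2\exp(-2\epsilon^2/(n(b-a)^2))$ equal to $\delta$, solve for $\epsilon = \sqrt{(n(b-a)^2/2)\log(2/\delta)}$, and divide by $n$ to obtain the stated bound on $|n^{-1}S_n|$.

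The main obstacle is Hoeffding's lemma itself. I would prove it by translating to assume $Y$ has zero mean and then using convexity of $x \mapsto e^{\lambda x}$ to write, for $x \in [a',b']$, the chord bound $e^{\lambda x} \le \tfrac{b'-x}{b'-a'}e^{\lambda a'} + \tfrac{x-a'}{b'-a'}e^{\lambda b'}$. Taking expectations and using $\E[Y] = 0$ reduces the task to showing $\varphi(u) := \log\!\bigl(\tfrac{b'}{b'-a'}e^{ua'} - \tfrac{a'}{b'-a'}e^{ub'}\bigr) \le u^2(b'-a')^2/8$ for $u = \lambda$. I would verify this by computing $\varphi(0)=\varphi'(0)=0$ and then bounding $\varphi''(u) \le (b'-a')^2/4$ via the AM--GM inequality applied to the two terms in the convex combination, so that Taylor's theorem with remainder yields the claim. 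Everything else is routine algebra; the analytic content lives entirely in this one-variable convexity estimate.
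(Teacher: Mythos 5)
Your proposal is correct, and every step checks out: the Chernoff--Cram\'er reduction, Hoeffding's lemma with the $\exp(\lambda^2(b'-a')^2/8)$ MGF bound, the optimal choice $\lambda = 4\epsilon/(n(b-a)^2)$ giving the exponent $-2\epsilon^2/(n(b-a)^2)$, the union bound over the two tails for the factor of $2$, and the inversion $\epsilon = \sqrt{(n(b-a)^2/2)\log(2/\delta)}$ followed by division by $n$ all match the stated lemma exactly. The convexity/Taylor argument you sketch for Hoeffding's lemma (chord bound, $\varphi(0)=\varphi'(0)=0$, $\varphi''\le (b'-a')^2/4$) is also the standard and correct one; the only trivial case worth a word is $\epsilon = 0$, where the claimed bound holds vacuously since the right-hand side is $2$.

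The difference from the paper is one of self-containedness rather than substance: the paper does not prove this lemma at all, but simply cites the one-sided Hoeffding inequality from a textbook (Proposition 2.5 of Wainwright) and observes that applying it to $-\rx_i$ yields the two-sided statement. Your proof is, in effect, an expansion of what sits behind that citation --- the same classical exponential-moment argument --- plus the symmetrization and the routine inversion to the high-probability form. What your route buys is a complete, reference-free proof; what the paper's route buys is brevity, which is appropriate since this is a standard auxiliary concentration result rather than a contribution of the work.
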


\begin{proof}
    See e.g., Proposition 2.5 in~\citep{wainwright2019high} for the one-sided Hoeffding's inequality, applying which to $-\rx_i$ yields the other side.
\end{proof}

\begin{lemma}[One-sided Bernstein's inequality] \label{lem:concen-b}
    Given $n$ independent random variables such that $\rx_i \le b$ a.s., then for all $\epsilon \ge 0$, 
    \begin{align*}
        \sP\left(\sum_{i=1}^{n} (\rx_i - \E[\rx_i])\ge n\epsilon\right) 
        \le \exp\left\{ -\frac{n\epsilon^2}{2(V + \frac{b\epsilon}{3})} \right\},
    \end{align*}
    where $V = \frac{1}{n} \sum_{i=1}^{n} \E[\rx_i^2]$. Alternatively, with probability at least $1 - \delta$,
    \begin{align}  \label{eqn:one-sided-b}
        \frac{1}{n} \sum_{i=1}^{n} (\rx_i - \E[\rx_i]) 
        \le \frac{b\log \frac{1}{\delta}}{3n} + \sqrt{(\frac{b\log \frac{1}{\delta}}{3n})^2 + \frac{2V\log \frac{1}{\delta}}{n}}
        \le \frac{2b\log \frac{1}{\delta}}{3n} + \sqrt{\frac{2V\log \frac{1}{\delta}}{n}}.
    \end{align}
\end{lemma}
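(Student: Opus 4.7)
The plan is to follow the classical Chernoff-plus-moment-generating-function route that underlies every proof of a Bernstein-type inequality. Fix $\lambda > 0$. By Markov's inequality applied to the exponential and by independence,
\begin{align*}
    \sP\Bigl(\nlsum_{i=1}^n (\rx_i - \E[\rx_i]) \ge n\epsilon\Bigr)
    \le e^{-\lambda n\epsilon} \nlprod_{i=1}^n \E\bigl[e^{\lambda(\rx_i - \E[\rx_i])}\bigr].
\end{align*}
So the whole argument reduces to controlling each single-variable MGF and then picking $\lambda$ optimally.

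For the MGF bound, I would use the standard Bennett/Bernstein inequality: for every real $y \le b$ with $\lambda b < 3$,
\begin{align*}
    e^{\lambda y} - 1 - \lambda y \le \frac{\lambda^2 y^2/2}{1 - \lambda b/3}.
\end{align*}
This is a one-line exercise from the Taylor series $e^{\lambda y} - 1 - \lambda y = \sum_{k\ge 2}(\lambda y)^k/k!$, using $k! \ge 2\cdot 3^{k-2}$. Applied to $y = \rx_i - \E[\rx_i]$ (which is $\le b$ in the regime of interest where the $\rx_i$ are bounded above by $b$ and centered at a nonnegative mean, as is the case in all invocations here) and taking expectations, the linear term vanishes and
\begin{align*}
    \E\bigl[e^{\lambda(\rx_i - \E[\rx_i])}\bigr]
    \le 1 + \frac{\lambda^2 \E[\rx_i^2]/2}{1 - \lambda b/3}
    \le \exp\!\Bigl(\tfrac{\lambda^2 \E[\rx_i^2]/2}{1 - \lambda b/3}\Bigr),
\end{align*}
where the last step uses $1 + t \le e^t$ and where the $\E[(\rx_i - \E[\rx_i])^2] \le \E[\rx_i^2]$ relaxation explains why the lemma is stated in terms of second moments rather than variances. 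Taking the product over $i$ and collecting the exponent gives
\begin{align*}
    \sP\Bigl(\nlsum_{i=1}^n (\rx_i - \E[\rx_i]) \ge n\epsilon\Bigr)
    \le \exp\!\Bigl(-\lambda n\epsilon + \tfrac{\lambda^2 n V/2}{1 - \lambda b/3}\Bigr).
\end{align*}

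The final step is to optimize in $\lambda$. Choosing $\lambda^* = \epsilon/(V + b\epsilon/3)$ (which satisfies $\lambda^* b < 3$ automatically) makes the exponent equal to $-n\epsilon^2/(2(V + b\epsilon/3))$, giving the tail bound. For the high-probability reformulation, set $\delta$ equal to this tail and solve the resulting quadratic $n\epsilon^2 - \tfrac{2b\log(1/\delta)}{3}\epsilon - 2V\log(1/\delta) = 0$ in $\epsilon$; the positive root is the first expression in~\eqref{eqn:one-sided-b}, and the final simplification just uses $\sqrt{a^2 + c} \le a + \sqrt{c}$. The only genuinely delicate piece is checking the Taylor inequality on $e^{\lambda y} - 1 - \lambda y$ and verifying that $\lambda^*$ lies in the admissible range $\lambda b < 3$; everything else is bookkeeping.
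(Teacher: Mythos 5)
The paper itself never proves this lemma; it just cites Proposition 2.14 of Wainwright (2019), so you are supplying a proof where the paper defers to a reference. The outer shell of your Chernoff argument is correct: the Markov/independence step, the choice $\lambda^* = \epsilon/(V + b\epsilon/3)$ (which indeed satisfies $\lambda^* b < 3$ and makes the exponent equal to $-n\epsilon^2/(2(V + b\epsilon/3))$), and the quadratic inversion yielding~\eqref{eqn:one-sided-b} are all fine. The gap sits exactly in the step you flag as ``genuinely delicate,'' and as written it fails. The termwise Taylor comparison of $\sum_{k\ge 2}(\lambda y)^k/k!$ against $\sum_{k\ge 2}(\lambda y)^k/(2\cdot 3^{k-2})$ using $k!\ge 2\cdot 3^{k-2}$ is only valid when $\lambda y\ge 0$: for $y<0$ the odd-power terms are negative, and dividing a negative term by the smaller quantity $2\cdot 3^{k-2}$ makes it more negative, so the termwise inequality reverses; moreover, the geometric series you would be summing has ratio $\lambda y/3$, not $\lambda b/3$. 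Since the centered variables $\rx_i - \E[\rx_i]$ take negative values with positive probability whenever they are nondegenerate, this is not a corner case — it is the entire content of a one-sided bound. The standard repair is to use that $\phi(u) := (e^u - 1 - u)/u^2$ is nondecreasing, so that for $y\le b$ and $\lambda\ge 0$ one has $e^{\lambda y} - 1 - \lambda y = (\lambda y)^2\phi(\lambda y)\le (\lambda y)^2\phi(\lambda b)$, and only then apply the Taylor bound to the \emph{nonnegative} argument $\lambda b$.

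Second, even granting the pointwise inequality, you apply it with $y = \rx_i - \E[\rx_i]$, which is bounded above by $b$ only when $\E[\rx_i]\ge 0$; you acknowledge this and restrict to ``the regime of interest,'' but the lemma carries no such hypothesis, so your argument proves a strictly weaker statement than the one in the paper. Both problems disappear if you do not center before bounding the MGF: from $e^{\lambda x}\le 1 + \lambda x + \lambda^2 x^2\phi(\lambda b)$ for all $x\le b$, take expectations to get $\E[e^{\lambda \rx_i}]\le 1 + \lambda\E[\rx_i] + \lambda^2\E[\rx_i^2]\phi(\lambda b)\le \exp\bigl(\lambda\E[\rx_i] + \lambda^2\E[\rx_i^2]\phi(\lambda b)\bigr)$, then multiply by $e^{-\lambda\E[\rx_i]}$. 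This yields $\E[e^{\lambda(\rx_i - \E[\rx_i])}]\le \exp\bigl(\tfrac{\lambda^2\E[\rx_i^2]/2}{1-\lambda b/3}\bigr)$ with no assumption on the sign of the mean, and it produces the second moment $\E[\rx_i^2]$ (the stated $V$) directly, rather than through the variance-to-second-moment relaxation you invoke. This is essentially the proof of the cited Wainwright proposition; with this substitution, the rest of your argument goes through verbatim.
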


\begin{proof}
    See e.g., Proposition 2.14 in~\citep{wainwright2019high}.
\end{proof}

For random variables $\rx_i \in [0, b]$, applying the one-sided Bernstein's inequality to $\rx_i - \E[\rx_i]$, we can replace $V$ in~\eqref{eqn:one-sided-b} by the average variance $\sigma^2 = \frac{1}{n} \sum_{i=1}^{n} \Var(\rx_i)$.
Moreover, applying the one-sided Bernstein's inequality to $- \rx_i + \E[\rx_i]$ yields the other side of the bound in terms of $\sigma^2$.
By the union bound, we have that with probability at least $1 - \delta$, 
\begin{align*}
    \left| \frac{1}{n} \sum_{i=1}^{n} (\rx_i - \E[\rx_i]) \right|
    \le \frac{2b\log \frac{1}{\delta}}{3n} + \sqrt{\frac{2\sigma^2 \log \frac{1}{\delta}}{n}},
\end{align*}
which is what we actually use in this work.

\begin{lemma}[Empirical Bernstein's inequality] \label{lem:concen-b-emp}
    Given $n$ independent random variables such that $\rx_i \le 1$ a.s.. Let $S = \frac{1}{n(n-1)} \sum_{i=1}^{n} \sum_{j=1}^{n} \frac{(\rx_i-\rx_j)^2}{2}$ be the sample variance. Then with probability at least $1 - \delta$,
    \begin{align*}
        \frac{1}{n} \sum_{i=1}^{n} (\E[\rx_i]-\rx_i) 
        \le \sqrt{\frac{2S \log \frac{2}{\delta}}{n}} + \frac{7 \log\frac{2}{\delta}}{3(n-1)}.
    \end{align*}
\end{lemma}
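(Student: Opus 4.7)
The plan is the classical Maurer-Pontil strategy: interpolate between Bernstein's inequality (which knows the true variance) and a sharp concentration bound for the sample standard deviation, then union bound. First, to bound the lower tail of the average, I would apply the one-sided Bernstein inequality (Lemma~\ref{lem:concen-b}) to the centered variables $-(\rx_i - \E[\rx_i])$; since $\rx_i \in [0,1]$, one has $\E[\rx_i] - \rx_i \le 1$ almost surely, so the hypothesis is met with $b=1$. This yields, with probability at least $1 - \delta/2$,
\[
\frac{1}{n}\sum_{i=1}^{n} (\E[\rx_i] - \rx_i) \le \sqrt{\frac{2 V \log(2/\delta)}{n}} + \frac{2\log(2/\delta)}{3n},
\]
where $V := \frac{1}{n}\sum_i \Var(\rx_i)$ is the true average variance.

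Second, I would invoke Maurer's concentration inequality for the sample standard deviation, which asserts that for independent $\rx_i \in [0, 1]$, with probability at least $1 - \delta/2$,
\[
\sqrt{V} \le \sqrt{S} + \sqrt{\frac{2\log(2/\delta)}{n-1}}.
\]
The proof of this auxiliary inequality is the real content: one writes $(n-1)S = \min_{c \in \sR}\sum_i (\rx_i - c)^2$ and observes that $\sqrt{(n-1)S}$ is $1$-Lipschitz in $(\rx_1, \dots, \rx_n)$ with respect to the Euclidean norm (taking the minimum over $c$ is what removes the extra factor that a naive bound would pay). The Efron-Stein / entropy-method bound for self-bounded functions of independent variables then gives sub-Gaussian concentration of $\sqrt{(n-1)S}$ at the stated scale.

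Finally, I would combine the two inequalities via a union bound and simplify. Substituting the second bound into the first and distributing,
\[
\sqrt{\tfrac{2\log(2/\delta)}{n}} \cdot \sqrt{\tfrac{2\log(2/\delta)}{n-1}} = \tfrac{2\log(2/\delta)}{\sqrt{n(n-1)}},
\]
so that with probability at least $1 - \delta$,
\[
\tfrac{1}{n}\sum_{i=1}^n (\E[\rx_i] - \rx_i) \le \sqrt{\tfrac{2 S \log(2/\delta)}{n}} + \tfrac{2\log(2/\delta)}{\sqrt{n(n-1)}} + \tfrac{2\log(2/\delta)}{3n}.
\]
Using $\sqrt{n(n-1)} \ge n-1$ and $n \ge n-1$ to put both residual terms over the common denominator $n-1$ consolidates them (with a small slack) into $\frac{7\log(2/\delta)}{3(n-1)}$, which is the claimed bound.

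The main obstacle is the sample-variance concentration in the second step. A direct application of McDiarmid or Efron-Stein to $S$ itself produces an $O(1/\sqrt{n})$ bound on $S - V$ that, after taking square roots, is too weak to recombine cleanly with Bernstein into an $O(1/(n-1))$ residual term. The trick is to apply concentration to $\sqrt{(n-1)S}$ rather than to $S$, using the variational representation as an $\ell_2$-distance minimum to obtain a sharp $1$-Lipschitz bound; this is what upgrades the deviation of $\sqrt{V} - \sqrt{S}$ to the $\sqrt{1/(n-1)}$ rate needed so that, after multiplying by $\sqrt{\log(2/\delta)/n}$ from Bernstein, a clean $1/(n-1)$ additive remainder emerges.
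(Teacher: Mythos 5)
Your overall plan is exactly the Maurer--Pontil argument that the paper points to (the paper's own ``proof'' is just a citation of Theorem 11 in~\citep{maurer2009empirical}): Bernstein with the true variance, concentration of the sample standard deviation, union bound. However, there is a genuine quantitative gap in your final consolidation step. The one-sided Bernstein inequality you invoke (Lemma~\ref{lem:concen-b}) carries the relaxed additive term $\frac{2\log(2/\delta)}{3n}$, so after substitution your residual is
\begin{align*}
\frac{2\log(2/\delta)}{\sqrt{n(n-1)}} + \frac{2\log(2/\delta)}{3n},
\end{align*}
and this is \emph{not} bounded by $\frac{7\log(2/\delta)}{3(n-1)}$: asymptotically it behaves like $\frac{8\log(2/\delta)}{3n}$ versus $\frac{7\log(2/\delta)}{3n}$, and a direct check shows the claimed consolidation already fails for every $n \ge 6$ (e.g.\ at $n=6$: $2/\sqrt{30} + 1/9 \approx 0.476 > 7/15 \approx 0.467$). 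The constant $7/3 = 2 + 1/3$ in the lemma is obtained by Maurer and Pontil from Bennett's inequality, whose sharp inversion gives the additive term $\frac{\log(2/\delta)}{3n}$ (constant $1/3$, not $2/3$); the relaxed quadratic Bernstein form in Lemma~\ref{lem:concen-b} cannot recover this. So as written your argument proves the lemma only with $\frac{8\log(2/\delta)}{3(n-1)}$ in place of $\frac{7\log(2/\delta)}{3(n-1)}$; to get the stated constant you must replace Step 1 by Bennett's inequality (equivalently, the inversion $h^{-1}(y) \le \sqrt{2y} + y/3$ of the Bennett function).

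A secondary, more technical soft spot is Step 2. The ``$1$-Lipschitz plus concentration'' route gives deviation of $\sqrt{(n-1)S}$ around its \emph{mean or median}, but what you need is a one-sided comparison with $\sqrt{(n-1)\,\E S}$, and Jensen's inequality ($\E\sqrt{Z} \le \sqrt{\E Z}$) runs in the wrong direction to bridge that gap. The argument that actually works, and the one Maurer--Pontil use, is the lower-tail bound for self-bounding functions, $\sP(Z \le \E Z - t) \le \exp\bigl(-t^2/(2\E Z)\bigr)$ applied to $Z = (n-1)S$, which yields $\sqrt{\E Z} \le \sqrt{Z} + \sqrt{2\log(1/\delta)}$ directly. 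You mention self-bounding functions, so the right ingredient is named, but the Lipschitz half of your sketch should be dropped rather than presented as an alternative route. Finally, note that (as in the cited theorem) the proof needs $\rx_i \in [0,1]$, not merely $\rx_i \le 1$; you assume this implicitly and it is the paper's statement that is loose on this point.
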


\begin{proof}
    See Theorem 11 in~\citep{maurer2009empirical}.
\end{proof}

\end{document}